\documentclass[twoside,11pt]{article}

\usepackage{blindtext}

\usepackage{subcaption}
\usepackage{tikz}
\usepackage{pgfplots,pgfplotstable}
\usepgfplotslibrary{fillbetween}

\usepackage{jmlr2e}

\usepackage{amsmath}
\usepackage{algorithm}  
\usepackage{algpseudocode}  
\usepackage{bookmark}
\hypersetup{
	colorlinks   = true, 
	urlcolor     = blue, 
	linkcolor    = blue, 
	citecolor    = red 
}

\usepackage{todonotes} 
\setlength{\marginparwidth}{2.5cm}

\usepackage{cleveref} 
\usepackage{multicol}
\usepackage{bbm}  

\usepackage{constants}

\renewconstantfamily{normal}{
symbol=c,
reset={subsection}
}


\usepackage[acronym]{glossaries}
\glsdisablehyper
\makeglossaries
\newacronym
[
    longplural={Markov decision processes}
]
{MDP}{MDP}{Markov decision process}
\newacronym{iid}{i.i.d.}{independent and identically distributed}
\newacronym{MC}{MC}{Markov chain}
\newacronym{MCMC}{MCMC}{Markov Chain Monte Carlo}
\newacronym{MRF}{MRF}{Markov random field}
\newacronym{PGM}{PGM}{probabilistic graphical model}
\newacronym{RL}{RL}{reinforcement learning}
\newacronym{SAGE}{SAGE}{score-aware gradient estimator}
\newacronym{SGA}{SGA}{stochastic gradient ascent}
\newacronym{SGD}{SGD}{stochastic gradient descent}



\newtheorem{assumption}{Assumption}

\crefname{assumption}{assumption}{assumptions}



\usepackage{amsfonts}  

\newcommand{\bN}{\mathbb{N}}
\newcommand{\bR}{\mathbb{R}}

\newcommand{\cA}{\mathcal{A}}

\newcommand{\cD}{\mathcal{D}}

\newcommand{\cL}{\mathcal{L}}
\newcommand{\cM}{\mathcal{M}}
\newcommand{\cR}{\mathcal{R}}
\newcommand{\cS}{\mathcal{S}}
\newcommand{\cV}{\mathcal{V}}

\newcommand{\fn}{\mathfrak{n}}

\newcommand{\rd}{\mathrm{d}}
\newcommand{\rD}{\mathrm{D}}
\newcommand{\re}{\mathrm{e}}

\newcommand{\uc}{\underline{c}}
\newcommand{\un}{\underline{n}}

\let\originalleft\left
\let\originalright\right
\renewcommand{\left}{\mathopen{}\mathclose\bgroup\originalleft}
\renewcommand{\right}{\aftergroup\egroup\originalright}
\newcommand\prb[1]{\mathbb{P} \left[ #1 \right] }
\newcommand\esp[1]{\mathbb{E} \left[ #1 \right] }
\newcommand\cov[1]{\mathrm{Cov} \left[ #1 \right] }

\newcommand\gradient[1]{\nabla_{\kern -0.2em #1}\hspace{-.02cm}}
\newcommand\accept{\textrm{admit}}
\newcommand\reject{\textrm{reject}}
\newcommand\rdisc{r_{\textrm{disc}}}
\newcommand\rcont{r_{\textrm{cont}}}
\newcommand\flip{\textrm{flip}}
\newcommand\notflip{\textrm{not flip}}
\newcommand\tleft{\textrm{left}}
\newcommand\tright{\textrm{right}}

\definecolor{pltblue}{HTML}{0083b5}
\definecolor{pltcyan}{HTML}{17becf}
\definecolor{pltorange}{HTML}{ff772a}
\definecolor{pltgreen}{HTML}{00a13e}
\definecolor{pltred}{HTML}{ec012b}
\definecolor{pltviolet}{HTML}{9467db}
\definecolor{pltyellow}{HTML}{ffd700}

\usepackage{enumitem}

\newcommand{\R}{\mathbb{R}}
\newcommand{\E}{\mathbb{E}}
\newcommand{\N}{\mathbb{N}}

\usepackage[super]{nth}

\usepackage{dsfont} 


\renewcommand{\d}[1]{\ensuremath{\operatorname{d}\!{#1}}}

\newcommand{\expectation}[1]{ \mathbb{E} [ #1 ] }

\newcommand{\expectationbig}[1]{ \mathbb{E} \bigl[ #1 \bigr] }

\newcommand{\expectationBig}[1]{ \mathbb{E} \Bigl[ #1 \Bigr] }

\newcommand{\ind}{\mathds{1}}
\newcommand{\indicator}[1]{ \mathds{1} [ #1 ] }

\newcommand{\indicatorbig}[1]{ \mathds{1} \bigl[ #1 \bigr] }
\newcommand{\indicatorBig}[1]{ \mathds{1} \Bigl[ #1 \Bigr] }

\newcommand{\process}[2]{ \{ #1 \}_{ #2 } }

\newcommand{\probability}[1]{ \mathbb{P} [ #1 ] }

\newcommand{\probabilityBig}[1]{ \mathbb{P} \Bigl[ #1 \Bigr] }

\newcommand{\naturalNumbersPlus}{ \mathbb{N}_{+} }

\newcommand{\refTheorem}[1]{{\textrm{Theorem~\ref{#1}}}}

\newcommand{\refLemma}[1]{{\textrm{Lemma~\ref{#1}}}}

\newcommand{\QuodEratDemonstrandum}{\hfill \ensuremath{\Box}}

\def\eqcom#1{\overset{\textnormal{(#1)}}}



\def\E{{\mathbb E}}

\def\({{\Bigl(}}
        \def\){{\Bigr)}}


\newcommand{\ba}{\begin{array}}
    \newcommand{\ea}{\end{array}}

\newcommand{\xdeleted}[1]{\deleted{}} 

\allowdisplaybreaks 

\usepackage{longtable}

\jmlrheading{26}{2025}{1-74}{6/24; Revised 1/25}{6/25}{24-1009}{C\'eline Comte, Matthieu Jonckheere, Jaron Sanders, Albert Senen-Cerda}
\ShortHeadings{Score-Aware Policy-Gradient Methods and Performance Guarantees}{Comte, Jonckheere, Sanders, Senen-Cerda}

\firstpageno{1}

\begin{document}

\title{Score-Aware Policy-Gradient and Performance Guarantees using Local Lyapunov Stability}

\author{%
	\name C{\'e}line Comte \email{celine.comte@cnrs.fr}\\
	\name {Matthieu Jonckheere} \email{matthieu.jonckheere@laas.fr}\\
	\addr LAAS–CNRS, Université de Toulouse, CNRS, Toulouse, France \\
	7 Avenue du Colonel Roche, 31400 Toulouse, France
	\AND
	\name{Jaron Sanders} \email jaron.sanders@tue.nl\\
	\addr Eindhoven University of Technology, Eindhoven, The Netherlands \\
	MetaForum, Groene Loper 5, 5612 AZ Eindhoven, The Netherlands
	\AND
	\name{Albert Senen-Cerda} \email albert.senen-cerda@irit.fr\\
	\addr LAAS-CNRS, IRIT, and Universit\'e de Toulouse, Toulouse, France \\
	7 Avenue du Colonel Roche, 31400 Toulouse, France
}

\editor{Nan Jiang}

\maketitle

\begin{abstract}%
In this paper, we introduce a policy-gradient method for model-based reinforcement learning (RL) that exploits a type of stationary distributions commonly obtained from Markov decision processes (MDPs) in stochastic networks, queueing systems, and statistical mechanics. Specifically, when the stationary distribution of the MDP belongs to an exponential family that is parametrized by policy parameters, we can improve existing policy gradient methods for average-reward RL. Our key identification is a family of gradient estimators, called score-aware gradient estimators (SAGEs), that enable policy gradient estimation without relying on value-function estimation in the aforementioned setting. We show that SAGE-based policy-gradient locally converges, and we obtain its regret. This includes cases when the state space of the MDP is countable and unstable policies can exist. Under appropriate assumptions such as starting sufficiently close to a maximizer and the existence of a local Lyapunov function, the policy under SAGE-based stochastic gradient ascent has an overwhelming probability of converging to the associated optimal policy. Furthermore, we conduct a numerical comparison between a SAGE-based policy-gradient method and an actor--critic method on several examples inspired from stochastic networks, queueing systems, and models derived from statistical physics. Our results demonstrate that a SAGE-based method finds close--to--optimal policies faster than an actor--critic method.
\end{abstract}

\begin{keywords}
	reinforcement learning, policy-gradient method, exponential families, product-form stationary distribution, stochastic approximation
\end{keywords}

\section{Introduction}
\label{sec:Intro}

\Gls{RL} has become the primary tool for optimizing controls in uncertain environments.
Model-free \gls{RL}, in particular, can be used to solve generic \glspl{MDP} with unknown dynamics with an agent that learns to maximize a reward incurred upon acting on the environment.
In stochastic systems, examples of possible applications of \gls{RL} can be found in stochastic networks, queueing systems, and particle systems, where an optimal policy is desirable. For example, a policy yielding a good routing policy, an efficient scheduling, or an annealing schedule to reach a desired state.

As stochastic systems expand in size and complexity, however, the \gls{RL} agent must deal with large state and action spaces.
This leads to several computational concerns, namely, the combinatorial explosion of action choices, the computationally intensive exploration and evaluation of policies \citep{qian2019}, and a more complex optimization landscape.

One way to circumvent issues pertaining to large state spaces and/or nonconvex objective functions is to include features of the underlying \gls{MDP} in the \gls{RL} algorithm.
If the model class of the environment is known, a model-based \gls{RL} approach estimates first an approximate model of the environment in the class that can later be used to solve an \gls{MDP} describing its approximate dynamics. This approach is common in queueing networks \citep{liu2022rl, anselmi2023learning}.
Nevertheless, solving an approximate \gls{MDP} adds a computational burden if the number of states is large.

Policy-gradient methods are learning algorithms that instead directly optimize policy parameters through \gls{SGA} \citep{SB18}.
These methods have gained attention and popularity due to their perceived ability to handle large state and action spaces in model-free settings \citep{daneshmand2018,Khadka2018}.
Policy-gradient methods rely on the estimation of value functions, which encode reward-weighted representations of the underlying model dynamics. Computing such functions, however, is challenging in high-dimensional settings and, different from a model-based approach, key model features are initially unknown.

In this paper, we improve policy-gradient methods for some stochastic systems by incorporating model-specific information of the \gls{MDP} into the gradient estimator.
Specifically, we exploit the fact that long-term average behavior of such systems are described using exponential families of distributions.
In the context of stochastic networks and queueing systems, this
typically means that the Markov chains associated to fixed policies
have a product-form stationary distribution. This structural assumption holds in various relevant scenarios, including Jackson and Whittle networks \cite[Chapter~1]{S99}, BCMP networks \citep{BCMP75}, and more recent models arising in datacenter scheduling and online matching \citep{GR20}.
By encoding this key model feature into policy-gradient methods, we aim to expand the current model-based \gls{RL} techniques for control policies of stochastic systems.

Our primary contributions are the following:
\begin{itemize}
	\item We present a new gradient estimator for policy-gradient methods that incorporates information from the stationary measure of the \gls{MDP}.
	Under an average-reward and infinite-horizon learning setting, we namely consider policy parametrizations such that
	there is a known relationship between
	the policy on the one hand,
	and the \gls{MDP}'s stationary distribution on the other hand.
	In practice, this translates to assuming that
	the stationary distribution forms
	an exponential family explicitly depending on the policy parameters.
	Using this structure, we define \glspl{SAGE},
	a class of estimators that exploit the aforementioned assumption to estimate the policy gradient \emph{without} relying on value or action--value functions.
	\item We show the local convergence and bound the regret of a \gls{SAGE}-based policy-gradient under broad assumptions, such as a countable state space, nonconvex objective functions, and unbounded rewards. 
	To do so, we first generalize the approach of \cite{fehrman2020convergence} to a general \gls{RL} setting that includes Markovian updates with a countable state space and does not require the stationary distribution to be exponential.
	We show local convergence by first, using a local Lyapunov function that guarantees stability (i.e., positive recurrence) of the Markov chain as long as the iterates are close to the optimum, and second, by using the nondegeneracy of the Hessian at the optimum, which allows to keep track of the updates locally.
	These two key elements allow us to show convergence if the bias of the gradient estimator and its variance can be controlled.
	Remarkably, our local assumptions may be satisfied even when unstable policies exist.
	For policy gradient with a \gls{SAGE} in particular, we can then crucially estimate its bias and variance explicitly due to the exponential family assumption, and show convergence with large probability by using the aforementioned approach, whenever the trajectory of the iterates gets close enough to an optimum.
	The convergence proof approach is of independent interest, and can also be adapted to other policy gradient-based methods as long as the bias and variance of the policy-gradient estimator can be controlled.
	
	\item We numerically evaluate the performance of \gls{SAGE}-based policy-gradient on several models from stochastic networks, queueing systems, and statistical physics.
	We observe that, compared to an actor--critic algorithm, \gls{SAGE}-based policy-gradient methods exhibit faster convergence and lower variance.
\end{itemize}
Our results suggest that exploiting model-specific information is a promising approach to improve \gls{RL} algorithms, especially for stochastic networks and queueing systems.
\Cref{sec:introduction_sage,sec:introduction_convergence}
below describe our contributions in more details.

\subsection{Score-Aware Gradient Estimators (SAGEs)}
\label{sec:introduction_sage}

We introduce \glspl{SAGE} for \glspl{MDP} following the exponential-family assumption in Section~\ref{sec:sage}.
These estimators leverage the structure of the stationary distribution,
with the goal of reducing variance and favoring stable learning.
Notably, their usage requires neither knowledge nor explicit estimation of model parameters, ensuring practical applicability.
The key step of the derivation exploits information on the form of the \emph{score} of exponential families---that is, the gradient of the logarithm of the probability mass function.

We can illustrate the working principle using a toy example
on a countable state space~$\cS$:
given a \emph{sufficient statistic} $x : \cS \to \bR^n$,
the associated exponential family in canonical form
is the family of distributions
with probability mass functions
$
p( \cdot | \theta)
\propto
\mathrm{exp}( \theta^\intercal x( \cdot) )
$
parametrized by $\theta \in \bR^n$.
Observe now that these distributions satisfy the relation
\begin{equation}
	\frac{d\log(p(s|\theta))}{d \theta}
	=
	x(s) - \E_{S \sim p(\,\cdot\,| \theta) }[x(S)]
	,\quad s \in \cS,
	\label{eqn:exponential_family_introduction}
\end{equation}
and that \eqref{eqn:exponential_family_introduction} gives an exact expression for the gradient of the score.

Now, a more general version of \eqref{eqn:exponential_family_introduction} that is also applicable beyond this toy example---see \Cref{theo:sage} below---allows us to bypass the commonly used policy-gradient theorem \citep[Section~13.2]{SB18}, which ties the estimation of the gradient with that of first estimating value or action--value functions.
A key aspect that \glspl{SAGE} practically exploit is that, in the models from queueing and statistical physics that we will study, we know the sufficient statistic~$x$ fully.
Furthermore, such models commonly possess an `effective dimension' that is much lower than the size of the state space and is reflected by the sufficient statistic. For example, in a load-balancing model we consider in the numerical section, an agnostic model-free \gls{RL} algorithm would learn a value function defined over the complete state space, whose size grows exponentially
in the number~$n$ of servers. On the contrary, a \gls{SAGE} only requires learning the expected value of a single $n$-dimensional vector---the sufficient statistics.

\subsection{Convergence of Policy-Gradient Methods}
\label{sec:introduction_convergence}

We examine the convergence properties of the \gls{SAGE}-based policy-gradient method  theoretically in Section~\ref{sec:convergence}.
Specifically, we consider the setting of policy-gradient \gls{RL} with average rewards, which consists of finding a parameter $\theta$ such that the parametric policy $\pi(\theta) = \pi(\,\cdot\,|\,\cdot\,, \theta)$ maximizes
\begin{equation}
	J(\theta)
	=
	\lim_{T \to \infty} \frac{1}{T}\expectationBig{\sum_{t=1}^{T} R_t}
	.
	\label{eqn:definition_J_intro}
\end{equation}
Here, $R_{t+1}$ denotes the reward that is given after choosing action~$A_t$ while being in state~$S_t$, which happens with probability $\pi(A_t| S_t, \theta)$.
As is common in episodic \gls{RL}, we consider epochs, that is, time intervals where the parameter $\theta$ is fixed and a trajectory of the Markov chain is observed.
For each epoch $m$, and under the exponential-family assumption for the stationary distribution, a \gls{SAGE} yields a gradient estimator $H_m$ from a trajectory of state--action--reward tuples $(S_t, A_t, R_{t+1})$ sampled from a policy with $\Theta_m$ as an epoch-dependent parameter.
Convergence analysis of the \gls{SAGE}-based policy-gradient method aligns with ascent algorithms like \gls{SGA} by considering updates at the end of epoch~$m$ with step-size $\alpha_{m} > 0$,
\begin{equation}
	\Theta_{m+1} = \Theta_{m} + \alpha_{m} H_{m}.
	\label{eqn:update_SGA_intro}
\end{equation}

Convergence analyses for policy-gradient \gls{RL} and \gls{SGA} are quite standard; see Section~\ref{sec:Related works}. 
Our work specifically aligns with the framework of
\cite{fehrman2020convergence}, who study local convergence of unbiased \gls{SGD},
that is, when the conditional estimator $H_m$ of $\nabla J(\Theta_m)$ on the past $\mathcal{F}$ is unbiased.
This occurs typically in supervised learning.
A main contribution in our work consists in expanding the results of \cite{fehrman2020convergence} to the case of Markovian data, leading to biased estimators
(i.e., $\E[H_m| \mathcal{F}] \neq \nabla J( \Theta_m)$).
In our \gls{RL} setting, we handle potentially unbounded rewards and unbounded state spaces as well as the existence of unstable policies.
We also assume an online application of the policy-gradient method, where restarts are impractical or costly: the last state of the prior epoch is used as the initial state for the next, distinguishing our work from typical episodic \gls{RL} setups where an initial state $S_0$ is sampled from a predetermined distribution.

Our main result in Section~\ref{sec:convergence} shows convergence of the \gls{SAGE} parameter update in~\eqref{eqn:update_SGA_intro} to the set~$\mathcal{M}$ that attains the maximum $J^{\star}$ of \eqref{eqn:definition_J_intro}, assuming nondegeneracy of $J$ on $\mathcal{M}$ and existence of a local Lyapunov function. 
If the trajectory of \gls{SGA} ends up within a sufficiently small neighborhood $V$ of a maximizer $\theta^{\star} \in \mathcal{M}$, then with appropriate epoch length and step-sizes, convergence to $\mathcal{M}$ occurs with large probability: for any epoch $m > 0$ and $\epsilon > 0$, if $\Theta_0$ is the first iterate in $V$,
\begin{align}
	\prb{J^{\star} - J(\Theta_{m}) > \epsilon| \Theta_0 \in V}
	&\le
	O\Bigl(
	\epsilon^{-2} m^{-\sigma - \kappa}
	+ m^{1 - \sigma/2 - \kappa/2}
	+ m^{-\kappa/2} + \frac{\alpha^2}{\ell}
	\Bigr),
	\label{eqn:convergence_rate_intro}
\end{align}
where the parameters $\sigma \in (2/3, 1), \kappa > 0, \alpha \in (0, \alpha_0]$, and $\ell \in [\ell_0, \infty)$ depend on the step and batch sizes and can be tuned to make the bound in \eqref{eqn:convergence_rate_intro} arbitrarily small.
While our focus is on the global optimum, the bound \eqref{eqn:convergence_rate_intro} also holds under the same assumptions in case $J^{\star}$ is a local optimum instead.

Our key assumption relies on the existence of a local Lyapunov function in the neighborhood~$V$, which ensures stability.
Crucially, this assumption is required only for policies that are close to an optimal policy.
This sets our work further apart from others in the \gls{RL} literature, which typically require the existence of a global Lyapunov function and/or the state space to be finite. 
In fact, our numerical results in \Cref{sec:examples} show an instance where stability around a global optimizer is sufficient, highlighting the benefits of \gls{SAGE}. 
The set~$\mathcal{M}$ of global maxima is also not required to be finite or convex, thanks to the local nondegeneracy assumption.

For a large batch index~$m$, the bound in \eqref{eqn:convergence_rate_intro} can be made arbitrarily small by setting the initial step size~$\alpha$ small and batch size~$\ell$ large.
In \eqref{eqn:convergence_rate_intro}, the chance that the policy escapes the set $V$, outside which stability cannot be guaranteed, does not vanish when $m \to \infty$; it remains as $\alpha^2/\ell$.
We show that this term is inherent to the statistical estimation error of the gradient and cannot be avoided.
Specifically, for any $\beta > 0$, there are objective functions $f$ such that  $\prb{f(\theta^\star)- f(\Theta_{m}) > \epsilon| \Theta_0 \in V} > c \alpha^{2 + \beta}/\ell$ for some $c>0$.
Hence, a lower bound shows that the proof method cannot be improved without using additional structure of $H_m$ or $J$. Furthermore, our proof can be adapted to other generic policy-gradients that have similar bounds on the gradient estimator $H_m$ as those of \gls{SAGE}, thus showing that such phenomenon not just happens with \gls{SAGE} but also with any other policy-gradient algorithm with similar properties of the gradient estimator.

Denoting by~$T \geq 1$ the total number of samples drawn by the algorithm, we obtain from \eqref{eqn:convergence_rate_intro} a regret bound of our algorithm when reaching the set $V$. 
In the case of bounded rewards, we namely show that for any $0 < \epsilon < 1$, if $\Theta_0$ is the first iterate in $V$, then
\begin{equation}
	\expectationBig{TJ^\star - \sum_{t=1}^{T} r(S_t, A_t) ~\Big|~     \Theta_0 \in V} = O\Bigl( (\mathcal{L}^{\star})^{\frac{1}{3}}T^{\frac{2}{3} + \epsilon} + \frac{\alpha^2}{\ell} T\Bigr).
	\label{eqn:regret_intro}
\end{equation}
The linear term in \eqref{eqn:regret_intro} arises from the estimation error of the policy gradient and has been seen in other recent works \citep{abbasi2019politex}, and also for countable state spaces \citep{murthy2024performance}. 
The other term is sublinear, and its coefficient $\mathcal{L}^{\star}$ characterizes an `effective' size of the state space, and directly depends on the local Lyapunov function.
For a given $T$, we can find $\ell$---a term related to the batchsize and thus estimation error---such that the regret becomes $O(T^{3/4 + \epsilon})$. 
Remarkably, while we start from a stable policy, the expectation in \eqref{eqn:regret_intro} includes trajectories where policies may be unstable. When the reward is unbounded, we similarly obtain a bound without a linear term if we restrict to trajectories in $V$.

For cases where the optimum is reached only as $\Theta_m \to \infty$, as with deterministic policies, we show that adding a small relative entropy regularization term to $J(\theta)$ ensures that maxima are bounded and that $\mathcal{M}$ satisfies the nondegeneracy assumption required to show local convergence.

\subsection{Numerical Experiments}

We finally assess the applicability of the \gls{SAGE}-based policy-gradient algorithm in \Cref{sec:examples} by comparing its performance with that of the actor--critic algorithm on three models from queueing systems, stochastic networks, and statistical physics.
Specifically, we consider an admission control problem in the M/M/1 queue, a load balancing system, and the Ising model with Glauber dynamics.

These numerical results suggest that, when applicable,
\glspl{SAGE} can expedite convergence towards an optimal policy (compared to actor--critic) by leveraging the structure of the stationary distribution.
Furthermore, the lower variance of \gls{SAGE} becomes decisive when stability is not guaranteed for all policies.
Namely, we observe in an example that the \gls{SAGE}-based policy-gradient method converges to a close--to--optimal policy even if some policies are unstable, provided that a stable policy is used as initialization.
This behavior contrasts with actor--critic, whose output policies are not always stable.
\gls{SAGE} also reproduces a well-known phenomenon in annealing schedules for Ising models. Specifically, the agent momentarily increases the temperature in order to escape stable states that do not correspond to the global optimum.

\section{Related Works}
\label{sec:Related works}

The work in the present manuscript resides at the intersection
of distinct lines of research.
We therefore broadly review, relate, and position our work to other research in this section.

\subsection{Gradient Estimation, Exponential Families, and Product Forms} \label{sec:rw-gradient}

Operations on high-dimensional probability distributions,
such as marginalization and inference, are numerically intractable in general.
Exponential families---see \Cref{sec:sage-ass} for a definition---are parametric sets of distributions
that lead to more tractable operations and approximations
while also capturing well-known probability distributions,
such as probabilistic graphical models~\citep{WJ08},
popular in machine learning.
In the context of stochastic networks and queueing systems,
the stationary distribution of many product-form systems
can be seen as forming an exponential family.

Our first contribution is related to several works on exponential families, product-form distributions, and probabilistic graphical models. Key performance metrics in these distributions are numerically intractable \textit{a priori}, but can be expressed as expectations of random vectors that can be sampled by simulation.
The most basic and well-known result, which appears in \Cref{sec:introduction_sage} and will be exploited in \Cref{sec:sage-estimator}, rewrites the gradient of the logarithm of the normalizing constant (a.k.a.\ the log-partition function) as the expectation of the model's \emph{sufficient statistics}. In probabilistic graphical models, this relation has been mainly used to learn a distribution that best describes a data set via \gls{SGD}~\citep{WJ08,KFB09}. In stochastic networks, this relation has been applied to analyze systems with \emph{known} parameters, for instance to predict their performance~\citep{SSM88,ZZ99,BV04,S11,SV15}, to characterize their asymptotic behavior in scaling regimes~\citep{S11,SV15}, for sensitivity analysis~\citep{SSM88,LN91}, and occasionally to optimize control parameters via gradient ascent~\citep{LN91,SSG91,S11}.

To the best of our knowledge, an approach similar to ours is that introduced by \cite{SBL16}.
This work derives a gradient estimator and performs \gls{SGA} in a class of product-form reversible networks.
However, the procedure requires first estimating the stationary distribution, convergence is proven only for convex objective functions, and the focus is more on developing a distributed algorithm than on canonical \gls{RL}.
The algorithm of \cite{jiang2009distributed} is similarly noteworthy, although the focus there is on developing a distributed control algorithm specifically for wireless networks and not general product-form networks.

\subsection{Stochastic Gradient Ascent (SGA) and Policy-Gradient Methods}
\label{sec:rw-sga}

When a gradient is estimated using samples from a Markov chain, methods from \gls{MCMC} are commonly used \citep{mohamed2020monte}.
In our case, we have moreover bias from being unable to restart the chain at each epoch.
Convergence of biased \gls{SGD} to approximate stationary points of smooth nonconvex functions---points~$\theta$ such that $|\nabla J(\theta)| < \epsilon$ for some $\epsilon > 0$---has been addressed in the literature \citep{tadic2017asymptotic, atchade2017perturbed, karimi2019non, doan2020finite}.
The asymptotic conditions for local convergence to a stationary point were first investigated by \cite{tadic2017asymptotic}, who assumed conditions for the asymptotic stochastic variance of the gradient estimator and bias (see Assumptions~$2.1$--$2.3$, \citealt{tadic2017asymptotic}).
\cite{karimi2019non} showed a nonasymptotic analysis of biased \gls{SGD}.
Under Lipschitz assumptions on the transition probabilities and bounded variance of the gradient estimator $H_m$, \cite{karimi2019non} showed that, under appropriate step-sizes, for some $m^{\star} \leq M$, we have $\E[ | \nabla J(\Theta_{m^*}) |^2 ] = O(\log(M)/\sqrt{M})$, where $M$ is a time horizon.
\cite{tadic2017asymptotic} and \cite{karimi2019non} applied these results in an \gls{RL} context. 
While these works demonstrate convergence to stationary points, our contribution lies in proving convergence to a maximum, albeit locally. 
This approach is essential for addressing scenarios with only local assumptions and potentially unstable policies (i.e., policies such that the corresponding Markov chain is not positive recurrent).

Finally, several recent works build on gradient domination for policy-gradient methods, addressing convexity limitations and ensuring global convergence \citep{fazel2018global,agarwal2021theory,xiao2022convergence,kumar2024global}. 
Notable differences to our work are their use of a finite state space and that we do not consider an episodic setting with restarts, as well as distinct structural assumptions on policy parametrization like natural gradients.
\cite{murthy2024performance} considered a convergent natural policy-gradient (NPG) method for countable state spaces where the cost is the norm of the state, i.e., the queue length in the queueing setting.
In this paper, a stable max-weight policy is applied with a probability that tends to 1 as the queue size increases, therefore avoiding instability issues. We do not use such a stabilizing policy, and instead require the algorithm to start sufficiently close to an optimal policy.
Another unique aspect of our contribution lies in specialized gradient estimation schemes based on the exponential family assumption on the stationary distribution, which crucially avoids estimating or learning value functions, common in all previous works.

A succinct non-exhaustive table is provided in \Cref{tab:references} to guide interested readers to key references within this vast body of literature.

\begin{table}[htb]
	\tiny \begin{longtable}{|>{\raggedright\arraybackslash}m{2cm}|>{\raggedright\arraybackslash}m{2.8cm}|>{\raggedright\arraybackslash}m{3cm}|>{\raggedright\arraybackslash}m{2.3cm}|>{\raggedright\arraybackslash}m{2.9cm}|}
		\hline \textbf{Method} & \textbf{Context/State Space} & \textbf{Main Assumptions} & \textbf{Convergence} & \textbf{References}
		\\ \hline \hline
		\gls{SGD} & Markovian data with convex objectives & Lipschitz transition probabilities, bounded variance estimators & Local convergence& \cite{tadic2017asymptotic,daneshmand2018,karimi2019non}
		\\ \hline
		\gls{SGD} & Iid data with non convex objectives & Local convexity & Local convergence & \cite{fehrman2020convergence}
		\\ \hline \hline
		Policy Gradient & RL, Finite state space & Gradient domination & Global convergence & \cite{agarwal2021theory}, \cite{kumar2024global}
		\\ \hline
		Policy Gradient & RL, Finite state space & ABC condition & Global convergence & \cite{yuan2022general}
		\\ \hline
		Natural Policy Gradient & RL, Finite state space & Gradient domination / entropy regularization & Global convergence & \cite{Li2020,agarwal2021theory}, \cite{cen2022fast}
		\\ \hline \hline
		Natural Policy Gradient & RL, Countable state space & Stabilization via a known policy independently of the model and policy parameters & Global convergence & \cite{murthy2024performance}
		\\ \hline
		Policy gradient for product-form Networks & Countable state space & Convex objectives, requires knowing the functional form of the gradient in terms of the stationary distribution & Global convergence for convex objectives & \cite{SBL16}
		\\ \hline
		\gls{SAGE}-based policy gradient & Countable state space & Lyapunov assumptions, Non-degenerate Hessian, Application to exponential family stationary distributions& Local convergence & This paper
		\\ \hline
	\end{longtable}
	\caption{This table provides a non-exhaustive list of convergence results to help contextualize our contribution. Compared to the (few) results that can be applied to \glspl{MDP} with a countably-infinite state space, our convergence result only requires local convexity and positive-recurrence assumptions in the parameter space.}
	\label{tab:references}
\end{table}

\section{Problem Formulation}
\label{sec:Problem}

After introducing basic notation in \Cref{sec:Notation}, we introduce \glspl{MDP} in \Cref{sec:MDP} and the infinite-horizon average-reward optimality criterion in \Cref{sec:stationary}. \Cref{sec:policy-gradient} gives a brief introduction to policy-gradient algorithms.

\subsection{Basic Notation}
\label{sec:Notation}

The sets of
nonnegative integers,
positive integers,
reals,
and nonnegative reals
are denoted by
$\bN$, $\bN_+$, $\bR$, and $\bR_{\ge 0}$,
respectively.
For a differentiable function
$f : \theta \in \bR^n \mapsto f(\theta) \in \bR$,
$\nabla f(\theta)$ denotes
the gradient of $f$ taken at $\theta \in \bR^n$,
that is, the $n$-dimensional column vector
whose $j$-th component is
the partial derivative of $f$
with respect to $\theta_j$,
for $j \in \{1, 2, \ldots, n\}$.
If $f$ is twice differentiable,
$\mathrm{Hess}_{\theta} f$ denotes
the Hessian of $f$ at~$\theta$, that is,
the $n \times n$ matrix of second partial derivatives.
For a differentiable vector function
$f : \theta \in \bR^n
\mapsto f(\theta) = (f_1(\theta), \ldots, f_d(\theta)) \in \bR^d$,
$\rD f(\theta)$ is
the Jacobian matrix of $f$ taken at~$\theta$,
that is, the $d \times n$ matrix
whose $i$-th row is ${\nabla f_i(\theta)}^\intercal$,
for $i \in \{1, 2, \ldots, d\}$.
For a vector $x = (x_1, \ldots, x_n) \in \R^{n}$, we denote its $l_2$-norm by $|x| = \sqrt{ x_1^2 + \cdots + x_n^{2}}$.
We define the operator norm of a matrix $A \in \R^{a \times b}$ as $|A|_{\mathrm{op}} = \sup_{x \in \R^{b}: |x| = 1} |Ax|$.
We use uppercase
to denote random variables and vectors,
and a calligraphic font for their sets
of outcomes.

\subsection{Markov Decision Process (MDP)}
\label{sec:MDP}

We consider a \acrfull{MDP} with
countable state, action, and reward spaces~$\cS$,~$\cA$, and $\cR$, respectively,
and transition probability kernel
$P : (s, a, r, s') \in \cS \times \cA \times \cR \times \cS
\mapsto P(r, s' | s, a) \in [0, 1]$.
Thus
$P(r, s' | s, a)$ gives the conditional probability
that the next reward--state pair is $(r, s')$
given that the current state-action pair is $(s, a)$.
With a slight abuse of notation, we introduce
\begin{align*}
	P(r | s, a)
	&= \sum_{s' \in \cS} P(r, s' | s, a),
	\quad \text{for each $s \in \cS$, $a \in \cA$, and $r \in \cR$, and} \\
	P(s' | s, a)
	&= \sum_{r \in \cR} P(r, s' | s, a),
	\quad \text{for each $s, s' \in \cS$ and $a \in \cA$}.
\end{align*}
Our results also generalize
to absolutely continuous rewards;
an example will appear in \Cref{sec:Admission-control-in-an-MM1}.

Following the framework of policy-gradient algorithms
\cite[Chapter~13]{SB18},
we assume that the agent is given a random policy parametrization
$\pi: (s, \theta, a) \in \cS \times \bR^n \times \cA
\to \pi(a | s, \theta) \in (0, 1)$,
such that $\pi(a | s, \theta)$ is
the conditional probability that the next action is~$a \in \cA$
given that the current state is~$s \in \cS$ and the parameter vector~$\theta \in \bR^n$.
We assume that the function $\theta \mapsto \pi(a | s, \theta)$
is differentiable
for each $(s, a) \in \cS \times \cA$.
The goal of the learning algorithm
will be to find a parameter (vector) that maximizes
the long-run average reward;
see \Cref{sec:stationary}.

As a concrete example,
we will often consider a class of softmax policies
that depend on a feature extraction map $\xi : \cS \times \cA \to \bR^n$ as follows:
\begin{align}
	\pi(a | s, \theta)
	&= \frac{e^{\theta^\intercal \xi(s, a)}}{\sum_{a' \in \cA} e^{\theta^\intercal \xi(s, a')}},
	\quad s \in \cS,
	\quad a \in \cA.
	\label{eqn:defintion_softmax_policy}
\end{align}
The feature extraction map~$\xi$
may leverage prior information on the system dynamics.
In queueing systems for instance,
we may decide to make similar decisions in large states,
as these states are typically visited rarely,
and it may be beneficial to aggregate the information collected about them.

\subsection{Stationary Analysis and Optimality Criterion}
\label{sec:stationary}

Given $\theta \in \bR^n$, if the agent applies
the policy~$\pi(\theta) : (s, a) \in \cS \times \cA \mapsto \pi(a | s, \theta)$ at every time step, the random state--action--reward sequence $((S_t, A_t, R_{t+1}), t \in \bN)$ obtained by running this policy is a Markov chain such that,
for each $s, s' \in \cS$,
$a \in \cA$, and $r \in \cR$,
$\prb{A_t = a | S_t = s} = \pi(a | s, \theta)$
and
$\prb{R_{t+1} = r, S_{t+1} = s' | S_t = s, A_t = a}
= P(r, s' | s, a)$.
The dependency of the random variables on the parameter
is left implicit to avoid cluttering notation.
Leaving aside actions and rewards,
the state sequence $(S_t, t \in \bN)$ also defines
a Markov chain,
with transition probability kernel
$P(\theta) : (s, s') \in \cS \times \cS \mapsto P(s' | s, \theta)$
given~by
\begin{align*}
	P(s' | s, \theta)
	&= \sum_{a \in \cA}
	\pi(a | s, \theta)
	P(s' | s, a),
	\quad s, s' \in \cS.
\end{align*}
In the remainder, we assume that \Cref{ass:markov,ass:reward} below
are satisfied.
\begin{assumption} \label{ass:markov}
	There exists an open set $\Omega \subseteq \bR^n$ such that, for each $\theta \in \Omega$, the Markov chain $(S_t, t \in \bN)$ with transition probability kernel~$P(\theta)$ is irreducible and positive recurrent.
\end{assumption}

In the remainder, we use the words
\emph{positive recurrent} and \emph{stable} interchangeably.
Also, with a slight abuse of language, we say that
a policy is stable if the corresponding Markov chain is stable.
Thanks to \Cref{ass:markov},
for each $\theta \in \Omega$,
the corresponding Markov chain $(S_t, t \in \bN)$
has a unique stationary distribution $p(\cdot | \theta)$.
We say that a triplet $(S, A, R)$ of random variables
is a \emph{stationary state--action--reward triplet},
and we write $(S, A ,R) \sim$~\ref{eq:stat},
if $(S, A, R)$ follows the stationary distribution
of the Markov chain $((S_t, A_t, R_{t+1}), t \in \bN)$ given by
\begin{align}
	\tag{\textsc{stat}($\theta$)} \label{eq:stat}
	\prb{S = s, A = a, R = r}
	&= p(s | \theta) \pi(a | s, \theta) P(r | s, a),
	\quad s \in \cS,
	\quad a \in \cA,
	\quad r \in \cR.
\end{align}

\begin{assumption} \label{ass:reward}
	For each $\theta \in \Omega$,
	the stationary state--action--reward triplet
	$(S, A, R) \allowbreak \sim$ \ref{eq:stat} is such that $|R|$, $|R \, \nabla \log p(S | \theta)|$, and $|R \, \nabla \log \pi(A | S, \theta)|$ have a finite expectation.
\end{assumption}

By ergodicity~\cite[Theorem~4.1]{B99},
the running average reward
$\frac1T \sum_{t = 1}^T R_t$
tends to $J(\theta)$ as defined in \eqref{eqn:definition_J_intro} almost surely as $T$ tends to infinity. $J(\theta)$ is called the \emph{long-run average reward}
and is also given by
\begin{align} \label{eq:J}
	J(\theta)
	&= \esp{R}
	= \sum_{s \in \cS} \sum_{a \in \cA} \sum_{r \in \cR}
	p(s | \theta) \pi(a | s, \theta) P(r | s, a) r,
	\quad \theta \in \Omega.
\end{align}
Our end goal, further developed in \Cref{sec:policy-gradient},
is to find a learning algorithm that maximizes the objective function~$J$.
For now, we only observe that
the objective function
$J : \theta \in \Omega \mapsto J(\theta)$
is differentiable thanks to \Cref{ass:reward},
and that its gradient is given by
\begin{align} \label{eq:dJ}
	\nabla J(\theta)
	&= \sum_{s \in \cS} \sum_{a \in \cA} \sum_{r \in \cR}
	p(s | \theta) \pi(a | s, \theta) P(r | s, a) r
	\left(
	\nabla \log p(s | \theta)
	+ \nabla \log \pi(a | s, \theta)
	\right),
	\quad \theta \in \Omega.
\end{align}
In general, computing $\nabla J(\theta)$
using~\eqref{eq:dJ} is challenging:
(i) computing $\nabla \log p(s | \theta)$ is in itself challenging because $p(s | \theta)$ depends in a complex way on the unknown transition kernel~$P(r, s' | s, a)$ and the parameter~$\theta$ via the policy~$\pi(\theta)$, and
(ii) enumerating and thus summing over the state space~$\cS$ is often practically infeasible (for instance, when the state space~$\cS$ is infinite and/or high-dimensional).
Our first contribution, in \Cref{sec:sage}, is precisely a new family of estimators for the gradient~\eqref{eq:dJ}.

\subsection{Learning Algorithm} \label{sec:policy-gradient}

In \Cref{sec:stationary},
we defined the objective function~$J$
by considering trajectories
where the agent applied a policy $\pi(\theta)$
parametrized by a constant vector~$\theta$.
Going back to a learning setting,
we consider a state--action--reward sequence
$((S_t, A_t, R_{t+1}), t \in \bN)$
and a parameter sequence
$(\Theta_m, m \in \bN)$
obtained by updating the parameter periodically
according to the gradient-ascent step
$\Theta_{m+1} = \Theta_m + \alpha_m H_m$
introduced in \eqref{eqn:update_SGA_intro}.
Here, $H_m$ is provided by a family of learning algorithms,
called \emph{policy gradient}.
The pseudocode of a generic policy-gradient algorithm, shown in \Cref{algo:policy-gradient}, is parametrized by a sequence $0 \triangleq t_0 < t_1 < t_2 < \ldots $ of \emph{observation times}
and a sequence
$\alpha_0, \alpha_1, \alpha_2, \ldots > 0$
of \emph{step sizes}.
For each $m \in \bN$,
$\cD_m$ denotes batch~$m$,
obtained by applying policy $\pi(\Theta_m)$ at epoch~$m$, given~by
\begin{align} \label{eq:batch}
	\cD_m = \bigl( (S_t, A_t, R_{t+1}), t \in \{t_m, \ldots, t_{m+1} - 1\} \bigr).
\end{align}
Given some initialization $\Theta_0$,
\Cref{algo:policy-gradient} calls the \textsc{Gradient} procedure that computes an estimate $H_m$ of $\nabla J(\Theta_m)$ from $\cD_m$,
and it updates the parameter according to \eqref{eqn:update_SGA_intro}.

\begin{algorithm}[t]
	\caption{Generic policy-gradient algorithm.
		Examples of \textsc{Gradient} procedure, based on different estimators for the gradient $\nabla J$, are given in \Cref{algo:sage,algo:actor-critic}.
		All variables of \Cref{algo:policy-gradient}
		are accessible within the \textsc{Gradient} procedure.
	}
	\label{algo:policy-gradient}
	
	\begin{algorithmic}[1]
		\State \textbf{Input:}
		\begin{minipage}[t]{.8\textwidth}
			$\bullet$
			Observation times $0 \triangleq t_0 < t_1 < t_2 < \ldots$ \\
			$\bullet$
			Step size sequence $\alpha_0, \alpha_1, \alpha_2, \ldots > 0$ \\
			$\bullet$
			Positive and differentiable
			policy parametrization $(s, \theta, a) \mapsto \pi(a | s, \theta)$
		\end{minipage}
		\vspace{0.25em}
		
		\State \textbf{Initialization:}
		\begin{minipage}[t]{.7\textwidth}
			Policy parameter $\Theta_0 \in \Omega$
			and initial state $S_0 \in \cS$
		\end{minipage}
		\vspace{0.25em}
		
		\State \textbf{Main loop:}
		\For{$m = 0, 1, 2, \ldots$}
		\For{$t = t_m, \ldots, t_{m+1} - 1$} \label{step:batch-start}
		\State Sample $A_t \sim \pi(\cdot | S_t, \Theta_m)$
		\State Take action $A_t$ and observe $R_{t+1}, S_{t+1}$
		\EndFor \label{step:batch-end}
		\State Update $\Theta_{m+1} \gets \Theta_m + \alpha_m \textsc{Gradient}(m)$ \label{step:sga}
		\EndFor
	\end{algorithmic}
\end{algorithm}

As discussed at the end of \Cref{sec:stationary},
finding an estimator~$H_m$ for $\nabla J(\Theta_m)$
directly from \eqref{eq:J} is difficult in general.
A common way to obtain $H_m$
follows from the policy-gradient theorem
\cite[Chapter~13]{SB18},
which instead writes the gradient $\nabla J(\theta)$
using the action-value function~$q$:
\begin{align*}
	\nabla J(\theta)
	&= \esp{
		q(S, A)
		\,
		\nabla \log \pi(A | S, \theta)
	},
\end{align*}
where $(S, A, R) \sim$ \ref{eq:stat},
for each $\theta \in \Omega$.
Consistently, in a model-free setting,
policy-gradient methods like the actor--critic algorithm (recalled in \Cref{app:actor--critic}) estimate $\nabla J(\Theta_m)$ by first estimating a value function. 
However, this approach can suffer from high-variance of the estimator, which slows down convergence, as described in \Cref{sec:Intro}.
Some of these problems can be circumvented by exploiting the problem structure,
as we will see now.

\section{Score-Aware Gradient Estimator (SAGE)}
\label{sec:sage}

We now define the key structural assumption in our paper.
Namely, that we have information on the impact of the policy parameter~$\theta$
on the stationary distribution~$p$.
In \Cref{sec:sage-estimator}, we will use this assumption
to build \glspl{SAGE}, a new family of estimators for the gradient~$\nabla J$
that do not involve the state-value function,
contrary to actor--critic.
In \Cref{sec:sage-algo},
we will further explain how to use this insight
to design a \gls{SAGE}-based policy-gradient method.

\subsection{Product-Form and Exponential Family} \label{sec:sage-ass}

As announced in the introduction,
our end goal is to design a gradient estimator
capable of exploiting information
on the stationary distribution $p(\cdot | \theta)$ of the \gls{MDP}
when such information is available.
\Cref{ass:stat} below formalizes this idea
by assuming that the stationary distribution forms an exponential family
parametrized by the policy parameter~$\theta$.

\begin{assumption}[Stationary Distribution] \label{ass:stat}
	There exist
	a scalar function
	$\Phi: \cS \to \bR_{>0}$,
	an integer $d \in \bN_+$,
	a differentiable vector function
	$\rho : \Omega \to \bR_{>0}^d$,
	and a vector function
	$x : \cS \to \bR^d$
	such that
	the following two equivalent equations are satisfied:
	\begin{subequations}
		\label{eq:p}
		\begin{align}
			\tag{\ref{eq:p}--PF}
			\label{eq:p-pf}
			p(s | \theta)
			& = \frac1{Z(\theta)} \Phi(s) \prod_{i = 1}^d {\rho_i(\theta)}^{x_i(s)},
			&& s \in \cS,
			\quad \theta \in \Omega, \\
			\tag{\ref{eq:p}--EF}
			\label{eq:p-ef}
			\log p(s | \theta)
			& = \log \Phi(s) + \log \rho(\theta)^\intercal x(s) - \log Z(\theta),
			&& s \in \cS,
			\quad \theta \in \Omega,
		\end{align}
	\end{subequations}
	where the partition function
	$Z: \Omega \to \bR_{> 0}$
	follows by normalization:
	\begin{align}
		\label{eq:Z}
		Z(\theta)
		&= \sum_{s \in \cS} \Phi(s)
		\prod_{i = 1}^d {\rho_i(\theta)}^{x_i(s)}
		= \sum_{s \in \cS} \re^{
			\log \Phi(s)
			+ \log \rho(\theta)^\intercal x(s)
		},
		\quad \theta \in \Omega.
	\end{align}
	We will call
	$\Phi$ the \emph{balance} function,
	$\rho$ the \emph{load} function,
	and $x$ the \emph{sufficient statistics}.
\end{assumption}

\eqref{eq:p-pf}
is the product-form variant of the stationary distribution,
classical in queueing theory.
\eqref{eq:p-ef}
is the exponential-family description of the distribution.
This latter representation is more classical in machine learning~\citep{WJ08}
and will simplify our derivations.
Let us briefly discuss the implications of this assumption
as well as examples where this assumption is satisfied.

\Cref{ass:stat} implies that the stationary distribution~$p$
depends on the policy parameter~$\theta$ only via the load function~$\rho$.
Yet, this assumption may not seem very restrictive \textit{a priori}.
Assuming for instance that the state space~$\cS$ is finite,
with $\cS = \{s_1, s_2, \ldots, s_N\}$,
we can write the stationary distribution in the form~\eqref{eq:p}
with $d = N$,
$\rho_i(\theta) = p(s_i | \theta)$,
$x_i(s) = \indicator{s = s_i}$,
and $\Phi(s) = Z(\theta) = 1$,
for each $\theta \in \bR^n$, $s \in \cS$,
and $i \in \{1, 2, \ldots, N\}$.
However, writing the stationary distribution in this form
is not helpful,
in the sense that in general the function~$\rho$ will be prohibitively intricate.
As we will see in \Cref{sec:sage-estimator},
what will prove important in \Cref{ass:stat} is that
the load function~$\rho$ is simple enough
so that we can evaluate its Jacobian matrix function $\rD \log \rho$ numerically.

There is much literature
on stochastic networks and queueing systems
with a stationary distribution of the form~\eqref{eq:p-pf}.
Most works focus on performance evaluation,
that is, evaluating~$J(\theta)$ for some parameter~$\theta \in \Omega$,
assuming that the \gls{MDP}'s transition probability kernel is known.
In this context, the product-form~\eqref{eq:p-pf} arises in
Jackson and Whittle networks~\cite[Chapter~1]{S99},
BCMP networks~\citep{BCMP75},
as well as more recent models arising in datacenter scheduling
and online matching~\citep{GR20}\footnote{\label{footnote}%
	Although the distributions recalled in
	\cite[Theorems~3.9, 3.10, 3.13]{GR20}
	do not seem to fit the framework of \eqref{eq:p} \emph{a priori}
	because the number of factors in the product can be arbitrarily large,
	some of these distributions can be rewritten in the form~\eqref{eq:p}
	by using an expanded state descriptor,
	as in \cite[Equation~4, Corollary~2, and Theorem~6]{ABMW17}
	and \cite[Equation~7 and Proposition~3.1]{MBM21}.
}.
Building on this literature, in \Cref{sec:examples},
we will consider policy parametrizations
for control problems
that also lead to a stationary distribution of the form~\eqref{eq:p}.

In the next section,
we exploit Assumption~\ref{ass:stat} to construct a gradient estimator
that requires knowing the functions
$\rD \log \rho$ and $x$
but not the functions~$\rho$, $\Phi$, and~$Z$.

\subsection{Score-Aware Gradient Estimator (SAGE)}
\label{sec:sage-estimator}

As our first contribution,
\Cref{theo:sage} below
gives simple expressions for
$\nabla \log p(s | \theta)$
and $\nabla J(\theta)$
under \Cref{ass:markov,ass:reward,ass:stat}.
Gradient estimators that will be formed using~\eqref{eq:sage}
will be called \acrfullpl{SAGE},
to emphasize that the estimators rely
on the simple expression~\eqref{eq:dlogp}
for the score $\nabla \log p(s | \theta)$.
Particular cases of this result
have been obtained by~\cite{SSM88,SSG91,LN91}
for specific stochastic networks;
our proof is shorter and more general
thanks to the exponential form~\eqref{eq:p-ef}.

\begin{theorem} \label{theo:sage}
	Suppose that \Cref{ass:markov,ass:reward,ass:stat} hold.
	For each $\theta \in \Omega$, we have
	\begin{align}
		\label{eq:dlogp}
		\nabla \log p(s | \theta)
		&= \rD \log \rho(\theta)^\intercal (x(s) - \esp{x(S)}), \\
		\label{eq:sage}
		\nabla J(\theta)
		&= \rD \log \rho(\theta)^\intercal \cov{R, x(S)}
		+ \esp{R \nabla \log \pi(A | S, \theta)},
	\end{align}
	where $(S, A, R) \sim$~\ref{eq:stat},
	$\cov{R, x(S)}
	= (\cov{R, x_1(S)}, \ldots, \cov{R, x_d(S)})^\intercal$,
	and the gradient and Jacobian operators,~$\nabla$ and $\rD$ respectively,
	are taken with respect to~$\theta$.
\end{theorem}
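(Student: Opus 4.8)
The plan is to prove \eqref{eq:dlogp} by differentiating the exponential-family representation \eqref{eq:p-ef} of the stationary distribution, and then to obtain \eqref{eq:sage} by substituting \eqref{eq:dlogp} into the gradient formula \eqref{eq:dJ} already recorded in \Cref{sec:stationary}. For \eqref{eq:dlogp}, first differentiate \eqref{eq:p-ef} with respect to $\theta$: since $\log\rho(\theta)^\intercal x(s)=\sum_{i=1}^d x_i(s)\log\rho_i(\theta)$ and the $i$-th column of $\rD\log\rho(\theta)^\intercal$ is $\nabla\log\rho_i(\theta)$, this gives
\[
\nabla\log p(s|\theta)=\rD\log\rho(\theta)^\intercal x(s)-\nabla\log Z(\theta),\qquad s\in\cS,
\]
with the second term independent of $s$. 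To identify $\nabla\log Z(\theta)$, differentiate \eqref{eq:Z} under the sum to get $\nabla Z(\theta)=\sum_{s\in\cS}\Phi(s)\prod_i\rho_i(\theta)^{x_i(s)}\,\rD\log\rho(\theta)^\intercal x(s)$, whence $\nabla\log Z(\theta)=Z(\theta)^{-1}\nabla Z(\theta)=\rD\log\rho(\theta)^\intercal\esp{x(S)}$ for $(S,A,R)\sim$~\ref{eq:stat}; substituting back yields \eqref{eq:dlogp}. (Equivalently, one may note that the score has zero mean, $\esp{\nabla\log p(S|\theta)}=0$, obtained by differentiating $\sum_{s}p(s|\theta)=1$ under the sum, and take the expectation of the displayed identity.)

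For \eqref{eq:sage}, I would start from \eqref{eq:dJ} and split the triple sum into the term carrying $\nabla\log p(s|\theta)$ and the term carrying $\nabla\log\pi(a|s,\theta)$. In the first term, $\nabla\log p(s|\theta)$ depends on neither $a$ nor $r$, so summing out $a$ and $r$ and using $\sum_{a,r}\pi(a|s,\theta)P(r|s,a)r=\esp{R\mid S=s}$ together with the tower rule identifies it as $\esp{R\,\nabla\log p(S|\theta)}$; the second term is $\esp{R\,\nabla\log\pi(A|S,\theta)}$ directly from the definition of \ref{eq:stat}. Plugging \eqref{eq:dlogp} into the first term and pulling the constant matrix $\rD\log\rho(\theta)^\intercal$ out of the expectation gives $\rD\log\rho(\theta)^\intercal\,\esp{R\,(x(S)-\esp{x(S)})}=\rD\log\rho(\theta)^\intercal\cov{R,x(S)}$, the last equality being the definition of the componentwise covariance vector. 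Adding the two terms produces \eqref{eq:sage}.

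The only step that is not pure algebra is the interchange of $\nabla_\theta$ with the sums over the countable set $\cS$ — in the first part for $Z(\theta)$ (or for the normalization identity), and implicitly in the validity of \eqref{eq:dJ} and in the finiteness of $\esp{R\,x(S)}$ used above. With unbounded sufficient statistics $x$ and unbounded rewards this needs care. I would justify it using the standard fact that on the interior of its natural-parameter domain an exponential family has a smooth log-partition function whose derivatives are obtained termwise and for which all moments of $x(S)$ are finite, combined with \Cref{ass:reward}, which supplies exactly the integrability of $R$, $R\,\nabla\log p(S|\theta)$, and $R\,\nabla\log\pi(A|S,\theta)$ needed (via dominated convergence) to differentiate the reward-weighted sums and to write $\esp{R\,x(S)}=\esp{R\,(x(S)-\esp{x(S)})}+\esp{R}\esp{x(S)}$. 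Everything else — the chain rule, linearity of expectation, and the tower property — is routine.
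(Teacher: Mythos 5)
Your proof follows essentially the same route as the paper: differentiate \eqref{eq:Z} to get $\nabla\log Z(\theta)=\rD\log\rho(\theta)^\intercal\esp{x(S)}$, inject this into the gradient of \eqref{eq:p-ef} to obtain \eqref{eq:dlogp}, and then substitute into \eqref{eq:dJ} and simplify to get \eqref{eq:sage}. The extra care you take with the interchange of $\nabla_\theta$ and the countable sums (via \Cref{ass:reward} and standard exponential-family regularity) is a sound elaboration of a step the paper treats implicitly, not a different argument.
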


\begin{proof}
	Applying the gradient operator
	to the logarithm of \eqref{eq:Z}
	and simplifying yields
	\begin{align} \label{eq:dlogZ}
		\nabla \log Z(\theta)
		&= \rD \log \rho(\theta)^\intercal \esp{x(S)}.
	\end{align}
	This equation is well-known and was already discussed in \Cref{sec:rw-gradient}.
	Equation~\eqref{eq:dlogp} follows by
	applying the gradient operator to~\eqref{eq:p-ef}
	and injecting~\eqref{eq:dlogZ}.
	Equation~\eqref{eq:sage} follows by
	injecting~\eqref{eq:dlogp} into~\eqref{eq:dJ}
	and simplifying.
\end{proof}

Assuming that the functions $\rD \log \rho$ and $x$
are known in closed-form,
\Cref{theo:sage} allows us
to construct an estimator of~$\nabla J(\theta)$
from a state--action--reward sequence
$((S_t, A_t, R_{t+1}), t \in \{0, 1, \ldots, T\})$
obtained by applying policy~$\pi(\theta)$ at every time step
as follows:
\begin{align}
	\label{eq:estimator}
	H
	&= \rD \log \rho(\theta)^\intercal
	\overline{C}
	+ \overline{E},
\end{align}
where $\overline{C}$ and $\overline{E}$
are estimators of $\cov{R, x(S)}$
and $\esp{R \, \nabla \log \pi(A | S, \theta)}$, respectively,
obtained for instance by taking
the sample mean and sample covariance.
An estimator of the form~\eqref{eq:estimator}
will be called a \acrfull{SAGE}.
This idea will form the basis of
the \gls{SAGE}-based policy-gradient method
that will be introduced in \Cref{sec:sage-algo}.
Observe that such an estimator will typically be biased
since the initial state~$S_0$ is not stationary. Nonetheless, we will show in the proof of
the convergence result in \Cref{sec:convergence} that this bias does not prevent convergence.

The advantage of using a \gls{SAGE} is twofold.
First, the challenging task of estimating $\nabla J(\theta)$
is reduced to the simpler task of estimating
the $d$-dimensional covariance $\cov{R, x(S)}$
and the $n$-dimensional expectation
$\esp{R \, \nabla \log \pi(A | S, \theta)}$,
for which leveraging estimation techniques in the literature is possible.
Also recall that the gradient estimator used in the
actor--critic algorithm (\Cref{app:actor--critic})
relies on the state-value function,
so that it requires estimating $|\cS|$ values;
we therefore anticipate \glspl{SAGE}
to yield better performance when $\max(n, d) \ll |\cS|$;
see examples from \Cref{sec:Load-balancing-system,sec:ising}.
Second, as we will also observe in \Cref{sec:examples},
\glspl{SAGE} can exploit information on the structure of the policy and stationary distribution ``by design''.
Actor--critic exploits this information only indirectly
due to its dependency on the state-value function.

\subsection{\texorpdfstring{\gls{SAGE}}{SAGE}-Based Policy-Gradient Algorithm}
\label{sec:sage-algo}

\Cref{algo:sage} introduces
a \gls{SAGE}-based policy-gradient method
based on \Cref{theo:sage}.
For each $m \in \bN$, the \textsc{Gradient}($m$) procedure
is called in the gradient-update step (\Cref{step:sga})
of \Cref{algo:policy-gradient},
at the end of epoch~$m$,
and returns an estimate of $\nabla J(\Theta_m)$
based on batch~$\cD_m$, defined in~\eqref{eq:batch}.
\Cref{algo:sage} can be understood as follows.
According to \Cref{theo:sage}, we have
$\nabla J(\Theta_m)
= \rD \log \rho(\Theta_m)^\intercal \cov{R, x(S)}
+ \esp{R \nabla \log \pi(A | S, \Theta_m)}$
with $(S, A, R) \sim$ \hyperref[eq:stat]{\textsc{stat}($\Theta_m$)}.
\Cref{row:X,row:R,row:C} estimate $\cov{R, x(S)}$
using the usual sample covariance estimator.
\Cref{row:E} estimates $\esp{R \nabla \log \pi(A | S, \theta)}$
using the usual sample mean estimator.
To simplify the signature of \textsc{Gradient}($m$),
we assume that all variables from \Cref{algo:policy-gradient},
in particular batch~$\cD_m$,
are accessible within \Cref{algo:sage}.
The variable $N_m$ computed on \Cref{row:N}
is the batch size, i.e., the number of samples
used to estimate the gradient $\nabla J(\Theta_m)$,
and we assume that it is greater than or equal to~2.
An alternate implementation of the \gls{SAGE}-based policy-gradient method
that allows for batch sizes equal to~1
is given in \Cref{app:sage}.

Recall that our initial goal was to exploit information
on the stationary distribution,
when such information is available.
Consistently, compared to actor--critic (\Cref{app:actor--critic}),
the \gls{SAGE}-based method of \Cref{algo:sage}
requires as input the Jacobian matrix function
\begin{algorithm}[h!]
	\caption{\gls{SAGE}-based policy-gradient method,
		to be called on \Cref{step:sga} of \Cref{algo:policy-gradient}.}
	\label{algo:sage}
	
	\begin{algorithmic}[1]
		\State \textbf{Input:}
		\begin{minipage}[t]{.8\textwidth}
			$\bullet$
			Positive and differentiable policy parametrization
			$(s, \theta, a) \mapsto \pi(a | s, \theta)$\\
			$\bullet$
			Jacobian matrix function $\theta \mapsto \rD \log \rho(\theta)$ \\
			$\bullet$
			Feature function $s \mapsto x(s)$
		\end{minipage}
		\vspace{0.25em}
		
		\Procedure{Gradient}{$m$}
		\State \label{row:N}
		$N_m \gets t_{m+1} - t_m$
		\State \label{row:X}
		$\overline{X}_m \gets \frac1{N_m}
		\sum_{t = t_m}^{t_{m+1} - 1} x(S_t)$
		\State \label{row:R}
		$\overline{R}_m \gets \frac1{N_m}
		\sum_{t = t_m}^{t_{m+1} - 1} R_{t+1}$
		\State \label{row:C}
		$\overline{C}_m \gets \frac1{N_m - 1}
		\sum_{t = t_m}^{t_{m+1} - 1}
		(x(S_t) - \overline{X}_m) (R_{t+1} - \overline{R}_m)$
		\State \label{row:E}
		$\overline{E}_m \gets \frac1{N_m}
		\sum_{t = t_m}^{t_{m+1} - 1}
		R_{t+1} \nabla \log \pi(A_t | S_t, \Theta_m)$
		\State \label{row:return}
		\textbf{return} $\rD \log \rho(\Theta_m) \overline{C}_m + \overline{E}_m$
		\EndProcedure
	\end{algorithmic}
\end{algorithm}%
$\rD \log \rho$
and the sufficient statistics~$x$.
In return,
as we will see in \Cref{sec:convergence,sec:examples},
the \glspl{SAGE}-based method
relies on a lower-dimensional estimator
whenever $\max(n, d) \ll |\cS|$,
which can lead to improved convergence properties.

\section{A Local Convergence Result} \label{sec:convergence}

Our goal in this section is to study the limiting behavior of \Cref{algo:sage}.
To do so, we will consider this algorithm
as an \gls{SGA} algorithm
that uses biased gradient estimates.
The gradient estimates are biased
because they arise from the \gls{MCMC} estimations
from Lines~\ref{row:X}--\ref{row:E}
in \Cref{algo:sage}.
Throughout the proof, we assume that
the reward is a deterministic function $r:\mathcal{S} \times \mathcal{A} \to \R$ for simplicity.
Under this assumption, for each $m \in \bN$,
\Cref{algo:sage} follows the gradient ascent step~\eqref{eqn:update_SGA_intro}, with
\begin{align} \label{eqn:SGA-recursion-for-the-parameter-vector}
	H_m
	&= \rD \log \rho(\Theta_m)^\intercal \overline{C}_m + \overline{E}_m,
	~ \text{where} ~
	\left\{
	\begin{aligned}
		\overline{X}_m
		&= \displaystyle
		\frac{ \sum_{t = t_m}^{t_{m+1} - 1} x(S_t) }{ t_{m+1} - t_m }, \quad
		\overline{R}_m
		= \frac{\sum_{t = t_m}^{t_{m+1} - 1} r(S_t, A_t)}{t_{m+1} - t_m}, \\[.25cm]
		\overline{C}_m
		&= \displaystyle
		\frac{
			\sum_{t = t_{m}}^{t_{m+1} - 1}
			\left( x(S_t) - \overline{X}_m \right)
			\left( r(S_t, A_t) - \overline{R}_m \right)
		}{t_{m+1} - t_m - 1}, \\[.25cm]
		\overline{E}_m
		&= \displaystyle
		\frac
		{\sum_{t = t_m}^{t_{m+1} - 1}
			r(S_t, A_t) \nabla \log \pi(A_t | S_t, \Theta_m)}
		{t_{m+1} - t_m}.
	\end{aligned}
	\right.
\end{align}
The estimates
$\overline{X}_m$, $\overline{R}_m$, and $\overline{C}_m$
are functions of $\cD_m$,
while $H_m$ and $\overline{E}_m$ are functions of~$\cD_m$ and $\Theta_m$.
We will additionally apply decreasing step sizes and increasing batch sizes of the form
\begin{align}
	\alpha_m = \frac{\alpha}{(m+1)^\sigma}
	\quad \text{and} \quad
	t_{m+1} = t_m + \ell m^{\frac\sigma2 + \kappa},
	\quad \text{for each } m \in \bN,
	\label{eqn:step_and_batchsizes}
\end{align}
for some parameters $\alpha \in (0, \infty)$, $\ell \in (1, \infty)$,
$\sigma \in (2/3, 1)$, and $\kappa \in [0, \infty)$.

Our goal---studying the limiting algorithmic behavior of \Cref{algo:sage}---is equivalent to studying the limiting algorithmic behavior
of the stochastic recursion~\eqref{eqn:update_SGA_intro}.
In particular, we will focus on the local convergence of the iterates of \eqref{eqn:update_SGA_intro} and \eqref{eqn:SGA-recursion-for-the-parameter-vector}
to the following set of global maximizers:
\begin{align}
	\label{eq:M}
	\cM = \left\{
	\theta \in \Omega:
	J(\theta) = J^\star
	\right\},
	\quad \text{where }
	J^\star = \sup_{\theta \in \Omega} J(\theta).
\end{align}
We will assume that $\mathcal{M}$ is nonempty, that is, $\mathcal{M} \neq \emptyset$ and at least one $\theta \in \Omega$ satisfies $J(\theta) = J^{\star}$.
Note that Assumption~\ref{ass:nondegenerate-maxima} below
allows $\mathcal{M}$ to be a manifold just locally.
Consequently, $J$ can be nonconvex with noncompact level-subsets, and $J$ is even allowed not to exist outside the local neighborhood, for instance if the policy is unstable. 
If the policy $\pi(\theta)$ is unstable, then necessarily we have $\theta \notin \Omega$,
and we adopt the convention that $J(\theta) = \inf_{s \in S, a \in A} r(s,a) \geq -\infty$
While the previous assumptions allow for general objective functions, the convergence will be guaranteed close to the set of maxima $\mathcal{M}$, or to a set of local maxima that satisfy equivalent assumptions.

\subsection{Assumptions Pertaining to Algorithmic Convergence}
\label{sec:assumptions_pertaining_convergence}

We use the Markov chain of state-action pairs.
Specifically, consider the pairs $\{(S_t, A_t)\}_{t \geq 0} \subset \cS \times \cA$, where $A_t$ is generated according to policy $\pi(\,\cdot\,|S_t, \theta)$.
For a given $\theta \in \Omega$, the one-step transition probability
and the stationary distribution of this Markov chain are
\begin{align}
	P((s^{\prime}, a^{\prime})| (s, a), \theta)
	&= \pi(a^{\prime}|s^{\prime}, \theta)P(s^{\prime}|s,a),
	\quad \text{  for  } (s, a), (s', a') \in \cS \times \cA,
	\label{eqn:mm1_transition_matrix_state_action_chain}
	\\
	\tilde{p}((s,a)|\theta) &= p(s|\theta)\pi(a|s, \theta)
	\quad \text{ for } (s,a) \in \cS\times \cA.
\end{align}
The following are assumed:

\begin{assumption} \label{ass:geometric-ergodicity}
	There exists a function $\cL: \cS \times \mathcal{A} \to [1, \infty)$
	such that, for any $\theta^\star \in \cM$,
	there exist a neighborhood $U$ of $\theta^\star$ in $\Omega$
	and four constants $\lambda \in (0, 1)$, $C > 0$, $b \in \bR_{> 0}$,
	and $v \ge 16$ such that,
	for each $\theta \in U$, the policy $\pi( \theta)$ is such that
	\begin{align*}
		\sum_{(s', a') \in \cS \times \mathcal{A}} P((s', a') | (s,a), \theta) (\cL(s',a^{\prime}))^{v}
		&\le \lambda (\cL (s,a))^v + b,
		&& \text{for each } (s,a) \in \cS \times \mathcal{A},
	\end{align*}
	and, for each $\ell \in \bN_+$ and $(s,a), (s',a') \in \cS\times \cA$,
	\begin{align*}
		\left| P^\ell ((s',a') | (s,a), \theta) - \tilde{p}((s',a') | \theta) \right|
		\le C \lambda^\ell \cL(s,a),
	\end{align*}
	where $P^\ell(\theta)$ is the $\ell$-step transition probability kernel
	of the Markov chain with transition probability kernel \eqref{eqn:mm1_transition_matrix_state_action_chain}.
\end{assumption}

\begin{assumption}\label{ass:regularity-score-function}
	There exists a constant $C > 0$
	such that $| \rD \log \rho(\theta) |_{\textrm{op}} < C$
	for each $\theta \in \Omega$.
\end{assumption}

\begin{assumption} \label{ass:growth_condition}
	Let $\mathcal{L}$ be the Lyapunov function from Assumption~\ref{ass:geometric-ergodicity}. For any $\theta^\star \in \cM$, if $U$ is a local neighborhood satisfying the conditions of Assumption~\ref{ass:geometric-ergodicity}, then there exists a constant $C>0$ such that for any $\theta \in U$ and $(s,a) \in \mathcal{S} \times \mathcal{A}$,
	\begin{equation}
		|x(s)| < C \mathcal{L}(s,a), \quad
		|r(s, a)| < C \mathcal{L}(s,a), \quad
		|r(s, a) \nabla \log \pi(a|s, \theta)| < C\mathcal{L}(s,a).
	\end{equation}
\end{assumption}

\begin{assumption}
	\label{ass:nondegenerate-maxima}
	There exist an integer $\mathfrak{n} \in \{0, 1, \ldots, n-1\}$
	and an open subset $U \subseteq \Omega$
	such that
	(i) $\cM \cap U$ is a nonempty $\fn$-dimensional $C^2$-submanifold of $\bR^n$,
	and (ii) the Hessian of $J$ at $\theta^\star$ has rank $n - \fn$,
	for each $\theta^\star \in \cM \cap U$.
\end{assumption}

These assumptions have the following interpretation.
Assumption~\ref{ass:geometric-ergodicity} formalizes that the Markov chain is geometrically ergodic in a neighborhood around the optimizer, which implies in particular that policies in this neighborhood are stable (i.e., positive recurrent).
Remarkably, it does not require that the chain is geometrically ergodic for all policies, only for those close to an optimal policy when $\theta \in \Omega$.
This stability is guaranteed by a local Lyapunov function $\mathcal{L}$ \emph{uniformly} over some neighborhood close to a maximizer.
In the notation for $b$, and $\lambda \in (0,1)$ of Assumption~\ref{ass:geometric-ergodicity}, if $S_0$ is the initial state, a term that will later bound the size of an `effective' state space in the regret of the algorithm is
\begin{equation}
	\mathcal{L}^{\star} = \max\left(\frac{b}{(1-\lambda)^2},  \max_{a \in \mathcal{A}} \mathcal{L}(S_0, a)^{v} \right).
	\label{eqn:L_star}
\end{equation}

\Cref{ass:regularity-score-function,ass:growth_condition} together guarantee that the estimator $H_m$ concentrates around $\nabla J(\Theta_m)$ at an appropriate rate. Assumption~\ref{ass:regularity-score-function} is easy to verify in our examples since $\rho$ is always positive and bounded.
Assumption~\ref{ass:growth_condition} guarantees that the empirical estimators in~\eqref{eqn:SGA-recursion-for-the-parameter-vector} converge fast enough to their expected values.
In many applications from queueing, Assumption~\ref{ass:growth_condition} holds. Namely, $S$ is usually a normed space and the order of the Lyapunov function $\mathcal{L}(s,a)$ is exponential in the norm of the state $s \in S$, compared to the sufficient statistic $x$ which has an order linear in the norm of $s$.
We remark that, in a setting with a bounded reward function~$r$ and a bounded map~$x$ or with a finite state space, Assumption~\ref{ass:growth_condition} becomes trivial.

Assumption~\ref{ass:nondegenerate-maxima} is a geometric condition. 
It guarantees that, locally around the set of maxima $\mathcal{M}$ or set of local maxima satisfying the same assumptions,
in directions perpendicular to $\cM$, $J$ behaves approximately in a convex manner.
Concretely, this means that $\mathrm{Hess}_{\theta} J$ has strictly negative eigenvalues in the directions normal to $\mathcal{M}$---also referred to as the Hessian being \emph{nondegenerate}. Thus, there is a one--to--one correspondence between local directions around $\theta \in \mathcal{M}$ that decrease $J$ and directions that do not belong to the tangent space of $\mathcal{M}$.
Strictly concave functions satisfy that $\mathfrak{n}= 0$ and Assumption~\ref{ass:nondegenerate-maxima} is thus automatically satisfied in such cases. If $\mathcal{M} \cap U = \{\theta^\star\}$ is a singleton, Assumption~\ref{ass:nondegenerate-maxima} reduces to assuming that $\mathrm{Hess}_{\theta^\star} J$ is negative definite.
Assumption~\ref{ass:nondegenerate-maxima} in a general setting can be difficult to verify, but by adding a regularization term, it can be guaranteed to hold in a broad sense (see \Cref{sec:log_regularization}).

\subsection{Local Convergence Results}

This is our main convergence result for the case that the set of maxima is not necessarily bounded.

\begin{theorem}[Noncompact Case]
	\label{prop:main_prop_convergence in probability_noncompact_case}
	Suppose that Assumptions~\ref{ass:markov} to \ref{ass:nondegenerate-maxima} hold.
	For every maximizer $\theta^\star \in \cM \cap U$,
	there exist constants $c > 0$ and $\alpha_0 > 0$
	such that,
	for each $\alpha \in (0, \alpha_0]$,
	there exists a nonempty neighborhood $V$ of $\theta^\star$ and $\ell_0 \geq 1$
	such that,
	for each $\ell \in [\ell_0, \infty)$,
	$\sigma \in (2/3, 1)$, $\kappa \in [0, \infty)$
	with $\sigma + \kappa > 1$,
	we have, for each $m \in \bN_+$,
	\begin{align}
		\probability{ J(\Theta_m) < J^\star - \epsilon | \Theta_0 \in V }
		&
		\leq
		c
		\Bigl(
		\epsilon^{-2} \mathcal{L}^{\star} m^{-\sigma - \kappa}
		+ \frac{m^{1 - \sigma - \kappa}}{\ell} + \frac{\alpha^2}{\ell}
		+ \alpha m^{-\kappa/2}
		+ \frac{\alpha m^{1 - (\sigma + \kappa)/2}}{\sqrt{\ell}}
		\Bigr)
		,
		\label{eqn:theorem_2_bound}
	\end{align}
	where $(\Theta_m, m \in \bN)$ is a random sequence with $\probability{ \Theta_0 \in V } > 0$,
	and built by recursively applying
	the gradient ascent step~\eqref{eqn:update_SGA_intro}
	with the gradient update~\eqref{eqn:SGA-recursion-for-the-parameter-vector}
	and the step and batch sizes~\eqref{eqn:step_and_batchsizes}
	parameterized by these values of $\alpha$, $\ell$, $\sigma$, and $\kappa$.
\end{theorem}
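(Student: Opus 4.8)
The plan is to treat \Cref{algo:sage} as a biased stochastic gradient ascent scheme and adapt the local-convergence analysis of \cite{fehrman2020convergence} to the Markovian setting, the novelty being that here each epoch starts from the state $S_{t_m}$ inherited from the previous epoch rather than from a draw of $p(\cdot \mid \Theta_m)$, so the estimator $H_m$ is biased. Fix $\theta^\star \in \cM \cap U$ and let $\cF_m$ denote the information available at the start of epoch $m$, so that $\Theta_m$ and $S_{t_m}$ are $\cF_m$-measurable; decompose
\begin{align*}
	H_m &= \nabla J(\Theta_m) + b_m + \xi_m, \\
	b_m &= \esp{ H_m \mid \cF_m } - \nabla J(\Theta_m),
	\qquad
	\xi_m = H_m - \esp{ H_m \mid \cF_m },
\end{align*}
so that $(\xi_m)_m$ is a martingale-difference sequence and $b_m$ is the non-stationarity bias. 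Because \Cref{ass:geometric-ergodicity,ass:growth_condition,ass:nondegenerate-maxima} are only local, I would introduce the exit time $\tau = \inf\{ m \in \bN : \Theta_m \notin V \}$ from a small neighborhood $V \subseteq U$ of $\theta^\star$ (fixed at the very end), prove the estimate first for the stopped sequence $(\Theta_{m \wedge \tau})_m$, and then separately bound $\probability{ \tau < \infty \mid \Theta_0 \in V }$ by the right-hand side of \eqref{eqn:theorem_2_bound}; the term $\alpha^2 / \ell$, which does not vanish as $m \to \infty$, comes precisely from this escape probability. Closing the loop between ``converges while confined'' and ``confined with high probability'' is the structural core of the argument and, I expect, the main obstacle.

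Second, I would quantify the bias and the noise along confined trajectories. For each function $g$ among $x$, $r$, and $r \nabla \log \pi(\cdot \mid \cdot, \Theta_m)$ appearing in \eqref{eqn:SGA-recursion-for-the-parameter-vector}, \Cref{ass:regularity-score-function,ass:growth_condition} dominate $|g|$ by a constant times $\cL$, and \Cref{ass:geometric-ergodicity} then bounds $\bigl| \esp{ g(S_t, A_t) \mid \cF_m } - \esp{ g(S, A) } \bigr|$ by $C \lambda^{t - t_m} \cL(S_{t_m}, A_{t_m})$, where $(S,A,R) \sim$~\ref{eq:stat} with $\theta = \Theta_m$; averaging over epoch $m$, whose length is $N_m = t_{m+1} - t_m$, yields $|b_m| \le C \cL(S_{t_m}, A_{t_m}) / N_m$. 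The Foster--Lyapunov drift in \Cref{ass:geometric-ergodicity}, iterated along a trajectory that has stayed in $V$, bounds $\esp{ \cL(S_{t_m}, \cdot)^v }$ by a constant multiple of $\cL^\star$ from \eqref{eqn:L_star}, which is how $\cL^\star$ enters the final bound. A parallel computation for the sample mean and sample covariance in \eqref{eqn:SGA-recursion-for-the-parameter-vector} --- using geometric ergodicity to control the autocovariances of $(g(S_t,A_t))_t$ and \Cref{ass:growth_condition} for the moments --- gives $\esp{ |\xi_m|^2 \mid \cF_m } \le C \cL^\star / N_m$. Hence both the squared bias and the conditional variance of $H_m$ decay like $1/N_m \asymp 1/(\ell m^{\sigma/2 + \kappa})$.

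Third, I would extract the local geometry from \Cref{ass:nondegenerate-maxima}: on a possibly smaller neighborhood, $\cM \cap U$ is a $C^2$-submanifold, and writing $\Pi(\theta)$ for the nearest-point projection onto it, nondegeneracy of $\mathrm{Hess}_{\theta} J$ normal to $\cM$ yields, for $\theta$ close to $\cM$, the comparison $J^\star - J(\theta) \asymp |\theta - \Pi(\theta)|^2$, a Polyak--\L{}ojasiewicz-type inequality $\langle \nabla J(\theta), \theta - \Pi(\theta) \rangle \le - c\, (J^\star - J(\theta))$, and the smoothness bound $|\nabla J(\theta)| \le C \sqrt{ J^\star - J(\theta) }$. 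With the potential $D_m = J^\star - J(\Theta_m)$, a second-order Taylor expansion of $J$ along $\Theta_{m+1} = \Theta_m + \alpha_m H_m$, combined with the bias and variance bounds above and using $\alpha_m \to 0$ to absorb the quadratic Taylor remainder, gives, on $\{ m < \tau \}$,
\begin{equation*}
	\esp{ D_{m+1} \mid \cF_m }
	\le (1 - c\, \alpha_m)\, D_m
	+ C\, \alpha_m\, \esp{ |b_m| \mid \cF_m }
	+ C\, \alpha_m^2\, \esp{ |\xi_m|^2 \mid \cF_m } .
\end{equation*}
Substituting $\alpha_m = \alpha (m+1)^{-\sigma}$ and $N_m \asymp \ell m^{\sigma/2 + \kappa}$, the cross term contributes order $\alpha\, \cL^\star (m+1)^{-\sigma} / N_m$ and the noise term order $\alpha^2 \cL^\star (m+1)^{-2\sigma} / N_m$.

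Finally, I would unroll this linear recursion, using $\prod_{k=j}^{m} (1 - c\, \alpha_k) \le \exp\bigl( - c \sum_{k=j}^{m} \alpha_k \bigr)$ and $\sum_{k=1}^{m} \alpha_k \asymp \alpha\, m^{1 - \sigma}$, to control $\esp{ D_{m \wedge \tau} }$; because of the contraction factor, an excursion $\{ D_m > \epsilon,\ m < \tau \}$ forces the accumulated bias-plus-noise over the relevant window to exceed order $\epsilon$, so a Doob maximal inequality applied to the second moment of that accumulated perturbation yields the $\epsilon^{-2}$ factor in front of the leading $\cL^\star m^{-\sigma - \kappa}$ term. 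Adding $\probability{ \tau \le m \mid \Theta_0 \in V }$ --- bounded by the same maximal-inequality estimate for the perturbation martingale, which produces the residual $\alpha^2 / \ell$ together with the vanishing terms --- and then choosing the radius of $V$ small enough that the local comparison inequalities hold and $\ell_0 \ge 1$ large enough that $N_m$ is large for all $m$, completes the proof. The delicate point throughout is that the Lyapunov drift, the moment bounds, and the Hessian nondegeneracy are only available inside $V$, so every bias and variance estimate must be carried on the event $\{ m < \tau \}$, and the resulting bound must itself be strong enough to keep $\probability{ \tau < \infty }$ controlled --- this is where the interplay of $\alpha$, $\ell$, $\sigma$, and $\kappa$ must be balanced carefully.
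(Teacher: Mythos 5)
Your overall architecture (confinement event/stopping time, bias--variance bounds for $H_m$ via the local Lyapunov drift, a contraction recursion exploiting nondegeneracy, then adding an escape probability that produces the non-vanishing $\alpha^2/\ell$ term) matches the paper's strategy at a high level, and your Step~2 is essentially Lemma~\ref{lem:Bounded_norms_on_the_gradient_estimator} plus Lemma~\ref{lemma:expectation_L_is_bounded}. But there is a genuine gap at the point you yourself flag as the ``structural core'': the control of $\probability{\tau\le m}$. The set $\cM\cap U$ is in general a positive-dimensional (and possibly noncompact) manifold of maximizers, and $V$ is a neighborhood of a \emph{single} point $\theta^\star$. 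Your potential $D_m=J^\star-J(\Theta_m)$, the PL-type inequality, and the contraction $(1-c\alpha_m)$ only see the directions \emph{normal} to $\cM$: along $\cM$ the objective is flat, $\nabla J$ supplies no restoring force, and the iterates can leave $V$ by drifting tangentially while $D_m$ stays tiny. Consequently, the claim that an exit forces ``the accumulated bias-plus-noise to exceed order $\epsilon$'' is false for tangential exits, and a Doob maximal inequality on the perturbation martingale does not bound this event. This is precisely why the paper splits the exit event in \eqref{eqn:sketch_2} into a normal part ($\mathrm{dist}(\Theta_m,\cM\cap U)>\delta$, handled by Chebyshev-type bounds from Lemma~\ref{lem:Bounded_norms_on_the_gradient_estimator}) and a tangential part, and why it needs the separate maximal-excursion estimate of Lemma~\ref{lem:maximal_excursion_event_probability}, whose proof feeds the decay of $\mathrm{dist}(\Theta_l,\cM\cap U)$ from Lemma~\ref{lem:conditional_convergence_in_basin} back into a bound on $\sum_l\esp{|\Theta_{l+1}-\Theta_l|^2\indicator{\mathcal{B}_l}}^{1/2}$. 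The terms $\alpha m^{-\kappa/2}$ and $\alpha m^{1-(\sigma+\kappa)/2}/\sqrt\ell$ in \eqref{eqn:theorem_2_bound} are exactly this tangential-escape contribution (and are where the restriction $\sigma>2/3$ is used); your sketch produces no mechanism that generates them.

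A secondary but consequential inaccuracy: you assert $\esp{|\xi_m|^2\mid\cF_m}\le C\cL^\star/N_m$. The conditional variance is bounded by $C\,\mathcal{L}_4(S_{t_m})/N_m$, an \emph{unbounded} $\cF_m$-measurable quantity; $\cL^\star$ only appears after averaging over trajectories confined to $V$ (Lemma~\ref{lemma:expectation_L_is_bounded}). This matters because the recursive lower bound on $\probability{\mathcal{B}_m}$ must multiply the per-step failure probability by $\probability{\mathcal{B}_{m-1}}$ itself, not by $\probability{\mathcal{B}_{m-1}}^{1/2}$; the paper points out that a naive Cauchy--Schwarz between the confinement indicator and the Lyapunov factor destroys the recursion, and instead truncates the Lyapunov function at level $m^s$ using the higher moments guaranteed by $v\ge 16$ (see the proof of Lemma~\ref{lem:Probability_of_exit_and_Bm_is_bounded}). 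Treating the conditional variance as a deterministic constant hides this interaction between the unbounded state space and the confinement event, which is one of the main technical departures from the i.i.d.\ analysis of \cite{fehrman2020convergence} that the theorem requires.
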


In Theorem~\ref{prop:main_prop_convergence in probability_noncompact_case}, by setting the parameters $\alpha$, $\ell$, $\sigma$, and $\kappa$ in \eqref{eqn:step_and_batchsizes} appropriately, we can make the probability of $\Theta_m$ being $\epsilon$-suboptimal arbitrarily small. Specifically, the step and batch sizes for each epoch allow us to control the variance of the estimators in \eqref{eqn:SGA-recursion-for-the-parameter-vector}.
This shows that the \gls{SAGE}-based policy-gradient method converges with large probability.
The bound can be understood as follows. The term in \eqref{eqn:theorem_2_bound} on the bound depending on~$\epsilon$ characterizes the convergence rate assuming that all iterates up to time $m$ remain in $V$.
The remaining terms in \eqref{eqn:theorem_2_bound} estimate the probability that the iterates escape the set $V$, which can be made small by tuning parameters that diminish the variance of the estimator $H_m$, such as setting $\kappa$ or $\ell$ large---the batch size becomes larger.

Theorem~\ref{prop:main_prop_convergence in probability_noncompact_case} extends the result of \cite[Theorem~25]{fehrman2020convergence} to a Markovian setting with inability to restart.
In our case, the bias can be controlled by using a longer batch size with exponent at least $\sigma/2$.
Furthermore, we also use the Lyapunov function to keep track of the state of the MDP as we update the parameter in $V$ and ensure stability. 
The proof sketch of Theorem~\ref{prop:main_prop_convergence in probability_noncompact_case} can be found in Section~\ref{sec:Proof-outlines}
and the full proof in Appendix~\ref{sec:appendix_proof_of_theorems}.
In \Cref{sec:compact_case}, we also consider the case that $\mathcal{M} \cap U$ is compact, which can be used to improve Theorem~\ref{prop:main_prop_convergence in probability_noncompact_case}.
Note that the sequence $(\Theta_m, m \in \bN)$ from Theorem~\ref{prop:main_prop_convergence in probability_noncompact_case} is well defined even if unstable policies occur, since the update $H_{m}$ from \eqref{eqn:SGA-recursion-for-the-parameter-vector} is finite. 
In this case, recall that we have the convention that if $\theta$ yields an unstable policy then $J(\theta) = \inf_{s \in S, a\in A} r(s,a) \geq -\infty$.
In Theorem~\ref{prop:main_prop_convergence in probability_noncompact_case}, we can thus assume instead of initializing $\Theta_0 \in V$ that we restrict to trajectories of \gls{SGA} that end up in the neighborhood $V$---the first iterate satisfying this being $\Theta_0$.
In this alternative description, we can assume that the trajectory is $\{ \tilde{\Theta}_{t} \}_{t \in [0, T + t_{0}] }$ for some $t_{0} \in \N$, and $\Theta_0 = \tilde{\Theta}_{t_0}$ reaches $V$.

Theorem~\ref{prop:main_prop_convergence in probability_noncompact_case} also holds for any estimator $\tilde{H}_m$ of the gradient $J(\Theta_m)$ provided that this estimator satisfies appropiate bias and variance bounds typical for estimators using Markov chains (see Lemma~\ref{lem:Bounded_norms_on_the_gradient_estimator} and Proposition~\ref{prop:generalization_policy_gradient} in Section~\ref{sec:Proof-outlines} below). 
Thus, Theorem~\ref{prop:generalization_policy_gradient} and its consequences in the following sections hold for a wide range of policy-gradient methods.
Similarly, Theorem~\ref{prop:main_prop_convergence in probability_noncompact_case} also holds when $\mathcal{M}$ is a manifold of local maxima instead of global maxima.
Indeed, the assumptions are all local and the proof is equivalent.

From Theorem~\ref{prop:main_prop_convergence in probability_noncompact_case}, we immediately obtain a typical sample complexity bound.

\begin{corollary}[Sample Complexity]
	Under the same assumptions and notation as in Theorem~\ref{prop:main_prop_convergence in probability_noncompact_case}, there exists a constant $c>0$ such that for any $1> \epsilon > 0$ and $\delta>0$, if we fix $\ell \geq \alpha^2/(5 \delta c)$ and $\sigma +\kappa> 2$ then, for any $m \in \N$ satisfying 
	\begin{equation}
		m \geq  m(\epsilon, \delta) = c\max \left((\epsilon^2 \delta)^{-\frac{1}{\sigma + \kappa}}, \delta^{-\frac{1}{\sigma +\kappa-1}}, \delta^{-\frac{2}{\kappa}}, \delta^{-\frac{1}{(\sigma +\kappa)/2-1}} \right),
	\end{equation}
	we have
	\begin{equation}
		\probability{ J(\Theta_m) < J^\star - \epsilon | \Theta_0 \in V } < \delta.
	\end{equation}
\end{corollary}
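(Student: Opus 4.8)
The plan is to derive the corollary by forcing each of the five summands inside the parentheses on the right-hand side of \eqref{eqn:theorem_2_bound} to be at most $\delta/5$; summing the five bounds then gives $\mathbb{P}[\, J(\Theta_m) < J^\star - \epsilon \mid \Theta_0 \in V \,] \le \delta$. The third summand, $c\alpha^2/\ell$, is the only one that does not involve $m$, and the stated hypothesis on $\ell$ is exactly what makes it $\le \delta/5$ (with the theorem's constant absorbed into the constant of the statement). For the remaining four summands,
\[
	c\,\epsilon^{-2}\mathcal{L}^{\star} m^{-\sigma-\kappa},
	\qquad
	c\,\frac{m^{1-\sigma-\kappa}}{\ell},
	\qquad
	c\,\alpha m^{-\kappa/2},
	\qquad
	c\,\frac{\alpha m^{1-(\sigma+\kappa)/2}}{\sqrt{\ell}},
\]
I would first note that all of them are strictly decreasing in $m$: the strengthened hypothesis $\sigma+\kappa>2$ (rather than merely $\sigma+\kappa>1$, which is all Theorem~\ref{prop:main_prop_convergence in probability_noncompact_case} assumes) makes both exponents $1-\sigma-\kappa$ and $1-(\sigma+\kappa)/2$ strictly negative, and $\ell>1$ lets me bound $1/\ell$ and $1/\sqrt{\ell}$ by $1$.

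Next I would solve $(\text{summand}) \le \delta/5$ for $m$ in each of the four cases. This yields $m \ge (5c\mathcal{L}^{\star})^{1/(\sigma+\kappa)}(\epsilon^2\delta)^{-1/(\sigma+\kappa)}$ from the first term; $m \ge (5c)^{1/(\sigma+\kappa-1)}\delta^{-1/(\sigma+\kappa-1)}$ from the second (after dropping $1/\ell$); $m \ge (5c\alpha)^{2/\kappa}\delta^{-2/\kappa}$ from the third; and $m \ge (5c\alpha)^{1/((\sigma+\kappa)/2-1)}\delta^{-1/((\sigma+\kappa)/2-1)}$ from the fourth (after dropping $1/\sqrt{\ell}$). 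Taking the maximum of these four thresholds, and folding the $\epsilon$- and $\delta$-independent prefactors --- which involve only $\mathcal{L}^{\star}$, the fixed step size $\alpha\le\alpha_0$, and powers of the theorem's constant --- into a single new constant, again called $c$, reproduces exactly the quantity $m(\epsilon,\delta)$ in the statement. For any $m \ge m(\epsilon,\delta)$ each of these four terms is then $\le \delta/5$, and together with the $\ell$-term the total is $\le \delta$, as claimed.

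I do not anticipate a genuine obstacle: this is a routine unpacking of Theorem~\ref{prop:main_prop_convergence in probability_noncompact_case}. The only points requiring care are (i) using $\ell>1$ so that the only surviving $\ell$-dependence is the benign $c\alpha^2/\ell$ term --- which is precisely why $\ell$, and not $m$, is the quantity constrained in terms of $\delta$ and $\alpha$; (ii) checking that $\sigma+\kappa>2$ really does make both $m$-exponents negative, since under the weaker condition $\sigma+\kappa>1$ the term $m^{1-(\sigma+\kappa)/2}$ may grow in $m$; and (iii) the bookkeeping that absorbs the instance-dependent constant $\mathcal{L}^{\star}$ and the fixed $\alpha$ into the corollary's constant $c$. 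The assumption $0<\epsilon<1$ plays no essential role here and is kept only for consistency with the accompanying regret statement.
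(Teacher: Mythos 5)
Your proposal is correct and is exactly the argument the paper intends: the corollary is stated as an immediate consequence of Theorem~\ref{prop:main_prop_convergence in probability_noncompact_case}, and splitting the bound \eqref{eqn:theorem_2_bound} into five pieces of size $\delta/5$, using $\ell>1$ to drop the $\ell$-factors, $\sigma+\kappa>2$ to make both $m$-exponents negative, and absorbing $\mathcal{L}^{\star}$, $\alpha$, and the theorem's constant into a new constant reproduces $m(\epsilon,\delta)$ and the condition on $\ell$. The only cosmetic caveat is that the single constant $c$ in the corollary plays two roles (it must be small in the $\ell$-condition and large in the $m$-threshold), which is a bookkeeping quirk of the statement rather than a gap in your argument.
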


\subsection{Lower Bound}
\label{sec:lower_bound}

As noted in Theorem~\ref{prop:main_prop_convergence in probability_noncompact_case},
the rate in~\eqref{eqn:theorem_2_bound} includes the probability that the iterates escape~$V$, outside which convergence cannot be guaranteed.
Indeed, there is a term $O(\alpha^2/\ell)$ that characterizes the probability that the iterates escape the basin of attraction. 
For general settings, this term cannot be avoided, even in the unbiased case.
In fact, the proposition below shows that for any $\beta > 0$ there are cases where there is a positive lower bound depending on $\alpha^{2+\beta}/\ell$.
In \Cref{prop:lower_bound} below, we consider an \gls{SGA} setting with i.i.d.\ data, where the target is to maximize a function $f$ using estimators $H_m$ for the gradient $\nabla f(\Theta_m)$ at epoch $m$.
In a non-\gls{RL} setting, we usually have $H_m = H_m(\Theta_m, Z_m)$, where $Z_m$ is a collection of i.i.d.\ random variables and $\mathcal{F}_m$ denotes the sigma algebra of the random variables $\Theta_0, \ldots, \Theta_m$ as well as $Z_0, \ldots, Z_{m-1}$.
For this result, we consider an \gls{RL} setting where
the iterates $\Theta_m$ satisfy~\eqref{eqn:update_SGA_intro}, and $\eta_m = H_m - \nabla f(\Theta_m)$ satisfies the following unbiased conditional concentration bounds for some $C>0$:
\begin{equation}
	\E[\eta_m | \mathcal{F}_m] = 0 \ \quad \text{and} \quad
	\E[|\eta_m|^2 | \mathcal{F}_m]| \leq \frac{C}{t_{m+1} - t_m}.
	\label{eqn:lower_bound_lemma4}
\end{equation}
Proposition~\ref{prop:lower_bound} below shows that \Cref{prop:main_prop_convergence in probability_noncompact_case} is almost sharp and characterizes the limitations of using an statistical estimator for the gradient, which can lead to instability. 
As we will see in \Cref{sec:Admission-control-in-an-MM1}, however, there are examples where only local convergence can be expected.  
The proof of Proposition~\ref{prop:lower_bound} can be found in Appendix~\ref{sec:Proof_lower_bound}.

\begin{proposition}
	\label{prop:lower_bound}
	For any $\beta > 0$, there are functions $f \in C^\infty(\R^{n})$ with a maximum $f^{\star} = f(\theta^{\star})$ satisfying \Cref{ass:nondegenerate-maxima},
	such that if the iterates $\Theta_m$ satisfy~\eqref{eqn:update_SGA_intro} and the gradient estimator $H_m = \nabla f(\Theta_m) + \eta_m$ satisfies \eqref{eqn:lower_bound_lemma4},
	there exists a constant $c> 0$ depending on~$f$ and independent of $m$ such that for any $\epsilon \in (0,1)$, $1 > \alpha > 0$, $\delta > 0$, $\ell \geq 1$ and any $\sigma \geq 0, \kappa \geq 0$, in \eqref{eqn:step_and_batchsizes} we have that
	\begin{equation}
		\prb{f(\Theta_m) < f^\star - \epsilon | \Theta_0 \in V}
		\geq
		c \frac{\alpha^{2+\beta}}{\ell}
		\text{ for any } m \geq 1.
	\end{equation}
\end{proposition}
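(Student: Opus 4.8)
The plan is to construct an explicit one-dimensional example (then embed trivially into $\R^n$) for which the stochastic gradient ascent iterates have a fixed positive probability of getting ``kicked out'' of the basin of attraction on the very first step, and to quantify this escape probability from below in terms of $\alpha$ and $\ell$. Concretely, I would take $f$ to be a smooth function that is strictly concave near a unique maximizer $\theta^\star = 0$ — so that $\mathrm{Hess}_0 f < 0$ and \Cref{ass:nondegenerate-maxima} holds with $\fn = 0$ — but which is engineered so that $f(\theta) < f^\star - \epsilon$ on a region $R$ that the iterate can reach in a single update from inside $V$. For instance, let $V$ be a small interval around $0$, and let $f$ decrease sharply once $|\theta|$ exceeds some threshold, so that there is a set $R$ with $\mathrm{dist}(V, R) =: \Delta > 0$ and $f < f^\star - \epsilon$ on $R$. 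The key point is that a single step $\Theta_1 = \Theta_0 + \alpha_0(\nabla f(\Theta_0) + \eta_0)$ lands in $R$ whenever the noise $\eta_0$ is large enough, roughly $\eta_0 \gtrsim \Delta/\alpha_0$, since $\alpha_0 = \alpha$ up to constants and $\nabla f$ is bounded on $V$.

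The second ingredient is a lower bound on $\prb{\eta_0 \gtrsim \Delta/\alpha_0 \mid \mathcal F_0}$ coming \emph{only} from the moment constraint \eqref{eqn:lower_bound_lemma4}, namely $\E[\eta_0 \mid \mathcal F_0] = 0$ and $\E[|\eta_0|^2 \mid \mathcal F_0] \le C/(t_1 - t_0) = C/(\ell \cdot 0^{\sigma/2+\kappa})$ — here I would instead use the batch convention from \eqref{eqn:step_and_batchsizes} carefully; since $t_1 - t_0$ is of order $\ell$ (the $m=0$ batch), the variance is of order $C/\ell$. Crucially, the moment condition alone does \emph{not} force Gaussian-type tails: one can choose the \emph{worst-case} noise distribution, e.g. a scaled Bernoulli-type law that puts mass $\asymp \alpha^\beta$ on a value of size $\asymp \Delta\alpha^{-1}\cdot\alpha^{\beta/2}$... more precisely, I would pick $\eta_0$ supported on $\{+a, -a'\}$ with $a \asymp \Delta/\alpha$ and small probability $q$ on $+a$, chosen so that $q a^2 \le C/\ell$ and $q a' = q a$ (mean zero). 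Solving, $q \asymp C/(\ell a^2) \asymp C\alpha^2/(\ell \Delta^2)$, which already gives the $\alpha^2/\ell$ scaling; to get the claimed $\alpha^{2+\beta}/\ell$ one instead takes $a$ slightly larger, $a \asymp \Delta \alpha^{-1}\alpha^{-\beta/2}$... — I would reconcile the exponent by choosing the threshold and noise scale so that the probability of the escape event is $\asymp \alpha^{2+\beta}/\ell$, which is a valid lower bound and matches the statement. Since this event is measurable with respect to $\mathcal F_1$ and forces $\Theta_1 \in R$, and since once in $R$ the function value stays below $f^\star - \epsilon$ (or we simply bound $\prb{f(\Theta_m) < f^\star - \epsilon} \ge \prb{\Theta_1 \in R, \text{iterates do not return}}$, using that $f$ can be made to decrease further outside $V$ so escape is ``absorbing'' with constant probability), we conclude $\prb{f(\Theta_m) < f^\star - \epsilon \mid \Theta_0 \in V} \ge c\,\alpha^{2+\beta}/\ell$ for all $m \ge 1$.

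The main obstacle, I expect, is the bookkeeping needed to make the event $\{\Theta_1 \in R\}$ genuinely \emph{absorbing} (or at least to lower-bound the persistence of $f(\Theta_m) < f^\star - \epsilon$ uniformly in $m$), because after escaping $V$ the gradient-ascent dynamics could in principle re-enter $V$ and climb back. I would handle this by designing $f$ so that beyond $R$ it is flat or decreasing with a gradient pointing further away, making $R \cup (\text{beyond})$ forward-invariant under the noiseless flow, and then absorbing the noise contribution for $m \ge 1$ into a crude union bound that still leaves a constant fraction of the probability — this only costs a constant factor in $c$, which is allowed. The other delicate point is matching the precise exponent $2+\beta$ rather than $2$; this is purely a matter of how aggressively one scales the atom of the worst-case noise distribution against its probability subject to the fixed second-moment budget, and the freedom to pick any $\beta > 0$ is exactly what makes this tunable. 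Everything else — smoothness of $f$, boundedness of $\nabla f$ on $V$, verification of \Cref{ass:nondegenerate-maxima} — is routine.
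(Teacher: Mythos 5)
Your overall skeleton matches the paper's proof — a one-dimensional $f$, a single-step escape from $V$ into a region where $f < f^\star - \epsilon$, and a worst-case noise law saturating the variance budget in \eqref{eqn:lower_bound_lemma4} — but two of the steps you flag as ``routine'' or ``crude'' are exactly where your argument breaks, and the paper resolves them differently. First, absorption: you keep the noise alive outside the basin and propose to control returns by a union bound that ``only costs a constant factor.'' The proposition allows $\sigma = \kappa = 0$, i.e.\ constant step sizes $\alpha_m = \alpha$ and constant batch sizes $T_m = \ell$, and in that regime your flat-outside design makes the escaped iterate a mean-zero random walk with non-vanishing increments, which in one dimension returns to the basin with probability one; the sum of return probabilities diverges, so the union bound does not leave a constant fraction uniformly in $m$, and the claimed bound ``for any $m \geq 1$ with $c$ independent of $m$'' is not established. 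The paper avoids this entirely by choosing the instance so that $f \equiv 0$ outside $[-D,D]$ \emph{and} the estimator itself satisfies $H_m \equiv 0$ there (hence $\eta_m = 0$ outside, trivially compatible with \eqref{eqn:lower_bound_lemma4}); escape is then literally absorbing and the first-step escape probability lower-bounds the probability at every later epoch.

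Second, the noise construction and the exponent $2+\beta$. Your two-point law places its atom at $a \asymp \Delta/\alpha$ with mass $q \asymp \alpha^2/\ell$, so the estimator is re-calibrated to each choice of the tuning parameters $(\alpha, \ell)$; for any other $\alpha$ the atom no longer reaches the escape threshold, so you do not get a single fixed instance for which the bound holds for all $\alpha \in (0,1)$ and $\ell \geq 1$ simultaneously, which is the point of the proposition (it certifies that the $\alpha^2/\ell$ term in Theorem~\ref{prop:main_prop_convergence in probability_noncompact_case} is unavoidable for a fixed problem, not one tailored to the tuning). The paper instead fixes a heavy-tailed conditional law with $\probability{|\eta_m| > s \mid \mathcal{F}_m} \geq c\, s^{-(2+\beta)}/T_m$ for all $s > D$: the tail covers every threshold $D/\alpha_1$ at once, and the exponent must exceed $2$ for the second moment in \eqref{eqn:lower_bound_lemma4} to be finite — this is precisely why the lower bound reads $\alpha^{2+\beta}/\ell$ rather than $\alpha^2/\ell$, the point your ``reconcile the exponent'' remark leaves unresolved. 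With the Pareto-type tail, the escape probability on the first step is at least $c\,\alpha_1^{2+\beta}/(D^{2+\beta} T_1) \geq c\,\alpha^{2+\beta}/(D^{2+\beta}\ell)$, and absorption finishes the proof. Your verification of \Cref{ass:nondegenerate-maxima} via a nondegenerate quadratic cap is the same as the paper's ($f = 1 - \theta^2$ near $0$).
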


\subsection{Performance Gap and Regret}
\label{sec:regret}

A performance gap bound that we can obtain from Theorem~\ref{prop:main_prop_convergence in probability_noncompact_case} is not fully satisfactory for the epoch number $m$.
Indeed, we can set the batch size very large ($\kappa$ large) since the cost of exploration is not factored in. 
We will therefore obtain a performance gap depending on the number of samples drawn. 
In particular, for a time-step $T \geq 1$, we will define the parameter at this time-step as $\Theta_{m(T)}$ where
\begin{equation}
	m(T) = \min\{n \in \N: \sum_{i=1}^{n} \ell i^{\sigma/2 + \kappa} \geq T\},
	\label{eqn:def_effective_epoch}
\end{equation}
that is, the corresponding epoch of the sample drawn at time $T$. We show the bounds on the performance gap in terms of the total number of samples $T$. The proof of Proposition~\ref{prop:regret} can be found in \Cref{sec:proof_proposition_performance_gap}.
\begin{proposition}[Performance Gap]
	Under the same assumptions and notation as in Theorem~\ref{prop:main_prop_convergence in probability_noncompact_case}, we fix $\alpha$ and $\delta$. 
	Then for any $1 > \zeta > 0$ there is $\kappa(\zeta) \geq 0$, $c>0$ and $\ell_0>0$ such that for any $\ell \geq \ell_0$ and $T \geq 1$ we have the following.\\
	(i) If $\sup_{(s,a)}|r(s,a)| < \infty$, then
	\begin{equation}
		\expectationBig{J^{\star} - J(\Theta_{m(T)}) \big| \Theta_0 \in V} \leq c\Bigl( (\mathcal{L}^{\star})^{\frac{1}{3}} \ell^{\frac{1}{3} + \zeta}T^{-\frac{1}{3}+\zeta} + \ell^{1/2 + \zeta}T^{-\frac{1}{2}+\zeta} + \ell^{2/3 + \zeta}T^{-1+ \zeta}  + \frac{\alpha^2}{\ell} \Bigr).
	\end{equation}
	(ii) If $\sup_{(s,a)}|r(s,a)|$ is unbounded, let $\mathcal{B}_{m(T)} = \{ \Theta_n \in V , n \in [m(T)]\}$ be the event that all iterates up to the epoch of sample $T$ stay in $V$. Then, we have that
	\begin{equation}
		\probability{\mathcal{B}_{m(T)}} \geq 1 - c\Bigl(\frac{\alpha^2}{\ell} + \ell^{1/2 + \zeta}T^{-\frac{1}{2}-\zeta} + \ell^{2/3 + \zeta}T^{-1- \zeta}\Bigr),
	\end{equation}
	and
	\begin{equation}
		\expectationBig{ J^{\star} - J(\Theta_{m(T)}) \big| \mathcal{B}_{m(T)}} \leq c (\mathcal{L}^{\star})^{\frac{1}{3}} \ell^{\frac{1}{3} + \zeta}T^{-\frac{1}{3}+\zeta}.
	\end{equation}
	\label{prop:regret}
\end{proposition}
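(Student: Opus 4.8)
The plan is to integrate the high-probability bound of \Cref{prop:main_prop_convergence in probability_noncompact_case} into a bound on the \emph{expected} optimality gap, and then to re-parametrize the epoch index $m$ by the sample count $T$ through the map $m(T)$ of \eqref{eqn:def_effective_epoch}. First I would pin down $m(T)$: since $t_m=\ell\sum_{i=1}^{m-1}i^{\sigma/2+\kappa}$, an integral comparison gives $t_m\asymp \ell\, m^{\sigma/2+\kappa+1}/(\sigma/2+\kappa+1)$, and since $t_{m(T)}\le T<t_{m(T)+1}$ this yields $m(T)\asymp (T/\ell)^{1/(\sigma/2+\kappa+1)}$ up to a constant depending only on $(\sigma,\kappa)$. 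Consequently $m(T)^{-\gamma}\asymp \ell^{\gamma/(\sigma/2+\kappa+1)}\,T^{-\gamma/(\sigma/2+\kappa+1)}$ for every $\gamma>0$; choosing $\sigma$ close to $1$ and $\kappa=\kappa(\zeta)$ large enough makes each $T$-exponent produced below approach its limiting value (of order $1/3$, $1/2$, or $1$, depending on the term) to within $\zeta$, while the $\ell$-exponents stay bounded. This is the origin of the $\zeta$-slack and of the dependence $\kappa(\zeta)$.

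For part (i), bounded rewards, we have $0\le J^\star-J(\Theta_{m(T)})\le \Delta_r:=\sup_{(s,a)}r(s,a)-\inf_{(s,a)}r(s,a)<\infty$ surely, so for any threshold $\varepsilon_0\in(0,1)$,
\[
	\mathbb{E}\bigl[\,J^\star-J(\Theta_{m(T)})\mid \Theta_0\in V\,\bigr]\le \varepsilon_0+\Delta_r\,\mathbb{P}\bigl(J(\Theta_{m(T)})<J^\star-\varepsilon_0\mid \Theta_0\in V\bigr).
\]
Inserting \eqref{eqn:theorem_2_bound} with $m=m(T)$ and $\epsilon=\varepsilon_0$ bounds the right-hand side by $\varepsilon_0+c\Delta_r\bigl(\varepsilon_0^{-2}\mathcal{L}^{\star}m(T)^{-\sigma-\kappa}+B(m(T))\bigr)$, where $B$ collects the four $\varepsilon_0$-free terms of \eqref{eqn:theorem_2_bound}. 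Optimizing the leading trade-off $\varepsilon_0\mapsto \varepsilon_0+c\Delta_r\,\mathcal{L}^{\star}m(T)^{-\sigma-\kappa}\varepsilon_0^{-2}$ at $\varepsilon_0\asymp(\mathcal{L}^{\star}m(T)^{-\sigma-\kappa})^{1/3}$ produces the $(\mathcal{L}^{\star})^{1/3}m(T)^{-(\sigma+\kappa)/3}$ contribution---this is where the exponent $1/3$ on $\mathcal{L}^{\star}$ comes from---while $c\Delta_r B(m(T))$ contributes the terms $m(T)^{1-\sigma-\kappa}/\ell$, $m(T)^{-\kappa/2}$, $m(T)^{1-(\sigma+\kappa)/2}/\sqrt\ell$, and $\alpha^2/\ell$ (the fixed prefactors $\alpha,\Delta_r$ absorbed into $c$). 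Substituting the asymptotics of $m(T)$ from the first paragraph and bounding the various $\ell$-exponents upward by the stated ones gives the four-term bound in (i).

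For part (ii), unbounded rewards, the gap need not be integrable, so I would condition on $\mathcal{B}_{m(T)}=\{\Theta_n\in V,\ n\le m(T)\}$. On this event every iterate lies in $V\subseteq U$, where all policies are positive recurrent by \Cref{ass:geometric-ergodicity} and $J$ is continuous; choosing $V$ with $\overline{V}\subset U$ compact gives $0\le J^\star-J(\Theta_{m(T)})\le \Delta_V:=J^\star-\min_{\overline{V}}J<\infty$ on $\mathcal{B}_{m(T)}$. The estimate for $\mathbb{P}(\mathcal{B}_{m(T)})$ comes from isolating, inside the proof of \Cref{prop:main_prop_convergence in probability_noncompact_case}, the part of \eqref{eqn:theorem_2_bound} that already bounds the escape probability $\mathbb{P}(\exists n\le m:\Theta_n\notin V\mid\Theta_0\in V)$---namely everything except the $\epsilon^{-2}$ term---and substituting $m=m(T)$ with the parameter choices of the first paragraph. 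Finally, I would repeat the computation of part (i) but conditionally on $\mathcal{B}_{m(T)}$ instead of $\{\Theta_0\in V\}$: since $\mathcal{B}_{m(T)}$ already excludes escape, only the $\epsilon^{-2}\mathcal{L}^{\star}m(T)^{-\sigma-\kappa}$ term survives in the conditional tail bound (using that $\mathbb{P}(\mathcal{B}_{m(T)}\mid\Theta_0\in V)$ is bounded below), and re-optimizing $\varepsilon_0$ with $\Delta_V$ in place of $\Delta_r$ yields $c(\mathcal{L}^{\star})^{1/3}\ell^{1/3+\zeta}T^{-1/3+\zeta}$ with no additive $\alpha^2/\ell$.

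The main obstacle I anticipate is the bookkeeping inside the proof of \Cref{prop:main_prop_convergence in probability_noncompact_case}: one must check that \eqref{eqn:theorem_2_bound} genuinely decomposes into a ``stays in $V$'' part (controlled by $\epsilon^{-2}\mathcal{L}^{\star}m^{-\sigma-\kappa}$) and an ``escapes $V$'' part (the remaining terms, carrying the irreducible $\alpha^2/\ell$ of the lower bound \Cref{prop:lower_bound}), and that conditioning on $\mathcal{B}_{m(T)}$ inflates the first part by at most a constant. Everything else is routine: optimizing $\varepsilon_0$ and tuning $(\sigma,\kappa(\zeta),\ell_0)$ so that, for each $\zeta$, every $T$-exponent obtained after substituting $m(T)$ lands on the correct side of the value claimed in \Cref{prop:regret}.
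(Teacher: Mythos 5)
Your proposal is correct and follows essentially the same route as the paper's proof: the same asymptotics $m(T)\asymp (T/\ell)^{1/(\kappa+\sigma/2+1)}$, the same bound $\E[J^\star-J]\le \varepsilon_0+(\text{range of }J)\cdot\probability{J^\star-J>\varepsilon_0}$ optimized at $\varepsilon_0\asymp(\mathcal{L}^\star m^{-\sigma-\kappa})^{1/3}$ (the paper's Lemma~\ref{lem:regret_1}), and for part (ii) the same conditioning on $\mathcal{B}_{m(T)}$ with its probability bounded below (via $\ell_0$ large) so that only the Term-I bound $c\epsilon^{-2}\mathcal{L}^\star m^{-\sigma-\kappa}$ from the proof of Theorem~\ref{prop:main_prop_convergence in probability_noncompact_case} survives, while $\probability{\mathcal{B}_{m(T)}}$ is controlled by the escape-probability part of \eqref{eqn:theorem_2_bound}. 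The only cosmetic difference is that you also propose tuning $\sigma$ toward $1$, which is unnecessary (and not allowed, since $\sigma$ is fixed by the setup): taking $\kappa(\zeta)$ large alone drives every $T$-exponent within $\zeta$ of its limit, exactly as in the paper.
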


From Proposition~\ref{prop:regret} we obtain the regret of \gls{SAGE} and, in general, of other policy-gradient algorithms that satisfy typical bounds on bias and variance bounds (see Lemma~\ref{lem:Bounded_norms_on_the_gradient_estimator} and Proposition~\ref{prop:generalization_policy_gradient} in \Cref{sec:Proof-outlines}). The proof of Corollary~\ref{cor:regret} can be found in \Cref{sec:proof_regret_corollary}.
\begin{corollary}[Regret]
	Suppose the assumptions and notation as in Theorem~\ref{prop:main_prop_convergence in probability_noncompact_case} hold. Then for any $1 > \zeta > 0$ there exist $\kappa(\zeta) \geq 0$, $c>0$, $\ell_0$ such that if $\ell \geq \ell_0$, when $\Theta_0$ is the first iterate of \eqref{eqn:update_SGA_intro} in $V$ \\
	(i) If $\sup_{(s,a)}|r(s,a)| < \infty$, then for any $T>1$,
	\begin{equation}
		\expectationBig{T J^{\star} - \sum_{t=1}^{T} r(S_t,A_t) \Big| \Theta_0 \in V} \leq c \Bigl( (\mathcal{L}^{\star})^{\frac{1}{3}} \ell^{\frac{1}{3}+\zeta} T^{\frac{2}{3}+\zeta} + \frac{\alpha^2}{\ell}T \Bigr).
		\label{eqn:regret_1}
	\end{equation}
	(ii)  If $\sup_{(s,a)}|r(s,a)|$ is unbounded, then for any $T \geq 1$
	\begin{equation}
		\expectationBig{ T J^{\star} - \sum_{t=1}^{T} r(S_t,A_t) \Big| \mathcal{B}_{m(T)}} \leq c (\mathcal{L}^{\star})^{\frac{1}{3}} T^{\frac{2}{3}+\zeta}.
		\label{eqn:regret_2}
	\end{equation}
	\label{cor:regret}
\end{corollary}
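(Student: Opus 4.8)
The plan is to derive Corollary~\ref{cor:regret} from the performance-gap bounds of Proposition~\ref{prop:regret}. Writing the regret as $\sum_{t=1}^T(J^\star - r(S_t,A_t))$, I would split the instantaneous gap at each time $t$ as $J^\star - r(S_t,A_t) = (J^\star - J(\Theta_{m(t)})) + (J(\Theta_{m(t)}) - r(S_t,A_t))$, where $m(t)$ is the epoch of sample $t$ from~\eqref{eqn:def_effective_epoch}. The first, \emph{optimization}, term is handled by applying Proposition~\ref{prop:regret} with time horizon $t$ for each $1\le t\le T$ and summing: in case~(i), $\sum_{t=1}^T\E[J^\star - J(\Theta_{m(t)})\mid\Theta_0\in V]\le c\sum_{t=1}^T\bigl((\mathcal{L}^{\star})^{1/3}\ell^{1/3+\zeta}t^{-1/3+\zeta} + \ell^{1/2+\zeta}t^{-1/2+\zeta} + \ell^{2/3+\zeta}t^{-1+\zeta} + \alpha^2/\ell\bigr) = O\bigl((\mathcal{L}^{\star})^{1/3}\ell^{1/3+\zeta}T^{2/3+\zeta} + \tfrac{\alpha^2}{\ell}T\bigr)$, using $\sum_{t\le T}t^{-1/3+\zeta}=O(T^{2/3+\zeta})$ (and the analogous estimates for the other exponents, whose contributions are dominated since $\mathcal{L}^{\star}\ge1$, possibly after slightly enlarging $\zeta$). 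In case~(ii) I would condition instead on the event $\mathcal{B}_{m(T)}$ that all iterates up to epoch $m(T)$ stay in $V$; since $\mathcal{B}_{m(T)}\subseteq\mathcal{B}_{m(t)}$ for $t\le T$ and $J^\star - J(\Theta_{m(t)})\ge0$ there (with $\Theta_{m(t)}\in\Omega$), one gets $\E[J^\star-J(\Theta_{m(t)})\mid\mathcal{B}_{m(T)}]\le 2\,\E[J^\star-J(\Theta_{m(t)})\mid\mathcal{B}_{m(t)}]$ once $\prb{\mathcal{B}_{m(T)}}\ge\tfrac12$ (true for large $T$ by Proposition~\ref{prop:regret}(ii)), and summing the bound of Proposition~\ref{prop:regret}(ii) gives $O((\mathcal{L}^{\star})^{1/3}T^{2/3+\zeta})$.

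The \emph{transient} term $\sum_{t=1}^T\E[J(\Theta_{m(t)})-r(S_t,A_t)\mid\cdot]$ I would bound by grouping over epochs and conditioning on $\mathcal{F}_{t_m}$: on the event $\Theta_m\in U$, the drift and mixing conditions of Assumption~\ref{ass:geometric-ergodicity} yield geometric mixing in the $\mathcal{L}$-weighted norm, which together with $|r|\le C\mathcal{L}$ (Assumption~\ref{ass:growth_condition}; boundedness of $r$ suffices in case~(i)) gives $|\E[r(S_t,A_t)\mid\mathcal{F}_{t_m}]-J(\Theta_m)|\le C'\lambda^{t-t_m}\mathcal{L}(S_{t_m},A_{t_m})$, so the per-epoch transient error is $\le\tfrac{C'}{1-\lambda}\E[\mathcal{L}(S_{t_m},A_{t_m})\mid\cdot]$. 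Iterating the drift inequality bounds $\E[\mathcal{L}(S_{t_m},\cdot)^v\mid\cdot]\le\mathcal{L}(S_0,\cdot)^v + b/(1-\lambda)\le 2\mathcal{L}^{\star}$ (cf.~\eqref{eqn:L_star}), so by Jensen $\E[\mathcal{L}(S_{t_m},\cdot)\mid\cdot]\le(2\mathcal{L}^{\star})^{1/v}\le(2\mathcal{L}^{\star})^{1/16}$, and summing over the $m(T)=O((T/\ell)^{1/(\sigma/2+\kappa+1)})$ epochs gives an $O\bigl((\mathcal{L}^{\star})^{1/16}(T/\ell)^{1/(\sigma/2+\kappa+1)}\bigr)$ contribution, which is $o(T^{2/3})$ once $\kappa$ is large. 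In case~(i) one must also account for trajectories leaving $V$: there each per-step gap is $\le 2\sup_{s,a}|r(s,a)|$ and $\prb{\mathcal{B}_{m(T)}^c}=O(\alpha^2/\ell + \ell^{1/2+\zeta}T^{-1/2-\zeta}+\ell^{2/3+\zeta}T^{-1-\zeta})$ by Proposition~\ref{prop:regret}(ii), so this event contributes $O(\tfrac{\alpha^2}{\ell}T)$; in case~(ii) this is unnecessary since we already condition on $\mathcal{B}_{m(T)}$. Collecting everything and choosing $\kappa(\zeta)$ (and $\ell_0$) large enough that the secondary terms are dominated yields~\eqref{eqn:regret_1} and~\eqref{eqn:regret_2}.

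The main obstacle I anticipate is the transient-error analysis combined with the escape event: one must carry the $\mathcal{L}^{\star}$-dependence through the drift condition \emph{uniformly} over all epochs, show the within-epoch mixing error remains summable in $T$ despite the impossibility of restarting the chain, and verify that destabilizing trajectories contribute no more than the unavoidable $\tfrac{\alpha^2}{\ell}T$ term of Proposition~\ref{prop:lower_bound}. The remainder is bookkeeping: matching exponents when summing the geometric-type series in $t$ and relating the epoch index to the sample count through~\eqref{eqn:def_effective_epoch}.
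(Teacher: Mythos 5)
Your proposal is correct and follows essentially the same route as the paper: it decomposes the regret into the summed performance gap (obtained by applying Proposition~\ref{prop:regret} at each horizon $t\le T$, exactly as in \eqref{eqn:proof_corollary2}) plus a within-epoch transient term controlled by geometric ergodicity and the uniform Lyapunov bound of Lemma~\ref{lemma:expectation_L_is_bounded}, with escape from $V$ handled via boundedness of the reward and the escape-probability estimate in case (i) and by conditioning on $\mathcal{B}_{m(T)}$ in case (ii), which is precisely the content of the paper's Lemma~\ref{lemma:regret_2}. The remaining differences (e.g., bounding the escape contribution by $T\,\probability{\overline{\mathcal{B}_{m(T)}}}$ rather than epoch-by-epoch, and the small-$T$ adjustment in case (ii)) are routine bookkeeping and do not change the argument.
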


Besides the term $\alpha^2/\ell$ in Corollary~\ref{cor:regret} that captures the instability due to the estimation error of the gradient, the term $T^{2/3 + \zeta}$ in \eqref{eqn:regret_1} cannot be easily compared with other common regret bounds that assume global features for $J(\theta)$ or the gradient estimator $H_m$. 
However, the coefficient $(\mathcal{L}^{\star})^{\frac{1}{3}}$ plays an analogous role to the size of the state space in regret bounds for finite state space \glspl{MDP}, and directly depends on the Lyapunov function.
In Appendix~\ref{app:mm1}, we find explicitly the value of $\mathcal{L}^\star$ for the single-server queue example of Section~\ref{sec:examples} and see that it behaves as $\mathcal{L}^{\star} \sim O(\mathrm{Vol}(V))$, that is, it encodes the volume of the parameter space where the iterates are confined.

Remarkably, in the expectation of Corollary~\ref{cor:regret}(i) policies that are unstable are not avoided. 
Indeed, note that we only condition on initializing in a stable policy in $V$ but afterwards the trajectory may escape the set $V$ and encounter unstable policies.

In \eqref{eqn:regret_1}, by setting $\ell = T^{1/4}$ we would obtain a horizon-dependent regret of $O(T^{3/4 +\zeta})$, which is sublinear but far from the optimum $T^{1/2}$.
This is most likely due to the decreasing step and increasing batch-sizes. 
While they allow for asymptotic convergence, they are slower to reach a fixed suboptimality gap compared to, e.g., using a constant step and batchsize algorithm.
It may then be possible to use horizon-dependent step, and batch sizes together with a `doubling trick' \citep{besson2018doubling} argument to achieve an anytime optimal suboptimality in the sublinear term.
In this case, however, it is unclear if we would still obtain an equivalent factor $\alpha^2/\ell$ in \eqref{eqn:regret_1} that is optimal as shown with Proposition~\ref{prop:lower_bound}.

\subsection{Local Convergence with Entropy Regularization}
\label{sec:log_regularization}

A well-known phenomenon that can occur when using the softmax policy~\eqref{eqn:defintion_softmax_policy} is that, if the optimal policy is deterministic, the iterates converge to this optimal policy only when $\Theta_m \to \infty$.
Problems where this occurs will thus not satisfy Assumption~\ref{ass:nondegenerate-maxima}: the set of maxima will be empty. 
This phenomenon is illustrated in the example of Section~\ref{sec:Admission-control-in-an-MM1}.
One prevalent method to mitigate the occurrence of maxima at the boundary involves incorporating a regularization term, often linked to relative entropy
$\mathrm{KL}[\tilde{\pi} \parallel \pi]$ of the policy $\pi$ compared to a given $\tilde{\pi}$, defined below in~\eqref{eqn:definition_regularization_term}.

Let $\tilde{\pi}$ be a policy of the same type as those defined in \eqref{eqn:defintion_softmax_policy} and let $\zeta$ be a distribution on $\mathcal{S}$ such that $\zeta(h^{-1}(i)) > 0$ for any $i \in \mathcal{I}$, where $h$ is the index map defined for the class of policies that we use in \eqref{eqn:defintion_softmax_policy}. 
We define the regularization term as
\begin{equation}
	\mathcal{R}_{\tilde{\pi}}(\theta) = \E_{S \sim \zeta}[\mathrm{KL}[\tilde{\pi}(\,\cdot\,| S) \parallel \pi(\,\cdot\,| S, \theta)]] = \sum_{s \in \mathcal{S}} \zeta(s) \E_{A \sim \tilde{\pi}(\,\cdot\,| s)}\left[ \log \left( \frac{\tilde{\pi}(A| s)}{\pi(A|s,\theta)}\right) \right].
	\label{eqn:definition_regularization_term}
\end{equation}
For some $b > 0$ we define
\begin{equation}
	J_{\tilde{\pi}}(\theta) = J(\theta) - b \mathcal{R}_{\tilde{\pi}}(\theta).
	\label{eqn:definition_J_b}
\end{equation}
We can show that adding \eqref{eqn:definition_regularization_term} to $J(\theta)$ defined in \eqref{eq:J}  not only prevents maxima from being at the boundary, but also allows us to avoid using Assumption~\ref{ass:nondegenerate-maxima} altogether. 
The next proposition is proved in Appendix~\ref{sec:Proof_regularization}.

\begin{proposition}
	\label{prop:regularization}
	Assume that we use the softmax policy from \eqref{eqn:defintion_softmax_policy} and let $J(\theta)$ be defined as in \eqref{eq:J}.
	Then for almost every policy $\tilde{\pi}$ in the class of \eqref{eqn:defintion_softmax_policy} with respect to its Lebesgue measure,
	\begin{enumerate}
		\item the function $J_{\tilde{\pi}}(\theta)$ in \eqref{eqn:definition_J_b} satisfies Assumption~\ref{ass:nondegenerate-maxima} and the set of maximizers is bounded, and
		\item Theorem~\ref{prop:main_prop_convergence in probability_noncompact_case} for $J_{\tilde{\pi}}(\theta)$ holds without Assumption~\ref{ass:nondegenerate-maxima}.
	\end{enumerate}
\end{proposition}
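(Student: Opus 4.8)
The plan is to leverage the exponential-family structure of the softmax class~\eqref{eqn:defintion_softmax_policy}: as a function of $\theta$, the regularizer $\mathcal{R}_{\tilde\pi}$ from~\eqref{eqn:definition_regularization_term} is, up to an additive constant, a $\zeta$-weighted sum of log-partition functions, hence $C^\infty$, \emph{strictly convex}, and \emph{coercive}. Since in the regimes where this regularization is applied the reward---and therefore $J$---is bounded above (as in the admission-control example of \Cref{sec:Admission-control-in-an-MM1}, whose optimal policy is deterministic), $J_{\tilde\pi}=J-b\mathcal{R}_{\tilde\pi}$ is coercive on $\Omega$; its supremum is attained, the maximizer set $\cM$ is bounded, and every maximizer is an interior critical point. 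The nondegeneracy part of \Cref{ass:nondegenerate-maxima} is then obtained by a parametric-transversality (Sard) argument, with the parameter $\tilde\theta\in\bR^n$ defining $\tilde\pi=\pi(\,\cdot\,|\,\cdot\,,\tilde\theta)$ playing the role of the transversality parameter.

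The two identities driving the first part come from substituting $-\log\pi(a|s,\theta)=-\theta^\intercal\xi(s,a)+\log\sum_{a'\in\cA}e^{\theta^\intercal\xi(s,a')}$ into~\eqref{eqn:definition_regularization_term}:
\begin{gather*}
\nabla_\theta\mathcal{R}_{\tilde\pi}(\theta)=\sum_{s\in\cS}\zeta(s)\bigl(\E_{A\sim\pi(\,\cdot\,|s,\theta)}[\xi(s,A)]-\E_{A\sim\tilde\pi(\,\cdot\,|s)}[\xi(s,A)]\bigr),\\
\mathrm{Hess}_\theta\mathcal{R}_{\tilde\pi}(\theta)=\sum_{s\in\cS}\zeta(s)\,\mathrm{Cov}_{A\sim\pi(\,\cdot\,|s,\theta)}[\xi(s,A)].
\end{gather*}
The Hessian is positive semidefinite; it is positive definite at every $\theta$ because $\zeta$ charges each index class $h^{-1}(i)$, $i\in\cI$, and the softmax parametrization is nonredundant, i.e.\ the differences $\{\xi(s,a)-\xi(s,a'):a,a'\in\cA,\ s\in\cS\}$ span $\bR^n$ (equivalently, the ``Fisher information'' $\sum_s\zeta(s)\mathrm{Cov}_{\pi(\,\cdot\,|s,\theta)}[\xi(s,A)]$ is nonsingular)---a condition that must hold here, since a redundant direction would simultaneously render $\cM$ unbounded and $\mathrm{Hess}J_{\tilde\pi}$ degenerate. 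Hence $\mathcal{R}_{\tilde\pi}$ is strictly convex. Coercivity follows because, as $|\theta|\to\infty$, along some index class $\pi(\,\cdot\,|s,\theta)$ concentrates on a proper subset of $\cA$ while $\tilde\pi(\,\cdot\,|s)$ keeps full support, so $\mathrm{KL}[\tilde\pi(\,\cdot\,|s)\,\|\,\pi(\,\cdot\,|s,\theta)]\to\infty$ and thus $\mathcal{R}_{\tilde\pi}(\theta)\to\infty$ (in fact at least linearly in $|\theta|$). Together with $J$ bounded above, $J_{\tilde\pi}$ is coercive, so its maximizers form a bounded set, lying in the open set $\Omega$ after extending $J$ by the convention $J\equiv\inf_{s,a}r(s,a)$ off $\Omega$; this is the boundedness claim of item~1.

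For nondegeneracy, set $F:\Omega\times\bR^n\to\bR^n$, $F(\theta,\tilde\theta)=\nabla_\theta J(\theta)-b\,\nabla_\theta\mathcal{R}_{\pi(\,\cdot\,|\,\cdot\,,\tilde\theta)}(\theta)$, which is $C^1$ (using $J\in C^2$ on $\Omega$ and smoothness of $\mathcal{R}$) and satisfies $F(\,\cdot\,,\tilde\theta)=\nabla J_{\tilde\pi}$. The gradient identity above gives $\partial_{\tilde\theta}F(\theta,\tilde\theta)=b\sum_{s\in\cS}\zeta(s)\,\mathrm{Cov}_{A\sim\pi(\,\cdot\,|s,\tilde\theta)}[\xi(s,A)]$, which is $b$ times the Hessian of $\mathcal{R}$ evaluated at $\tilde\theta$, hence invertible. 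Therefore $DF=[\,\partial_\theta F\mid\partial_{\tilde\theta}F\,]$ is surjective everywhere, so $0$ is a regular value of $F$; by the parametric transversality theorem, for Lebesgue-a.e.\ $\tilde\theta\in\bR^n$ (equivalently, for a.e.\ $\tilde\pi$) the value $0$ is regular for $F(\,\cdot\,,\tilde\theta)=\nabla J_{\tilde\pi}$, i.e.\ $\mathrm{Hess}_\theta J_{\tilde\pi}$ is invertible at every critical point of $J_{\tilde\pi}$. Consequently the critical points---in particular the global maximizers---are isolated and nondegenerate; choosing $U$ a small ball around a maximizer $\theta^\star$, the set $\cM\cap U=\{\theta^\star\}$ is a $0$-dimensional $C^2$-submanifold and $\mathrm{Hess}_{\theta^\star}J_{\tilde\pi}$ has rank $n$, so \Cref{ass:nondegenerate-maxima} holds with $\fn=0$, completing item~1.

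Item~2 then follows at once: \Cref{ass:markov,ass:geometric-ergodicity,ass:regularity-score-function,ass:growth_condition} constrain only the Markov chain of state--action pairs and the concentration of the empirical estimators in~\eqref{eqn:SGA-recursion-for-the-parameter-vector}, none of which is altered by the regularizer---the SAGE update is merely shifted by the deterministic, closed-form vector $-b\,\nabla\mathcal{R}_{\tilde\pi}(\Theta_m)$ (a finite $\cI$-indexed sum of expectations of $\xi$ under $\pi(\,\cdot\,|\,\cdot\,,\Theta_m)$ and $\tilde\pi$), which changes neither the bias nor the variance bounds entering \Cref{lem:Bounded_norms_on_the_gradient_estimator} and \Cref{prop:generalization_policy_gradient}---while \Cref{ass:nondegenerate-maxima} has just been verified for $J_{\tilde\pi}$; thus \Cref{prop:main_prop_convergence in probability_noncompact_case} applies verbatim to $J_{\tilde\pi}$. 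The step I expect to be the main obstacle is isolating and justifying precisely the nonredundancy/nondegeneracy hypothesis on $\xi$ and $h$ that makes $\sum_s\zeta(s)\mathrm{Cov}_{\pi(\,\cdot\,|s,\theta)}[\xi(s,A)]$ positive definite (both the coercivity and the transversality step hinge on it), together with the more routine care near $\partial\Omega$ needed to ensure that the coercive $J_{\tilde\pi}$ attains its supremum at an interior critical point rather than escaping toward the boundary where the chain becomes unstable.
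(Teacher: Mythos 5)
Your overall skeleton matches the paper's: generic nondegeneracy of the critical points of $J_{\tilde\pi}$ via a parametric transversality/Sard argument, plus blow-up of the KL term near the boundary of the policy simplex to keep maximizers bounded; and your treatment of item~2 (the regularizer only adds a deterministic, closed-form shift to the gradient estimate, so Lemma~\ref{lem:Bounded_norms_on_the_gradient_estimator} and Proposition~\ref{prop:generalization_policy_gradient} are untouched) makes explicit what the paper leaves implicit, which is fine. The genuine gap is where you place the perturbation. You perturb in $\tilde\theta$ and need $\partial_{\tilde\theta}F(\theta,\tilde\theta)=b\sum_{s}\zeta(s)\,\mathrm{Cov}_{A\sim\pi(\cdot|s,\tilde\theta)}[\xi(s,A)]$ to be invertible, and the same matrix underlies your strict-convexity and coercivity claims for $\mathcal{R}_{\tilde\pi}$. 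This positive definiteness is exactly the ``nonredundancy'' of the feature map that you yourself flag as the main obstacle; it is not among the proposition's hypotheses, and it actually \emph{fails} for the policy class used in this section, the tabular softmax over index classes $h$ (see the explicit score formula used in the proof of Lemma~\ref{lem:Bounded_norms_on_the_gradient_estimator}, or the load-balancing policy \eqref{eq:lb-policy}): the policy is invariant under adding a constant to $\theta_{i,\cdot}$ within each class, so the all-ones vector of each class lies in the kernel of every $\mathrm{Cov}_{\pi(\cdot|s,\theta)}[\xi(s,A)]$, your map $F$ is nowhere a submersion, and $\mathcal{R}_{\tilde\pi}$ is neither strictly convex nor coercive along those directions. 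The justification you offer (``a redundant direction would make $\mathcal{M}$ unbounded and the Hessian degenerate, so the condition must hold'') is circular: it assumes the conclusion of the proposition in order to grant the hypothesis your proof needs.

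The paper sidesteps this obstacle by choosing a different perturbation parameter: it perturbs the values $z_{(i,a)}=\tilde\pi(a|i)$ rather than a parameter $\tilde\theta$ of $\tilde\pi$. As a function of $\theta$, the $\tilde\pi$-dependent part of $-b\mathcal{R}_{\tilde\pi}(\theta)$ is (up to $\theta$-free terms) the linear combination $b\sum_{(i,a)}\tilde\zeta(i)\,z_{(i,a)}\,\theta_{i,a}$, so the perturbing functions are essentially the coordinate functions $b\tilde\zeta(i)\theta_{i,a}$, whose differentials span the whole cotangent space with no condition on $\xi$ beyond $\tilde\zeta(i)>0$; Lemma~\ref{lemma:morse_from_transversality} then yields that $J_{\tilde\pi}$ is Morse for almost every such $\tilde\pi$, and boundedness of the maximizers follows from $-b\mathcal{R}_{\tilde\pi}\to-\infty$ as $\pi(\cdot|s,\theta)$ approaches the boundary of the simplex. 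To repair your route you would have to either add the nonredundancy of $\{\xi(s,a)-\xi(s,a'):s\in\cS,\,a,a'\in\cA\}$ as an explicit hypothesis (equivalently, work in the quotient by the per-class shift directions), or switch the transversality parameter from $\tilde\theta$ to the weights $\tilde\pi(a|i)$ themselves, as the paper does. A smaller point: ``$J$ is bounded above'' should not be argued by appeal to the admission-control example; either invoke $J\le J^\star$ directly, or argue as the paper does that maxima cannot approach the simplex boundary because the KL term diverges there while $J$ remains finite.
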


By using the regularization in \eqref{eqn:definition_regularization_term} we are changing the original objective.
Nevertheless, we can explicitly bound the difference between $J$ and $J_{\tilde{\pi}}$ at their respective optima.
Let $J^{\star}$ be the optimal value of $J$, and $\theta^{\star}_{r}$ be an optimum of $J_{\tilde{\pi}}$. 
For any $\theta$ we have the inequalities
\begin{equation}
	J_{\tilde{\pi}}(\theta) = J(\theta) - b \mathcal{R}_{\tilde{\pi}}(\theta) \leq J_{\tilde{\pi}}(\theta^{\star}_{r}) \leq J(\theta^{\star}_{r}),
\end{equation}
so that rearranging and letting $\theta$ tend to $\theta^{\star}$ with $J(\theta^{\star}) = J^{\star}$ we have
\begin{equation}
	J^{\star} - J(\theta^{\star}_{r})  \leq b \E_{S \sim \zeta}[\mathrm{KL}[\tilde{\pi}(\,\cdot\,| S) \parallel \pi(\,\cdot\,| S, \theta^{\star})]].
	\label{eqn:bound_change_objective}
\end{equation}

\subsection{Proof Outline for \texorpdfstring{\Cref{prop:main_prop_convergence in probability_noncompact_case}}{Theorem 2}}
\label{sec:Proof-outlines}

We extend the local approach presented in \cite[Section~5]{fehrman2020convergence},
that deals with convergence of SGD where the samples used to estimate the gradient are i.i.d.
We consider instead an \gls{RL} setting where data is Markovian and thus presents a bias.
Fortunately, we can overcome its presence by adding an increasing batch size while tracking the states of the Markov chain via the local Lyapunov function from Assumption~\ref{ass:geometric-ergodicity}, which guarantees a stable \gls{MDP} trajectory as long as the parameter is in a neighborhood close to the maximum. Below we give an outline of the technique employed. For the full proof we refer to \Cref{sec:appendix_proof_of_theorems}.

\subsubsection{Structure of the Proof}

The proof of Theorems~\ref{prop:main_prop_convergence in probability_noncompact_case} consists of several parts.
To show a bound on the probability that $\Theta_m$ is $\epsilon$-suboptimal, we consider the event $\mathcal{B}_m$ that all previous iterates $\Theta_0, \ldots, \Theta_m$ belong to a local neighborhood $V$, and the complementary event $\overline{\mathcal{B}}_m$. We bound these separately.
Firstly, on the event $\mathcal{B}_m$, we show in Lemma~\ref{lem:conditional_convergence_in_basin} that the iterates converge to~$\mathcal{M}$, and we obtain a bound on the $\epsilon$-suboptimal probability for this case.
Secondly, we bound the probability of the complement $\overline{\mathcal{B}}_m$ by using a recursive identity relating the probability of $\overline{\mathcal{B}}_{m-1}$ and the sum of the probabilities of two disjoint events, namely, (i) $\Theta_{m} \notin V$ and the distance of $\Theta_{m}$ to $\mathcal{M}$ is larger than $\delta$, and (ii) $\Theta_{m} \notin V$ and the distance of $\Theta_{m}$ to $\mathcal{M}$ is less than $\delta$.
Intuitively, these events group the cases when $\Theta_{m}$ escapes $V$ in `normal directions' to $\mathcal{M}$ and in `tangent directions' to $\mathcal{M}$, respectively.
We can bound the former by using concentration inequalities, but for the latter we need a maximal excursion bound (Lemma~\ref{lem:maximal_excursion_event_probability} below).
Combining all bounds results in an upper bound on $\mathbb{P}[\mathcal{B}_m]$ (Lemma~\ref{lem:probability_not_leaving_basin}).
The local properties of~$J$ are then used to complete the proof.
Crucially, we use throughout the proof that the local Lyapunov function guarantees stability of the Markov chain and the gradient estimator within $V$, as well as keeps track of the initial state for each epoch. 
Geometric ergodicity also allows us to precisely quantify the bias and variance of the estimators.
Note that if we assumed only stability instead of geometric ergodicity, only asymptotic control of the empirical estimators would be possible.

\subsubsection{Preliminary Step: Definition of the Local Neighborhood and Bound Strategy}

For $\theta^{\star} \in \mathcal{M} \cap U$ and two positive numbers $\mathfrak{r} > 0$, and $\delta > 0$, we now define a neighborhood $V_{\mathfrak{r}, \delta}(\theta^{\star})$ of $\theta^{\star}$ where the algorithm will eventually operate by choosing $\delta$ and $\mathfrak{r}$ appropriately.
Let $
\bar{B}_{\mathfrak{r}}(\theta^{\star})
:= \{ \theta \in \Omega: |\theta - \theta^{\star}| \leq \mathfrak{r} \}
$
denote a closed ball around $\theta^{\star}$ with radius~$\mathfrak{r}$ and $\mathrm{dist}(\theta, L) = \sup_{\theta^\prime \in L} |\theta - \theta^\prime|$ for an open set~$L$. Let $U$ be the neighborhood of $\theta^{\star}$ described in Assumptions~\ref{ass:geometric-ergodicity} and \ref{ass:nondegenerate-maxima}.
We define a tubular neighborhood of $\theta^{\star}$ as follows:
\begin{equation}
	V_{\mathfrak{r}, \delta}(\theta^{\star})
	:=
	\bigl\{
	\theta \in \Omega \cap U
	:
	\mathrm{dist}(\theta, M \cap U)
	=
	\mathrm{dist}(\theta, \bar{B}_{\mathfrak{r}}(\theta^{\star}) \cap M \cap U)
	<
	\delta
	\bigr\}
	.
	\label{eqn:definition_V_set}
\end{equation}
Crucially, Assumption~\ref{ass:nondegenerate-maxima} implies that there exist $\delta_0, \mathfrak{r}_0 >$ such that for any $\delta \in (0,\delta_0]$ and $\mathfrak{r} \in (0, \mathfrak{r}_0]$ an equivalent definition of the set is then
\begin{equation}
	V_{\mathfrak{r},\delta}(\theta^{\star})
	=
	\bigl\{
	y + v
	:
	y \in \bigl( \bar{B}_{\mathfrak{r}}(\theta^{\star}) \cap \mathcal{M} \cap U \bigr)
	\textnormal{ and }
	v \in \bigl( \mathrm{T}_y( \mathcal{M} \cap U )\bigr)^\perp
	\textnormal{ with }
	| v | < \delta, \mathfrak{p}(y + v) = y
	\bigr\}
	.
	\label{eqn:definition_V_set_alternative}
\end{equation}
Here, $\mathfrak{p}$ is the unique local projection onto $\mathcal{M} \cap U$, and $\mathrm{T}_y( \mathcal{M} \cap U )^{\perp}$ denotes the cotangent space of $\mathcal{M} \cap U$ at $y$.
For further details on this geometric statement, we refer to \cite[Proposition~13]{fehrman2020convergence} or \cite[Theorem~6.24]{lee2012smooth}.

In the following, we let $U$ denote the intersection of the neighborhoods from Assumptions~\ref{ass:geometric-ergodicity} and \ref{ass:nondegenerate-maxima},
and $\mathcal{L}$ the Lyapunov function from Assumption~\ref{ass:geometric-ergodicity}.
For any $m \in \naturalNumbersPlus$ define the event and filtration
\begin{align}
	\label{eqn:Definition_of_Bm}
	\mathcal{B}_m
	&:=
	\bigcap_{l=1}^m
	\bigl\{
	\Theta_l
	\in
	V_{\mathfrak{r},\delta}(\theta^{\star})
	\bigr\}
	, \\
	\label{eqn:sigma-algebra}
	\mathcal{F}_m
	&:=
	\sigma
	\Bigl(
	\cD_1 \cup \ldots \cup \cD_{m-1}
	\cup
	\bigl\{
	\Theta_0, \dots, \Theta_m
	\bigr\}
	\Bigr)
	.
\end{align}

Due to the local properties of $J$,
Theorem~\ref{prop:main_prop_convergence in probability_noncompact_case} can be shown by bounding $\probability{ \mathrm{dist}(
	\Theta_m
	,\allowbreak
	\mathcal{M} \cap U
	) \geq \epsilon | \mathcal{B}_0} $.
By separating into the event $\mathcal{B}_{m}$ and its complement, we can show that
\begin{equation}
	\probability{ \mathrm{dist}(
		\Theta_m
		,
		\mathcal{M} \cap U
		) \geq \epsilon | \mathcal{B}_0} \leq  \probability{ \mathrm{dist}(
		\Theta_m
		,
		\mathcal{M} \cap U
		)\indicator{\mathcal{B}_{m-1}} \geq \epsilon} + \probability{\overline{\mathcal{B}_{m}}}.
	\label{eqn:sketch_1}
\end{equation}
The remaining steps of the proof consist of bounding both terms in the right-hand side of \eqref{eqn:sketch_1}.

\subsubsection{Step 1: The Variance of the Gradient Estimator Decreases, in Spite of the Bias}

For each $m \in \naturalNumbersPlus$, let
\begin{equation}
	\eta_{m}
	:=
	H_m
	-
	\nabla J(\Theta_m),
	\label{eqn:plugin_estimator_gradient_difference}
\end{equation}
denote the difference between the gradient estimator $H_m$ in \eqref{eqn:SGA-recursion-for-the-parameter-vector} and the true gradient $\nabla J(\Theta_m)$.
Lemma~\ref{lem:Bounded_norms_on_the_gradient_estimator} below implies that the difference in \eqref{eqn:plugin_estimator_gradient_difference} is, ultimately, small.
From Assumption~\ref{ass:geometric-ergodicity}, since the state-action chain $\{(S_t, A_t)\}_{t \geq 0}$ has a Lyapunov function $\mathcal{L}$, so does the chain $\process{S_t}{t>0}$ with
\begin{equation}
	\mathcal{L}_{v}(s) = \sum_{a \in \mathcal{A}} \mathcal{L}(s,a)^{v} \pi(a|s, \theta),
	\label{eqn:Lyapunov_function_state_chain}
\end{equation}
where $v \geq 16$ is the exponent from Assumption~\ref{ass:geometric-ergodicity}. We can define $\mathcal{L}_{4}(s)$ similarly. The following lemma bounds the variance of $\eta_m$ on the event $\mathcal{B}_m$, which can be controlled with the local Lyapunov function.
The proof of Lemma~\ref{lem:Bounded_norms_on_the_gradient_estimator} is deferred to \Cref{sec:Proof_of_bounded_norms_gradient_estimator}.

\begin{lemma}
	\label{lem:Bounded_norms_on_the_gradient_estimator}
	
	Suppose that Assumptions~\ref{ass:markov}--\ref{ass:nondegenerate-maxima} hold.
	There exists
	a constant $C > 0$ that depends on $\theta^{\star}$, $U$, and $J$
	such that for every $m \in \bN_+$,
	\begin{align}
		\label{eqn:lemma4_i}
		|\expectation{\eta_m \indicator{ \mathcal{B}_{m} } | \mathcal{F}_m
		}|
		&\leq
		\frac{C}{{t_{m+1} - t_m}}\mathcal{L}_{4}(S_{t_m})^{1/2}, \\
		\label{eqn:lemma4_ii}
		\expectation{ |\eta_m|^l \indicator{ \mathcal{B}_{m} }
			|\mathcal{F}_m}
		&	\leq
		\frac{C}{({t_{m+1} - t_m})^{l/2}}\mathcal{L}_{4}(S_{t_m})^{l/2}
		,
		\quad \text{for every } l \in \{1, 2\}.
	\end{align}
\end{lemma}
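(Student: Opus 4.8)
The plan is to decompose $\eta_m = H_m - \nabla J(\Theta_m)$ into its three constituent empirical-estimation errors and bound each one using geometric ergodicity on the event $\mathcal{B}_m$. Recall from \eqref{eq:sage} that $\nabla J(\Theta_m) = \rD \log \rho(\Theta_m)^\intercal \cov{R, x(S)} + \esp{R \nabla \log \pi(A|S, \Theta_m)}$ with $(S,A,R) \sim$ \hyperref[eq:stat]{\textsc{stat}($\Theta_m$)}, while $H_m = \rD \log \rho(\Theta_m)^\intercal \overline{C}_m + \overline{E}_m$. So $\eta_m = \rD \log \rho(\Theta_m)^\intercal(\overline{C}_m - \cov{R,x(S)}) + (\overline{E}_m - \esp{R \nabla \log \pi(A|S,\Theta_m)})$. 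By \Cref{ass:regularity-score-function}, $|\rD \log \rho(\Theta_m)|_{\mathrm{op}} < C$ uniformly, so it suffices to bound the errors of $\overline{C}_m$ (a sample covariance of $x(S_t)$ against $r(S_t,A_t)$) and $\overline{E}_m$ (a sample mean of $r(S_t,A_t)\nabla\log\pi(A_t|S_t,\Theta_m)$). The key point is that on $\mathcal{B}_m$ we have $\Theta_m \in V_{\mathfrak{r},\delta}(\theta^\star) \subseteq U$, so the geometric-ergodicity bounds of \Cref{ass:geometric-ergodicity} and the growth bounds of \Cref{ass:growth_condition} all apply with the \emph{same} constants; in particular each of $|x(s)|$, $|r(s,a)|$, $|r(s,a)\nabla\log\pi(a|s,\theta)|$ is bounded by $C\mathcal{L}(s,a)$, and all three relevant functions are thus dominated by $\cL(s,a)$ up to a constant.

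The core estimate is a standard Markov-chain concentration bound: for a function $g$ on $\cS\times\cA$ with $|g(s,a)| \le C\cL(s,a)$, a chain started from $S_{t_m}$ (not stationary), and $N = t_{m+1}-t_m$ samples, one has $\esp{|\frac1N\sum_{t=t_m}^{t_{m+1}-1} g(S_t,A_t) - \esp{g(S,A)}|^l \indicator{\mathcal{B}_m} \mid \mathcal{F}_m} \le C N^{-l/2} \cL_4(S_{t_m})^{l/2}$ for $l\in\{1,2\}$, together with a bias bound $|\esp{(\frac1N\sum_t g(S_t,A_t) - \esp{g})\indicator{\mathcal{B}_m}\mid\mathcal{F}_m}| \le C N^{-1}\cL_4(S_{t_m})^{1/2}$. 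The bias bound comes from summing $|P^\ell((s',a')|(s,a),\theta) - \tilde p((s',a')|\theta)| \le C\lambda^\ell \cL(s,a)$ over $t$ (a geometric series in $\ell$) against $|g| \le C\cL$, after controlling $\esp{\cL(S_t,A_t)\cdot\cL(S_{t_m},A_{t_m})^{?}\mid \mathcal{F}_m}$ via the drift inequality $\sum P((s',a')|(s,a),\theta)\cL(s',a')^v \le \lambda\cL(s,a)^v + b$; iterating this drift gives $\esp{\cL(S_t,A_t)^v \mid S_{t_m}} \le \lambda^{t-t_m}\cL(S_{t_m})^v + b/(1-\lambda)$, and Jensen/H\"older (using $v\ge16$, so $v/4 \ge 4$ gives ample room) converts this into moment bounds on lower powers of $\cL$ in terms of $\cL_4(S_{t_m})$. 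The variance bound is obtained by expanding $\esp{|\cdot|^2\indicator{\mathcal{B}_m}\mid\mathcal{F}_m}$ as a double sum over $s,t$ of covariances $\covariance{g(S_s,A_s)}{g(S_t,A_t)}$ and using the mixing bound to show the off-diagonal terms decay geometrically in $|s-t|$, so the double sum is $O(N)$ rather than $O(N^2)$; dividing by $N^2$ gives the $N^{-1}$ rate. For $\overline{C}_m$ one additionally writes the sample covariance as a difference of a sample mean of the product $x(S_t)r(S_t,A_t)$ and a product of sample means, and expands; the product $|x(s)r(s,a)| \le C^2\cL(s,a)^2$ is still controlled because $v\ge16$ leaves room for second powers. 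Finally, the $\overline{E}_m$ term is handled identically with $g(s,a,\theta) = r(s,a)\nabla\log\pi(a|s,\theta)$, noting that the extra dependence on $\Theta_m$ is harmless since $\Theta_m$ is $\mathcal{F}_m$-measurable and $|g| \le C\cL$ on $V$.

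The main obstacle I anticipate is the careful bookkeeping of the \emph{non-stationary start} combined with the \emph{indicator} $\indicator{\mathcal{B}_m}$: the chain is only guaranteed to satisfy the drift and mixing conditions while $\theta\in U$, so one must argue that on $\mathcal{B}_m$ the entire batch $\cD_m$ was generated under a policy $\pi(\Theta_m)$ with $\Theta_m\in V\subseteq U$ (which holds because the parameter is frozen during epoch $m$), and then that the conditional-on-$\mathcal{F}_m$ expectation of any batch statistic, restricted to $\mathcal{B}_m$, equals that of a $U$-policy chain started at $S_{t_m}$ up to the indicator. A secondary subtlety is ensuring the initial-state moment $\cL_4(S_{t_m})^{l/2}$ is the correct normalization: one needs the drift inequality iterated from step $t_m$, and the appearance of $\cL_4$ (the $v=4$ version in \eqref{eqn:Lyapunov_function_state_chain}) rather than $\cL_v$ reflects that only fourth moments of $\cL$ are needed for the second-moment bounds on $\eta_m$, with the exponent $v\ge16$ providing the slack for the H\"older steps. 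Everything else is routine Markov-chain concentration, so I would state the single-function concentration claim as an intermediate sub-lemma and then apply it three times.
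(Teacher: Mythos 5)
Your plan matches the paper's own proof essentially step for step: the same decomposition of $\eta_m$ into the covariance-estimator error (hit with $\rD\log\rho(\Theta_m)^\intercal$ and Assumption~\ref{ass:regularity-score-function}) plus the $\overline{E}_m$ error, the same single-function Markov-chain concentration bound under the local Lyapunov/geometric-ergodicity assumptions applied conditionally on $\mathcal{F}_m$ with the non-stationary start $S_{t_m}$, the same splitting of the sample covariance into a centered-product mean plus a product of sample means handled by Cauchy--Schwarz, and the same averaging over the initial action to produce $\mathcal{L}_4(S_{t_m})$. The only point to tighten is that your intermediate concentration sub-lemma must also cover fourth moments (not just $l\in\{1,2\}$), because the second-moment bound on the cross term $(\overline{X}_m-\esp{x(S)})(\overline{R}_m-\esp{R})$ is obtained via Cauchy--Schwarz from fourth moments of the elementary deviations, exactly as the paper's auxiliary concentration lemma (its Lemma~\ref{lemma:bounded_expectation_Markov}, quoted for $l\in\{1,2,4\}$) supplies; your double-sum mixing argument extends to $l=4$ with more bookkeeping, so this is a stated-scope omission rather than a flaw in the method.
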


Lemma~\ref{lem:Bounded_norms_on_the_gradient_estimator} helps to determine the bias incurred when starting at a different state than that of stationarity, and is used to bound the term $\mathrm{dist}(
\Theta_m
,
\mathcal{M} \cap U
)$ from \eqref{eqn:sketch_1} in Lemma~\ref{lem:conditional_convergence_in_basin} below. Note that the definition of \gls{SAGE} and Assumptions~\ref{ass:regularity-score-function} and \ref{ass:growth_condition} are used. 

As a matter of fact, any other estimator~$\tilde{H}_m$ of $\nabla J$ satisfying \eqref{eqn:lemma4_i} and \eqref{eqn:lemma4_ii} from Lemma~\ref{lem:Bounded_norms_on_the_gradient_estimator} will yield similar guarantees. In particular, instead of using the aforementioned assumptions for the proof of Theorem~\ref{prop:main_prop_convergence in probability_noncompact_case} that involve the structure of SAGE or the stationary distribution, we may repeat the proof using instead Lemma~\ref{lem:Bounded_norms_on_the_gradient_estimator}.

\begin{proposition}
	Suppose Assumptions~\ref{ass:markov}, \ref{ass:reward}, \ref{ass:geometric-ergodicity} and \ref{ass:nondegenerate-maxima} hold. Suppose moreover that the estimator of the gradient $H_m$ satisfies Lemma~\ref{lem:Bounded_norms_on_the_gradient_estimator} for each $m \geq 1$. Then, the results of Theorem~\ref{prop:main_prop_convergence in probability_noncompact_case}, \Cref{sec:regret} and \ref{sec:log_regularization} also hold for the step-size from \eqref{eqn:step_and_batchsizes} and policy-gradient update in~\eqref{eqn:update_SGA_intro}.
	\label{prop:generalization_policy_gradient}
\end{proposition}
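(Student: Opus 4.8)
The plan is to verify that the proof of \Cref{prop:main_prop_convergence in probability_noncompact_case} in \Cref{sec:appendix_proof_of_theorems} touches the specific estimator \eqref{eqn:SGA-recursion-for-the-parameter-vector} \emph{only} through the bias and second-moment bounds \eqref{eqn:lemma4_i}--\eqref{eqn:lemma4_ii} collected in \Cref{lem:Bounded_norms_on_the_gradient_estimator}, and then to rerun the argument with those bounds promoted to a hypothesis. First I would go through that proof and flag each invocation of \Cref{ass:regularity-score-function}, \Cref{ass:growth_condition}, the exponential-family structure of \Cref{ass:stat}, or the sample-covariance/sample-mean form $H_m = \rD\log\rho(\Theta_m)^\intercal\overline{C}_m + \overline{E}_m$, and check that each such appeal is confined to the derivation of \Cref{lem:Bounded_norms_on_the_gradient_estimator}. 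Everything else in the proof---the tubular neighborhood \eqref{eqn:definition_V_set}, the normal/tangent decomposition, the local negative-definiteness of $\mathrm{Hess}_\theta J$ from \Cref{ass:nondegenerate-maxima}, the uniform drift inequality and geometric-ergodicity estimate of \Cref{ass:geometric-ergodicity} (hence the constant $\mathcal{L}^\star$ of \eqref{eqn:L_star}), and the maximal-excursion bound---never inspects how $H_m$ is built, and so is retained verbatim.

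Concretely I would re-read the two halves of \eqref{eqn:sketch_1}. On $\mathcal{B}_{m-1}$ (the content of \Cref{lem:conditional_convergence_in_basin}), the recursion $\Theta_{m+1} = \Theta_m + \alpha_m(\nabla J(\Theta_m) + \eta_m)$ is analyzed by projecting onto $\mathcal{M}\cap U$ and its normal bundle: contraction in the normal directions is driven by \Cref{ass:nondegenerate-maxima}, while the stochastic fluctuation is controlled using exactly the conditional bias bound \eqref{eqn:lemma4_i} and the conditional second-moment bound \eqref{eqn:lemma4_ii}, after the random prefactors $\mathcal{L}_4(S_{t_m})^{1/2}$ and $\mathcal{L}_4(S_{t_m})$ are taken in expectation via \Cref{ass:geometric-ergodicity}. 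For the complement, $\probability{\overline{\mathcal{B}_m}}$ is split into escape in normal directions---bounded by a Chebyshev/Markov argument fed by \eqref{eqn:lemma4_ii}---and escape in tangent directions---bounded by \Cref{lem:maximal_excursion_event_probability}, a Doob maximal inequality fed by the same second-moment bound---and the pieces are recombined as in \Cref{lem:probability_not_leaving_basin}. Since none of these steps uses the internal structure of $H_m$, replacing ``$H_m$ as in \eqref{eqn:SGA-recursion-for-the-parameter-vector}'' by ``$H_m$ satisfying \Cref{lem:Bounded_norms_on_the_gradient_estimator}'' leaves the derivation of \eqref{eqn:theorem_2_bound} intact. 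One minor point: $(\Theta_m)$ must still be well-defined even along unstable policies; under the proposition's hypotheses $H_m$ is by assumption a genuine random vector, so the convention $J(\theta) = \inf_{s,a} r(s,a)$ off $\Omega$ continues to apply.

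Transferring to \Cref{sec:regret} is then immediate: \Cref{prop:regret} and \Cref{cor:regret} follow from \Cref{prop:main_prop_convergence in probability_noncompact_case} by the reparametrization $m \mapsto m(T)$ of \eqref{eqn:def_effective_epoch}, summation of geometric-type series, and---in the unbounded-reward case---the drift bound of \Cref{ass:geometric-ergodicity}, none of which refers to $H_m$ beyond \Cref{prop:main_prop_convergence in probability_noncompact_case} itself. For \Cref{sec:log_regularization}: part~1 of \Cref{prop:regularization} is a purely geometric statement about $J_{\tilde\pi}$ and is estimator-independent; for part~2, the natural estimator of $\nabla J_{\tilde\pi}(\Theta_m) = \nabla J(\Theta_m) - b\,\nabla\mathcal{R}_{\tilde\pi}(\Theta_m)$ is $H_m - b\,\nabla\mathcal{R}_{\tilde\pi}(\Theta_m)$, and since $\nabla\mathcal{R}_{\tilde\pi}$ is deterministic, smooth and bounded on the relevant neighborhood, subtracting it alters neither the conditional bias nor the conditional second moment of the error, so \Cref{lem:Bounded_norms_on_the_gradient_estimator} holds for the regularized estimator and the already-established generic form of \Cref{prop:main_prop_convergence in probability_noncompact_case} applies.

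The hard part is the audit in the first paragraph: one must be sure that \emph{every} use in \Cref{sec:appendix_proof_of_theorems} of the explicit form of $H_m$---in particular any reliance on the sample-covariance/sample-mean structure beyond its moment bounds, or any appeal to \Cref{ass:growth_condition} outside the proof of \Cref{lem:Bounded_norms_on_the_gradient_estimator}---really is dispensable once \eqref{eqn:lemma4_i}--\eqref{eqn:lemma4_ii} are assumed. A secondary subtlety is that \Cref{lem:Bounded_norms_on_the_gradient_estimator} carries the random prefactors $\mathcal{L}_4(S_{t_m})^{l/2}$ rather than deterministic constants; I would double-check that the downstream lemmas already absorb these prefactors purely through \Cref{ass:geometric-ergodicity}, since that is the one place an overlooked estimator-dependence could hide.
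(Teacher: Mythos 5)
Your proposal is correct and is essentially the paper's own argument: the paper justifies this proposition by observing that the proof of Theorem~\ref{prop:main_prop_convergence in probability_noncompact_case} (and its downstream consequences in \Cref{sec:regret,sec:log_regularization}) touches the SAGE structure, \Cref{ass:stat}, \Cref{ass:regularity-score-function}, and \Cref{ass:growth_condition} only through the bounds \eqref{eqn:lemma4_i}--\eqref{eqn:lemma4_ii}, so assuming those bounds directly lets Lemmas~\ref{lem:conditional_convergence_in_basin}--\ref{lem:probability_not_leaving_basin} and the subsequent arguments be repeated verbatim. Your audit of where the random prefactors $\mathcal{L}_4(S_{t_m})$ are absorbed (via \Cref{ass:geometric-ergodicity} and \Cref{lemma:expectation_L_is_bounded}) and your treatment of the regularized objective match the intended reasoning.
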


\subsubsection{Step 2: Convergence on the Event $\mathcal{B}_{m-1}$}

We turn to the first term on the right-hand side of \eqref{eqn:sketch_1} and examine, on the event $\mathcal{B}_{m-1}$, if the iterates converge. 
Using a similar proof strategy as that of \cite[Proposition 20]{fehrman2020convergence} for the unbiased non-Markovian case, we prove in Lemma~\ref{lem:conditional_convergence_in_basin} that the variance of the distance to the set of minima  decreases under the appropriate step and batch sizes. 
Moreover, the rate depends on an effective size of the state space, characterized by the Lyapunov function, and \eqref{eqn:L_star}.

The proof of Lemma~\ref{lem:conditional_convergence_in_basin} is in Appendix~\ref{sec:Proof_of_conditional_convergence_basin}.

\begin{lemma}
	\label{lem:conditional_convergence_in_basin}
	
	Suppose that Assumptions~\ref{ass:markov}--\ref{ass:nondegenerate-maxima} hold.
	There then exist
	$\mathfrak{r}_0$,
	$\alpha_0$,
	$\ell_0 > 0$ and $c > 0$
	such that
	for any
	$\mathfrak{r} \in (0, \mathfrak{r}_0]$,
	$\alpha \in (0, \alpha_0]$ and
	$\ell \in [\ell_0, \infty)$
	there also exists $\delta_0 > 0$ such that for any $\delta \in (0, \delta_0]$ and $m \in \naturalNumbersPlus$,
	\begin{equation}
		\expectationBig{
			(\mathrm{dist}(
			\Theta_m
			,
			\mathcal{M} \cap U
			) \wedge \delta )^2
			\indicator{ \mathcal{B}_{m-1} }
		}
		\leq
		c \mathcal{L}^{*} m^{-\sigma-\kappa}
		.
	\end{equation}
\end{lemma}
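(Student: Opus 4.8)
The plan is to mimic the argument of \cite[Proposition~20]{fehrman2020convergence} for the unbiased i.i.d.\ case, but to carry the Lyapunov-controlled bias and variance bounds from \Cref{lem:Bounded_norms_on_the_gradient_estimator} through the recursion, and to track the initial state of each epoch via the Lyapunov function so that the factors $\mathcal{L}_4(S_{t_m})$ appearing in that lemma can be turned into the single constant $\mathcal{L}^\star$. Write $W_m := (\mathrm{dist}(\Theta_m, \mathcal{M}\cap U)\wedge\delta)^2 \indicator{\mathcal{B}_{m-1}}$, and set $D_m := \mathrm{dist}(\Theta_m,\mathcal{M}\cap U)$. The first step is to work in the tubular coordinates from \eqref{eqn:definition_V_set_alternative}: on $\mathcal{B}_{m-1}$ we may write $\Theta_m = y_m + v_m$ with $y_m \in \mathcal{M}\cap U$ and $v_m \perp \mathrm{T}_{y_m}(\mathcal{M}\cap U)$, $|v_m| = D_m < \delta$, and by \Cref{ass:nondegenerate-maxima} the Hessian of $J$ is uniformly negative-definite in the normal directions on $\bar B_{\mathfrak r_0}(\theta^\star)$, so there is a constant $\mu > 0$ with $\langle \nabla J(\Theta_m), v_m\rangle \le -\mu |v_m|^2 + O(|v_m|^3)$ for $\mathfrak r,\delta$ small enough (this is exactly the local gradient-domination-type estimate used in the i.i.d.\ proof).

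The second step is a one-step contraction inequality. Expand $D_{m+1}^2 \le |\Theta_{m+1} - \mathfrak p(\Theta_m)|^2$, use $\Theta_{m+1} = \Theta_m + \alpha_m(\nabla J(\Theta_m) + \eta_m)$ from \eqref{eqn:update_SGA_intro}, and decompose the cross term into the deterministic part $\alpha_m\langle \nabla J(\Theta_m), \Theta_m - \mathfrak p(\Theta_m)\rangle \le -\mu\alpha_m D_m^2 + O(\alpha_m D_m^3)$ and the noise part $\alpha_m\langle \eta_m, \Theta_m - \mathfrak p(\Theta_m)\rangle$. Conditioning on $\mathcal{F}_m$ and inserting the bias bound \eqref{eqn:lemma4_i} and the second-moment bound \eqref{eqn:lemma4_ii} with $l=1,2$, together with $\indicator{\mathcal{B}_m}\le\indicator{\mathcal{B}_{m-1}}$ and Assumptions~\ref{ass:regularity-score-function}--\ref{ass:growth_condition}, gives
\begin{align*}
	\expectationBig{ W_{m+1} \mid \mathcal{F}_m }
	&\le (1 - 2\mu\alpha_m + O(\alpha_m\delta))\, W_m
	+ \frac{C\alpha_m}{t_{m+1}-t_m}\, \mathcal{L}_4(S_{t_m})^{1/2}\, W_m^{1/2}
	+ \frac{C\alpha_m^2}{t_{m+1}-t_m}\, \mathcal{L}_4(S_{t_m}).
\end{align*}
Taking expectations, we must control $\expectation{\mathcal{L}_4(S_{t_m})\indicator{\mathcal{B}_{m-1}}}$. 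Here the key is that on $\mathcal{B}_{m-1}$ all parameters up to epoch $m-1$ lie in $U$, so by the Foster--Lyapunov drift in \Cref{ass:geometric-ergodicity}, $\expectation{\mathcal{L}_v(S_{t_m})\indicator{\mathcal{B}_{m-1}}}\le \lambda^{t_m}\mathcal{L}_v(S_0) + b/(1-\lambda) \le \mathcal{L}^\star$ up to a constant (iterating the drift inequality over the uniformly-drifting epochs), and Jensen gives the same for $\mathcal{L}_4$. So $\expectation{W_m}$ and the recursion's coefficients are all bounded by a constant times $\mathcal{L}^\star$.

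The third step is to solve the scalar recursion $a_{m+1} \le (1 - 2\mu\alpha_m + O(\alpha_m\delta)) a_m + C\mathcal{L}^\star(\alpha_m N_m^{-1} a_m^{1/2} + \alpha_m^2 N_m^{-1})$ with $\alpha_m = \alpha(m+1)^{-\sigma}$ and $N_m = \ell m^{\sigma/2+\kappa}$, where $a_m := \expectation{W_m}$. Choosing $\delta_0$ small relative to $\mu$ absorbs the $O(\alpha_m\delta)$ error into the contraction. The $\alpha_m^2 N_m^{-1} = \alpha^2\ell^{-1}(m+1)^{-2\sigma} m^{-\sigma/2-\kappa}$ forcing term, fed through a standard comparison lemma (e.g.\ \cite[Lemma~22 or similar]{fehrman2020convergence}) for recursions of the form $a_{m+1}\le(1-c/m^\sigma)a_m + c'/m^{\beta}$, produces a solution decaying like $m^{-\sigma-\kappa}$ provided $\sigma\in(2/3,1)$; the cross term $\alpha_m N_m^{-1} a_m^{1/2}$ is handled by Young's inequality $a_m^{1/2}\cdot(\text{small}) \le \tfrac{\mu\alpha_m}{2C\mathcal{L}^\star} a_m + (\text{small})^2$, moving half of it into the contraction and leaving a forcing term of even smaller order. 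Choosing $\alpha_0,\ell_0$ to guarantee the net contraction rate stays positive and $\mathfrak r_0$ small enough for the tubular coordinates to be valid then yields $a_m \le c\mathcal{L}^\star m^{-\sigma-\kappa}$, which is the claim.

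\textbf{Main obstacle.} The delicate point is not the scalar recursion (which is routine given the i.i.d.\ template) but rather the bootstrapping between $\mathcal{B}_{m-1}$ and the Lyapunov estimate: the bound \eqref{eqn:lemma4_ii} carries $\mathcal{L}_4(S_{t_m})^{l/2}$, and to close the recursion I need $\expectation{\mathcal{L}_4(S_{t_m})\indicator{\mathcal{B}_{m-1}}}$ uniformly bounded. This requires iterating the drift condition of \Cref{ass:geometric-ergodicity} \emph{along} a random trajectory whose parameters stay in $U$ only on the event $\mathcal{B}_{m-1}$, so the drift must be invoked epoch-by-epoch with the indicator carried inside — a measurability-careful argument — and the batch lengths $N_m\to\infty$ must be used to ensure the per-epoch drift brings $\mathcal{L}_v$ down to its equilibrium level $b/(1-\lambda)$ (giving the $(1-\lambda)^{-2}$ in the definition of $\mathcal{L}^\star$ after a further Jensen/Cauchy--Schwarz step for the square-root moments). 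Getting the constant in front to be exactly $\mathcal{L}^\star$ as in \eqref{eqn:L_star}, rather than something weaker, is where most of the bookkeeping lives.
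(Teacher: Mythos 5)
Your proposal is correct and follows essentially the same route as the paper: a one-step expansion around the projection $\mathfrak{p}(\Theta_{m-1})$ using the nondegenerate-Hessian geometry (the paper's analogue of Fehrman's Proposition~17), the Lyapunov-weighted bias/variance bounds of Lemma~\ref{lem:Bounded_norms_on_the_gradient_estimator} combined with Cauchy--Schwarz for the cross term, the uniform bound $\expectation{\mathcal{L}_v(S_{t_m})\indicator{\mathcal{B}_{m-1}}}\le\mathcal{L}^\star$ (which is exactly the paper's Lemma~\ref{lemma:expectation_L_is_bounded}), and a decaying-step recursion yielding the $m^{-\sigma-\kappa}$ rate. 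The only differences are presentational: the paper closes the recursion by an explicit induction with a tailored constant $c(\alpha)$, handling the early epochs $m\le m_0(\alpha)$ via the $\wedge\,\delta$ truncation and then choosing $\delta_0(\alpha)$ small so the final constant is $O(\mathcal{L}^\star)$, whereas you invoke a comparison lemma plus Young's inequality, which works out to the same orders.
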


Compared to the unbiased case of \cite{fehrman2020convergence}, Lemma~\ref{lem:conditional_convergence_in_basin} needs to use a larger batch size to deal with the bias of Lemma~\ref{lem:Bounded_norms_on_the_gradient_estimator}.
A key result required is that on the event $\mathcal{B}_{m-1}$, the Lyapunov function is bounded in expectation by $\mathcal{L}^*$, which captures the size of the `effective' state space for the policies around an optimum.
With Lemma~\ref{lem:conditional_convergence_in_basin} together with Markov's inequality, a bound of order $\epsilon^{-2} m^{-\sigma - \kappa}$ for the first term in \eqref{eqn:sketch_1} follows.

\subsubsection{Step 3: Excursion and the Probability of Staying in $V_{\mathfrak{r}, \delta}(\theta^{\star})$}

We next focus on $\probability{\overline{\mathcal{B}}_{m}}$.
Since
\begin{equation}
	\probability{\mathcal{B}_{m}} \geq \probability{\mathcal{B}_{m-1}}  - \probability{\Theta_{m} \notin V_{\mathfrak{r}, \delta}(\theta^{\star}), \mathcal{B}_{m-1}},
\end{equation}
we can use a recursive argument to obtain a lower bound, \emph{if} we can bound first the probability
\begin{align}
	\probability{\Theta_{m} \notin V_{\mathfrak{r}, \delta}(\theta^{\star}), \mathcal{B}_{m-1}} &= \probability{\mathrm{dist}(\Theta_{m}, \mathcal{M} \cap U) > \delta, \mathcal{B}_{m-1}} \nonumber \\
	& \phantom{=}+ \probability{\mathrm{dist}(\Theta_{m}, \mathcal{M} \cap U) \leq \delta, \Theta_{m} \notin V_{\mathfrak{r}, \delta}(\theta^{\star}), \mathcal{B}_{m-1}}.
	\label{eqn:sketch_2}
\end{align}
The first term in \eqref{eqn:sketch_2} represents the event that the iterand $\Theta_m$ escapes the set $V_{\mathfrak{r}, \delta}(\theta^{\star})$ in directions `normal' to $\mathcal{M}$, while the second term represents the escape in directions `tangent' to $\mathcal{M}$---intuition derived from the fact that, in that latter event, we still have $\mathrm{dist}(\Theta_{m}, \mathcal{M} \cap U) \leq \delta$.

The first term in \eqref{eqn:sketch_2} can be bounded by using the local geometric properties around minima in the set $U$ and associating the escape probability with the probability that on the event $\mathcal{B}_{m-1}$ escape can only occur if $|\eta_m|$ is large enough. The probability of this last event happening can then be controlled with the variance estimates from Lemma~\ref{lem:Bounded_norms_on_the_gradient_estimator}.

After a recursive argument, we have to consider the second term in \eqref{eqn:sketch_2} for all $l \leq m$. Fortunately, this term can be bounded by first looking at the maximal excursion event for the iterates $\{ \Theta_l \}_{l=1}^m$. The proof can be found in Appendix~\ref{sec:Proof_maximal_excursion_event_probability}. Here, the Lyapunov function again plays a crucial role to control the variance of the gradient estimator on the events $\mathcal{B}_{l}$ for $l \leq m$, compared to an unbiased and non-Markovian case.

\begin{lemma}
	\label{lem:maximal_excursion_event_probability}
	
	Suppose that Assumptions~\ref{ass:markov}--\ref{ass:nondegenerate-maxima} hold.
	Then there exist
	$\mathfrak{r}_0$,
	$\alpha_0$,
	$\ell_0 > 0$, and $c>0$
	such that
	for any
	$\mathfrak{r} \in (0, \mathfrak{r}_0]$,
	$\alpha \in (0, \alpha_0]$ and $\ell \in [\ell_0, \infty)$,
	there exist $\delta_0>0$ such that for any $\delta \in (0, \delta_0]$ and $m \geq 1$,
	\begin{equation}
		\expectationBig{
			\max_{1 \leq l \leq m}
			\bigl|
			\Theta_{l} - \Theta_0
			\bigr|
			\indicator{ \mathcal{B}_{l-1} }
		}
		<
		c \alpha
		\bigl(
		m^{1-3\sigma/2 - \kappa/2}
		+
		\sqrt{\frac{1}{\ell}}m^{1-5\sigma/8 - \kappa/2}
		\bigr)
		.
		\label{eqn:lemma_basin_of_atraction}
	\end{equation}
\end{lemma}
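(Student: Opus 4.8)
The plan is to localize and adapt the maximal-excursion estimate of \cite[Proposition~20]{fehrman2020convergence} to the present biased, Markovian setting, feeding in the conditional bias/variance bounds of Lemma~\ref{lem:Bounded_norms_on_the_gradient_estimator} and the contraction rate of Lemma~\ref{lem:conditional_convergence_in_basin}. Since in the application $\Theta_0$ is the first iterate entering $V_{\mathfrak{r},\delta}(\theta^\star)$, we may assume $\Theta_0 \in \bar{B}_{\mathfrak{r}}(\theta^\star)$. First I would pass to the stopped process $\hat{\Theta}_0 = \Theta_0$, $\hat{\Theta}_{l+1} = \hat{\Theta}_l + \alpha_l H_l \indicator{\mathcal{B}_l}$; its increments are $\mathcal{F}_{l+1}$-adapted, and on $\mathcal{B}_{l-1}$ one has $\hat{\Theta}_l = \Theta_l$ (because $\indicator{\mathcal{B}_k}=1$ for every $k\le l-1$ there), so $\max_{1\le l\le m}|\Theta_l-\Theta_0|\indicator{\mathcal{B}_{l-1}} \le \max_{1\le l\le m}|\hat{\Theta}_l-\hat{\Theta}_0|$ and it suffices to bound the expectation of the right-hand side. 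Writing $H_k = \nabla J(\Theta_k)+\eta_k$ and telescoping, I would split $\hat{\Theta}_l-\hat{\Theta}_0 = D_l + B_l + M_l$ with drift $D_l = \sum_{k<l}\alpha_k\nabla J(\Theta_k)\indicator{\mathcal{B}_k}$, bias $B_l = \sum_{k<l}\alpha_k\,\E[\eta_k\indicator{\mathcal{B}_k}\mid\mathcal{F}_k]$, and martingale $M_l = \sum_{k<l}\alpha_k(\eta_k\indicator{\mathcal{B}_k}-\E[\eta_k\indicator{\mathcal{B}_k}\mid\mathcal{F}_k])$, the latter being a genuine $(\mathcal{F}_l)$-martingale with orthogonal increments, and bound $\E[\max_l|D_l|]$, $\E[\max_l|B_l|]$, $\E[\max_l|M_l|]$ separately.

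For the drift: on $\mathcal{B}_k$ the iterate $\Theta_k$ lies in $V_{\mathfrak{r},\delta}(\theta^\star)$, and since $J$ is $C^2$ near $\theta^\star$ with $\nabla J$ vanishing on $\mathcal{M}\cap U$, $|\nabla J(\Theta_k)|\indicator{\mathcal{B}_k}\le C\,(\mathrm{dist}(\Theta_k,\mathcal{M}\cap U)\wedge\delta)\indicator{\mathcal{B}_{k-1}}$; working in the tubular coordinates \eqref{eqn:definition_V_set_alternative} one moreover sees the tangential component of $\nabla J(\Theta_k)$ is second order in the normal displacement, which sharpens the exponent. Then Cauchy--Schwarz and Lemma~\ref{lem:conditional_convergence_in_basin} give $\E[|\nabla J(\Theta_k)|\indicator{\mathcal{B}_k}]\le C\sqrt{\mathcal{L}^\star}\,k^{-(\sigma+\kappa)/2}$, and summing against $\alpha_k=\alpha(k+1)^{-\sigma}$ produces the term $c\alpha\,m^{1-3\sigma/2-\kappa/2}$. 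For the bias: Lemma~\ref{lem:Bounded_norms_on_the_gradient_estimator}(i) gives $|B_l|\le C\sum_{k<m}\frac{\alpha_k}{t_{k+1}-t_k}\mathcal{L}_4(S_{t_k})^{1/2}$, so with $t_{k+1}-t_k=\ell k^{\sigma/2+\kappa}$ and the uniform Lyapunov-moment bound $\E[\mathcal{L}_4(S_{t_k})\indicator{\mathcal{B}_{k-1}}]\le C\mathcal{L}^\star$ — which follows from the geometric drift condition of Assumption~\ref{ass:geometric-ergodicity} and the definition \eqref{eqn:L_star} of $\mathcal{L}^\star$, and is obtained en route to Lemma~\ref{lem:conditional_convergence_in_basin} — this contribution is of strictly lower order. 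For the martingale: Doob's $L^2$ maximal inequality and orthogonality give $\bigl(\E[\max_l|M_l|]\bigr)^2\le 4\sum_{k<m}\alpha_k^2\,\E[|\eta_k|^2\indicator{\mathcal{B}_k}]$, and Lemma~\ref{lem:Bounded_norms_on_the_gradient_estimator}(ii) with the same moment bound yields $\E[|\eta_k|^2\indicator{\mathcal{B}_k}]\le\frac{C\mathcal{L}^\star}{\ell k^{\sigma/2+\kappa}}$; carrying out the sum gives the term $c\frac{\alpha}{\sqrt{\ell}}\,m^{1-5\sigma/8-\kappa/2}$. (Reproducing these precise fractional exponents requires following the same chain of Hölder-type estimates as in \cite[Proposition~20]{fehrman2020convergence}.) Collecting the three contributions and choosing $\mathfrak{r}_0,\alpha_0,\ell_0$ and then $\delta_0$ small enough that the constants — depending only on $\theta^\star$, $U$, $J$, and $\mathcal{L}$ — are absorbed into a single $c>0$ yields \eqref{eqn:lemma_basin_of_atraction}.

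The main obstacle, and the place where the argument departs from the i.i.d.\ analysis of \cite{fehrman2020convergence}, is the bias term together with the Lyapunov bookkeeping it forces. The estimates of Lemma~\ref{lem:Bounded_norms_on_the_gradient_estimator} carry $\mathcal{L}_4(S_{t_k})$ on the right, so to turn them into deterministic rates one needs simultaneously that the chain stays stable along the trajectory — i.e.\ that $\E[\mathcal{L}_4(S_{t_k})\indicator{\mathcal{B}_{k-1}}]$ is uniformly $O(\mathcal{L}^\star)$ — which is exactly what Assumption~\ref{ass:geometric-ergodicity} buys, and is the reason the bias forces the batch size to grow with exponent at least $\sigma/2$ rather than stay bounded; one must also check that the $\indicator{\mathcal{B}_{k-1}}$ inserted to invoke this bound is compatible with the conditioning in Lemma~\ref{lem:Bounded_norms_on_the_gradient_estimator} (whose conditional bounds already include an indicator). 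A secondary subtlety is that $\nabla J$ is contractive only in the directions normal to $\mathcal{M}$, so getting the stated exponent in the drift term rather than a cruder $O(\alpha\,m^{1-\sigma})$ bound requires exploiting, via the tubular coordinates of \eqref{eqn:definition_V_set_alternative}, that $\nabla J$ has no first-order tangential component on $\mathcal{M}\cap U$.
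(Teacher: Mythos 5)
Your proposal is essentially correct in its ingredients but follows a genuinely different route from the paper. The paper uses no martingale structure at all: it starts from the crude bound $\E[\max_{1\le l\le m}|\Theta_l-\Theta_0|\indicator{\mathcal{B}_{l-1}}]\le\sum_{l=1}^m\E[|\Theta_l-\Theta_{l-1}|^2\indicator{\mathcal{B}_{l-1}}]^{1/2}$ in \eqref{eqn:proof_lemma6_1}, and then estimates each increment in $L^2$, splitting $\Theta_{l+1}-\Theta_l$ through the local expansion $\nabla J(\Theta_l)=\mathrm{Hess}_{\mathfrak{p}(\Theta_l)}(\Theta_l-\mathfrak{p}(\Theta_l))+\epsilon_l$ with $|\epsilon_l|\le c\,\mathrm{dist}(\Theta_l,\mathcal{M}\cap U)^2$, and controlling the cross term $\langle\epsilon_l,\eta_l\rangle$ by conditioning on $\mathcal{F}_l$ and invoking Lemma~\ref{lem:Bounded_norms_on_the_gradient_estimator}, Lemma~\ref{lem:conditional_convergence_in_basin} and Lemma~\ref{lemma:expectation_L_is_bounded} --- the same three ingredients you use, with the same $\indicator{\mathcal{B}_l}\le\indicator{\mathcal{B}_{l-1}}$ bookkeeping, so your handling of the bias and of the Lyapunov moments is faithful to the paper. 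Your stopped-process decomposition into drift, bias and martingale with Doob's $L^2$ maximal inequality is new relative to the paper and buys a square-root-of-sum instead of a sum-of-square-roots for the noise, i.e.\ an $O(\alpha\ell^{-1/2})$ control of the martingale part uniformly in $m$, which is sharper than the paper's per-increment route whenever $1-5\sigma/8-\kappa/2>0$. Your drift bound ($|\nabla J|\lesssim\mathrm{dist}$ on $V_{\mathfrak{r},\delta}(\theta^\star)$, Cauchy--Schwarz, Lemma~\ref{lem:conditional_convergence_in_basin}) reproduces the paper's dominant per-step order $\alpha l^{-3\sigma/2-\kappa/2}$; the ``tangential component is second order'' refinement you flag is neither needed nor used by the paper, whose Hessian expansion enters only at first order in the normal displacement.

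The one genuine discrepancy is the origin of the second exponent. Your Doob computation yields $(\sum_k\alpha_k^2\E[|\eta_k|^2\indicator{\mathcal{B}_k}])^{1/2}\le C\alpha\ell^{-1/2}\bigl(\sum_k k^{-5\sigma/2-\kappa}\bigr)^{1/2}$, and since $5\sigma/2+\kappa>1$ this sum converges, so your bound on the martingale part is a constant in $m$; it does not ``carry out'' to $\alpha\ell^{-1/2}m^{1-5\sigma/8-\kappa/2}$, and deferring to ``the same chain of H\"older-type estimates as in \cite{fehrman2020convergence}'' cannot supply it, because your decomposition never produces the exponent $5\sigma/8$ (in the paper it arises from taking the square root of the per-increment bound \eqref{eqn:proof_lemma6_10} before summing). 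This is harmless when $1-5\sigma/8-\kappa/2\ge0$, where a constant is dominated by the stated right-hand side, but for large $\kappa$ the stated bound decays in $m$ while yours does not, so the lemma as literally written is not recovered from your argument. In fairness, the paper's own closing step applies $\sum_{l\le m}l^{-a}\le Cm^{1-a}$ with exponents $a>1$ (e.g.\ $a=3\sigma/2+\kappa/2$), where the same objection applies --- and your drift term inherits exactly that leap as well --- so the defect lies in the form of the stated rate rather than in your machinery; the defensible conclusion of either argument is the per-step estimate of order $\alpha(l^{-3\sigma/2-\kappa/2}+\ell^{-1/2}l^{-5\sigma/8-\kappa/2})$ summed correctly, and on that content your proof and the paper's agree.
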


Finally, with the previous steps we obtain a bound on $\probability{\mathcal{B}_m}$ in Lemma~\ref{lem:probability_not_leaving_basin} below\footnote{
	In \eqref{eqn:lemma_basin_of_atraction}, if the Lyapunov function has only smaller moments than order $\nu$, then condition on $\kappa \geq 0$ will become stricter.
	In particular, $\kappa$ tunes the batch size required to sample from the tails of the stationary distribution and may be required to be positive depending the moments of the Lyapunov function.
	The terms $\sigma$ and $\kappa$ can be tuned to control the bias coming from variance and nonstationarity, and finite batch size, respectively.
}. The proof of Lemma~\ref{lem:probability_not_leaving_basin} can be found in Appendix~\ref{sec:Proof_probability_not_leaving_basing}.

\begin{lemma}
	\label{lem:probability_not_leaving_basin}
	Suppose that Assumptions~\ref{ass:markov}--\ref{ass:nondegenerate-maxima} hold and $\sigma + \kappa > 1$.
	There exist
	$\mathfrak{r}_0$,
	$\alpha_0$,
	such that
	for any
	$\mathfrak{r} \in (0, \mathfrak{r}_0]$,
	$\alpha \in (0, \alpha_0]$,
	there also exists a constant $c > 0$, $\delta_0 > 0$ such that for any $\delta \in (0, \delta_0]$, if $\Theta_0 \in V_{\mathfrak{r}/2, \delta}(\theta^{\star})$, there exists $\ell_0 > 0$ such that for any $\ell \in [\ell_0, \infty)$ and $m \in \bN_+$,
	\begin{equation}
		\probability{\mathcal{B}_{m}}
		\geq
		\exp \Bigl( -\frac{c\alpha^2}{\delta^2 \ell} \Bigr) - \frac{c}{\delta^4 \ell}m^{1-\sigma-\kappa} - c\alpha \frac{(m^{1-3\sigma/2 - \kappa/2} + \ell^{-1/2}m^{1-5\sigma/8 - \kappa/2})}{(\mathfrak{r}/2 - 2\delta)_{+}}
		.
	\end{equation}
\end{lemma}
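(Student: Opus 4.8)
The plan is to bound $\probability{\overline{\mathcal{B}_m} \,|\, \Theta_0 \in V_{\mathfrak{r}/2,\delta}(\theta^\star)}$ by a union-type argument over epochs, splitting each one-step escape event as in the decomposition~\eqref{eqn:sketch_2}. Writing $q_l := \probability{\Theta_l \notin V_{\mathfrak{r},\delta}(\theta^\star) \,|\, \mathcal{B}_{l-1}}$, one has $\probability{\mathcal{B}_m} = \prod_{l=1}^m (1 - q_l)$, and I would further decompose $q_l = q_l^{\mathrm{N}} + q_l^{\mathrm{T}}$ into the normal and tangent escape parts. The elementary inequality $\prod_{l}(1 - q_l^{\mathrm{N}} - q_l^{\mathrm{T}}) \ge \prod_l(1 - q_l^{\mathrm{N}}) - \sum_l q_l^{\mathrm{T}}$ (valid for nonnegative summands with $q_l^{\mathrm{N}}+q_l^{\mathrm{T}}\le 1$, by an easy induction), together with $\prod_l(1-q_l^{\mathrm{N}}) \ge \exp(-2\sum_l q_l^{\mathrm{N}})$ once $\ell_0$ is large enough that each $q_l^{\mathrm{N}}\le 1/2$, reduces the lemma to the two estimates $\sum_l q_l^{\mathrm{N}} = O\big(\alpha^2/(\delta^2\ell) + \delta^{-4}\ell^{-1} m^{1-\sigma-\kappa}\big)$ and $\sum_l q_l^{\mathrm{T}} = O\big(\alpha(m^{1-3\sigma/2-\kappa/2} + \ell^{-1/2}m^{1-5\sigma/8-\kappa/2})/(\mathfrak{r}/2-2\delta)_+\big)$.

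For the normal part I would use the tubular coordinates of~\eqref{eqn:definition_V_set_alternative}, writing $\Theta_{l-1} = y_{l-1} + v_{l-1}$ with $v_{l-1}\in(\mathrm{T}_{y_{l-1}}(\mathcal{M}\cap U))^\perp$ and $|v_{l-1}| < \delta$ on $\mathcal{B}_{l-1}$. Since $\mathrm{Hess}_{\theta^\star} J$ is negative definite in the normal directions (Assumption~\ref{ass:nondegenerate-maxima}), a Taylor expansion of $\nabla J$ around $\mathcal{M}\cap U$ gives, after shrinking $\mathfrak{r}$ and $\delta$, a strict contraction of the normal component, $|v_l| \le (1 - c\alpha_{l-1})|v_{l-1}| + \alpha_{l-1}|\eta_{l-1}| + C(\alpha_{l-1}|v_{l-1}|^2 + \alpha_{l-1}^2 |H_{l-1}|^2)$. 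Hence a normal escape $|v_l|\ge\delta$ out of $\mathcal{B}_{l-1}$ forces either (i) $|v_{l-1}|\ge\delta/2$, or (ii) $|v_{l-1}| < \delta/2$ and $\alpha_{l-1}|\eta_{l-1}| \ge c\delta$ (the $|H_{l-1}|^2$ remainder being absorbed via its Lyapunov tail). Case (ii) is controlled by the conditional second-moment bound~\eqref{eqn:lemma4_ii} of Lemma~\ref{lem:Bounded_norms_on_the_gradient_estimator} and the drift condition of Assumption~\ref{ass:geometric-ergodicity}, which yields $\expectation{\mathcal{L}_4(S_{t_{l-1}})\indicator{\mathcal{B}_{l-1}}} = O(\mathcal{L}^\star)$; since $\alpha_{l-1}^2/(t_l - t_{l-1}) \le C\alpha^2 \ell^{-1} l^{-5\sigma/2-\kappa}$ is summable, Markov's inequality contributes $O(\alpha^2\mathcal{L}^\star/(\delta^2\ell))$, which is the exponent of the first term. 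Case (i) is bounded by Markov's inequality applied to Lemma~\ref{lem:conditional_convergence_in_basin}: on $\mathcal{B}_{l-1}$, $(\mathrm{dist}(\Theta_{l-1},\mathcal{M}\cap U)\wedge\delta)^2 = |v_{l-1}|^2$, so $\probability{|v_{l-1}|\ge\delta/2,\mathcal{B}_{l-1}} \le 4\delta^{-2}\,\expectation{(\mathrm{dist}(\Theta_{l-1},\mathcal{M}\cap U)\wedge\delta)^2\indicator{\mathcal{B}_{l-2}}} = O(\delta^{-2}\mathcal{L}^\star l^{-\sigma-\kappa})$, and summing over $l\le m$ (using $\sigma+\kappa>1$) produces the $\delta^{-4}\ell^{-1}m^{1-\sigma-\kappa}$ term, the extra $\delta^{-2}\ell^{-1}$ being lost when the one-step escape criterion is made uniform in $v_{l-1}$.

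For the tangent part, the hypothesis $\Theta_0\in V_{\mathfrak{r}/2,\delta}(\theta^\star)$ means the local projection $\mathfrak{p}(\Theta_0)$ lies within $\mathfrak{r}/2$ of $\theta^\star$. A tangent escape at step $l$ keeps $\mathrm{dist}(\Theta_l,\mathcal{M}\cap U)\le\delta$ but pushes $\mathfrak{p}(\Theta_l)$ outside $\bar{B}_{\mathfrak{r}}(\theta^\star)$; since $\mathfrak{p}$ is Lipschitz near $\theta^\star$ and the normal parts of $\Theta_0$ and $\Theta_l$ are each $<\delta$, this forces $|\Theta_l - \Theta_0|\ge \mathfrak{r}/2 - 2\delta$. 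Hence $\sum_l q_l^{\mathrm{T}} \le \probability{\max_{1\le l\le m}|\Theta_l-\Theta_0|\indicator{\mathcal{B}_{l-1}} \ge \mathfrak{r}/2 - 2\delta}$, and Markov's inequality with the maximal-excursion bound of Lemma~\ref{lem:maximal_excursion_event_probability} gives exactly the last term. Collecting the three contributions, choosing $\delta_0$ small enough that $\mathfrak{r}/2 - 2\delta > 0$ and the Taylor remainders are absorbed into the contraction, and $\ell_0$ large enough that $q_l^{\mathrm{N}}\le 1/2$, completes the argument. The main obstacle is the normal-direction bookkeeping in case~(ii): one must simultaneously invoke the strict contraction from the nondegenerate Hessian — which holds only after shrinking the neighborhood, and only up to remainders depending on $|v_{l-1}|$ and on the \emph{random} magnitude $|H_{l-1}|$ (controlled through the Lyapunov function) — and the smallness of the Markovian gradient noise from Lemma~\ref{lem:Bounded_norms_on_the_gradient_estimator}, whose bias/variance estimates hold only \emph{on} $\mathcal{B}_{l-1}$. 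Every estimate is therefore conditional, so the recursion over $l$ must be arranged to avoid circularity between ``the iterate stays in $V_{\mathfrak{r},\delta}(\theta^\star)$'' and ``the estimators concentrate'', with the epoch-start Lyapunov values $\mathcal{L}_4(S_{t_l})$ propagated via Assumption~\ref{ass:geometric-ergodicity}; collapsing the product of the typical-interior non-escape probabilities to a genuine $\exp(-c\alpha^2/(\delta^2\ell))$ rather than a vacuous bound is what pins down the $\alpha^2/\ell$ dependence that Proposition~\ref{prop:lower_bound} shows cannot be removed.
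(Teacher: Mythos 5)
Your proposal has the same skeleton as the paper's proof: the split \eqref{eqn:sketch_2} into normal and tangent escapes, the contraction coming from the nondegenerate Hessian, Lemma~\ref{lem:Bounded_norms_on_the_gradient_estimator} propagated through the Lyapunov bound of Lemma~\ref{lemma:expectation_L_is_bounded}, Lemma~\ref{lem:conditional_convergence_in_basin} for iterates already at distance $\geq \delta/2$, and Lemma~\ref{lem:maximal_excursion_event_probability} plus Markov for tangent escapes forcing $|\Theta_l-\Theta_0|\geq \mathfrak{r}/2-2\delta$. However, your bookkeeping mixes joint and conditional probabilities in a way that is circular. You set $q_l=\probability{\Theta_l\notin V_{\mathfrak{r},\delta}(\theta^{\star})\,|\,\mathcal{B}_{l-1}}$ and need $\sum_l q_l^{\mathrm{N}}$ and $\sum_l q_l^{\mathrm{T}}$ small, but every estimate you invoke controls a \emph{joint} quantity carrying the indicator $\indicator{\mathcal{B}_{l-1}}$; converting to the conditional $q_l$'s requires dividing by $\probability{\mathcal{B}_{l-1}}$, which is exactly what the lemma is trying to lower-bound. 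This cannot be repaired by an induction of the type ``$\probability{\mathcal{B}_{l-1}}\geq 1/2$'', because the lemma's own lower bound degrades in $m$ (for small $\kappa$ the exponent $1-5\sigma/8-\kappa/2$ is positive), so no uniform-in-$l$ lower bound is available. The paper avoids conditioning altogether: it iterates $\probability{\mathcal{B}_m}\geq\probability{\mathcal{B}_{m-1}}-\probability{\Theta_m\notin V_{\mathfrak{r},\delta}(\theta^{\star}),\mathcal{B}_{m-1}}$ and, in Lemma~\ref{lem:Probability_of_exit_and_Bm_is_bounded}, extracts a \emph{multiplicative} factor $\probability{\mathcal{B}_{m-1}}$ in front of the $\alpha^2$-term by splitting on $\{\mathcal{L}_4(S_{t_{m-1}})\leq m^{s}\}$ with $s=\kappa+\sigma/2$; the remark after that proof is precisely that Cauchy--Schwarz, or a direct appeal to Lemma~\ref{lemma:expectation_L_is_bounded} as you propose, yields only $\probability{\mathcal{B}_{m-1}}^{1/2}$ or no factor at all, which is why your route stalls while the paper's recursion closes into $\prod_l(1-c\alpha^2/(\delta^2\ell l^{2\sigma}))_{+}$ minus additive errors.

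The second gap is quantitative: your case (i) for the normal escape (iterate already at distance $\geq\delta/2$) is bounded only by Lemma~\ref{lem:conditional_convergence_in_basin} plus Markov, giving $O(\delta^{-2}\mathcal{L}^{\star} l^{-\sigma-\kappa})$ per step and hence $O(\delta^{-2}\mathcal{L}^{\star} m^{1-\sigma-\kappa})$ in total, which is missing the $1/\ell$ in the claimed term $c\,\delta^{-4}\ell^{-1}m^{1-\sigma-\kappa}$; the phrase about the ``extra $\delta^{-2}\ell^{-1}$ being lost'' is not a derivation, and the factor matters because Theorem~\ref{prop:main_prop_convergence in probability_noncompact_case} uses exactly $m^{1-\sigma-\kappa}/\ell$. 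To recover it, note that escaping from the annulus still requires a noise event: by the contraction inequality $\mathrm{dist}(\Theta_l,\mathcal{M}\cap U)\leq(1-\lambda\alpha_{l-1}/2)\,\mathrm{dist}(\Theta_{l-1},\mathcal{M}\cap U)+\alpha_{l-1}|\eta_{l-1}|$, crossing level $\delta$ from $\mathrm{dist}\leq\delta$ forces $|\eta_{l-1}|\geq\lambda\delta/2$, so one must combine the annulus probability (Lemma~\ref{lem:conditional_convergence_in_basin}, contributing $\delta^{-2}m^{-\sigma-\kappa}$) with the conditional probability of this noise event (Lemma~\ref{lem:Bounded_norms_on_the_gradient_estimator}, contributing $\delta^{-2}T_m^{-1}$); this product is the paper's estimate of $P_2$ and is what produces both the $\ell^{-1}$ and the $\delta^{-4}$. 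A smaller issue of the same flavor: using $\expectation{\mathcal{L}_4(S_{t_{l-1}})\indicator{\mathcal{B}_{l-1}}}\leq\mathcal{L}^{\star}$ in your case (ii) puts $\mathcal{L}^{\star}$ into the exponential constant, whereas the paper's tail split at $m^{s}$ keeps it out of that term.
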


\subsubsection{Step 4: Combining the Bounds in \eqref{eqn:sketch_1}}

The proof of Theorem~\ref{prop:main_prop_convergence in probability_noncompact_case} follows the same steps as are used to prove \cite[Theorem 25]{fehrman2020convergence} by substituting the modified bounds that we have obtained from Lemmas~\ref{lem:conditional_convergence_in_basin} and \ref{lem:probability_not_leaving_basin} in \eqref{eqn:sketch_1}.
The details can be found in Appendix~\ref{sec:Proof_Theorem2}.

\section{Examples and Numerical Results}
\label{sec:examples}

In \Cref{sec:convergence}, we have shown convergence of a \gls{SAGE}-based policy-gradient method
under the assumptions in \Cref{sec:assumptions_pertaining_convergence}.
We now numerically assess\footnote{The code is available at \url{https://gitlab.laas.fr/ccomte/policy-gradient-reinforcement-learning}.} its performance in examples
from stochastic networks and statistical physics
that go beyond these assumptions.
Specifically, we examine a single-server queue with admission control in \Cref{sec:Admission-control-in-an-MM1}, a load-balancing system in \Cref{sec:Load-balancing-system}, and the Ising model with Glauber dynamics in \Cref{sec:ising}.
These examples satisfy the assumptions of \Cref{sec:sage},
required to implement the \gls{SAGE}-based policy-gradient method,
but not necessarily those of \Cref{sec:assumptions_pertaining_convergence}
used to prove convergence;
more discussion about these assumptions
is provided in \Cref{app:examples}.
\Cref{sec:Admission-control-in-an-MM1}
can be seen as a toy warm-up example
whose simple structure allows us
to gain insight into the compared behavior
of \gls{SAGE} and actor-critic.
The larger-scale examples of
\Cref{sec:Load-balancing-system,sec:ising}
show the superiority of \gls{SAGE},
in the sense that the dimension of these examples
makes them out of reach for existing approaches
without the use of function approximation.

The simulation setup is as follows.
Plots are obtained by averaging $10$ independent simulation runs,
each lasting $T_{\max} = 10^6$ time steps.
The initial parameter vector $\Theta_0$
is taken to be the zero vector,
yielding in each example a uniform policy over the action space.
The \gls{SAGE}-based algorithm (\Cref{algo:sage})
is run with batch size~$100$ and step size $\alpha_m = 10^{-1}$.
The actor--critic algorithm (\Cref{app:actor--critic}) is run with batch size~$1$
and step sizes $\alpha_m = 10^{-3}$ and $\alpha_v = \alpha_{\overline{R}} = 10^{-2}$.
It uses a tabular value function
which we initially treat as containing all-zeros,
and which in practice is expanded as states are visited.

\subsection{Admission Control in a Single-Server Queue}
\label{sec:Admission-control-in-an-MM1}

Consider a queueing system where
jobs arrive according to
a Poisson process with rate $\lambda > 0$,
service times are independent
and exponentially distributed
with rate $\mu > 0$,
and the server applies an arbitrary
nonidling nonanticipating scheduling policy
such as first-come-first-served or processor-sharing.
This model is also commonly known as an M/M/1 queue in the literature.
When a job arrives,
the agent decides to either admit or reject it:
in the former case, the job is added to the queue,
otherwise it is lost permanently.
The agent receives a one-time reward $\gamma > 0$ for each admitted job
(incentive to accept jobs)
and incurs a holding cost $\eta >0$ per job per time unit
(incentive to reject jobs).
The goal is to find an admission-control policy
that achieves a trade-off between these two conflicting objectives.

The problem can be related to the framework of \Cref{sec:Problem} as follows.
For $t \in \bN$, let
$S_t$ denote the number of jobs in the system
right before the arrival of the $(t+1)$th job, and let
$A_t$ denote the decision of either admitting or rejecting this job.
We have $\cS = \bN$ and $\cA = \{\accept, \reject\}$.
Also, let $(\Sigma_\tau, \tau \in \bR_{\ge 0})$ denote
the continuous-time process that describes
the evolution of the number of jobs over time
and $(T_t, t \in \bN)$ the sequence of job arrival times,
so that $S_0 = \Sigma_0$ and $S_t = \lim_{\sigma \uparrow T_t} \Sigma_\tau$
for $t \in \bN_+$.
Rewards are given by
\begin{align*}
	R_{t+1}
	&
	=
	\rdisc(S_t, A_t)
	+
	\int_{T_t}^{T_{t+1}}
	r_{\textrm{cont}}(\Sigma_\tau)
	\rd\tau,
\end{align*}
where $\rdisc(s, a) = \gamma \indicator{a = \accept}$
represents the one-time admission reward
and $\rcont(s) = - \eta s$
the holding cost incurred continuously over time. We use this common reward structure in this example, but we remark that arbitrary reward functions $\rdisc$ and $\rcont$ are possible.

For each $k \in \bN$,
we define a random policy parametrization $\pi_k$ with threshold~$k$
and parameter vector\footnote{In this example, vectors and matrices are indexed starting at~0 (instead of~1) for notational convenience.} $\theta = (\theta_0, \theta_1, \ldots, \theta_k) \in \bR^{k+1}$
as follows.
Under policy~$\pi_k$,
an incoming job finding~$s$ jobs in the system
is accepted with probability
\begin{align} \label{eq:mm1-policy}
	\pi_k(\accept | s, \theta)
	&= \frac1{1 + e^{- \theta_{\min(s, k)}}},
	\quad s \in \bN.
\end{align}
Taking $k = 0$ yields a static (i.e., state-independent) random policy,
while letting $k$ tend to infinity yields
a fully state-dependent random policy.
We believe this parametrization makes intuitive sense
because, in a stable queueing system,
small states tend to be visited more frequently than large states.

Under policy parametrization~$\pi_k$,
\Cref{ass:markov,ass:reward,ass:stat} are satisfied
with
$n = d = k + 1$,
$\Omega = \{\theta \in \bR^{k+1}: \pi_k(\accept | s, \theta) < \frac\mu\lambda\}$,
$\Phi(s) = (\frac\lambda\mu)^s$ for each $s \in \cS$,
$x_i(s) = \indicator{s \ge i + 1}$ for each $i \in \{0, 1, \ldots, k-1\}$
and $x_k(s) = \max(s - k, 0)$,
and
$\rho_i(\theta) = \pi_k(\accept | i, \theta)$
for each
$
i
\in
\{0, 1, \ldots, k\}
$.
It follows that
$\nabla \log \rho_i = \nabla \log \pi_k(\accept | i, \cdot)$
for each $i \in \{0, 1, \ldots, k\}$.
We refer to \Cref{app:mm1} for further details.

\subsubsection{Numerical Results in a Stable Queue}

We study the impact of the policy threshold~$k \in \bN$
on the performance of \gls{SAGE} and actor--critic.
The parameters are
$\lambda = 0.7$, $\mu = 1$, $\gamma = 5$, and $\eta = 1$,
and we consider random policies $\pi_k$ with various thresholds.
We have $\Omega = \bR^{k+1}$ because $\lambda < \mu$,
i.e., the queue is always stable.
As we can verify using \Cref{app:mm1},
if $k \le 2$ the best policy is random,
while if $k \ge 3$,
the best policy (deterministically) admits incoming jobs
if and only if there are at most 2 jobs in the system.
Thus, if $k \ge 3$, the best policy is approximated when
$\theta_i \to +\infty$ if $i \in \{0, 1, 2\}$
and $\theta_i \to -\infty$ if $i \in \{3, 4, \ldots, k\}$.
This deterministic policy is optimal among all Markovian policies.
The initial policy is $\pi_k(\accept | s, \Theta_0) = \frac12$ for each $s \in \bN$,
and the system is initially empty, i.e., $S_0 = 0$ with probability~1.

\begin{figure}[ht]
	\centering
	\begin{subfigure}{.49\linewidth}
		\includegraphics[width=\textwidth]{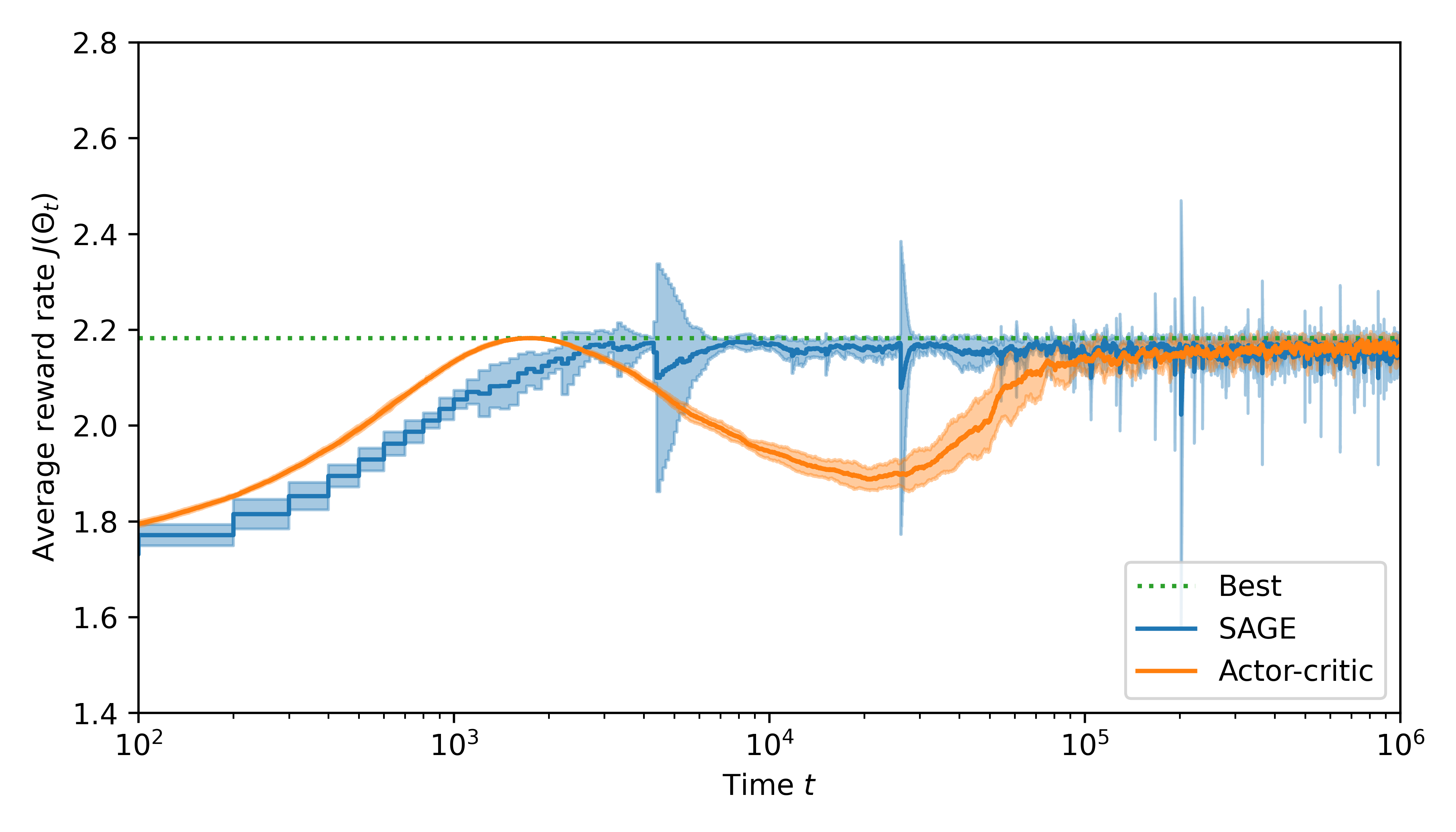}
		\caption{Performance under $\pi_0$.}
		\label{fig:mm1-stable-rwd-0}
	\end{subfigure}
	\begin{subfigure}{.49\linewidth}
		\includegraphics[width=\textwidth]{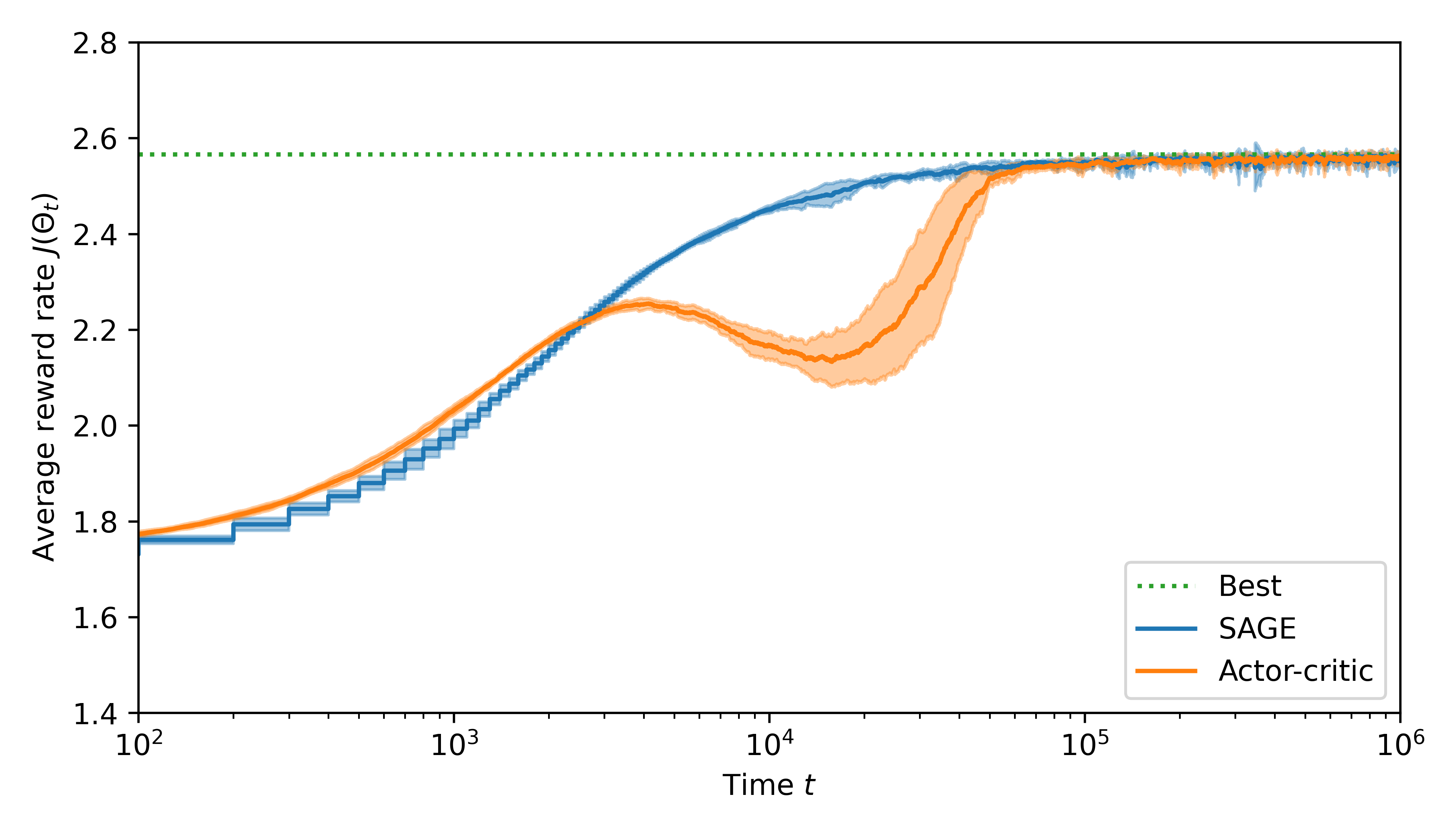}
		\caption{Performance under $\pi_1$.}
		\label{fig:mm1-stable-rwd-1}
	\end{subfigure}
	\begin{subfigure}{.49\linewidth}
		\includegraphics[width=\textwidth]{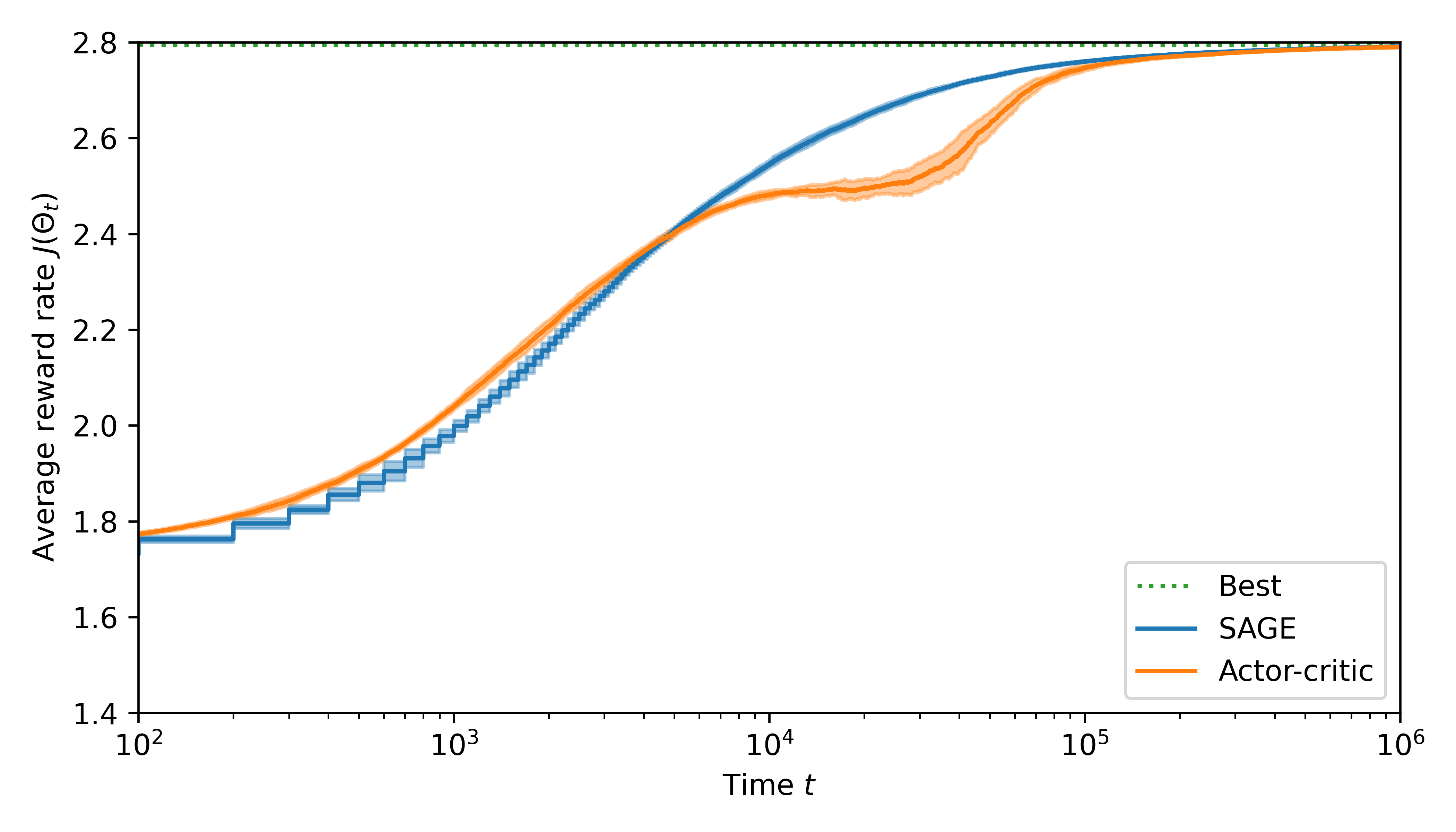}
		\caption{Performance under $\pi_3$.}
		\label{fig:mm1-stable-rwd-3}
	\end{subfigure}
	\begin{subfigure}{.49\linewidth}
		\includegraphics[width=\textwidth]{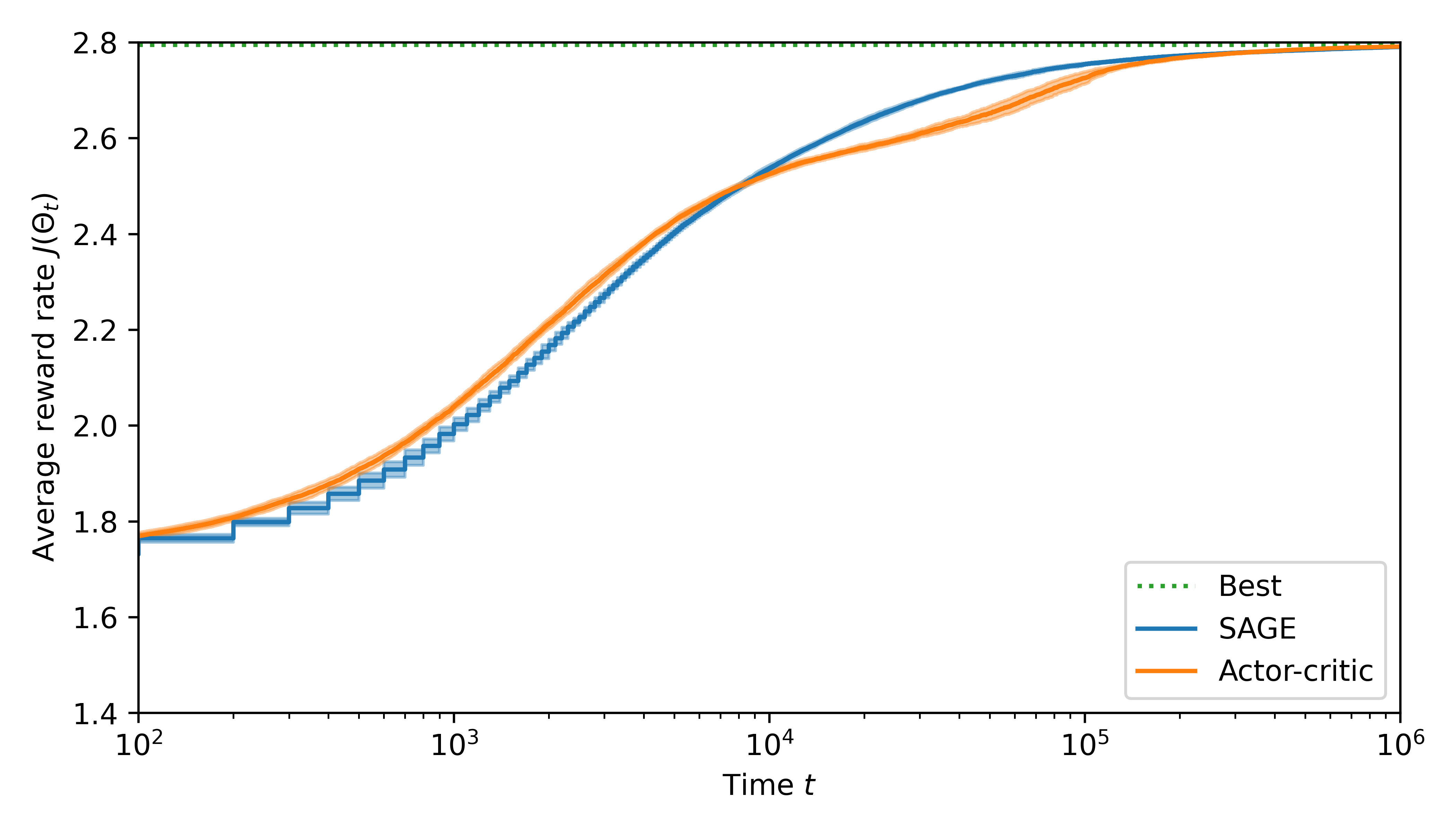}
		\caption{Performance under $\pi_{100}$.}
		\label{fig:mm1-stable-rwd-100}
	\end{subfigure}
	\begin{subfigure}{.49\linewidth}
		\includegraphics[width=\textwidth]{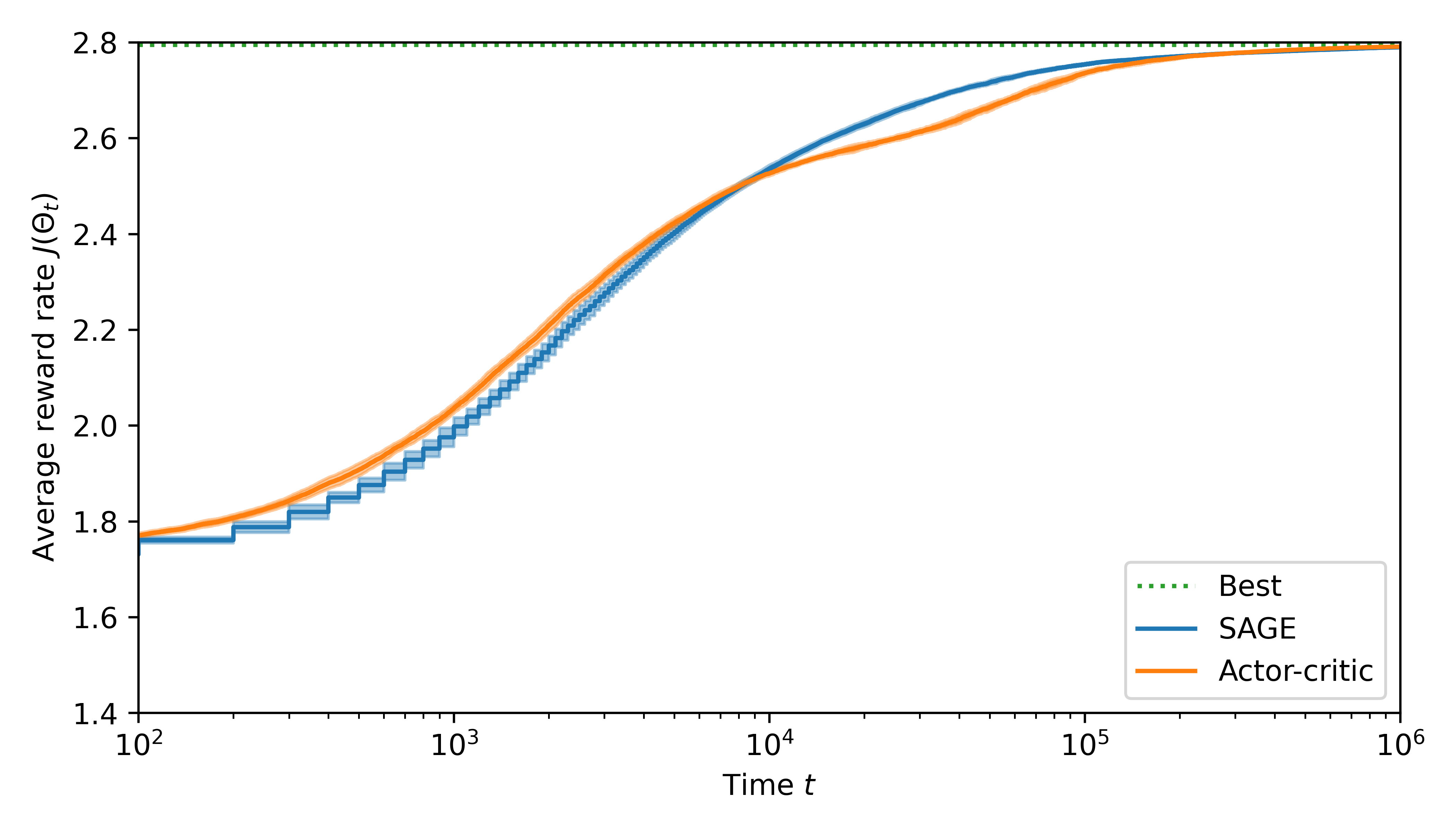}
		\caption{Performance under $\pi_{1000}$.}
		\label{fig:mm1-stable-rwd-1000}
	\end{subfigure}
	\caption{%
		Long-run average reward $J(\Theta_t)$
		in the admission-control problem
		with $\lambda = 0.7$, $\mu = 1$, $\gamma = 5$, and $\eta = 1$.
		Using \Cref{app:mm1}, we can verify that
		the long-run average reward under the best policy
		is approximately
		2.183 if $k = 0$,
		2.566 if $k = 1$,
		and 2.795 if $k \ge 3$.
	}
	\label{fig:mm1-stable-rwd}
\end{figure}

\begin{figure}[ht]
	\centering
	\begin{subfigure}{.49\linewidth}
		\includegraphics[width=\textwidth]{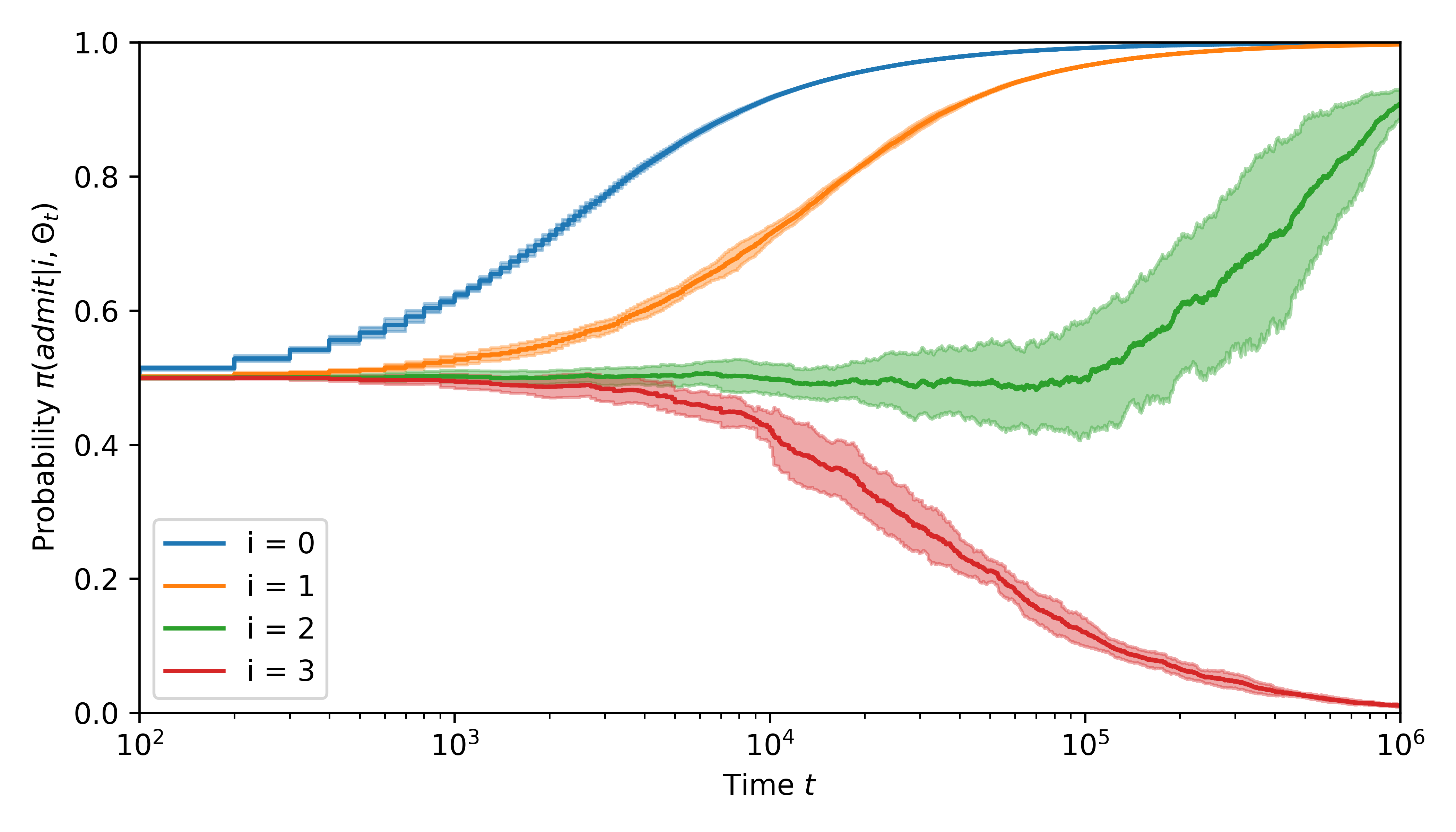}
		\caption{\gls{SAGE}}
		\label{fig:mm1-stable-prb-sage}
	\end{subfigure}
	\begin{subfigure}{.49\linewidth}
		\includegraphics[width=\textwidth]{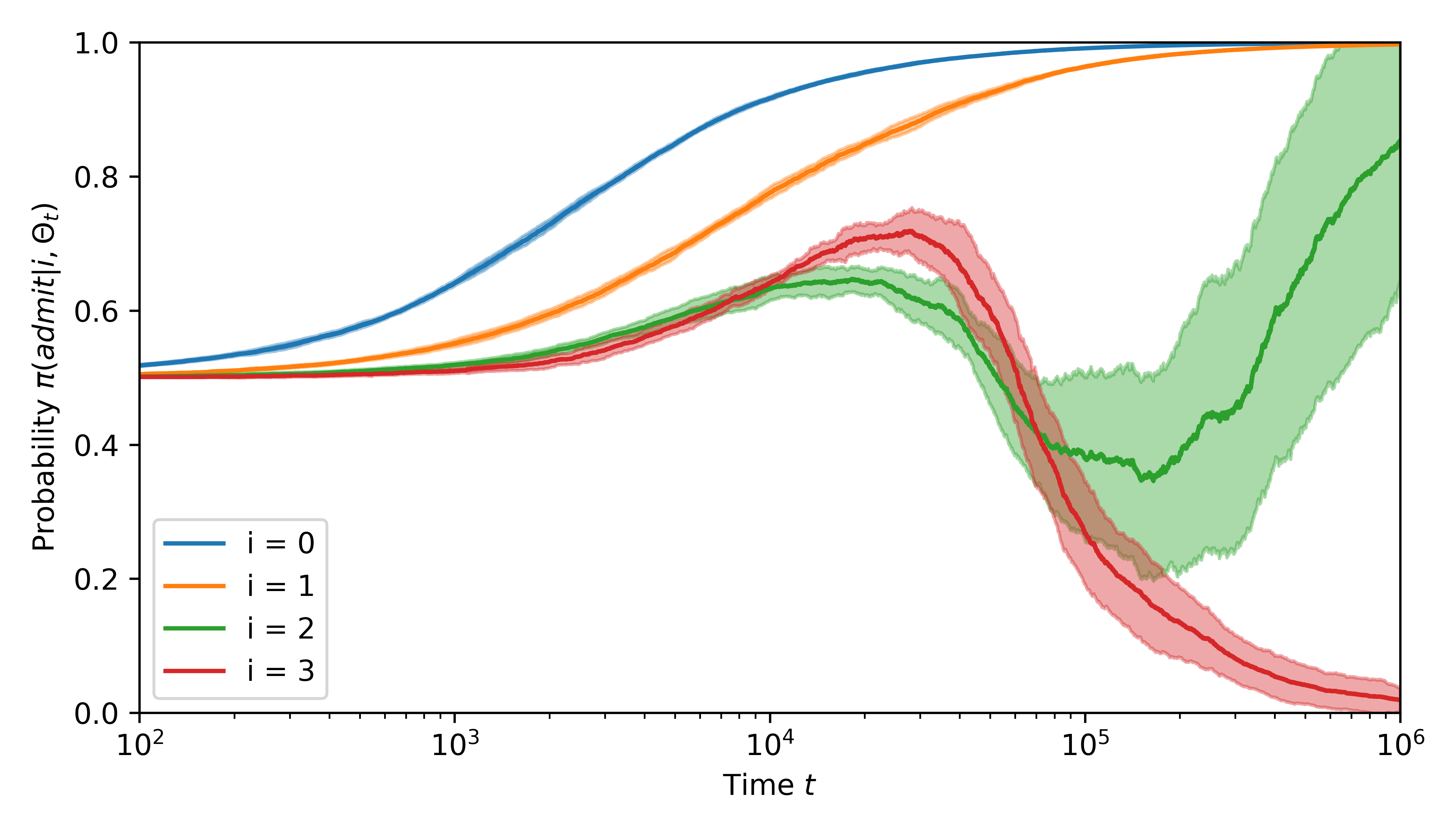}
		\caption{Actor--critic}
		\label{fig:mm1-stable-prb-ac}
	\end{subfigure}
	\caption{Admission probabilities under policy parametrization $\pi_3$.}
	\label{fig:mm1-stable-prb}
\end{figure}

\Cref{fig:mm1-stable-rwd} depicts the impact of the threshold~$k$
on the evolution of the long-run average reward $J(\Theta_t)$
(defined in~\eqref{eq:J} and computed using the formulas of \Cref{app:mm1})
under \gls{SAGE} and actor--critic.
\Cref{fig:mm1-stable-prb}
shows the admission probabilities $\pi_3(\accept | i, \Theta_t)$
for each $i \in \{0, 1, 2, 3\}$
(i.e., the admission probabilities
under the policy with threshold $k = 3$).
In both plots,
the x-axis has a logarithmic scale starting at time $t = 10^2$,
lines are obtained by averaging the results over 10 independent simulations,
and transparent areas show the standard deviation.
Both \gls{SAGE} and actor--critic eventually converge
to the maximal attainable long-run average reward,
and under both algorithms
the convergence is initially faster under policy~$\pi_0$
than under $\pi_1$, $\pi_3$, $\pi_{100}$, and $\pi_{1000}$.
For a particular threshold~$k$,
the convergence is initially faster under actor--critic than under \gls{SAGE}.
However, the long-run average reward under \gls{SAGE} increases monotonically
from its initial value to its maximal value while,
under actor--critic,
there is a time period (comprised between $10^3$ and $10^5$ time steps)
where the long-run average reward stagnates or even decreases.
Similar qualitative remarks can be made
when looking at the running average reward
$\frac1t \sum_{t' = 1}^t R_{t'}$
instead of the long-run average reward $J(\Theta_t)$.
\Cref{fig:mm1-stable-prb-ac} suggests that,
under~$\pi_3$,
this is because actor--critic first ``overshoots''
by increasing $\pi_3(\accept | 3, \Theta_t)$ too much
and then decreasing $\pi_3(\accept | 2, \Theta_t)$ too much
before eventually converging to the best admission probabilities.
This overshooting is more pronounced with a small threshold~$k$,
but it is still visible with~$k = 100$ and $k = 1000$.

\Cref{fig:mm1-stable-rwd,fig:mm1-stable-prb} suggest
actor--critic has more difficulty to correctly estimate the policy update compared to \gls{SAGE},
especially under parametrizations~$\pi_k$
with small thresholds~$k$.
We conjecture this is due to the combination of two phenomena
which reaches a peak when $k$ is small.
First, a close examination of the evolution of the value function
under~$\pi_3$ and $\pi_{10}$ (not shown here) reveals that
there is a transitory bias in the estimate of the value function.
For instance, right after increasing
the admission probability in state~$0$,
the estimate of the value function at states~2 and~3 becomes negative,
even if the optimal value function at these states is positive.
Second, due to the policy parametrization,
parameter $\theta_k$ is updated
whenever a state $s \in \{k, k+1, k+2, \ldots\}$ is visited
(while, for each $i \in \{0, 1, \ldots, k-1\}$,
parameter $\theta_i$ is updated only when state~$i$ is visited).
As a result, the correlated biases in the estimates of
the value function at states $k, k+1, k+2, \ldots$
add up and lead actor--critic to overshoot the update of $\theta_k$,
which has a knock-on effect on other states.

\subsubsection{Numerical Results in a Possibly-Unstable Queue}

\Cref{fig:mm1-unstable-rwd} is the counterpart of \Cref{fig:mm1-stable-rwd}
when the arrival rate is $\lambda = 1.4 > 1 = \mu$.
Now the set of policy parameters
for which the system is stable is
$\Omega = \{\theta \in \bR^{k+1} : \pi_k(\accept | k, \theta) < \frac\mu\lambda\}
\subsetneq \bR^{k+1}$,
with $\frac\mu\lambda \simeq 0.714$.
For simplicity, we will say that
a policy is \emph{stable}
if the Markov chain defined by the system state
under this policy is positive recurrent
(i.e., if $\pi_k(\accept | k, \theta) < \frac\mu\lambda$),
and \emph{unstable} otherwise.
This is an example where convergence can only be guaranteed locally,
as not all policies are stable.
Again using \Cref{app:mm1},
we can verify that if $k \le 1$, the best policy is random,
while if $k \ge 2$,
the best policy (deterministically)
admits incoming jobs
if and only if
there are fewer than 2 jobs in the system.
This deterministic policy is optimal among all Markovian policies.
The initial policy is again the (stable) uniform policy,
and the system is initially empty, i.e., $S_0 = 0$ with probability~1.

\begin{figure}[ht]
	\centering
	\begin{subfigure}{.49\linewidth}
		\includegraphics[width=\textwidth]{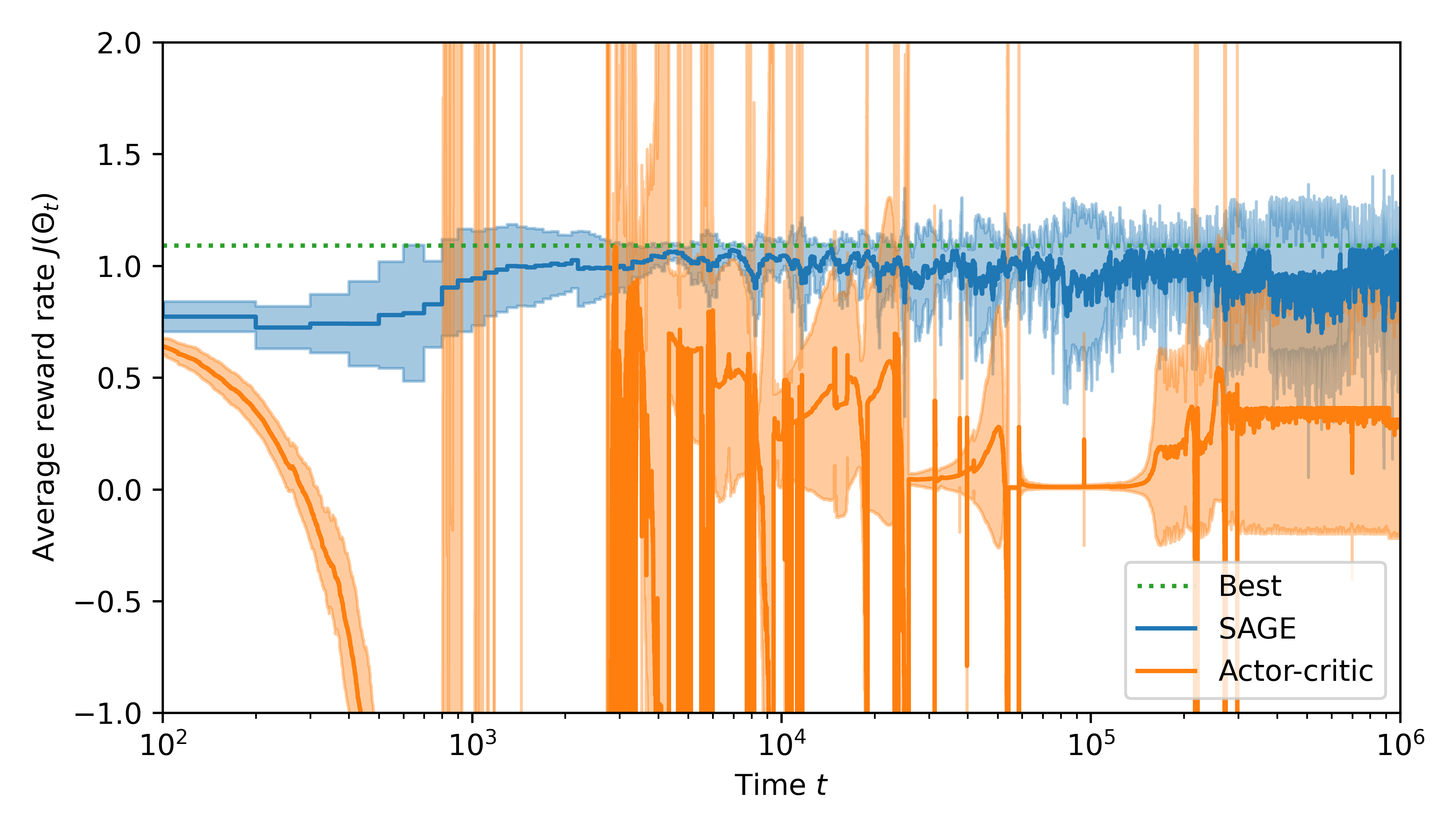}
		\caption{Performance under~$\pi_0$.}
		\label{fig:mm1-unstable-rwd-0}
	\end{subfigure}
	\begin{subfigure}{.49\linewidth}
		\includegraphics[width=\textwidth]{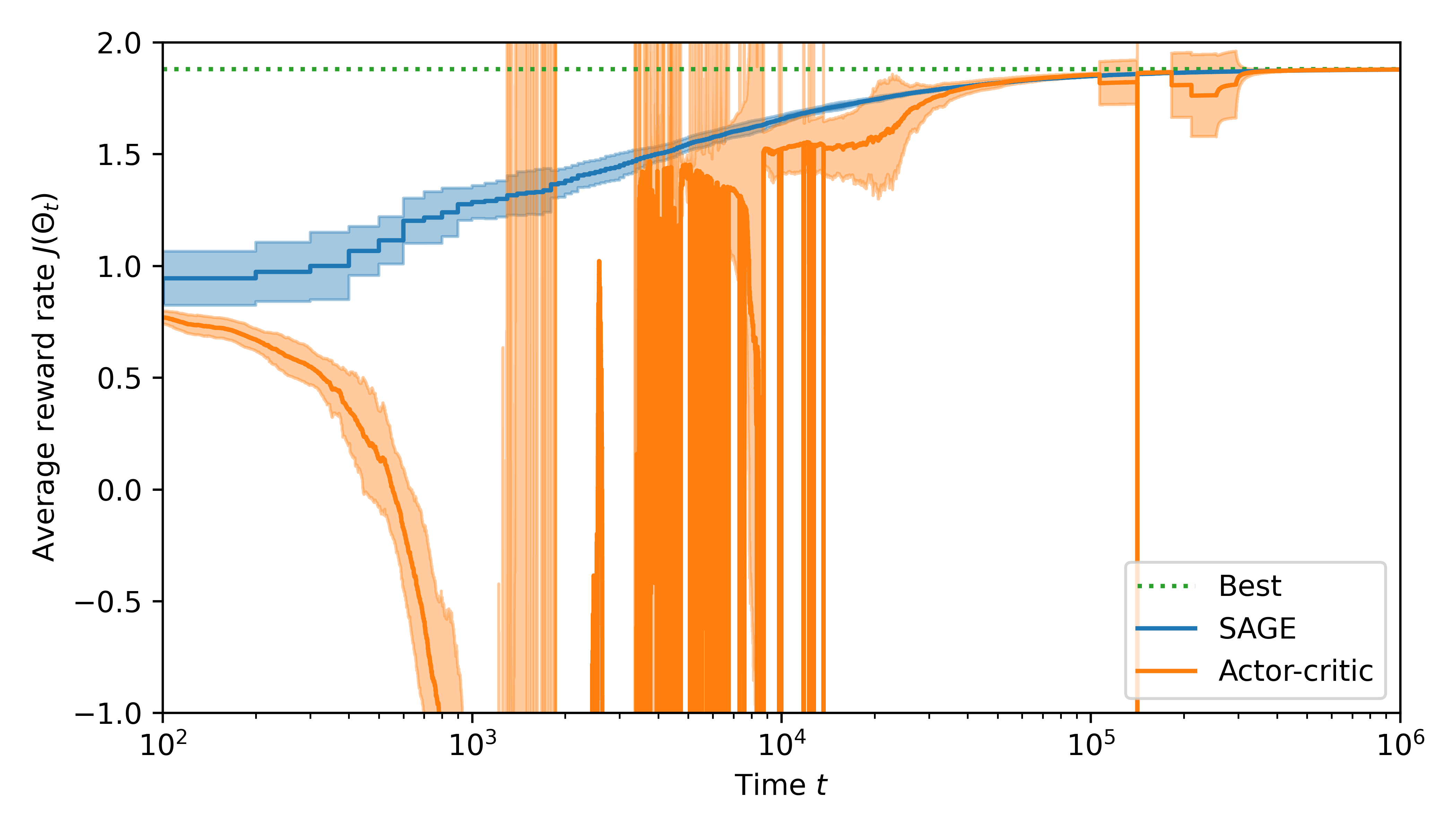}
		\caption{Performance under~$\pi_2$.}
		\label{fig:mm1-unstable-rwd-2}
	\end{subfigure}
	\begin{subfigure}{.49\linewidth}
		\includegraphics[width=\textwidth]{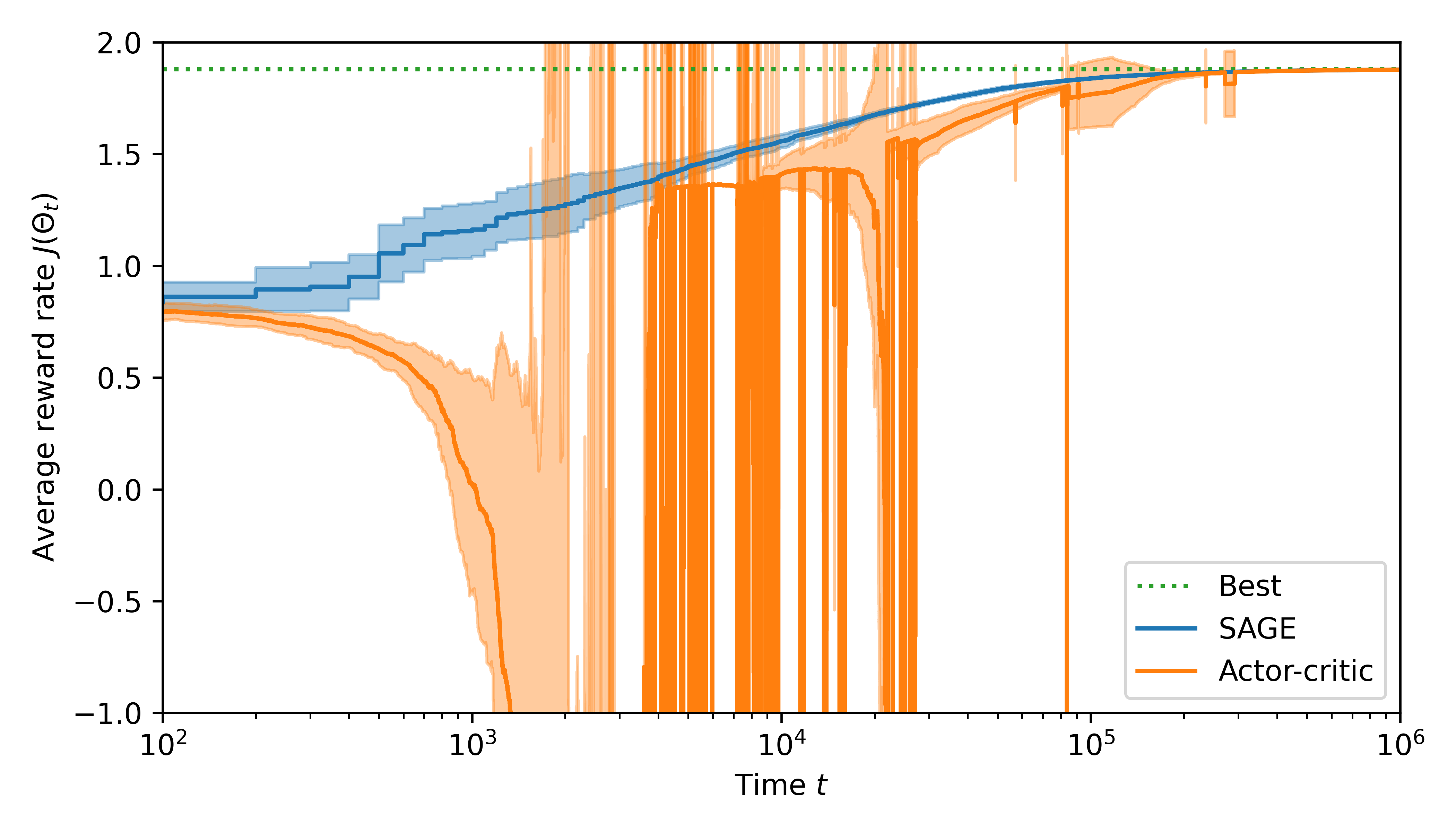}
		\caption{Performance under~$\pi_4$.}
		\label{fig:mm1-unstable-rwd-4}
	\end{subfigure}
	\begin{subfigure}{.49\linewidth}
		\includegraphics[width=\textwidth]{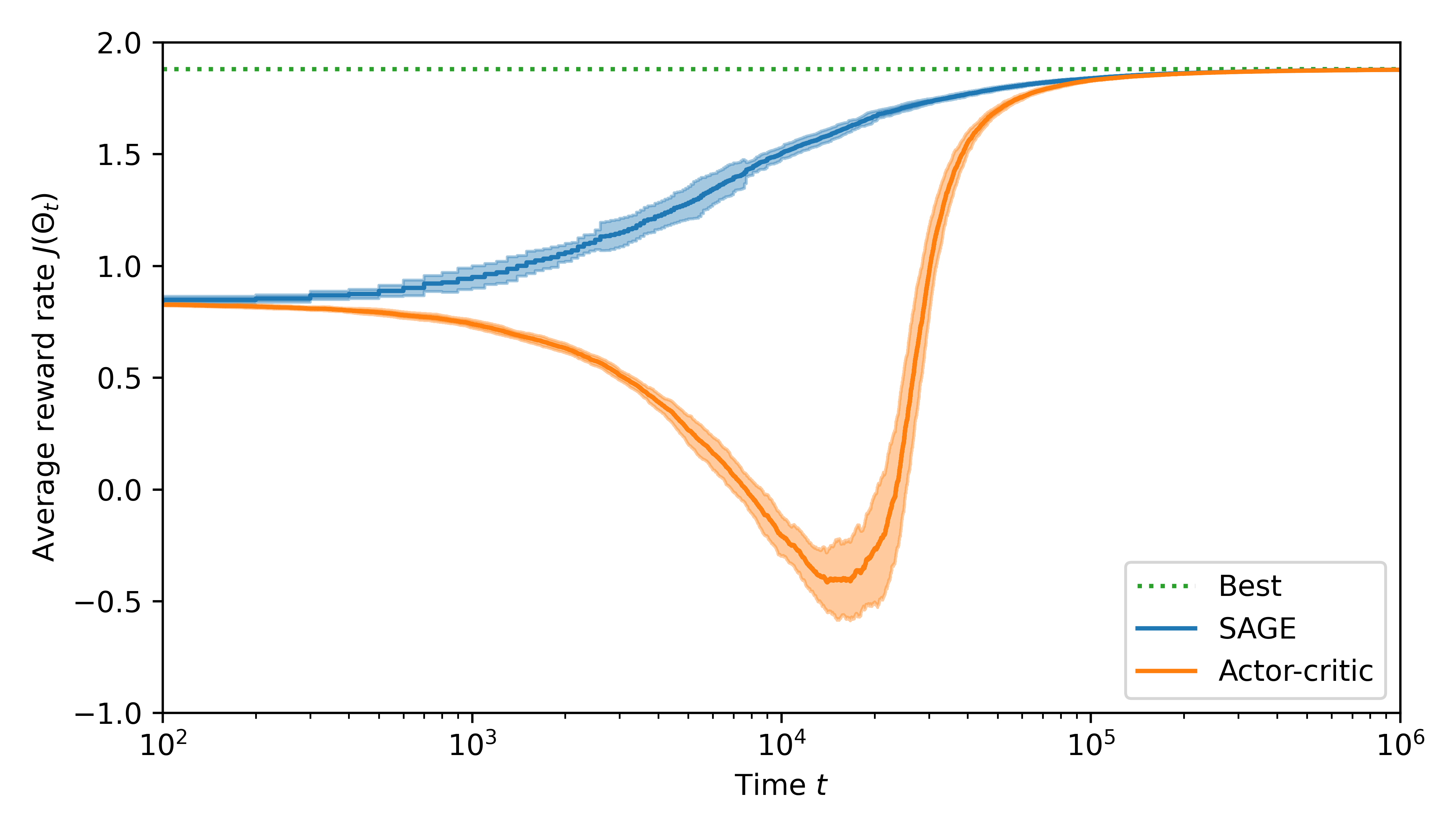}
		\caption{Performance under~$\pi_{100}$.}
		\label{fig:mm1-unstable-rwd-100}
	\end{subfigure}
	\begin{subfigure}{.49\linewidth}
		\includegraphics[width=\textwidth]{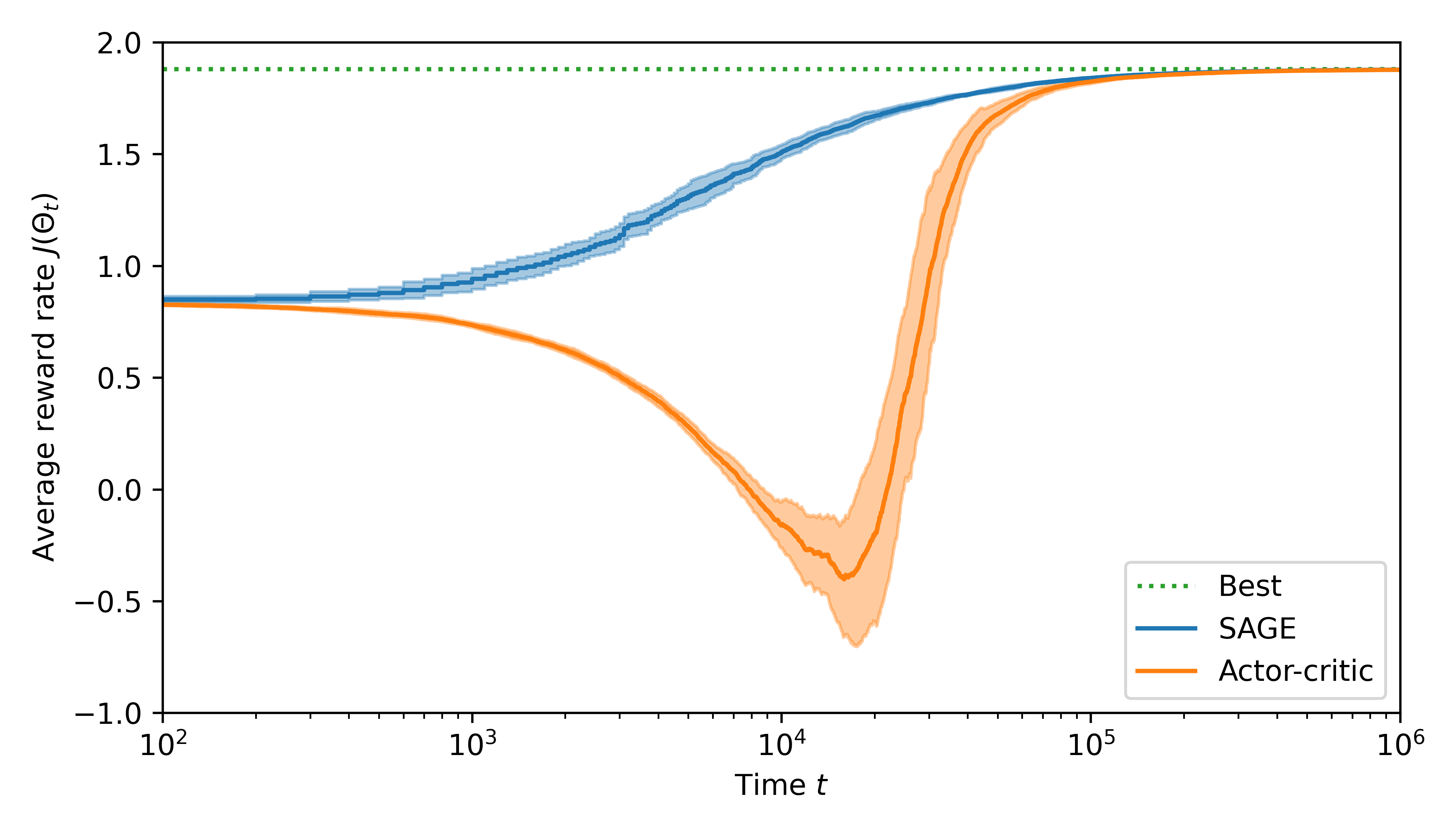}
		\caption{Performance under~$\pi_{1000}$.}
		\label{fig:mm1-unstable-rwd-1000}
	\end{subfigure}
	\caption{%
		Long-run average reward in the admission-control problem
		with parameters $\lambda = 1.4$, $\mu = 1$, $\gamma = 5$, and $\eta = 1$.
		Using \Cref{app:mm1}, we can verify that
		the maximal value of the long-run average reward is approximately
		1.091 if $k = 0$
		and 1.880 if $k \ge 2$.
	}
	\label{fig:mm1-unstable-rwd}
\end{figure}

The first take-away of \Cref{fig:mm1-unstable-rwd} is that
\gls{SAGE} converges to a close--to--optimal policy
despite the fact that some policies are unstable.
The convergence of \gls{SAGE} is actually faster
under $\lambda = 1.4$ compared to $\lambda = 0.7$ (\Cref{fig:mm1-stable-rwd}).
By looking at the evolution of the admission probability (not shown here),
we conjecture this is due to the fact that
the admission probability in states larger than or equal to~2
decreases much faster when $\lambda = 1.4$ compared to $\lambda = 0.7$,
and that this probability has a significant impact
on the long-run average reward.
In none of the simulations does \gls{SAGE} reach an unstable policy. This suggests that the updates of \gls{SAGE} have lower chance of reaching unstable regions of the policy space per observed sample.

The second take-away of \Cref{fig:mm1-unstable-rwd} is that,
on the contrary,
actor--critic has difficulties
coping with instability in this example.
In all simulation runs used to plot this figure,
the long-run average reward $J(\Theta_t)$ first decreases
before possibly increasing again
and converging to the best achievable long-run average reward.
Under parametrizations $\pi_0$, $\pi_2$, and $\pi_4$,
unstable policies are visited for thousands of steps
in all simulation runs,
and a stable policy is eventually reached
in only 7 out of 10 runs.
Under parametrization $\pi_0$,
the long-run average reward under the last policy is close to the best
only in 2 out of 10 runs.
Under $\pi_{100}$ and $\pi_{1000}$,
the policy remains stable throughout all runs,
but the long-run average reward
transitorily decreases before increasing again.

\subsection{Load-Balancing System}
\label{sec:Load-balancing-system}

Consider a cluster of $n$ servers.
Jobs arrive according to a Poisson process
with rate $\lambda > 0$,
and a new job is admitted
if and only if there are fewer
than $c \in \bN_+$ jobs in the system.
Each server~$i \in \{1, 2, \ldots, n\}$
processes jobs in its queue
according to a nonidling, nonanticipating policy.
The service time of each job at server~$i$
is exponentially distributed with rate $\mu_i > 0$,
independently of all other random variables.
The agent aims to maximize the admission probability
by adequately distributing load across servers.

For each $t \in \bN$,
let $S_t = (S_{t, 1}, S_{t, 2}, \ldots, S_{t, n})$
denote the vector containing
the number of jobs at each server
right before the arrival of the $(t+1)$th job,
and let $A_t \in \{1, 2, \ldots, n\}$ denote
the server to which this $(t+1)$th job is assigned.
(This decision is void if $S_{t, 1} + \ldots + S_{t, n} = c$
because the job is rejected anyway.)
We have $\cS = \{s \in \bN^n: s_1 + s_2 + \ldots + s_n \le c\}$
and $\cA = \{1, 2, \ldots, n\}$.
The agent obtains a reward of~$1$ if the job is accepted and $0$ otherwise,
that is, $R_{t+1} = \indicator{S_{t, 1} + \ldots + S_{t, n} \le c - 1}$ for each $t \in \bN$.

We consider the following static policy parametrization,
with parameter vector
$\theta \in \bR^n$:
irrespective of the system state $s \in \cS$,
an incoming job is assigned
to server~$i$ with probability
\begin{align}
	\pi(i | s, \theta)
	&= \pi(i | \theta)
	= \frac{e^{\theta_i}}{\sum_{j = 1}^n e^{\theta_j}},
	\quad i \in \{1, 2, \ldots, n\}
	.
	\label{eq:lb-policy}
\end{align}
\Cref{ass:markov,ass:reward,ass:stat} are satisfied
with
$n = d$,
$\Omega = \bR^n$,
$\Phi(s) = \prod_{i = 1}^n (\frac\lambda{\mu_i})^{s_i}$ for each $s \in \cS$,
$x_i(s) = s_i$ for $i \in \{1, 2, \ldots, n\}$ and $s \in \cS$,
and $\rho_i(\theta) = \pi(i | \theta)$
for $i \in \{1, 2, \ldots, n\}$ and $\theta \in \bR^n$.
Also note that $\nabla \log \rho_i(\theta) = \nabla \log \pi(i | \theta)$
for $i \in \{1, 2, \ldots, d\}$ and $\theta \in \bR^n$.
Except for \Cref{ass:nondegenerate-maxima}, the remaining assumptions outlined in \Cref{sec:convergence} are also satisfied.
We refer to \Cref{app:lb} for more details.
Lastly observe that,
in spite of the policy being static and the state space being finite,
the function~$J$ is still nonconvex for typical system parameters.
In fact, our numerical experiments are done in nonconvex scenarios.
Furthermore, note that this system can become challenging to optimize if $c$ and $n$ are large.

\subsubsection{Numerical Results}
We study the performance of \gls{SAGE} and actor--critic
under varying numbers of servers and service speed imbalance.
Given an integer $n \in \bN_{> 0}$ multiple of~4
and $\delta > 1$,
we consider the following cluster of $n$ servers
divided into 4 pools.
For each $k \in \{1, 2, 3, 4\}$,
pool~$k$ consists of the $\frac{n}4$ servers
indexed from $(k-1) \frac{n}4 + 1$ to $k \frac{n}4$,
and each server~$i$ in this pool has service rate~$\mu_i = \delta^{k-1}$.
The total arrival rate is $\lambda = 0.7 (\sum_{i = 1}^n \mu_i)$
and the upper bound on the number of jobs in the system is $c = 10\frac{n}4$.
Letting $\delta = 1$ gives a system where all servers have the same service speed,
while increasing $\delta$ makes the server speeds more and more imbalanced.
The initial policy is uniform,
i.e., $\pi(i | \Theta_0) = \frac1n$
for each $i \in \{1, 2, \ldots, n\}$,
and the initial state is empty,
i.e., $S_0 = 0$ with probability~1.

\begin{figure}[t]
	\centering
	\begin{subfigure}{.49\linewidth}
		\includegraphics[width=\textwidth]{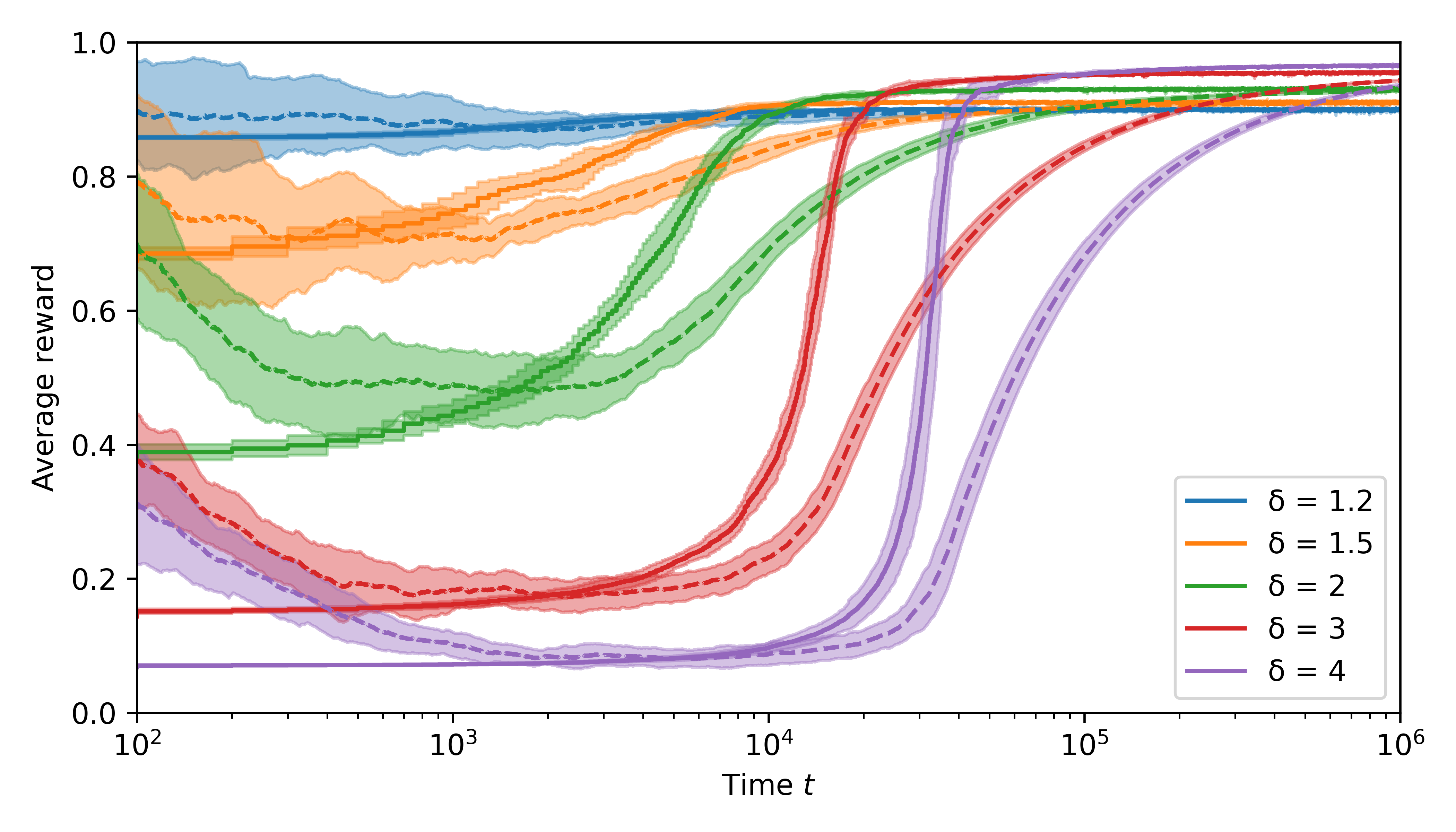}
		\caption{\gls{SAGE}, $n = 4$ servers.}
		\label{fig:lb-4-rwd-sage}
	\end{subfigure}
	\hfill
	\begin{subfigure}{.49\linewidth}
		\includegraphics[width=\textwidth]{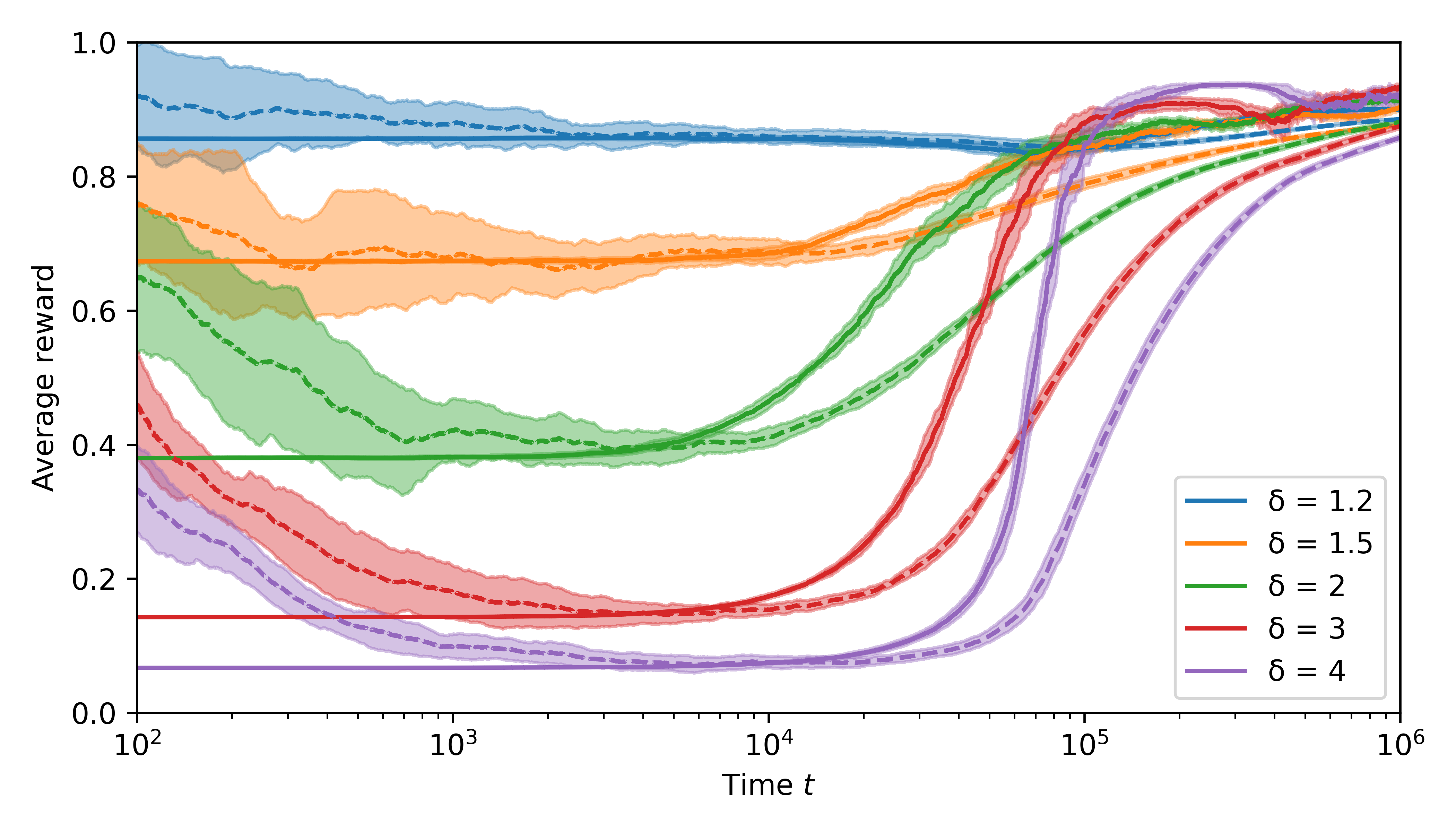}
		\caption{Actor--critic, $n = 4$ servers.}
		\label{fig:lb-4-rwd-ac}
	\end{subfigure}
	\\
	\begin{subfigure}{.49\linewidth}
		\includegraphics[width=\textwidth]{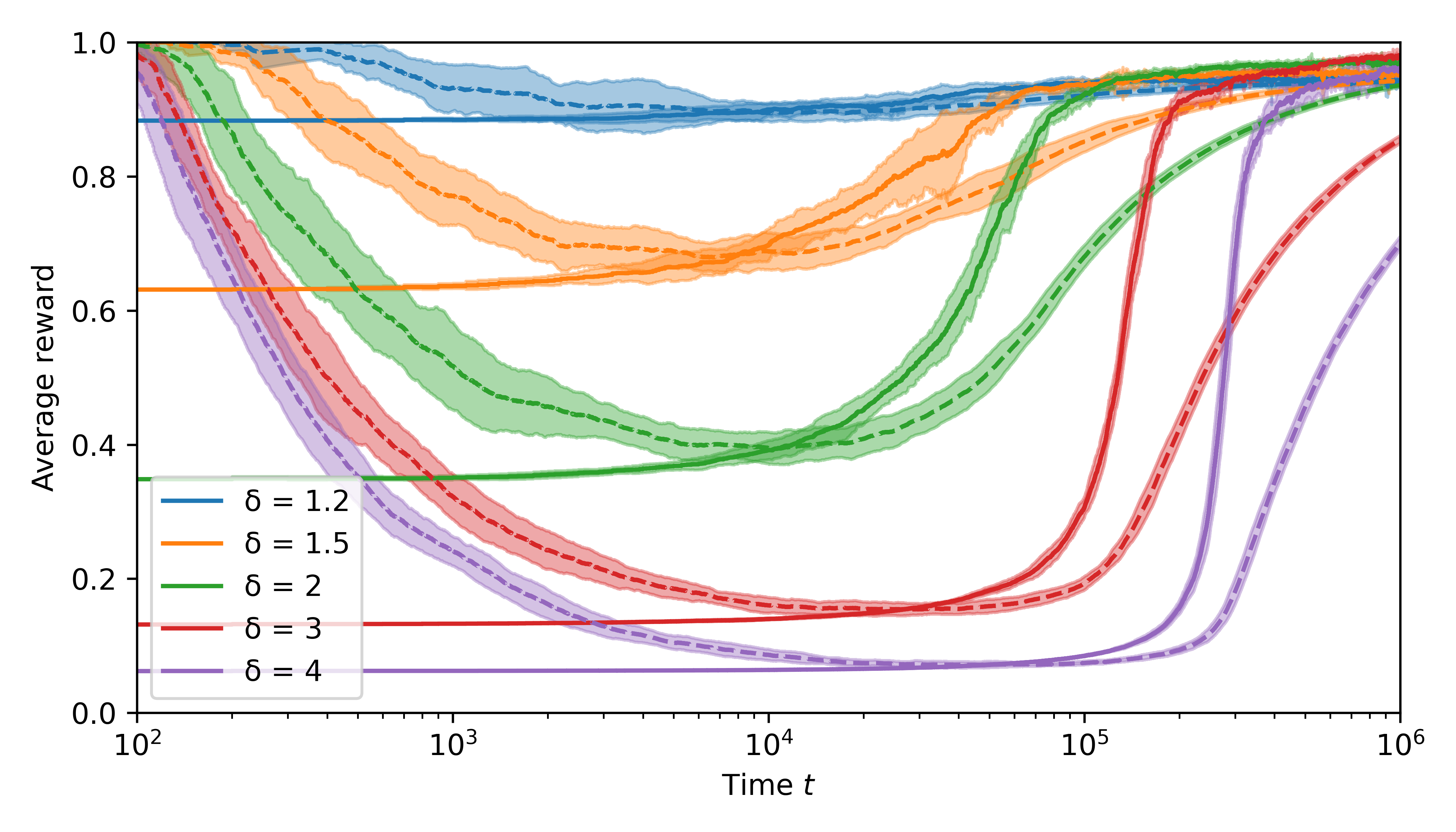}
		\caption{\gls{SAGE}, $n = 20$ servers.}
		\label{fig:lb-20-rwd-sage}
	\end{subfigure}
	\hfill
	\begin{subfigure}{.49\linewidth}
		\includegraphics[width=\textwidth]{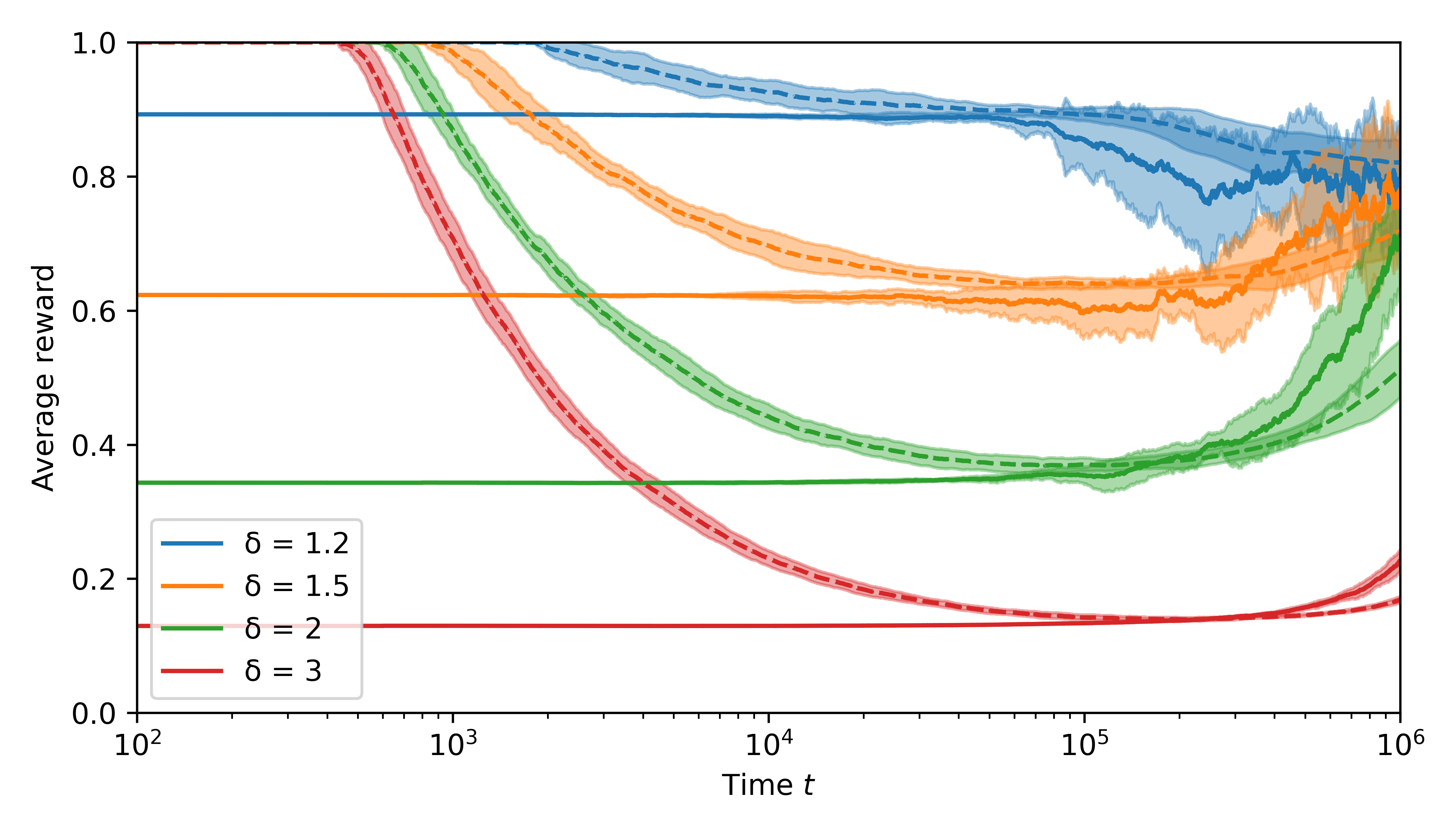}
		\caption{\gls{SAGE}, $n = 100$ servers.}
		\label{fig:lb-100-rwd-ac}
	\end{subfigure}
	\caption{Impact of the number of servers
		and service-rate imbalance
		on the performance of \gls{SAGE} and actor--critic
		in a load-balancing system.
		Solid lines show the long-run average reward $J(\Theta_t)$,
		while dashed lines show the running average reward,
		$\frac1t \sum_{t' = 1}^t R_{t'}$.
		Simulations for $n = 100$ and $\delta = 4$ are omitted
		because numerical instability of Buzen's algorithm (see \Cref{app:lb})
		prevents us from computing $J(\Theta_t)$ in this case.
	}
	\label{fig:lb-rwd}
\end{figure}
\Cref{fig:lb-rwd} shows performance
of \gls{SAGE} and actor--critic
in clusters of $n \in \{4, 20, 100\}$ servers.
Solid lines show the evolution of
the long-run average reward $J(\Theta_t)$,
and dashed lines show the running average reward $\frac1t \sum_{t' = 1}^t R_{t'}$.
(Recall $J(\Theta_t)$ is the limit of the running average we would see
if we ran the system under policy $\pi(\Theta_t)$.
It is defined in~\eqref{eq:J} and can be computed as shown in \Cref{app:lb}.)
As before, transparent areas show the standard deviation
around the average.
The results under actor--critic are reported only for $n = 4$ servers,
as this method already suffers from a combinatorial explosion in the state--action space for $n \in \{20, 100\}$.
Indeed, while the memory complexity
increases linearly with the number~$n$ of servers under \gls{SAGE},
it increases with the cardinality $\binom{n+c}{c}$ of the state space under actor--critic\footnote{As shown by applying the \emph{stars and bars} method in combinatorics.},
which is already prohibitively large
for $n \in \{20, 100\}$.

All four subfigures in \Cref{fig:lb-rwd}
show a consistent 2-phase pattern:
first the running average reward $\frac1t \sum_{t' = 1}^t R_{t'}$ converges
to the initial long-run average reward $J(\Theta_0)$,
and then the long-run average reward increases
to reach the best value,
with the running average reward
catching up at a slower pace.
This suggests that the gradient estimates under both algorithms
remain close to zero until the system reaches approximate stationarity.
A similar reasoning explains why the algorithms converge
at a slower pace
when we increase
the imbalance factor~$\delta$
(as the stationary distribution
under the initial uniform policy~$\pi(\Theta_0)$
puts mass on states that are further away from the initial empty state)
or the number~$n$ of servers (as the mixing time increases).

Focusing on the system with $n = 4$ servers,
\Cref{fig:lb-4-rwd-sage,fig:lb-4-rwd-ac} show
convergence occurs approximately ten times faster
under \gls{SAGE} than under actor--critic.
We conjecture that this is again due to the fact that
actor--critic relies on estimating the state-value function,
so that it needs to estimate $\binom{n + c}c$ values.
\gls{SAGE}, on the other hand,
exploits the structure of the stationary distribution and
only needs to estimate a number of values
that grows linearly with~$n$
(and is independent of~$c$).
We also note that actor--critic shows nonmonotonic convergence
(i.e., $J(\Theta_t)$ decreases before increasing again
between $10^5$ and $10^6$ time steps).
We conjecture this is due to a similar phenomenon
as described in \Cref{sec:Admission-control-in-an-MM1}.

\subsection{Ising Model and Glauber Dynamics}
\label{sec:ising}

Consider a system of spin particles spread over
a two-dimensional lattice of shape $d_1 \times d_2$,
for some $d_1, d_2 \in \{2, 3, 4, \ldots \}$.
Let $\cV = \{ 1, \allowbreak 2, \allowbreak \ldots, \allowbreak d_1\}
\times \{ 1, \allowbreak 2, \allowbreak \ldots, \allowbreak d_2 \}$
denote the set of lattice coordinates.
For any two coordinates $v = (v_1, v_2) \in \cV$ and $w = (w_1, w_2) \in \cV$,
we write $v \sim w$ if and only $v$ and $w$ are neighbors in the lattice,
that is, if and only if $|v_1 - w_1| + |v_2 - w_2| = 1$.

A map $\sigma: \cV \to \{-1, +1\}$ is called a \emph{spin configuration},
and the set of all $2^{d_1 d_2}$ configurations is denoted by~$\Sigma$.
Given a configuration $\sigma \in \Sigma$,
we refer to $\sigma(v) \in \{-1, +1\}$ as
the \emph{spin} (of the particle located) at~$v$.
If the system is in some configuration~$\sigma \in \Sigma$,
we say that the spin at~$v \in \cV$ is \emph{flipped}
if the system jumps to the configuration~$\sigma_{-v} \in \Sigma$
such that $\sigma_{-v}(v) = - \sigma(v)$
and $\sigma_{-v}(w) = \sigma(w)$ for each $w \in \cV \setminus \{v\}$.
As we will formalize below, the agent's goal
is to reach a configuration $\sigma \in \Sigma$
so that the \emph{magnetization}
on the left (resp.\ right) half of the lattice
is close to $\xi_{\tleft} \in (-1, +1)$ (resp.\ $\xi_{\tright} \in (-1, +1)$),
i.e.,
\begin{align*}
	\frac2{d_1 d_2} M_\tleft(\sigma) &\simeq \xi_\tleft,
	&
	\frac2{d_1 d_2} M_\tright(\sigma) &\simeq \xi_\tright,
\end{align*}
where
\begin{align*}
	M_\tleft(\sigma) &= \sum_{v \in \cV:\, v_2 \le d_2 / 2} \sigma(v),
	&
	M_\tright(\sigma) &= \sum_{v \in \cV: \, v_2 > d_2 / 2} \sigma(v).
\end{align*}

To each configuration $\sigma \in \Sigma$
is associated an \emph{energy}
$E(\sigma) \triangleq - J I(\sigma) - \mu F(\sigma)$,
where $I$ and $F$ are called the \emph{interaction} and \emph{external field} terms,
respectively, given by
\begin{align*}
	I(\sigma)
	&= \sum_{\substack{v, w \in \cV: \, v \sim w}} \sigma(v) \sigma(w),
	&
	F(\sigma)
	&= \sum_{v \in \cV} h(v) \sigma(v),
\end{align*}
where the first sum runs over all pairs of neighboring coordinates
(so that each pair appears once).
Here, $J \in \bR$ is the \emph{coupling constant},
$\mu \in \bR_{\ge 0}$ the \emph{magnetic moment},
and $h: \cV \to \bR$ the \emph{external magnetic field}.
Under the dynamics defined below,
the probability of a configuration~$\sigma \in \Sigma$
will be proportional to $e^{- \beta E(\sigma)}$,
where $\beta \in \bR_{> 0}$ is the \emph{inverse temperature}.
If $J > 0$ (resp.\ $J < 0$),
the interaction term~$I$ contributes to increasing the probability of configurations
where neighboring spins have the same (resp.\ opposite) sign.
Concurrently, due to the external-field term~$F$,
the spin at each~$v \in \cV$ is attracted in the direction pointed by the sign of~$h(v)$.
The coupling constant~$J$ and magnetic moment~$\mu$
are fixed and known by the agent
(as they depend on the particles),
and the agent will
fine-tune the inverse temperature~$\beta$
and coarse-tune the external magnetic field~$h$.

\subsubsection{Glauber Dynamics}

Given a starting configuration, at every time step,
the spin at a coordinate chosen uniformly at random
is flipped (or not)
with some probability that depends on
the current configuration and the parameters set by the agent.
This is cast as a Markov decision process as follows.
The state and action spaces are given by
$\cS = \Sigma \times \cV$
and $\cA = \{\flip, \notflip\}$,
respectively.
For each $s = (\sigma, v) \in \cS$
and $a \in \cA$,
the state reached by taking action~$a$ in state~$s$
is given by $S' = (\sigma', V')$, where
$\sigma' = \sigma_{-v}$ if $a = \flip$
and $\sigma' = \sigma$ if $a = \notflip$,
and $V'$ is chosen uniformly at random in~$\cV$,
independently of the past states, actions, and rewards.
The next reward~$r$ is the opposite of the sum of the absolute difference
between the next magnetizations and the desired magnetizations, that is,
\begin{align*}
	r
	&=
	- \Big| \xi_{\tleft} - \frac2{d_1 d_2} M_\tleft(\sigma') \Big|
	- \Big| \xi_{\tright} - \frac2{d_2 d_2} M_\tright(\sigma') \Big|.
\end{align*}
The agent controls a vector~$\theta \in \bR^3$
that determines the inverse temperature
and the left and right external magnetic fields as follows:
\begin{align*}
	\beta(\theta)
	&=
	1 + \tanh(\theta_1),
	&
	h_{\tleft}(\theta)
	&= \tanh(\theta_2),
	&
	h_{\tright}(\theta)
	&= \tanh(\theta_3),
\end{align*}
so that in particular $\beta(\theta) \in (0, 2)$,
$h_{\tleft}(\theta) \in (-1, 1)$,
and $h_{\tright}(\theta) \in (-1, 1)$.
The corresponding external magnetic field
and external field term are
\begin{align*}
	h(v | \theta)
	&= h_{\tleft}(\theta) \indicator{v_2 \le d_2 / 2}
	+ h_{\tright}(\theta) \indicator{v_2 > d_2 / 2},
	\quad v \in \cV,
	\\
	F(\sigma | \theta)
	&= \sum_{v \in \cV} h(v | \theta) \sigma(v)
	= h_{\tleft}(\theta) M_{\tleft}(\sigma)
	+ h_{\tright}(\theta) M_{\tright}(\sigma),
	\quad \sigma \in \Sigma.
\end{align*}
Given $\theta \in \bR^3$,
for each $s = (\sigma, v) \in \Sigma$,
the probability that the spin at the randomly-chosen coordinate~$v$
is flipped when the current configuration is~$\sigma$ is given by
\begin{align} \label{eq:ising-policy}
	\pi(\flip | s, \theta)
	&= \frac1{1 + e^{\delta(s | \theta)}},
	\quad \text{with} \quad
	\delta(s | \theta)
	= 2 \beta(\theta) \sigma(v)
	\left(
	J \sum_{\substack{w \in \cV: \, w \sim v}} \sigma(w)
	+ \mu h(v | \theta)
	\right).
\end{align}
When $\theta \in \bR^3$ is fixed,
the dynamics defined by this system are called the Glauber dynamics
\cite[Section~3.3]{LP17}.
Note that, although we use the word \emph{action}
to match the terminology of \glspl{MDP},
here an action should be seen as
a random event in the environment,
of which only the distribution $\pi$ can be controlled by the agent
via the parameter vector~$\theta$.

We verify in \Cref{app:ising} that
the stationary distribution of the system state
under a particular choice of $\theta \in \bR^3$ satisfies
\begin{align} \label{eq:ising-stationary-distribution}
	p(s | \theta)
	&\propto e^{\beta(\theta) (J I(\sigma) + \mu F(\sigma | \theta))},
	\quad s = (\sigma, v) \in \cS,
	\quad \theta \in \bR^3.
\end{align}
\Cref{ass:markov,ass:reward,ass:stat}
are satisfied with $n = d = 3$,
$\Omega = \bR^3$,
$\Phi(s) = 1$ for each $s \in \cS$,
$\log \rho_1(\theta) = \beta(\theta) J$,
$\log \rho_2(\theta) = \beta(\theta) \mu h_{\tleft}(\theta)$,
and $\log \rho_3(\theta) = \beta(\theta) \mu h_{\tright}(\theta)$
for each $\theta \in \bR^3$,
and $x_1(s) = I(\sigma)$,
$x_2(s) = M_\tleft(\sigma)$,
and $x_3(s) = M_\tright(\sigma)$
for each $s = (\sigma, v) \in \cS$.
All derivations are given in \Cref{app:ising}.

\subsubsection{Numerical Results}

\Cref{fig:ising} shows the performance of \gls{SAGE}
in a system with parameters
$d_1 = 10$, $d_2 = 20$, $J = \mu = 1$,
$\xi_\tleft = -1$, and $\xi_\tright = 1$.
We do not run simulations under actor--critic,
as again the state space has size $2^{d_1 d_2} = 2^{200}$,
which is out of reach for this method.
The initial parameter vector is $\Theta_0 = 0$,
yielding inverse temperature $\beta(\Theta_0) = 1$
and external fields
$h_\tleft(\Theta_0) = h_\tright(\Theta_0) = 0$.
The initial configuration
has spins~$1$ on the left-hand side
and $-1$ on the right-hand side,
so that reaching the target configuration
requires flipping every spin.
In \Cref{fig:ising-10},
the reward~$R_t$ seems to increase on average monotonically from $-4$ to $0$,
which is consistent with the observation that
the left (resp.\ right) magnetization
decreases from $1$ to $-1$
(resp.\ increases from $-1$ to $1$).
The increase of the reward is stepwise,
with stages where it remains roughly constant
for several thousand time steps.%
\begin{figure}[ht]
	\begin{subfigure}{.49\linewidth}
		\includegraphics[width=\textwidth]
		{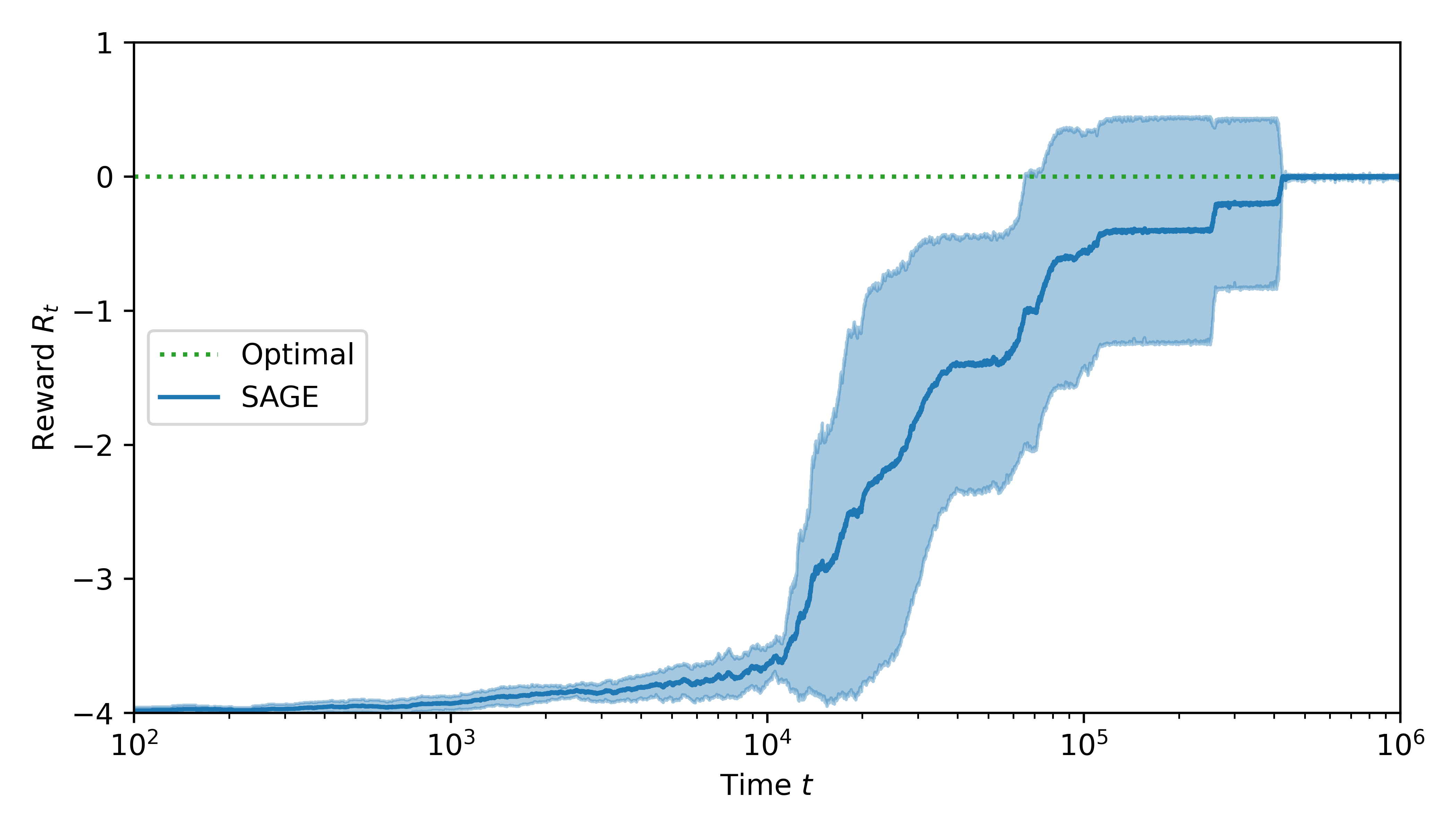} \\
		\includegraphics[width=\textwidth]
		{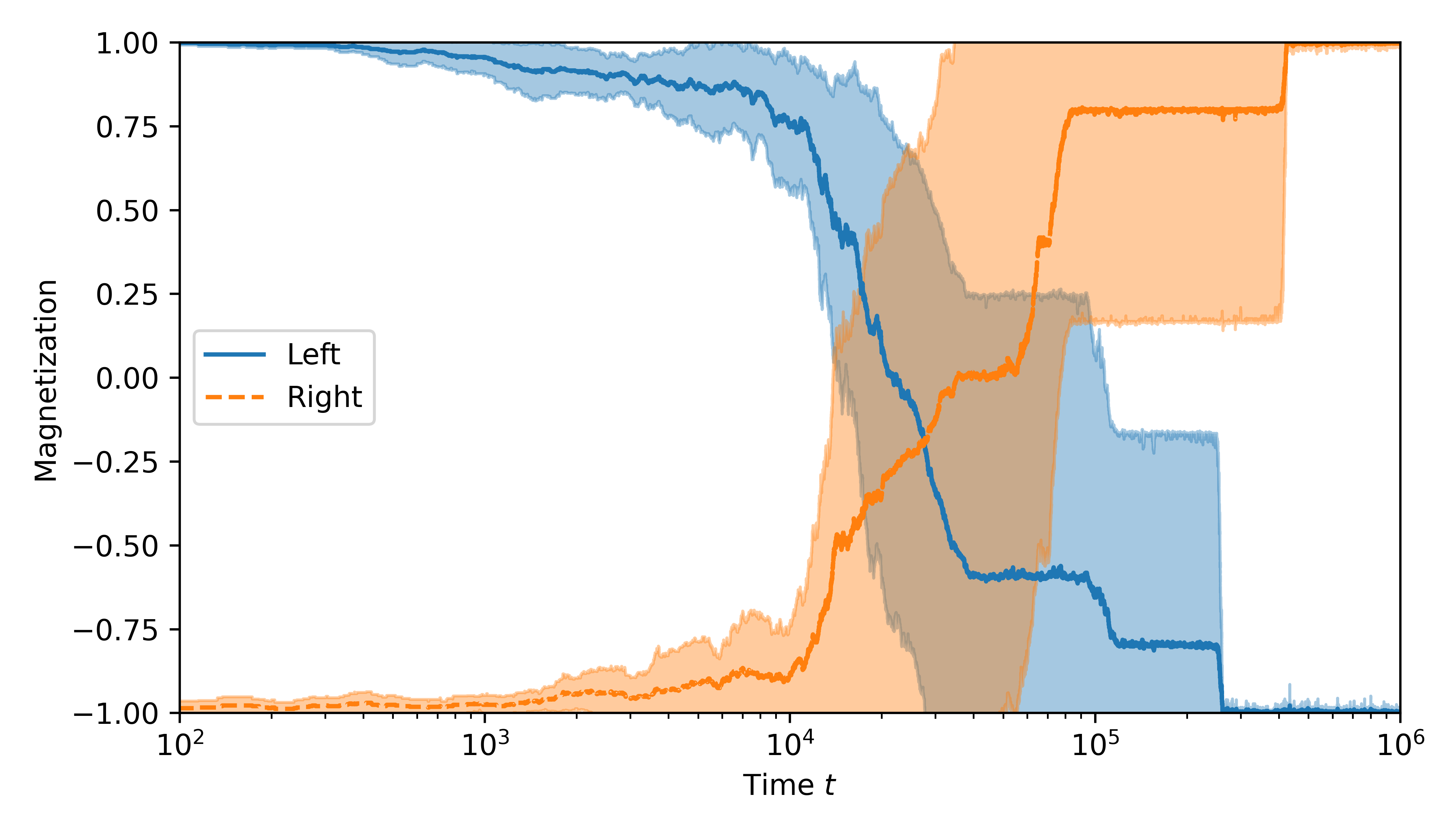}
		\caption{Average over 10 simulation runs.}
		\label{fig:ising-10}
	\end{subfigure}
	\hfill
	\begin{subfigure}{.49\linewidth}
		\includegraphics[width=\textwidth]
		{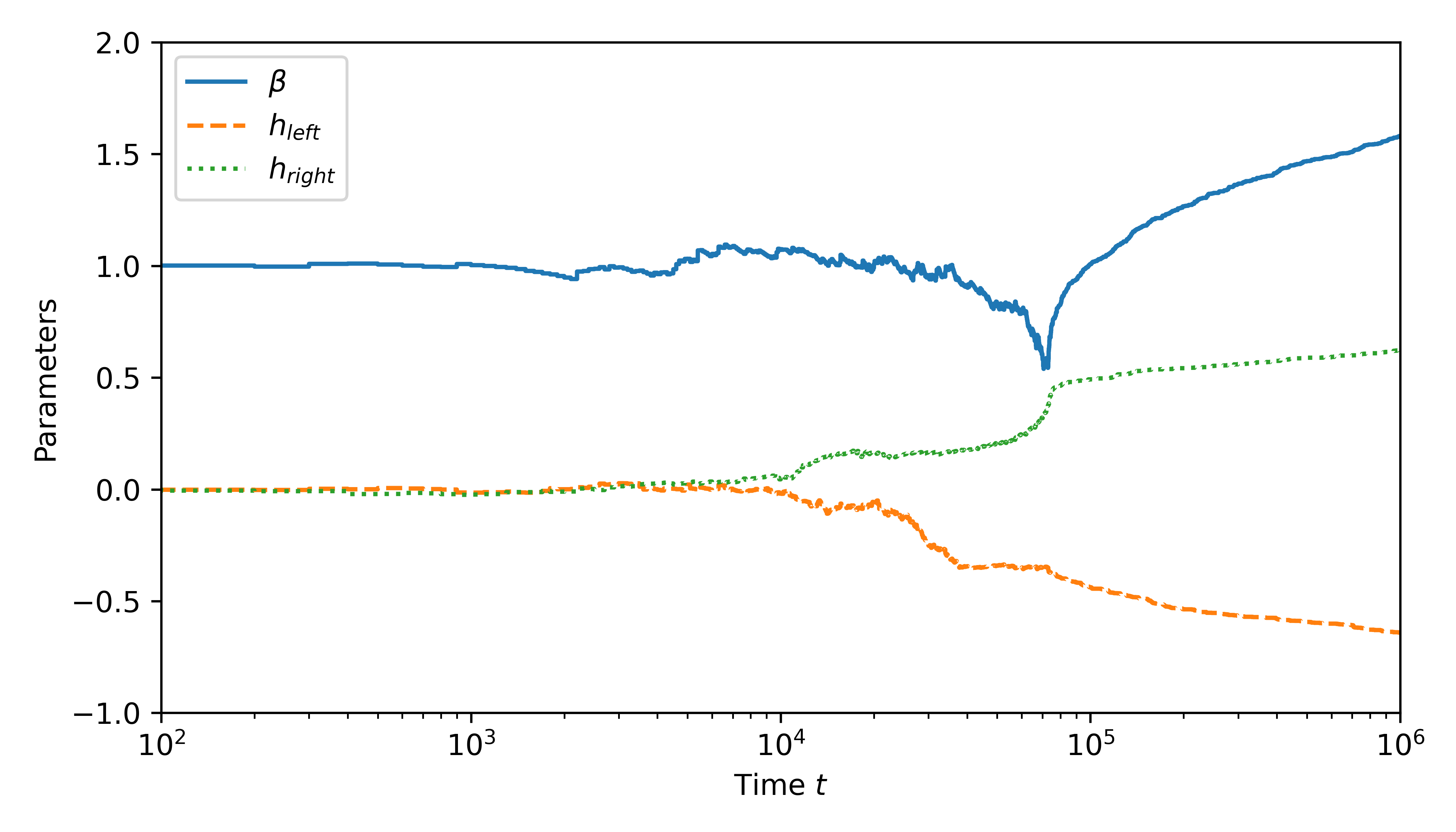}
		\includegraphics[width=\textwidth]
		{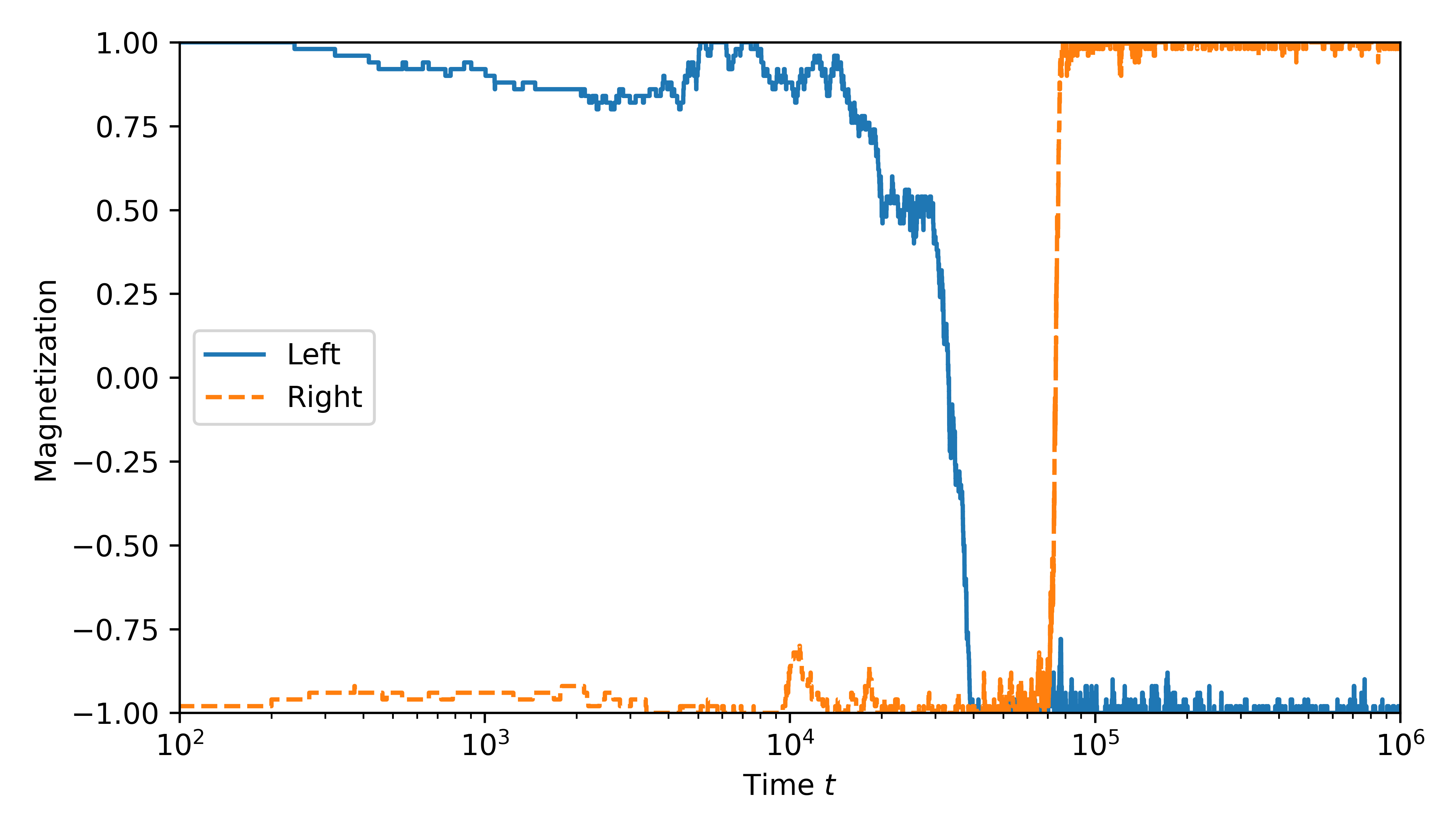}
		\caption{A particular simulation run.}
		\label{fig:ising-1}
	\end{subfigure}
	\caption{Performance of \gls{SAGE} in the Ising model.}
	\label{fig:ising}
\end{figure}
Lastly, the standard deviation increases significantly
from about $10^4$ to $3 \cdot 10^5$ time steps,
and it becomes negligible afterwards.

To help us understand these observations,
\Cref{fig:ising-1} shows
the evolution of the system parameters
and of the magnetizations
over a particular simulation run.
The left magnetization $M_{\textnormal{left}}$ starts decreasing
around $10^4$ time steps (bottom plot),
approximately when $h_\tleft(\Theta_t)$ and $h_\tright(\Theta_t)$
become nonzero (top plot),
to become roughly $-1$ around $3 \cdot 10^4$ time steps.
At that moment, the system configuration is close
to the all--spin--down configuration
$\sigma_{-1}$ such that $\sigma_{-1}(v) = -1$
for each $v \in \cV$.
The right magnetization starts increasing significantly only
when the inverse temperature $\beta(\Theta_t)$ has a sudden decrease (top plot).
To make sense of this observation,
consider $\pi(\flip | s, \theta)$ as given
by~\eqref{eq:ising-policy},
where $s = (\sigma_{-1}, v)$
for some $v \in \{2, 3, \ldots, d_1 - 1\}
\times \{2, 3, \ldots, d_2 - 1\}$.
In $\delta(s | \theta)$,
the first term $J \sum_{w \in \cV: w \sim v} \sigma_{-1}(v) \sigma_{-1}(w)$
is equal to 4, while the absolute value of
the second term $\mu \sigma_{-1}(v) h(v | \theta)$ is at most~$1$;
hence, if $\beta(\theta) \simeq 1$ as initially,
$\pi(\flip | s, \theta)$ is between
$\frac1{1+e^{2(4+1)}} \simeq 4.5 \cdot 10^{-5}$
and $\frac1{1+e^{2(4-1)}} \simeq 2.5 \cdot 10^{-3}$.
The brief decrease of $\beta(\theta)$
is an efficient way of increasing
the flipping probability in all states,
which allows the system to escape from~$\sigma_{-1}$.
Other simulation runs are qualitatively similar,
but the times at which the qualitative changes occur
and the side (left or right) that flips magnetization first vary,
which explains the large standard deviation observed earlier.

\section{Conclusion}

In this paper, we incorporated model-specific information about \glspl{MDP} into the gradient estimator in policy-gradient methods.
Specifically, assuming that the stationary distribution is an exponential family,
we derived \acrfullpl{SAGE} that do not require the computation of value functions (Theorem~\ref{theo:sage}).
As showcased in \Cref{sec:examples},
this assumption is satisfied by models from stochastic networks,
where the stationary distribution possesses a product-form structure,
and by models from statistical mechanics,
such as the Ising model with Glauber dynamics.

The numerical results in \Cref{sec:examples} show that in these systems, policy-gradient algorithms equipped with a \gls{SAGE} outperform actor--critic.
In these examples, the Jacobian of the load function $\mathrm{D} \log \rho(\theta)$ can be computed explicitly in terms of the policy parameter~$\theta$.
However, \gls{SAGE} estimators can be harder to compute in more complex cases,
for example when $\mathrm{D} \log \rho(\theta)$ depends on some model parameters.
Nevertheless, our examples showcase how it is possible to improve the current policy gradient methods by levering information on the \gls{MDP}, and we expect extensions of \glspl{SAGE} to cover more challenging cases,
for example by combining \gls{SAGE} with model selection by first estimating the model parameters appearing in $\mathrm{D} \log \rho(\theta)$.
We leave such extensions of \gls{SAGE} for future work.

We have also shown with Theorem~\ref{prop:main_prop_convergence in probability_noncompact_case} that
policy gradient with \gls{SAGE} converges to the optimal policy under light assumptions,
namely, the existence of a local Lyapunov function close to the optimum,
which allows for unstable policies to exist,
and a nondegeneracy property of the Hessian at maxima.
The convergence occurs with a probability arbitrarily close to one
provided that the iterates start close enough. Notably, our method proof also works with other policy-gradients with similar policy-gradient approximation properties.
In Corollary~\ref{cor:regret}, the regret of the algorithm is shown to be $O(T^{2/3 + \epsilon} + T\alpha^2/\ell )$, where $T$ is the number of samples drawn.
Unlike most common convergence results, we have gradient estimation on a countable state space and there is a nonzero probability that such error will drive the algorithm to an unstable policy.
This fact is namely captured by the term $\alpha^2/\ell$.
Remarkably, such instabilities are observed in one of the examples of Section~\ref{sec:examples}.
If we had made stronger assumptions such as the existence of a global Lyapunov function, then such phenomena would not have been captured by the analysis.

\newpage

\appendix

\section{Policy-Gradient Algorithms}
\label{app:policy-gradient}

In this section, we give additional details about the two policy-gradient algorithms
that are evaluated numerically in \Cref{sec:examples}:
actor--critic in \Cref{app:actor--critic},
and \gls{SAGE}-based methods in \Cref{app:sage}.

\subsection{Actor--Critic Algorithm} \label{app:actor--critic}

The actor--critic algorithm is first mentioned in \Cref{sec:policy-gradient}
and compared to our \gls{SAGE}-based policy-gradient algorithm
in \Cref{sec:examples}.
We focus on the version of actor--critic
described in \cite[Section~13.6]{SB18}
for the average-reward criterion in infinite horizon.
The algorithm relies on the following expression for~$\nabla J(\theta)$,
which is a variant of the \emph{policy-gradient theorem} \cite[Chapter~13]{SB18}:
\begin{align*}
	\nabla J(\theta)
	&\propto
	\esp{
		\left( R - J(\theta) + v(S') - v(S) \right) \nabla \log \pi(A | S, \theta)
	},
\end{align*}
where $(S, A, R, S')$ is a quadruplet of random variables such that
$S \sim p(\,\cdot\,| \theta)$,
$A | S \sim \pi(\,\cdot\,| S, \theta)$,
and $(R, S') | (S, A) \sim P(\,\cdot\,,\,\cdot\,| S, A)$
(so that in particular $(S, A, R) \sim$ \ref{eq:stat}),
and $v$ is the state-value function.

\begin{algorithm}[htbp]
	\caption{Actor--critic algorithm \cite[Section~13.6]{SB18}
		to be called on \Cref{step:sga} of \Cref{algo:policy-gradient},
		with batch sizes equal to one.%
	}
	\label{algo:actor-critic}
	
	\begin{algorithmic}[1]
		\State \textbf{Input:}
		Positive and differentiable policy parametrization
		$(s, \theta, a) \mapsto \pi(a | s, \theta)$
		\vspace{0.25em}
		
		\State \textbf{Parameters:}
		Step sizes $\alpha_{\overline{R}} > 0$ and $\alpha_{v} > 0$
		\vspace{0.25em}
		
		\State \textbf{Initialization:}
		\begin{minipage}[t]{.7\textwidth}
			$\bullet$ $\overline{R} \gets 0$ \\
			$\bullet$ $V[s] \gets 0$ for each $s \in \cS$
		\end{minipage}
		\vspace{0.25em}
		
		\Procedure{Gradient}{$t$}
		\State $\delta \gets R_{t+1} - \overline{R} + V[S_{t+1}] - V[S_t]$
		\State Update $\overline{R} \gets \overline{R} + \alpha_{\overline{R}} \delta$ \label{step:overlineRac}
		\State Update $V[S_t] \gets V[S_t] + \alpha_v \delta$ \label{step:state-value-fct}
		\State \textbf{return}
		$\delta \, \nabla \log \pi(A_t | S_t, \Theta_t)$
		\EndProcedure
	\end{algorithmic}
\end{algorithm}

The pseudocode of the procedure \textsc{Gradient}
used in the actor--critic algorithm
is given in \Cref{algo:actor-critic}.
This procedure is to be implemented within \Cref{algo:policy-gradient}
with batch sizes equal to one, meaning that
$t_{m+1} = t_m + 1$ for each $m \in \bN$.
We assume for simplicity that all variables from \Cref{algo:policy-gradient}
are accessible inside \Cref{algo:actor-critic}.
The variable~$\overline{R}$ updated on \Cref{step:overlineRac}
is a biased estimate of $J(\Theta_m)$,
while the table~$V$ updated on \Cref{step:state-value-fct}
is a biased estimate of the state-value function
under policy~$\pi(\Theta_m)$.
Compared to \cite[Section~13.6]{SB18},
the value function is encoded by a table~$V$
and there are no eligibility traces.
If the state space~$\cS$ is infinite,
the table~$V$ is initialized at zero
over a finite subset of~$\cS$ containing the initial state~$S_0$
and expanded with zero padding whenever necessary.

\subsection{\gls{SAGE}-Based Policy-Gradient Method}
\label{app:sage}

\begin{algorithm}[htb]
	\caption{SAGE-updated policy-gradient method,
		to be called on \Cref{step:sga} of \Cref{algo:policy-gradient}.
	}
	\label{algo:sage-alternative}
	
	\begin{algorithmic}[1]
		\State \textbf{Input:}
		\begin{minipage}[t]{.8\textwidth}
			$\bullet$
			Positive and differentiable policy parametrization
			$(s, \theta, a) \mapsto \pi(a | s, \theta)$\\
			$\bullet$
			Jacobian matrix function $\theta \mapsto \rD \log \rho(\theta)$ \\
			$\bullet$
			Feature function $s \mapsto x(s)$
		\end{minipage}
		\vspace{0.25em}
		
		\State \textbf{Parameters:} \label{row:nu}
		\begin{minipage}[t]{.48\textwidth}
			Memory factor $\nu \in [0, 1]$
		\end{minipage}
		\vspace{0.25em}
		
		\State \textbf{Initialization:}
		\begin{minipage}[t]{.7\textwidth}
			Global~variables~$N_{-1}, M_{-1}, \overline{X}_{-1}, \overline{R}_{-1}, \overline{C}_{-1}, \overline{E}_{-1} \gets 0$
		\end{minipage}
		\vspace{0.25em}
		
		\Procedure{Gradient}{$m$}
		\State \label{row:N-bis}
		$N_m \gets \nu N_{m-1} + (t_{m+1} - t_m)$
		\State \label{row:M-bis}
		$M_m \gets \nu^2 M_{m-1} + (t_{m+1} - t_m)$
		\State \textbf{return}\label{step:sage}
		$
		\rD \log \rho(\Theta_m) \textsc{Covariance}(m) + \textsc{Expectation}(m)
		$
		\EndProcedure
		\vspace{0.25em}
		
		\Procedure{Covariance}{$m$}
		\State Update\label{step:overlineX}
		$
		\overline{X}_m
		\gets
		\nu \overline{X}_{m-1}
		+
		\sum_{t=t_m}^{t_{m+1}-1}
		x(S_{t})
		$
		\State Update\label{step:overlineR}
		$
		\overline{R}_m
		\gets
		\nu \overline{R}_{m-1}
		+
		\sum_{t=t_m}^{t_{m+1} - 1}
		R_{t+1}
		$
		\State Update\label{step:overlineC}
		$
		\overline{C}_m
		\gets
		\nu \overline{C}_{m-1}
		+
		\sum_{t=t_m}^{t_{m+1}-1}
		( x(S_t) - \frac1{N_m} \overline{X}_m ) ( R_{t + 1} - \frac1{N_m} \overline{R}_m )
		$
		\State \textbf{return}\label{step:return}
		$\frac{N_m}{{N_m}^2 - M_m} \overline{C}_m$
		if ${N_m}^2 > M_m$ else $\frac1{N_m} \overline{C}_m$
		\EndProcedure
		\vspace{0.25em}
		
		\Procedure{Expectation}{$m$}
		\State Update\label{step:overlineE}
		$
		\overline{E}_m
		\gets
		\nu \overline{E}_{m-1}
		+
		\sum_{t=t_m}^{t_{m+1}-1}
		R_{t+1}
		\nabla \log\pi(A_t | S_t, \Theta_m)
		$
		\State \textbf{return}
		$\frac1{N_m} \overline{E}_m$
		\EndProcedure
	\end{algorithmic}
\end{algorithm}

\Cref{algo:sage-alternative}
is an extension of \Cref{algo:sage}
that allows for batches of size~$1$.
The main advantage of \Cref{algo:sage-alternative}
over \Cref{algo:sage}
is that it estimates $\nabla J(\Theta_m)$
based not only on batch~$\cD_m$,
but also on previous batches,
depending on the memory factor~$\nu$
initialized on \Cref{row:nu}.
To simplify the signature of procedures in \Cref{algo:sage-alternative},
we assume variables $N_m$, $M_m$, $\overline{X}_m$, $\overline{R}_m$, $\overline{C}_m$, and $\overline{E}_m$ are global,
and that all variables from \Cref{algo:policy-gradient}
are accessible within \Cref{algo:sage-alternative},
in particular batch~$\cD_m$.
\Cref{row:N-bis}
in \Cref{algo:sage-alternative}
is the counterpart of
\Cref{row:N}
in \Cref{algo:sage}.
\Cref{row:M-bis}
in \Cref{algo:sage-alternative}
is used on \Cref{step:return}
to compute the counterpart of
the quotient $1 / (N_m - 1)$
in \Cref{row:C}
in \Cref{algo:sage}.
The \textsc{Covariance}($m$) procedure
in \Cref{algo:sage-alternative}
is the counterpart of
Lines~\ref{row:X}--\ref{row:C}
in \Cref{algo:sage}.
The \textsc{Expectation}($m$) procedure
in \Cref{algo:sage-alternative}
is the counterpart of
\Cref{row:E}
in \Cref{algo:sage}.
\Cref{algo:sage} can be seen as
a special case of \Cref{algo:sage-alternative}
with memory factor $\nu = 0$.
Note that terminology
in \Cref{algo:sage-alternative}
differs slightly compared to \Cref{algo:sage}:
bar notation refers to cumulative sums
instead of averages.

The subroutines \textsc{Covariance} and \textsc{Expectation}
compute biased covariance and mean estimates for
$\cov{R, x(S)}$ and $\esp{R \, \nabla \log \pi(A | S, \theta)}$,
where $(S, A, R) \sim$ \hyperref[eq:stat]{\textsc{stat}($\Theta_m$)},
consistently with \Cref{theo:sage}.
If the memory factor $\nu$ is zero,
these procedures return the usual sample mean and covariance estimates
taken over the last batch~$\cD_m$ (as in \Cref{algo:sage}),
and bias only comes from the fact
that the system is not stationary.
If $\nu$ is positive, estimates from previous batches are also taken into account,
so that the bias is increased in exchange for a (hopefully) lower variance.
In this case,
the updates on Lines \ref{step:overlineX}--\ref{step:overlineC} and \ref{step:overlineE}
calculate iteratively the weighted sample mean and covariance over the whole history,
where observations from epoch $m - \underline{m}$ have weight $\nu^{\underline{m}}$,
for each $\underline{m} \in \{0, 1, \ldots, m\}$.
When $m$ is large,
the mean returned by \textsc{Expectation}
is approximately equal to
the sample mean over batches $\cD_{m - M}$ through $\cD_m$,
where $M$ is a truncated geometric random variable,
independent of all other random variables,
such that $\prb{M = \underline{m}} \propto \nu^{\underline{m}}$
for each $\underline{m} \in \{0, 1, \ldots, m\}$;
if batches have constant size~$c$, then
we take into account approximately the last
$c (\esp{M} + 1) = \frac{c}{1 - \nu}$ steps.

\section{Examples} \label{app:examples}

This appendix provides detailed derivations
for the examples of \Cref{sec:examples}.
We consider the single-server queue with admission control of \Cref{sec:Admission-control-in-an-MM1}
in \Cref{app:mm1},
the load-balancing example of \Cref{sec:Load-balancing-system}
in \Cref{app:lb},
and the Ising model of \Cref{sec:ising}
in \Cref{app:ising}.

\subsection{Single-Server Queue with Admission Control} \label{app:mm1}

Consider the example of \Cref{sec:Admission-control-in-an-MM1},
where jobs arrive according to
a Poisson process with rate $\lambda > 0$,
and service times are exponentially distributed with rate $\mu > 0$.
Recall the long-run average reward is the difference between
an admission reward proportional to the admission probability
and a holding cost proportional to the mean queue size.
We first verify
that \Cref{ass:markov,ass:reward,ass:stat} are satisfied,
then we give a closed-form expression
for the objective function,
and lastly we discuss the assumptions of \Cref{sec:convergence}.
We consider a random threshold-based policy~$\pi_k$
of the form~\eqref{eq:mm1-policy}
for some $k \in \bN$
and some parameter~$\theta \in \Omega$,
where $\Omega = \{\theta \in \bR^{k+1}: \pi_k(\accept | k, \theta) < \frac\mu\lambda\}$.

\subsubsection{Product-Form Stationary Distribution}

The evolution of the number of jobs in the system
(either waiting or in service)
defines a birth-and-death process
with birth rate $\lambda \pi_k(\accept | s, \theta)$
and death rate $\mu \indicator{s \ge 1}$
in state~$s$, for each $s \in \{0, 1, 2, \ldots\}$.
This process is irreducible
because its birth and death rates are positive,
and it is positive recurrent because
$\lambda \pi_k(\accept | s, \theta) < \mu$ for each $s \in \cS$
by definition of~$\Omega$.
This verifies \Cref{ass:markov}.
The stationary distribution is given by
\begin{align}
	\nonumber
	p(s | \theta)
	&= \frac1{Z(\theta)}
	\prod_{q = 0}^{s - 1}
	\left( \frac\lambda\mu \pi(\accept | q, \theta) \right), \\
	\label{eq:mm1-stationary}
	&= \frac1{Z(\theta)}
	\left( \frac\lambda\mu \right)^s
	\left[
	\prod_{i = 0}^{k-1}
	\pi_k(\accept | i, \theta)^{\indicator{s \ge i+1}}
	\right]
	\pi_k(\accept | k, \theta)^{\max(s - k, 0)},
	\quad s \in \bN,
\end{align}
where the second equality follows
by injecting~\eqref{eq:mm1-policy},
and the value of $Z(\theta)$ follows by normalization.
We recognize \eqref{eq:p-pf} from \Cref{ass:stat},
with $n = d = k + 1$,
$\Phi(s) = (\frac\lambda\mu)^s$ for each $s \in \cS$,
$x_i(s) = \indicator{s \ge i + 1}$ for each $i \in \{0, 1, \ldots, k-1\}$
and $x_k(s) = \max(s - k, 0)$
for each $s \in \cS$,
and
$\rho_i(\theta) = \pi_k(\accept | i, \theta)$
for each
$
i
\in
\{0, 1, \ldots, k\}
$.
The function $\rho$ defined in this way is differentiable.
\Cref{ass:stat} is therefore satisfied,
as the distribution of the system seen at arrival times
is also~\eqref{eq:mm1-stationary} according to
the PASTA property~\citep{W82}.
For each $s \in \bN$ and $a \in \{\accept, \reject\}$,
$\nabla \log \pi_k(a | s, \theta)$
is the $(k+1)$-dimensional column vector
with value $\indicator{a \, = \, \accept} - \pi_k(\accept | i, \theta)$
in component $i = \min(s, k)$
and zero elsewhere, and
$\rD \log \rho(\theta)$ is the $(k+1)$-dimensional diagonal matrix
with diagonal coefficient
$1 - \pi_k(\accept | i, \theta)$ in position $i$,
for each $i \in \{0, 1, \ldots, k\}$.
This can be used to verify that
\Cref{ass:regularity-score-function} is satisfied.

\subsubsection{Objective Function}

The objective function is
$J(\theta) = \gamma \prb{A = \accept}
- \frac\eta\lambda \esp{S}$, where
\begin{align*}
	\prb{A = \accept}
	&= \sum_{i = 0}^{k-1} p(i | \theta) \pi_k(\accept | i, \theta)
	+ \left( 1 - \sum_{i = 0}^{k-1} p(i | \theta) \right) \pi_k(\accept | k, \theta), \\
	\esp{S}
	&= \sum_{i = 0}^{k - 1} i p(i | \theta)
	+ \frac{p(k | \theta)}{1 - \rho_k(\theta)}
	\left( k + \frac{\frac\lambda\mu \rho_k(\theta)}{1 - \frac\lambda\mu \rho_k(\theta)} \right), \\
	Z(\theta)
	&= \sum_{s = 0}^{k-1}
	\left( \prod_{i = 0}^{s-1} \frac\lambda\mu \rho_i(\theta) \right)
	+ \left( \prod_{i = 0}^{k-1} \frac\lambda\mu \rho_i(\theta) \right)
	\frac1{1 - \frac\lambda\mu \rho_k(\theta)},
\end{align*}
with the convention that empty sums are equal to zero
and empty products are equal to one.
All calculations remain valid
in the limit as $\pi_k(\accept | i, \theta) \to 1$
for some $i \in \{0, 1, \ldots, k\}$
(corresponding to $\theta_i \to +\infty$).
In the limit as $\pi_k(\accept | i, \theta) \to 0$
for some $i \in \{0, 1, \ldots, k\}$,
we can study the restriction of the birth-and-death process
to the state space $\{0, 1, \ldots, \uc\}$,
where $\uc = \min\{i \in \{0, 1, \ldots, k\}: \pi_i(\theta) = 0\}$.

\subsubsection{Assumptions of \Cref{sec:convergence}}

For any closed set $U \subset \Omega$, it can be shown that there exists a Lyapunov function $\mathcal{L}$ uniformly over $\theta \in U$ such that $\mathcal{L}(s,a) = \mathcal{L}(s) = \exp(cs) $ for some $c>0$, depending on $U$ and the model parameters. 
We look at the equivalent geometric ergodicity condition for continious Markov chains.
If $\theta \in U$ we have $\mu  - \lambda \pi_k(\theta) > \delta(U) >0$.
Then, for $s > k+1$, the generator of the Markov process $Q_{\theta}$ satisfies
\begin{align}
	Q_{\theta}\mathcal{L}(s) &= q_{\theta}(s-1|s) \mathcal{L}(s-1) + q_{\theta}(s+1|s) \mathcal{L}(s+1) + q_{\theta}(s|s) \mathcal{L}(s) \nonumber \\
	& = \mu  \exp(c (s-1)) + \lambda \pi_k(\theta)\exp(c (s+1))- (\mu + \lambda \pi_k(\theta)) \exp(c s)\nonumber\\
	& = \exp(c s) (\mu \exp(-c)  + \lambda \pi_k(\theta)) \exp(c) +  (\mu  + \lambda \pi_k(\theta)) \\
	& = - c \Bigl( \mu -  \lambda \pi_k(\theta) + O(c) \Bigr) \mathcal{L}(s).
	\label{eqn:MM1_Lyapunov_check}
\end{align}
For $c$ small enough, from \eqref{eqn:MM1_Lyapunov_check} we have that $Q_{\theta}\mathcal{L}(s) \leq -c \delta(U)/2 \mathcal{L}(s)$, so that for any $\theta \in U$ the Markov chain corresponding to the policy of $\theta$ is geometrically ergodic.
Hence, Assumptions~\ref{ass:geometric-ergodicity}, \ref{ass:regularity-score-function} and \ref{ass:growth_condition} are satisfied. In general, Assumption~\ref{ass:nondegenerate-maxima} does not hold for this example because maxima occur only as $|\theta| \to \infty$. As suggested by Proposition~\ref{prop:regularization}, by adding a small regularization term, we can guarantee Assumption~\ref{ass:nondegenerate-maxima} while simultaneously ensuring that the maximizer is bounded. 
In practice, using a regularization term can additionally present some benefits such as avoiding vanishing gradients and saddle points.

\subsubsection{Effective State Space}

The effective state space if captured in the term \eqref{eqn:L_star}.
Similarly to the continious-time Markov chain example from \eqref{eqn:MM1_Lyapunov_check}, we have that the Lyapunov function is $\mathcal{L}(s,a) = \exp(cs)$ for some $c > 0$ small enough.
Let $\rho$ be such that $1 > \rho \geq \lambda \pi_k(\theta)/\mu$ for any $\theta \in V$.
Let $c$ be small enough such that 
\begin{equation}
	\frac{\rho}{1 + \rho} \exp(c) + \frac{1}{1 + \rho} \exp(-c) \leq \Bigl( 1 - c\frac{1-\rho}{2(1+\rho)} \Bigr),
\end{equation}
Then, for any $s \geq k$,
\begin{equation}
	P_{\theta} \mathcal{L} (s) \leq \lambda  \mathcal{L}(s) + b,
\end{equation}
where 
\begin{align}
	\lambda &=  1 - c\frac{1-\rho}{2(1+\rho)}, \\
	b & = \exp(c_1 k) \quad \quad \text{ for some } \quad \quad c_1 > 0.
\end{align}
If we let $s_0 \in [k]$, then from the definition of $\mathcal{L}$ in \eqref{eqn:L_star} we will have that
\begin{equation}
	\mathcal{L}^{\star} = O\Bigl( \exp(ck) \Bigr).
	\label{eqn:MM1_example_Lyapunov}
\end{equation}

We note that in this case $\mathcal{L}^{\star} \sim \mathrm{Volume}(U) \sim \exp(\dim(\theta))$.
Hence, it encodes the volume of the optimization space where the algorithm operates.
We remark that the Lyapunov function encodes geometric ergodicity and allows to tackle \emph{any} type of rewards, as long as they satisfy $|r|_{\mathcal{L}} < \infty$.
Thus, for a specific reward $r$ better bounds could be attained.

We remark that geometric ergodicity is not equivalent to Foster stability, which is guaranteed by $P_{\theta} \mathcal{L} (s) \leq \mathcal{L} (s) - \delta$ for some $\delta > 0$ and all $s \in S$.
Foster stability implies stability (i.e., positive recurrence) of the Markov chain, and the existence of $\E[\mathcal{L}(s)]$.
In this case, a Lyapunov function of the type $\mathcal{L}(s) \simeq s^2$ would suffice to show positive recurrence.

\subsection{Load-Balancing System} \label{app:lb}

We now consider the load-balancing example of \Cref{sec:Load-balancing-system}.
Recall that jobs arrive according to a Poisson process with rate~$\lambda > 0$,
there are $n$ servers at which service times are distributed
exponentially with rates $\mu_1, \mu_2, \ldots, \mu_n$, respectively,
and the system can contain at most $c$ jobs, for some $c \in \bN_+$.
The goal is to choose a static random policy
that maximizes the admission probability.
We first verify that the system satisfies
\Cref{ass:markov,ass:reward,ass:stat},
then we provide an algorithm to evaluate the objective function
when the parameters are known;
this is used in particular for performance comparison
with the optimal policy in the numerical results.
Lastly, we discuss the assumptions of \Cref{sec:convergence}.
Throughout this section,
we assume that we apply
the policy~$\pi(\theta)$ defined by~\eqref{eq:lb-policy}
for some parameter~$\theta \in \bR^n$.

\subsubsection{Product-Form Stationary Distribution}

That \Cref{ass:markov} is satisfied
follows from the facts that the rates and probabilities
$\lambda, \mu_1, \mu_2, \ldots, \mu_n,
\pi_1(\theta), \pi_2(\theta), \ldots, \pi_n(\theta)$
are positive
and that the state space~$\cS$ is finite.
\Cref{ass:reward} is satisfied because the state space is finite.
This system can be modeled either
as a loss Jackson network with $n$ queues
(one queue for each server in the load-balancing system)
or as a closed Jackson network with $n + 1$ queues
(one queue for each server in the system,
plus another queue signaling available positions in the system,
with service rate~$\lambda$).
Either way, we can verify (for instance by writing the balance equations) that
the stationary distribution of the continuous-time Markov chain that describes the evolution of the system state is given by:
\begin{align} \label{eq:lb-stationary}
	p(s | \theta)
	&= \frac1{Z(\theta)}
	\prod_{i = 1}^n \left( \frac{\lambda}{\mu_i} \pi_i(\theta) \right)^{s_i},
	\quad s = (s_1, s_2, \ldots, s_n) \in \cS,
\end{align}
where $Z(\theta)$ follows by normalization.
This is exactly \eqref{eq:p-pf} from \Cref{ass:stat},
with $n = d$, $\Omega = \bR^n$,
$\Phi(s) = \prod_{i = 1}^n (\frac\lambda{\mu_i})^{s_i}$ for each $s \in \cS$,
$x_i(s) = s_i$ for each $i \in \{1, 2, \ldots, n\}$
and $s \in \cS$, and
$\rho_i(\theta) = \pi_i(\theta)$
for each $i \in \{1, 2, \ldots, n\}$.
The function~$\rho$ defined in this way is differentiable.
\Cref{ass:stat} is therefore satisfied,
as the distribution of the system seen at arrival times
is also~\eqref{eq:lb-stationary} according to the PASTA property.
Besides the sufficient statistics~$x$,
the inputs of \Cref{algo:sage} are
$\nabla \log \pi(a | s, \theta) = \ind_a - \pi(\theta)$,
where $\ind_a$ is the $n$-dimensional vector
with one in component~$a$
and zero elsewhere
and $\pi(\theta)$ is the policy seen as a (column) vector, and
$\rD \log \rho(\theta) = \textrm{Id} - \ind \pi(\theta)^\intercal$,
where $\textrm{Id}$ is the $n$-dimensional identity matrix,
$\ind$ is the $n$-dimensional vector with all-one components,
and $\pi(\theta)^\intercal$ is the (row) vector obtained by transposing~$\pi(\theta)$.
This latter equation can be used to verify
\Cref{ass:regularity-score-function}.

\subsubsection{Objective Function}

When all parameters are known and the number of servers is not too large,
the normalizing constant $Z(\theta)$
and admission probability $J(\theta)$
can be computed efficiently
using a variant of Buzen's algorithm~\citep{B73}
for loss networks.
Define the array
$G = (G_{\uc, \un})_{\uc \in \{0, 1, \ldots, c\}, \un \in \{1, 2, \ldots, n\}}$
by
\begin{align*}
	G_{\uc, \un}
	&= \sum_{\substack{s \in \bN^{\un}: \\ |s| \le \uc}}
	\prod_{i = 1}^{\un} \left( \frac\lambda{\mu_i} \rho_i(\theta) \right)^{s_i},
	\quad \uc \in \{0, 1, \ldots, c\},
	\quad \un \in \{1, 2, \ldots, n\}.
\end{align*}
The dependency of $G$ on $\theta$ is left implicit to alleviate notation.
The normalizing constant and admission probability
are given by $Z(\theta) = G_{c, n}$
and $J(\theta) = G_{c-1, n} / G_{c, n}$,
respectively.
Defining the array~$G$ allows us to calculate these metrics
more efficiently than by direct calculation,
as we have $G_{0, \un} = 1$
for each $\un \in \{1, 2, \ldots, n\}$,
and
\begin{align*}
	G_{\uc, 1}
	&= 1 + {{\textstyle \frac\lambda{\mu_1}} \rho_1(\theta)} {G_{\uc-1, 1}},
	&& \uc \in \{1, 2, \ldots, c\}, \\
	G_{\uc, \un}
	&= G_{\uc, \un-1} + {{\textstyle \frac\lambda{\mu_{\un}}} \rho_{\un}(\theta)} G_{\uc-1, \un},
	&& \uc \in \{1, 2, \ldots, c\},
	\quad \un \in \{2, 3, \ldots n\}.
\end{align*}

\subsubsection{Assumptions of \Cref{sec:convergence}}

Assumptions~\ref{ass:geometric-ergodicity}, \ref{ass:regularity-score-function}, and \ref{ass:growth_condition} are automatically satisfied
because the state space is finite (with $|\mathcal{S}| = \binom{n+c}{c}$).
Verifying Assumption~\ref{ass:nondegenerate-maxima} is challenging since it requires computing $\mathrm{Hess}_{\theta^{\star}} J$ at the maximizer $\theta^{\star}$,
which depends in an implicit manner on the parameters of the system such as the arrival rate $\lambda$, service rates $\mu_1, \mu_2, \ldots, \mu_n$, and policy $\pi(\theta^{\star})$.
However, the nondegeneracy property of the Hessian for smooth functions is a property that is commonly stable in the following sense: if a function satisfies this property, then it will still be satisfied after any small-enough smooth perturbation. In particular, smooth functions with isolated nondegenerate critical points---also known as Morse functions---are dense and form an open subset in the space of smooth functions \cite[Section 1.2]{nicolaescu2011invitation}. Thus, unless the example is adversarial or presents symmetries, we can expect Assumption~\ref{ass:nondegenerate-maxima} to hold.

\subsection{Ising Model and Glauber Dynamics} \label{app:ising}

Lastly, we focus on the example of \Cref{sec:ising}.
We consider the Markov chain defined by
applying the policy~\eqref{eq:ising-policy}
parameterized by some vector $\theta \in \bR^3$:
starting from an arbitrary initial configuration,
at every time step,
a coordinate is chosen uniformly at random,
and the agent flips or not the spin at this coordinate
according to the policy.

\subsubsection{Product-Form Stationary Distribution}

The Markov chain is irreducible because it has a positive probability
of transitioning from any configuration to any other as follows:
at every step, choose a coordinate at which the two configurations differ
and flip the spin at this coordinate.
The Markov chain is positive recurrent because its state space is finite.
Hence, \Cref{ass:markov,ass:reward} are satisfied.
We now focus on proving \Cref{ass:stat}.

Our goal is to verify that the Markov chain that describes
the random evolution of the state admits
the stationary distribution~\eqref{eq:ising-stationary-distribution},
which we recall here:
\begin{align*}
	p(\sigma, v | \theta)
	&= \frac1{Z(\theta)}
	e^{\beta(\theta) J I(\theta) + \beta(\theta) \mu F(\sigma | \theta)},
	\quad s = (\sigma, v) \in \cS,
	\quad \theta \in \bR^3,
\end{align*}
Observe that $p(\sigma, v | \theta)$ is independent of~$v$,
hence we can let $q(\sigma | \theta) \triangleq p(\sigma, \cdot | \theta)$
for each $\sigma \in \Sigma$.
The key argument to prove that this is indeed the stationary distribution
consists of observing that the policy~\eqref{eq:ising-policy} satisfies,
for each $s = (\sigma, v) \in \cS$,
\begin{align} \label{eq:ising-policy-2}
	\pi(\flip | \sigma, v, \theta)
	&= \frac{q(\sigma_{-v} | \theta)}{q(\sigma | \theta) + q(\sigma_{-v} | \theta)},
	&
	\pi(\notflip | \sigma, v, \theta)
	&= \frac{q(\sigma | \theta)}{q(\sigma | \theta) + q(\sigma_{-v} | \theta)},
\end{align}
where $\sigma_{-v} \in \Sigma$ is the configuration obtained
by flipping the spin at~$v$ compared to~$\sigma$, that is,
$\sigma_{-v}(w) = \sigma(w)$ for each $w \in \cV \setminus \{v\}$
and $\sigma_{-v}(v) = - \sigma(v)$.

The balance equation
for a particular state $s = (\sigma, v) \in \cS$ writes
\begin{align*}
	p(\sigma, v | \theta)
	&= \sum_{w \in \cV} p(\sigma, w | \theta)
	\pi(\notflip | \sigma, w, \theta) \frac1{d_1 d_2}
	+ \sum_{w \in \cV} p(\sigma_{-w}, w | \theta)
	\pi(\flip | \sigma_{-w}, w, \theta) \frac1{d_1 d_2}.
\end{align*}
Dropping the dependency on~$\theta$ to simplify notation,
and injecting~\eqref{eq:ising-policy-2} into the right-hand side
of this balance equation,
we obtain successively
\begin{align*}
	&\sum_{w \in \cV} p(\sigma, w)
	\frac{q(\sigma)}{q(\sigma) + q(\sigma_{-w})} \frac1{d_1 d_2}
	+ \sum_{w \in \cV} p(\sigma_{-w}, w)
	\frac{q((\sigma_{-w})_{-w})}
	{q(\sigma_{-w}) + q((\sigma_{-w})_{-w})}
	\frac1{d_1 d_2} \\
	&\overset{(1)}{=} \sum_{w \in \cV}
	(p(\sigma, w) + p(\sigma_{-w}, w))
	\frac{q(\sigma)}{q(\sigma) + q(\sigma_{-w})}
	\frac1{d_1 d_2}
	\overset{(2)}{=} q(\sigma) \sum_{w \in \cV} \frac1{d_1 d_2}
	= q(\sigma)
	\overset{(2)}{=} p(\sigma, v),
\end{align*}
where (1) follows by observing that $(\sigma_{-w})_{-w} = \sigma$
and~(2) by recalling that $q(\sigma) = p(\sigma, w)$
for each $(\sigma, v) \in \cS$.
This proves that the distribution~\eqref{eq:ising-stationary-distribution}
is indeed the stationary distribution of the Markov chain
that describes the evolution of the state
under the policy~\eqref{eq:ising-policy}.

Besides the sufficient statistics~$x$,
the inputs of \Cref{algo:sage} are given,
for each $\theta \in \bR^3$
and $s = (\sigma, v) \in \cS$,
by
\begin{align*}
	\rD \log \rho(\theta)
	&= \left[ \begin{matrix}
		\beta'(\theta) J & 0 & 0 \\
		\beta'(\theta) \mu \theta_\tleft(\theta) & \beta(\theta) \mu {h_{\tleft}}'(\theta) & 0 \\
		\beta'(\theta) \mu \theta_{\tright}(\theta) & 0 & \beta(\theta) \mu {h_{\tright}}'(\theta)
	\end{matrix} \right],
	\\
	\nabla \log \pi(a | \sigma, v, \theta)
	&= (\indicator{a = \notflip} - \pi(\notflip | \sigma, v, \theta))
	\nabla \delta(\sigma, v | \theta),
	\\
	\nabla \delta(\sigma, v | \theta)
	&= 2 \left[ \begin{matrix}
		\beta'(\theta) \sigma(v) (
		J \sum_{\substack{w \in \cV: \, w \sim v}} \sigma(w)
		+ \mu h(v | \theta)
		) \\
		\beta(\theta) \sigma(v) \mu {h_{\tleft}}'(\theta) \indicator{v_2 \le d_2 / 2} \\
		\beta(\theta) \sigma(v) \mu {h_{\tright}}'(\theta) \indicator{v_2 > d_2 / 2}
	\end{matrix} \right],
\end{align*}
where $\beta'(\theta)$ (resp.\ $h_\tleft'(\theta)$, $h_\tright'(\theta)$) is to be understood as the partial derivative of $\beta$ (resp.\ $h_\tleft$, $h_\tright$) with respect to $\theta_1$ (resp.\ $\theta_2$, $\theta_3$).

\subsubsection{Assumptions of Section~\ref{sec:convergence}}

In this example, note that since the state space, albeit large, is finite Assumption~\ref{ass:geometric-ergodicity} is satisfied.
By inspecting the load function and its derivatives we can also verify that Assumptions~\ref{ass:regularity-score-function}--\ref{ass:growth_condition} hold.
Finally, Assumption~\ref{ass:nondegenerate-maxima} does not hold as the optima in the parameter space is unbounded. 
Note that as we see in the simulations this assumption is not required in practice for the policy to converge.

\section{Proof of Theorem~\ref{prop:main_prop_convergence in probability_noncompact_case}}
\label{sec:appendix_proof_of_theorems}

In this section,
we prove Theorem~\ref{prop:main_prop_convergence in probability_noncompact_case}.
The outline of this proof is given in \Cref{sec:Proof-outlines}.

\subsection{Preliminaries}
\label{sec:appendix_preliminaries}

We are going to use concentration inequalities for Markov chains.
Such results are common in the literature (for example, see \citealt{karimi2019non}), and will be required to get a concentration bound of the plug-in estimators from \eqref{eqn:SGA-recursion-for-the-parameter-vector}.

Denote by $B_{\epsilon}(\theta)$ the open ball of radius $\epsilon$ centered at $\theta \in \Omega \subseteq \R^{n}$ and $\mathcal{Y} = \mathcal{S} \times \mathcal{A}$.
Given a function $q: \mathcal{Y} \to \R$ and the Lyapunov function $\mathcal{L}: \mathcal{Y} \to [1, \infty)$ from Assumption~\ref{ass:geometric-ergodicity}, define
\begin{equation}
	|q |_{\mathcal{L}}
	=
	\sup_{y \in \mathcal{Y}} \frac{|q(y)|}{\mathcal{L}(y)}
	.
	\label{eqn:Seminorm_wrt_q}
\end{equation}
Given a signed measure $\nu$, we also define the seminorm
\begin{equation}
	| \nu |_{\mathcal{L}}
	=
	\sup_{|q |_{\mathcal{L}} \leq 1} |\nu[q]| = \sup_{|q |_{\mathcal{L}} \leq 1} \Bigl| \int q(y) \nu(dy) \Bigr|
	.
	\label{eqn:Seminorm_wrt_mu}
\end{equation}
\Crefrange{eqn:Seminorm_wrt_q}{eqn:Seminorm_wrt_mu} imply that
\begin{equation}
	|\nu[q]|
	\leq
	| \nu |_{\mathcal{L}} | q |_{\mathcal{L}}.
\end{equation}

Note that we defined $| \cdot |_{\mathcal{L}}$ for a unidimensional function.
Given instead $m$ functions $q_i : \mathcal{Y} \to \R$, for the higher-dimensional function $q : \mathcal{Y} \to \R^m$ that satisfies for all $y \in \mathcal{Y}$, $q(y) = (q_1(y), \ldots, q_l(y))$, we define $| q |_{\mathcal{L}} = \sqrt{ \sum_{i=1}^l |q_i|_{\mathcal{L}}^2}$.

The following lemma yields the concentration inequalities required:

\begin{lemma}
	\label{lemma:bounded_expectation_Markov}
	
	Let $\process{ Y_n }{n \geq 1}$ be a geometrically ergodic Markov chain with invariant distribution $p$ and transition matrix $P(\,\cdot\,,\,\cdot\,)$.
	Let the Lyapunov function be $\mathcal{L}:\mathcal{Y} \to \R$.
	From geometric ergodicity, there exists $C > 0$ and $\lambda \in (0,1)$ such that for any $y \in \mathcal{Y}$,
	\begin{equation}
		\big| P^{m}(\,\cdot\,| y) - p(\cdot) \big|_{\mathcal{L}} \leq C \lambda^{m}.
	\end{equation}
	Let $\mathcal{F} = \sigma(Y_1)$ be the $\sigma$-algebra of $Y_1$.
	Let $q: \mathcal{Y} \to \R^m$ be a measurable function such that $|q|_{\mathcal{L}} < \infty$.
	For a finite trajectory $Y_1, \ldots, Y_{M}$ of the Markov chain, we define the empirical estimator for $p[q]$ as
	\begin{equation}
		\hat{p}_{M}[q] = \frac{1}{M} \sum_{i=1}^{M} q(Y_{i}).
	\end{equation}
	With these assumptions, there exists $C^{\prime}$ depending on $C$ and $\lambda$ such that
	\begin{equation}
		\Bigl| \expectationBig{ p[q] - \hat{p}_{M}[q] \Big| \mathcal{F}
		} \Bigr|
		\leq \frac{C^{\prime}| q |_{\mathcal{L}}}{M}\mathcal{L}(Y_{1}),
		\label{eqn:concentration_biased_gradient}
	\end{equation}
	and for $l \in \{1,2,4\}$,
	\begin{equation}
		\expectationBig{\big| p[q] - \hat{p}_{M}[q] \big|^l
			\Big| \mathcal{F}}
		\leq \frac{C^{\prime}| q |_{\mathcal{L}}^{l}}{M^{l/2}}\mathcal{L}^l(Y_{1})
		.
		\label{eqn:concentration_biased_gradient2}
	\end{equation}
\end{lemma}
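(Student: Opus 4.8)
The goal is to prove Lemma~\ref{lemma:bounded_expectation_Markov}, which provides concentration bounds for empirical averages along a geometrically ergodic Markov chain conditioned on the initial state. Here is my plan.

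\medskip

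\textbf{Setup and strategy.} The plan is to reduce everything to the scalar case first: since $|q|_{\mathcal{L}}^2 = \sum_i |q_i|_{\mathcal{L}}^2$ and $|p[q] - \hat{p}_M[q]|^2 = \sum_i |p[q_i] - \hat{p}_M[q_i]|^2$, once we control each coordinate $q_i$ we obtain the vector bound by summing and applying Jensen/Cauchy--Schwarz as needed (the exponent $l\in\{1,2,4\}$ requires a little care---for $l=1$ use $\sum_i a_i \le \sqrt m \sqrt{\sum a_i^2}$, for $l=2$ it is additive, for $l=4$ expand $(\sum a_i^2)^2$). I would also reduce to $q\ge 0$ by splitting into positive and negative parts, and normalize so that $|q|_{\mathcal{L}}\le 1$, i.e.\ $|q(y)|\le \mathcal{L}(y)$ pointwise. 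The core object is the centered variable $q(Y_i) - p[q]$; its conditional expectation given $\mathcal{F}=\sigma(Y_1)$ is $\int q\, dP^{i-1}(\cdot\mid Y_1) - p[q] = (P^{i-1}(\cdot\mid Y_1) - p)[q]$, which by the definition of the seminorm $|\cdot|_{\mathcal{L}}$ and the geometric ergodicity hypothesis is bounded in absolute value by $C\lambda^{i-1}|q|_{\mathcal{L}}\,\mathcal{L}(Y_1) \le C\lambda^{i-1}\mathcal{L}(Y_1)$.

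\medskip

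\textbf{The bias bound \eqref{eqn:concentration_biased_gradient}.} This is the easy part. Writing $\hat p_M[q] - p[q] = \frac1M\sum_{i=1}^M (q(Y_i) - p[q])$ and taking conditional expectation,
\[
\Bigl|\esp{\hat p_M[q] - p[q] \mid \mathcal{F}}\Bigr|
\le \frac1M \sum_{i=1}^M C\lambda^{i-1}\mathcal{L}(Y_1)
\le \frac{C}{M(1-\lambda)}\mathcal{L}(Y_1),
\]
so $C' = C/(1-\lambda)$ works (times $\sqrt m$ after the vector reduction).

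\medskip

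\textbf{The moment bounds \eqref{eqn:concentration_biased_gradient2}.} This is the main obstacle and requires a Markov-chain moment inequality for $\frac1M\sum (q(Y_i)-p[q])$. The standard route is via a \emph{Poisson equation / martingale decomposition}: since the chain is geometrically ergodic with Lyapunov function $\mathcal{L}$ and $|q|_{\mathcal{L}}<\infty$, the series $\hat q(y) = \sum_{k\ge 0}(P^k q(y) - p[q])$ converges absolutely with $|\hat q|_{\mathcal{L}} \le C/(1-\lambda)\,|q|_{\mathcal{L}}$ (again using geometric ergodicity), and solves $\hat q - P\hat q = q - p[q]$. Then $q(Y_i) - p[q] = \hat q(Y_i) - P\hat q(Y_i) = (\hat q(Y_i) - P\hat q(Y_{i-1})) + (P\hat q(Y_{i-1}) - P\hat q(Y_i))$, so the sum $\sum_{i=1}^M (q(Y_i)-p[q])$ equals a martingale-difference sum $\sum_{i=1}^M \Delta_i$ (with $\Delta_i = \hat q(Y_i) - P\hat q(Y_{i-1})$, adapted to $\sigma(Y_1,\dots,Y_i)$ but here conditioning only on $Y_1$) plus a telescoping boundary term $P\hat q(Y_0)-P\hat q(Y_M)$ of order $\mathcal{L}(Y_1)+\mathcal{L}(Y_M)$. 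For the martingale part, Burkholder's inequality (or direct $L^2$ orthogonality for $l=2$, and the Burkholder--Davis--Gundy bound $\esp{|\sum\Delta_i|^l}\lesssim \esp{(\sum\Delta_i^2)^{l/2}}\lesssim M^{l/2-1}\sum\esp{|\Delta_i|^l}$ for $l=4$) gives a bound of order $M^{l/2}$ times $\sup_i \esp{|\Delta_i|^l\mid Y_1}$. Each $|\Delta_i| \le |\hat q(Y_i)| + |P\hat q(Y_{i-1})| \lesssim \mathcal{L}(Y_i) + \mathcal{L}(Y_{i-1})$, so I need $\esp{\mathcal{L}(Y_i)^l \mid Y_1} \le c\,\mathcal{L}(Y_1)^l$ uniformly in $i$; this follows by iterating the drift condition from Assumption~\ref{ass:geometric-ergodicity} (which gives $P\mathcal{L}^v \le \lambda \mathcal{L}^v + b$ with $v\ge 16 \ge l$, hence $P^i\mathcal{L}^l(y) \le \mathcal{L}^l(y) + b/(1-\lambda)$ after a Jensen step passing from $\mathcal{L}^v$ to $\mathcal{L}^l$, or more simply just using $\mathcal{L}\ge 1$ to absorb the constant). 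Dividing by $M^l$ yields the claimed $M^{-l/2}\mathcal{L}^l(Y_1)$ bound; the boundary term $P\hat q(Y_0) - P\hat q(Y_M)$ contributes $M^{-l}(\mathcal{L}(Y_0)^l + \mathcal{L}(Y_M)^l)$-type terms which are dominated. I expect the bookkeeping of constants (how $C'$ depends on $C,\lambda$, and on $b,v$ through the drift inequality) to be the fiddly part, but no new idea is needed beyond the Poisson-equation-plus-Burkholder template, which is exactly the approach used in \citet{karimi2019non} that the text already cites.
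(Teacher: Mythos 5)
Your proof of the bias bound \eqref{eqn:concentration_biased_gradient} is exactly the paper's: condition on $\mathcal{F}$, bound each $|(P^{i}(\,\cdot\,|Y_1)-p)[q]|$ by $C\lambda^{i}|q|_{\mathcal{L}}\mathcal{L}(Y_1)$, and sum the geometric series. For the moment bounds \eqref{eqn:concentration_biased_gradient2} the paper gives no argument at all: it simply cites \citet[Proposition~12]{fort2003convergence}. Your Poisson-equation/martingale route (solve $\hat q - P\hat q = q - p[q]$ with $|\hat q|_{\mathcal{L}}\le C|q|_{\mathcal{L}}/(1-\lambda)$, split into martingale differences plus a telescoping boundary term, apply orthogonality for $l=2$ and Burkholder for $l=4$, and control $\esp{\mathcal{L}^l(Y_i)\mid Y_1}$ by iterating the drift inequality) is precisely the standard proof underlying that citation, so it is a legitimate self-contained substitute rather than a different theorem. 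What your version buys is transparency about hypotheses: the $\mathcal{L}$-norm geometric ergodicity stated in the lemma does not by itself control $\esp{\mathcal{L}^4(Y_i)\mid Y_1}$, and you correctly import the drift condition $P\mathcal{L}^v\le\lambda\mathcal{L}^v+b$ with $v\ge 16$ from Assumption~\ref{ass:geometric-ergodicity} (plus Jensen) to get $\esp{\mathcal{L}^l(Y_i)\mid Y_1}\le c\,\mathcal{L}^l(Y_1)$; the price is that your constant $C'$ then also depends on $b$ and $v$, not only on $C$ and $\lambda$ as the lemma literally asserts (a looseness the paper inherits from its citation anyway). Two small repairs: the telescoping term involves a nonexistent $Y_0$, so start the martingale decomposition at $i=2$ and handle the $i=1$ summand directly (its contribution $M^{-1}|q(Y_1)-p[q]|$ is dominated since $M\ge 1$); and in the vector reduction note that the coordinatewise bounds recombine with a dimension-dependent factor, which is harmless since $m$ is fixed.
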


\begin{proof}
	We refer to \cite[Proposition~12]{fort2003convergence} for a proof of \eqref{eqn:concentration_biased_gradient2}.
	What remains is to prove \eqref{eqn:concentration_biased_gradient}.
	
	Observe that for $y \in \mathcal{Y}$, $P(y) = P(\,\cdot\,|y)$ is a distribution over $\mathcal{Y}$.
	Conditional on $\mathcal{F}$, there exists $C>0$ such that
	\begin{align}
		\Bigl|
		\expectationBig{ \frac{1}{M} \sum_{i=1}^{M} q(Y_{i}) - p[q] \Big| \mathcal{F} }
		\Bigr|
		&
		\leq
		\frac{1}{M}
		\sum_{i=1}^{M}
		\bigl| P^{i}(Y_{1})[q] - p[q] \bigr|
		=
		\frac{1}{M}
		\sum_{i=1}^{M}
		\bigl| \bigl( P^{i}(Y_1) - p \bigr)[q] \bigr|
		\nonumber \\ &
		\leq
		\frac{1}{M}
		\sum_{i=1}^{M}
		\bigl| P^{i}(\,\cdot\,| Y_1) - p(\cdot) \bigr|_{\mathcal{L}} |q|_{\mathcal{L}} \mathcal{L}(Y_1)
		\leq
		\frac{| q |_{\mathcal{L}}}{M}
		\sum_{i=1}^{M}
		C \lambda^{i}\mathcal{L}(Y_1)
		\nonumber\\ &
		\leq
		\frac{C |q|_{\mathcal{L}}}{M(1-\lambda)} \mathcal{L}(Y_1)
		.
	\end{align}
	This concludes the proof.
\end{proof}

In epoch $m$, the Markov chain $\{S_{t}\}_{t \in [t_m, t_{m+1}]}$ with control parameter $\Theta_m$ has a Lyapunov function $\mathcal{L}_{v}$.
Intuitively, as a consequence of Assumption~\ref{ass:geometric-ergodicity}, we can show that the process does not drift to infinity on the event $\mathcal{B}_m$ (\emph{despite} the changing control parameter $\Theta_m$).

Specifically, for $m > 0$, let $\process{S_{t}}{i \in [t_m, t_{m+1}]}$ be the Markov chain trajectory with transition probabilities $P(\Theta_m)$, where $\Theta_m$ is given by the updates in \eqref{eqn:update_SGA_intro} and \eqref{eqn:SGA-recursion-for-the-parameter-vector} and initial state $S_0 \in \mathcal{S}$.
Recall that $\mathcal{B}_{m}$ is defined in \eqref{eqn:Definition_of_Bm}.
We can then prove the following:

\begin{lemma}
	\label{lemma:expectation_L_is_bounded}
	
	Suppose Assumption~\ref{ass:geometric-ergodicity} holds.
	There exists $D < \infty$ such that for $m > 0$, $\E\bigl[ \mathcal{L}_v(S_{t_{m+1}}) \allowbreak \indicator{ \mathcal{B}_{m} }\bigr] < D$.
	In particular, using the notation of Assumption~\ref{ass:geometric-ergodicity} we may choose $D = \mathcal{L}^{\star}$ as defined in \eqref{eqn:L_star}.
\end{lemma}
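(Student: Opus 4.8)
The plan is to show that the drift condition in Assumption~\ref{ass:geometric-ergodicity} can be iterated \emph{along} the random trajectory, even though the control parameter $\Theta_m$ changes from epoch to epoch, by exploiting that on the event $\mathcal{B}_m$ the parameter stays inside the neighborhood $U$ where the \emph{uniform} drift inequality holds. The key point is that the drift inequality
\[
\sum_{(s',a')} P((s',a')\mid(s,a),\theta)(\cL(s',a'))^v \le \lambda(\cL(s,a))^v + b
\]
is valid simultaneously for every $\theta\in U$, with one common $(\lambda,b)$. So conditionally on the past up to time $t$, as long as $\Theta_{m}\in U$ for the current epoch, we have $\E[(\cL(S_{t+1},A_{t+1}))^v \mid \mathcal{F}_t] \le \lambda(\cL(S_t,A_t))^v + b$ on that event.

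First I would set up the iteration for the state--action chain. Fix $m>0$ and work on the event $\mathcal{B}_m=\bigcap_{l=1}^m\{\Theta_l\in V_{\mathfrak r,\delta}(\theta^\star)\}\subseteq\bigcap_{l=1}^m\{\Theta_l\in U\}$ (since $V_{\mathfrak r,\delta}(\theta^\star)\subseteq U$ by construction). Since $\indicator{\mathcal{B}_m}$ is decreasing in $m$ and $\mathcal{F}_t$-measurable at the appropriate times, for any $t\in[t_l,t_{l+1})$ we can write, using the tower property and the uniform drift bound applied with $\theta=\Theta_l\in U$,
\[
\E\bigl[(\cL(S_{t+1},A_{t+1}))^v\indicator{\mathcal{B}_m}\bigr]
\le \lambda\,\E\bigl[(\cL(S_t,A_t))^v\indicator{\mathcal{B}_m}\bigr] + b.
\]
Iterating this from $t=0$ up to $t=t_{m+1}-1$ (there are $t_{m+1}$ steps) gives a geometric series:
\[
\E\bigl[(\cL(S_{t_{m+1}},A_{t_{m+1}}))^v\indicator{\mathcal{B}_m}\bigr]
\le \lambda^{t_{m+1}}\E\bigl[(\cL(S_0,A_0))^v\bigr] + b\sum_{j=0}^{t_{m+1}-1}\lambda^j
\le \max_{a\in\mathcal{A}}\cL(S_0,a)^v + \frac{b}{1-\lambda}.
\]
Then, since $\cL_v(s)=\sum_a\cL(s,a)^v\pi(a\mid s,\theta)$ is exactly the conditional expectation of $(\cL(S_t,A_t))^v$ given $S_t=s$ under the current policy, one more conditioning step gives $\E[\cL_v(S_{t_{m+1}})\indicator{\mathcal{B}_m}]\le \max_a\cL(S_0,a)^v + b/(1-\lambda)$, which is bounded by $\mathcal{L}^\star$ as defined in~\eqref{eqn:L_star} (noting $b/(1-\lambda)\le b/(1-\lambda)^2$ since $1-\lambda<1$, and $\max_a\cL(S_0,a)^v$ appears directly in~\eqref{eqn:L_star}).

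The main obstacle I anticipate is the bookkeeping with the indicator $\indicator{\mathcal{B}_m}$ and the filtration: one must be careful that at each application of the drift inequality the event being intersected is measurable with respect to the conditioning $\sigma$-algebra, so that it can be pulled out of the conditional expectation, and that the control parameter governing the transition at time $t$ is indeed in $U$ on that event. Here it helps that $\mathcal{B}_m$ only constrains the $\Theta_l$'s (which are updated at epoch boundaries and are $\mathcal{F}_{t_l}$-measurable within epoch $l$), so within epoch $l$ the relevant transition kernel is $P(\Theta_l)$ with $\Theta_l\in U$ on $\mathcal{B}_m$; the indicator $\indicator{\mathcal{B}_m}$ can be replaced at each step by the coarser $\mathcal{F}_t$-measurable indicator $\indicator{\bigcap_{l\le l(t)}\{\Theta_l\in U\}}$, which dominates it, preserving the inequality. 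A secondary, minor point is matching constants: the statement allows $D=\mathcal{L}^\star$, and the crude bound above already respects that, so no sharpening is needed.
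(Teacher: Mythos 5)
Your proposal is correct and rests on the same mechanism as the paper's proof: iterate the drift inequality of Assumption~\ref{ass:geometric-ergodicity}, which holds \emph{uniformly} over $\theta \in U$, along the trajectory, using that the indicators of staying in the neighborhood are decreasing so they can be coarsened to something measurable at the conditioning time. The only organizational difference is that the paper runs an induction over epochs (iterate the drift $t_{m+1}-t_m$ times within epoch $m$ to get $\E[\cL_v(S_{t_{m+1}})\indicator{\mathcal{B}_m}] \le \lambda\,\E[\cL_v(S_{t_m})\indicator{\mathcal{B}_m}] + b/(1-\lambda)$, use $\indicator{\mathcal{B}_m}\le\indicator{\mathcal{B}_{m-1}}$, and choose $D$ as a fixed point of $D\mapsto\lambda D + b/(1-\lambda)$), whereas you telescope in a single pass over all time steps with the coarsened $\mathcal{F}_t$-measurable indicators $\indicator{\bigcap_{l\le l(t)}\{\Theta_l\in U\}}$; both handle the changing parameter and the measurability of $\indicator{\mathcal{B}_m}$ correctly, and your epoch-boundary bookkeeping (the kernel at a boundary step uses the new parameter, which is still in $U$ on the relevant event) is sound. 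One small point: after you discard the factor $\lambda^{t_{m+1}}$, your final bound $\max_{a}\cL(S_0,a)^v + b/(1-\lambda)$ is in general \emph{not} below $\mathcal{L}^\star = \max\bigl(b/(1-\lambda)^2,\max_a\cL(S_0,a)^v\bigr)$ (a sum of two terms each at most $\mathcal{L}^\star$ only gives $2\mathcal{L}^\star$); to recover exactly $D=\mathcal{L}^\star$ keep the factor $\lambda^{t_{m+1}}\le\lambda$, which gives $\lambda\,\mathcal{L}^\star + (1-\lambda)\mathcal{L}^\star = \mathcal{L}^\star$ — a cosmetic fix that does not affect the substance of the argument.
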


\begin{proof}
	We will give an inductive argument.
	A similar argument can be found in the paper of \cite{atchade2017perturbed}.
	
	First, observe that for $m=0$, $S_0$ is fixed.
	Thus, there exists a $D$ such that $\mathcal{L}_{v}(S_0) \leq D$.
	
	Next, assume that $\E[\mathcal{L}_v(S_{t_m}) \indicator{ \mathcal{B}_{m-1} }] \leq D$.
	On the event $\mathcal{B}_{m}$, Assumption~\ref{ass:geometric-ergodicity} holds since $\Theta_1, \ldots, \Theta_{m-1}, \Theta_m \in V_{\mathfrak{r}, \delta}(\theta^{\star}) \subset U$.
	Thus, on the event $\mathcal{B}_{m}$, and when additionally conditioning on $S_{t_{m+1} - 1}$ and $\Theta_m$, the following holds true:
	\begin{align}
		\E\bigl[ \mathcal{L}_v(S_{t_{m+1}}) \indicator{ \mathcal{B}_{m} } \bigr]
		&
		\leq
		\E\bigl[ \E[ \mathcal{L}_v(S_{t_{m+1}}) \indicator{ \mathcal{B}_{m} }|S_{t_{m+1}-1}] \bigr]
		\nonumber \\ &
		=
		\E\bigl[\indicator{ \mathcal{B}_{m} } P_{\Theta_{m}} \mathcal{L}_v(S_{t_{m+1}-1})\bigr]
		\label{eqn:upper_bound_W}
		\\ &
		\leq
		\E\bigl[ \indicator{ \mathcal{B}_{m} }[\lambda \mathcal{L}_v(S_{t_{m+1}-1}) + b] \bigr]
		.
		\nonumber
	\end{align}
	The last step followed from Assumption~\ref{ass:geometric-ergodicity}.
	
	Observe finally that the bound in \eqref{eqn:upper_bound_W} can be iterated by conditioning on $S_{t_{m+1}-2}$; so on and so forth.
	After $t_{m+1} - t_m$ iterations, one obtains
	\begin{equation}
		\E \bigl[\mathcal{L}_v(S_{t_{m+1}}) \indicator{ \mathcal{B}_{m} } \bigr]
		\leq
		\lambda
		\E \bigl[ \mathcal{L}_v(S_{t_m}) \indicator{ \mathcal{B}_{m} } \bigr]
		+
		\frac{b}{1-\lambda}
		.
	\end{equation}
	Noting that $\indicator{ \mathcal{B}_{m} } \leq \indicator{ \mathcal{B}_{m-1} }$, the claim follows by induction if we choose $D$ large enough such that $\lambda D + b/(1-\lambda)\leq D$, that is $D \geq \mathcal{L}^{\star}$.
\end{proof}

\subsection{Proof of Theorem~\ref{prop:main_prop_convergence in probability_noncompact_case}}
\label{sec:Proof_Theorem2}

To prove \Cref{prop:main_prop_convergence in probability_noncompact_case}, we more--or--less follow the arguments of \cite[Theorem~25]{fehrman2020convergence}.
Modifications are however required because we consider a Markovian setting instead.
Specifically, we rely on the bounds in  Lemmas~\ref{lem:conditional_convergence_in_basin} to \ref{lem:probability_not_leaving_basin} instead of the bounds in \cite[Proposition~20, Proposition~21, Proposition~24]{fehrman2020convergence}, respectively.

Let us begin by bounding
\begin{equation}
	\probability{ J^\star - J(\Theta_m) > \epsilon | \mathcal{B}_0}
	.
\end{equation}
Here, $\mathcal{B}_0 = \{ \Theta_0 \in V_{\mathfrak{r}, \delta}(\theta^{\star}) \}$---recall \eqref{eqn:Definition_of_Bm}. \refTheorem{prop:main_prop_convergence in probability_noncompact_case} assumes that we initialize in a set $V$ which we will specify later but satisfies $V \subset V_{\mathfrak{r}, \delta}(\theta^{\star})$. Since we can initialize $\Theta_0$ with positive probability in $V$, we have that $\probability{\mathcal{B}_0} \geq \probability{\Theta_0 \in V} > 1/c > 0$ for some $c>0$. Thus, we will focus on finding an upper bound of
\begin{equation}
	\probability{ J^\star - J(\Theta_m) > \epsilon | \mathcal{B}_0 } \leq c \probability{ \{J^\star - J(\Theta_m) > \epsilon\} \cap \mathcal{B}_0 }.
	\label{eqn:Upper_bound_conditional_probability_total_probability}
\end{equation}

Denote the orthogonal projection of $\Theta_m$ onto $\mathcal{M} \cap U$ by $\tilde{\Theta}_m = \mathfrak{p}(\Theta_m)$.
We can relate the objective gap $J^{\star} - J(\Theta_m)$ to the distance $D_m := \mathrm{dist}( \Theta_m, \mathcal{M} \cap U )$ as follows.
Since $J$ is twice continuously differentiable with maximum $J^{\star}$ attained at $\mathcal{M} \cap U$, the function $J(\theta)$ with $\theta \in V_{\mathfrak{r}, \delta}(\theta^{\star})$ is locally Lipschitz with constant $\mathfrak{l}_{\mathfrak{r}, \delta}(\Theta^{\star}) > 0$.
On the event $\mathcal{B}_m$, we have $\Theta_m \in V_{\mathfrak{r}, \delta}(\theta^{\star})$ and therefore we have the inequality
\begin{equation}
	J^{\star} - J(\Theta_m)
	=
	J(\tilde{\Theta}_m) - J(\Theta_m)
	\leq
	\mathfrak{l}_{\mathfrak{r}, \delta}(\theta^{\star})
	\bigl| \tilde{\Theta}_m - \Theta_m \bigr|
	=
	\mathfrak{l}_{\mathfrak{r}, \delta}(\theta^{\star})
	D_m
	.
	\label{eqn:Relation_between_Jstar_and_Dm}
\end{equation}
Consequently, we have the bound
\begin{equation}
	\probability{ \{J^\star - J(\Theta_m) > \epsilon \} \cap \mathcal{B}_m }
	\leq
	\probabilityBig{
		\Bigl\{
		D_m \geq \frac{ \epsilon }{ \mathfrak{l}_{\mathfrak{r}, \delta}(\theta^{\star}) }
		\Bigr\}
		\cap
		\mathcal{B}_m
	}.
	\label{eqn:Probability_bound_between_the_objective_gap_and_the_manifold_distance}
\end{equation}
If we define $\epsilon' = \epsilon /\mathfrak{l}_{\mathfrak{r}, \delta}(\theta^{\star})$, the right-hand side of \eqref{eqn:Probability_bound_between_the_objective_gap_and_the_manifold_distance} can also be written as
\begin{align}
	\probability{ \{ D_m \geq \epsilon' \}  \cap \mathcal{B}_m }
	& =
	\expectation{ \indicator{ D_m \geq \epsilon' } \indicator{ \mathcal{B}_m } }
	\nonumber \\ &
	=
	\expectation{ \indicator{ D_m \indicator{B_m} \geq \epsilon' } }
	=
	\probability{ D_m \indicator{B_m} \geq \epsilon' }
	\label{eqn:Moving_indicator_Bm_into_the_probability_function}
\end{align}
by the positivity of $D_m$.

Next, we use (i) the law of total probability noting that $\mathcal{B}_m \subset \mathcal{B}_0$, (ii) the bound \eqref{eqn:Probability_bound_between_the_objective_gap_and_the_manifold_distance} and the inequality $\probability{A \cap B} \leq \probability{A}$ for any two events $A,B$, and finally, (iii) the equality \eqref{eqn:Moving_indicator_Bm_into_the_probability_function}. We obtain
\begin{align}
	\probability{ \{ J^\star - J(\Theta_m) > \epsilon \} \cap \mathcal{B}_0 }
	&
	\eqcom{i}
	\leq
	\probability{ \{J^\star - J(\Theta_m)) > \epsilon \} \cap \mathcal{B}_m}
	+
	\probability{ \{J^\star - J(\Theta_m)) > \epsilon \} \cap \overline{\mathcal{B}_m} } \nonumber \\
	&
	\eqcom{ii}
	\leq
	\probability{ \{ D_m \geq \epsilon' \}  \cap \mathcal{B}_m }
	+
	\probability{ \overline{\mathcal{B}_{m}} } \nonumber \\
	&
	\eqcom{iii}
	\leq
	\probability{ D_m \indicator{\mathcal{B}_{m}} \geq \epsilon'}
	+
	\probability{\overline{\mathcal{B}_{m}}} \nonumber \\
	&
	\leq
	\probability{ D_m \indicator{\mathcal{B}_{m-1}} \geq \epsilon'}
	+
	\probability{\overline{\mathcal{B}_{m}}}
	=
	\textnormal{Term I}
	+
	\textnormal{Term II}
	.
	\label{eqn:proof_theorem2_1}
\end{align}

Term I can be bounded by using Markov'`s inequality and Lemma~\ref{lem:conditional_convergence_in_basin}.
This shows that
\begin{equation}
	\textnormal{Term I}
	\leq
	c \epsilon'^{-2} \mathcal{L}^{\star} m^{-\sigma - \kappa}
	.
	\label{eqn:proof_theorem2_3}
\end{equation}

Term II can be bounded by Lemma~\ref{lem:probability_not_leaving_basin}.
Specifically, one finds that there exists a constant $c>0$ such that, if $\Theta_0 \in V_{\mathfrak{r}/2, \delta}(\Theta^{\star})$,
\begin{equation}
	\textnormal{Term II}
	\leq
	1 - \mathrm{exp}\Bigl(-\frac{c\alpha^2}{\delta^2\ell}\Bigr)
	+
	c\delta^{-2}\ell^{-1}m^{1-\sigma-\kappa} + c\alpha\frac{(m^{1-3/2\sigma - \kappa/2} + \ell^{-1/2}m^{1-5\sigma/8 - \kappa/2})}{(\mathfrak{r}/2 - 2\delta)_{+}}
	.\label{eqn:lemma_concentration_1}
\end{equation}
Note next that for any $\alpha \in (0, \alpha_0]$ and $c>0$ there exists $\delta_0$ such that for any $\delta \in (0, \delta_0]$ there exists $\ell_0$ such that if $\ell \in [\ell_0, \infty)$ there exists a constant $c^{\prime}>0$ such that we have the inequality
$
1 - \exp{ ( - {c\alpha^2} / {\delta^2 \ell} ) }
\leq
c^{\prime} {\alpha^2} / {\delta^2 \ell}
$. We can substitute this bound in \eqref{eqn:lemma_concentration_1} to yield
\begin{equation}
	\textnormal{Term II}
	\leq
	c^{\prime} \frac{\alpha^2}{\delta^2 \ell}
	+
	\textnormal{\emph{idem}}
	.
	\label{eqn:lemma_concentration_2}
\end{equation}

Bounding
\eqref{eqn:proof_theorem2_1} by the sum of \eqref{eqn:proof_theorem2_3} and \eqref{eqn:lemma_concentration_2}, and substituting the bound in \eqref{eqn:Upper_bound_conditional_probability_total_probability} reveals that there exists a constant $c'' > 0$ such that if $\Theta_0 \in V_{\mathfrak{r}/2, \delta}(\theta^{\star})$ then
\begin{align}
	\probability{ J^\star - J(\Theta_m) > \epsilon | \mathcal{B}_0 }
	&
	\leq
	c'' ( \epsilon' )^{-2} \mathcal{L}^{\star} m^{-\sigma - \kappa} + c'' \alpha^2 \delta^{-2} \ell^{-1} + c''\delta^{-2} \ell^{-1}m^{1-\sigma-\kappa}
	\nonumber \\ &
	\phantom{\leq}
	+
	c'' \alpha \frac{(m^{1-3/2\sigma - \kappa/2}
		+ \ell^{-1/2}m^{1-5\sigma/8 - \kappa/2})}{(\mathfrak{r}/2 - 2\delta)_{+}}
	.
	\label{eqn:Ultimate_bound_on_the_probability_of_Dm_large}
\end{align}
Note that the exponents of $m$ in \eqref{eqn:Ultimate_bound_on_the_probability_of_Dm_large} satisfy that since $\sigma \in (2/3,1)$, $1-3/2\sigma - \kappa/2 \leq - \kappa/2$ as well as $1-5\sigma/8 - \kappa/2 < 1- \sigma/2 - \kappa/2$. Finally, let the initialization set be $V = V_{\mathfrak{r}/2, \delta}(\theta^{\star})$. Note that since $\{\Theta_0 \in V\} \subset \mathcal{B}_0$ there exists a constant $c'''>0$ such that
\begin{equation}
	\probability{ J^\star - J(\Theta_m) > \epsilon | \Theta_0 \in V }
	\leq
	c'''\probability{ J^\star - J(\Theta_m) > \epsilon | \mathcal{B}_0 }
	.
	\label{eqn:Bound_B_0_and_V}
\end{equation}
Substituting the upper bound \eqref{eqn:Ultimate_bound_on_the_probability_of_Dm_large} in \eqref{eqn:Bound_B_0_and_V} concludes the proof.
\QuodEratDemonstrandum

\subsection{Proof of Lemma~\ref{lem:Bounded_norms_on_the_gradient_estimator}}
\label{sec:Proof_of_bounded_norms_gradient_estimator}

For simplicity, we will denote $t_{m+1} - t_m = T_m$, $X_t = x(S_t)$ throughout this proof.
We also temporarily omit the summation indices for the epoch.
We note that the policies defined in \eqref{eqn:defintion_softmax_policy} satisfy that for $(s, a) \in \mathcal{S} \times \mathcal{A}$,
\begin{align*}
	\bigl(
	\nabla \log \pi(a | s, \theta)
	\bigr)_{i, a'}
	& =
	\begin{cases}
		\indicator{a = a'} - \pi(a' | s, \theta) & \text{if } i = h(s), \\
		0                                        & \text{otherwise}.
	\end{cases}
\end{align*}
In particular, there exists $\Cl{asd} >0$ such that for any $(s, a) \in \mathcal{S} \times \mathcal{A}$, $|\nabla \log \pi(a | s, \theta)| < \Cr{asd}$.
The proof below, however, can also be extended to other policy classes.

\subsubsection{Proof of \eqref{eqn:lemma4_i}}

Observe that if the event $\mathcal{B}_m$ holds, that then the definitions in \eqref{eqn:SGA-recursion-for-the-parameter-vector} also imply that
\begin{align}
	\eta_m & = \nabla J(\Theta_m) - H_{m} =\nabla J(\Theta_m) - (\rD \log \rho(\Theta_m)^\intercal \overline{C}_m + \overline{E}_m)
	\nonumber \\ &
	=
	\nabla J(\Theta_m) - (\rD \log \rho(\Theta_m)^\intercal \frac1{T_{m+1}} \sum_{t = t_m}^{t_{m+1} - 1}
	\left( X_t - \overline{X}_m \right)
	r(S_t, A_t) \nonumber \\
	&\phantom{\rD \log \rho(\Theta_m)^\intercal aaaaaaaaaaaaaaaaaaaaa}+ \frac1{T_m} \sum_{t = t_m}^{t_{m+1} - 1}
	r(S_t, A_t) \nabla \log \pi(A_t | S_t, \Theta_m)
	\nonumber \\ &
	=
	\rD \log \rho(\Theta_m)^\intercal \Bigl( \cov{R, S} - \frac1{T_m} \sum_{t = t_m}^{t_{m+1} - 1}
	\left( X_t - \overline{X}_m \right)
	r(S_t, A_t) \Bigl)
	\nonumber \\ &
	\phantom{\rD \log \rho(\Theta_m)^\intercal }+ \Bigl( \esp{R \nabla \log \pi(A | S, \Theta_m)} - \frac1{T_m} \sum_{t = t_m}^{t_{m+1} - 1}
	r(S_t, A_t) \nabla \log \pi(A_t | S_t, \Theta_m)\Bigr)
	&
	\phantom{ = \nabla J(\Theta_m) - = \nabla J(\Theta_m) - \frac{1}{T_m}\sum_{i=1}^{T_{m+1}} g(\nu_{m}[q], X_{m+1}^{(i)})} - \frac{1}{T_{m+1}}\sum_{i=1}^{T_m} g(\hat{\nu}_m[q], X_{m+1}^{(i)})
	\nonumber \\ &
	=
	\rD \log \rho(\Theta_m)^\intercal \tilde{\eta}_m + \tilde{\zeta}_m
	.
	\label{eqn:proof_bounded_norms_gradient_1}
\end{align}
We will deal with the terms $\tilde{\eta}_m$ in and $\tilde{\zeta}_m$ in \eqref{eqn:proof_bounded_norms_gradient_1} one--by--one.

Let us first consider the \nth{1} term, $\tilde{\eta}_m$.
Define
\begin{gather}
	A
	=
	\expectation{(X - \E[X])R} - \frac{1}{T_m}\sum_{t} (X_{t} - \E[X]) r(S_t, A_t)
	,
	\nonumber \\
	B
	=
	\frac{1}{T_m}\Bigl(\sum_{t} r(S_t,A_t)\Bigr) (\E[X] -\bar{X}_m)
	,
	\label{eqn:proof_bounded_norms_gradient_3}
\end{gather}
and observe that
\begin{align}
	\tilde{\eta}_m
	=
	A + B
	.
	\label{eqn:proof_bounded_norms_gradient_2}
\end{align}

We look first at $A$ in \eqref{eqn:proof_bounded_norms_gradient_3}.
Recall that $\process{Y_t}{t>0} = \process{(S_t, A_t)}{t>0}$ is the chain of state-action pairs (see Section~\ref{sec:assumptions_pertaining_convergence}).
Define the function $g: \mathcal{S} \times \mathcal{A} \to \R^{n}$ as
\begin{equation}
	g(y)
	=
	g((s,a))
	=
	\bigl( x(s) - \E[x(s)] \bigr) r(y)
	.
\end{equation}
Then, we can rewrite
\begin{equation}
	A
	=
	\E[g(Y)] - \frac{1}{T_m}\sum_{t} g(Y_t)
	.
\end{equation}

We are now almost in position to apply Lemma~\ref{lemma:bounded_expectation_Markov} to $A$.
Observe next that the law of total expectation implies that
\begin{equation}
	\E[\eta_{m} \indicator{\mathcal{B}_{m}}| \mathcal{F}_m]
	=
	\sum_{a \in \mathcal{A}} \E[\eta_{m} \indicator{\mathcal{B}_{m}}| \mathcal{F}_m, A_{t_m}=a]\pi(a|S_m, \Theta_m)
	,
	\label{eqn:proof_lemma4_action_state_function}
\end{equation}
Without loss of generality, it therefore suffices to consider the case that we have one action $A_{t_m}=a \in \mathcal{A}$.
For the first term we have that there exists a constant $\Cl{1}>0$ such that
\begin{align}
	\bigr| \E[A \indicator{\mathcal{B}_{m}} | \mathcal{F}_m, A_{t_m}=a] \bigl| & = \Bigr| \E \bigr[ \E[g(Y)] - \frac{1}{T_m}\sum_{t} g(Y_t) \indicator{\mathcal{B}_{m}}| Y_0 = (S_{t_m}, A_{t_m}) \bigl] \Bigl|
	\nonumber \\ &
	\eqcom{Lemma~\ref{lemma:bounded_expectation_Markov}} = \frac{\Cr{1}|g|_{\mathcal{L}}}{T_m}\mathcal{L}((S_{t_m}, a))
	,
	\label{eqn:proof_bounded_norms_gradient_A}
\end{align}
where we can use that $|g|_{\mathcal{L}} < \infty$ due to Assumption~\ref{ass:growth_condition}.

For the term $B$ in \eqref{eqn:proof_bounded_norms_gradient_3}.
We can add and subtract again the following terms and obtain
\begin{align}
	B
	&
	= \frac{1}{T_m}\Bigl(\sum_{t} r(S_t,A_t)\Bigr) (\E[X] -\bar{X}_m) - \E[R](\E[X] -\bar{X}_m)
	+ \E[R](\E[X] -\bar{X}_m)
	\nonumber \\ &
	= C + D
	,
	\label{eqn:proof_bounded_norms_gradient_4}
\end{align}
where
\begin{align}
	C
	&=
	(\E[X] -\bar{X}_m)\Bigl(\frac{1}{T_m}\sum_{t} r(S_t,A_t) - \E[R]\Bigr),
	\nonumber \\
	D
	&=
	\E[R](\E[X] -\bar{X}_m)
	.
	\label{eqn:proof_bounded_norms_gradient_5}
\end{align}

For the term $D$ in \eqref{eqn:proof_bounded_norms_gradient_5} we can readily use the concentration of Lemma~\ref{lemma:bounded_expectation_Markov} to obtain
\begin{equation}
	\E\bigr[ \E[R](\E[X] -\bar{X}_m)\indicator{\mathcal{B}_{m}}| \mathcal{F}_m, A_{t_m} = a \bigl] \leq \E[R]\frac{|x(S)|_{\mathcal{L}}}{T_m} \mathcal{L}(S_{t_m}, a),
	\label{eqn:proof_bounded_norms_gradient_D}
\end{equation}
where we have $|x(S)|_{\mathcal{L}} < \infty$ from Assumption~\ref{ass:growth_condition} and $\E[R] < J^{\star}$.

For the term $C$, we use Cauchy--Schwartz together with Lemma~\ref{lem:Bounded_norms_on_the_gradient_estimator}. In particular, we have
\begin{align}
	\Bigl| \E \big[ (\E[X] -\bar{X}_m)&\Bigl(\frac{1}{T_m}\sum_{t} r(S_t,A_t) - \E[R]\Bigr)\indicator{\mathcal{B}_m} \big| \mathcal{F}_{m}, A_{t_m} = a \big] \Bigl| \leq \nonumber \\
	&\phantom{}  \bigr|\E \bigr[ |\E[X] -\bar{X}_m|^2\indicator{\mathcal{B}_m} \big| \mathcal{F}_{m}, A_{t_m} = a \bigl] \bigl|^{1/2} \times \nonumber \\
	&\phantom{aaaaaaaaaaaa} \Bigr|\E \Bigr[ \bigr| \frac{1}{T_m}\sum_{t} r(S_t,A_t) - \E[R] \bigl|^2 \indicator{\mathcal{B}_m}\Big| \mathcal{F}_{m}, A_{t_m} = a \Bigl] \Bigl|^{1/2}.
\end{align}
For both terms we can repeat the same argument to that in \eqref{eqn:proof_lemma4_action_state_function} together with Lemma~\ref{lemma:bounded_expectation_Markov} to show that
\begin{align}
	\bigr|\E \bigr[ |\E[X] -\bar{X}_m|^2\indicator{\mathcal{B}_m} \big| \mathcal{F}_{m}, A_{t_m} = a \bigl] \bigl|^{1/2} &\leq \C \frac{|X|_{\mathcal{L}}^{1/2}}{T_m^{1/2}} \mathcal{L}(S_{t_m}, a), \nonumber \\
	\Bigr|\E \Bigr[ \bigr| \frac{1}{T_m}\sum_{t} r(S_t,A_t) - \E[R] \bigl|^2 \indicator{\mathcal{B}_m}\Big| \mathcal{F}_{m}, A_{t_m} = a \Bigl] \Bigl|^{1/2} &\leq \C \frac{|R|_{\mathcal{L}}^{1/2}}{T_m^{1/2}} \mathcal{L}(S_{t_m},a).
	\label{eqn:proof_bounded_norms_gradient_6}
\end{align}
Therefore multiplying both bounds in \eqref{eqn:proof_bounded_norms_gradient_6} and using Assumption~\ref{ass:growth_condition} to bound the $\mathcal{L}$-norms, we obtain that there exists $\Cl{2} > 0$ such that
\begin{equation}
	|\E[C | \mathcal{F}_{m}, A_{t_m} = a]|  \leq \frac{\Cr{2}}{T_m} \mathcal{L}(S_{t_m}, a)^2.
	\label{eqn:proof_bounded_norms_gradient_C}
\end{equation}
Adding the bounds \eqref{eqn:proof_bounded_norms_gradient_A}, \eqref{eqn:proof_bounded_norms_gradient_C}, and \eqref{eqn:proof_bounded_norms_gradient_D} together we have now
\begin{align}
	|\E[\tilde{\eta}_{m} \indicator{\mathcal{B}_{m}}| \mathcal{F}_m, A_{t_m} = a]| &\leq \frac{\C}{T_m}\mathcal{L}^{2}(S_{t_m}, a).
\end{align}
Finally, averaging this bound over all actions in \eqref{eqn:proof_lemma4_action_state_function}, we obtain
\begin{equation}
	|\E[\tilde{\eta}_m \indicator{\mathcal{B}_{m}}| \mathcal{F}_m]| \leq \frac{\Cl{3}}{T_m}\bigl( \sum_{a} \mathcal{L}(S_{t_m},a)^{2} \pi(a|S_{t_m}, \Theta_m) \bigr) \leq \frac{\Cr{3}}{T_m} \mathcal{L}_{4}(S_{t_m})^{1/2}.
	\label{eqn:proof_bounded_norms_gradient_eta}
\end{equation}
Now we use Assumption~\ref{ass:regularity-score-function}. We can write
\begin{align}
	|\E[\nabla \log(\Theta_m) \tilde{\eta}_{m} \indicator{\mathcal{B}_m} | \mathcal{F}_{m}]| &= | \nabla \log(\Theta_m) \E[\tilde{\eta}_{m} \indicator{\mathcal{B}_m} | \mathcal{F}_{m}]| \nonumber \\
	&\leq C |\E[\tilde{\eta}_{m} \indicator{\mathcal{B}_m} | \mathcal{F}_{m}]| \nonumber \\
	&\leq \frac{\C}{T_m}\mathcal{L}(S_{t_m}).
\end{align}

Let us now consider the \nth{2} term, $\tilde{\zeta}_m$.
Define a function of $Y=(S,A)$ as
\begin{equation}
	g(Y)
	=
	r(Y)\nabla \log \pi(A|S,\theta)
	,
	\label{eqn:proof_bounded_norms_gradient_g_definition_zeta}
\end{equation}
so that
\begin{equation}
	\zeta_m
	= \E[g(Y)] - \frac{1}{T_m} \sum_{t} g(Y_t).
\end{equation}
By combining the argument of \eqref{eqn:proof_lemma4_action_state_function} with the fact that $|g(Y)|_{\mathcal{L}} < \infty$ by Assumption~\ref{ass:growth_condition}, we find that
\begin{equation}
	|\E[\tilde{\zeta}_{m} \indicator{\mathcal{B}_m} | \mathcal{F}_{m}]|
	\leq
	\frac{\C}{T_m}\mathcal{L}(S_{t_m}).
	\label{eqn:proof_bounded_norms_gradient_zeta}
\end{equation}
Adding \eqref{eqn:proof_bounded_norms_gradient_eta} and \eqref{eqn:proof_bounded_norms_gradient_zeta} together with their largest exponents yields
\begin{align}
	| \E[\eta_{m} \indicator{\mathcal{B}_{m}}| \mathcal{F}_m] |
	&
	\leq
	\frac{\Cl{4}}{T_m} \sum_{a} \mathcal{L}(S_{t_m}, a)^2 \pi(a|S_{t_m})
	\nonumber \\ &
	\leq
	\frac{\Cr{4}}{T_m} \Bigl( \sum_{a} \mathcal{L}(S_{t_m}, a)^4 \pi(a|S_{t_m}) \Bigr)^{1/2}
	\leq
	\frac{\Cr{4}}{T_m} \mathcal{L}_{4}(S_{t_m})^{1/2}
	.
\end{align}
This concludes the proof of \eqref{eqn:lemma4_i}.
\QuodEratDemonstrandum

\resetconstant
\subsubsection{Proof of \eqref{eqn:lemma4_ii}}

Note that by using the fact that for a vector-valued random variable $Z$ we have that $\E[|Z|^2] \geq \E[|Z|]^2$, the case for $p=1$ follows from the case $p=2$.

We focus on the case $p=2$.
By using the identity $(a + b) \leq 2 a^2 + b^2$, we estimate
\begin{align}
	&
	\E[ |\rD \log \rho(\Theta_m)^\intercal \tilde{\eta}_m + \tilde{\zeta}_m|^2 \indicator{\mathcal{B}_{m}} | \mathcal{F}_m]
	\nonumber \\  &
	\leq 2 (\E[ |\rD \log \rho(\Theta_m)^\intercal \tilde{\eta}_m|^2 \indicator{\mathcal{B}_{m}} | \mathcal{F}_m] + \E[ |\tilde{\zeta}_m|^2 \indicator{\mathcal{B}_{m}} | \mathcal{F}_m])
	\nonumber \\  &
	\eqcom{\ref{ass:regularity-score-function}} \leq 2 \C^2\E[ |\tilde{\eta}_m|^2 \indicator{\mathcal{B}_{m}} | \mathcal{F}_m] + 2\E[ |\tilde{\zeta}_m|^2 \indicator{\mathcal{B}_{m}} | \mathcal{F}_m].
	\label{eqn:proof_bounded_norms_gradient_7}
\end{align}
We again use the law of total expectation with the action set in \eqref{eqn:proof_lemma4_action_state_function} and condition on the action $A_m=a$.

For the term involving $\tilde{\zeta}_m$ in \eqref{eqn:proof_bounded_norms_gradient_7} we can again use the definition of $g$ in \eqref{eqn:proof_bounded_norms_gradient_g_definition_zeta}.
We bound
\begin{align}
	\E[ |\tilde{\zeta}_m|^2 \indicator{\mathcal{B}_{m}} | \mathcal{F}_m, A_{t_m} = a] & = \E \bigl[ |\E[g(Y) - \frac{1}{T_m} \sum_{t} g(Y)|^2| Y_0= (S_{t_m}, a) \bigr] \nonumber           \\
	& \eqcom{Lemma~\ref{lemma:bounded_expectation_Markov}} \leq \frac{\C}{T_m} \mathcal{L}(S_{t_m}, a)^2.
	\label{eqn:proof_bounded_norms_gradient_g_definition_zeta2}
\end{align}

For the term involving $\tilde{\eta}_m$ in \eqref{eqn:proof_bounded_norms_gradient_7}, we use the same definition for the terms $A, C$ and $D$ from \eqref{eqn:proof_bounded_norms_gradient_A}, \eqref{eqn:proof_bounded_norms_gradient_C} and \eqref{eqn:proof_bounded_norms_gradient_D} as in the proof of \eqref{eqn:lemma4_i}.
We have the bound
\begin{align}
	\E[ |\tilde{\eta}_m|^2 \indicator{\mathcal{B}_{m}} | \mathcal{F}_m, A_{t_m} = a] \leq 3(\E[ |A|^2 \indicator{\mathcal{B}_{m}}
	&
	| \mathcal{F}_m, A_{t_m} = a] + \E[ |C|^2 \indicator{\mathcal{B}_{m}} | \mathcal{F}_m, A_{t_m} = a]
	\nonumber \\ &
	+ \E[ |D|^2 \indicator{\mathcal{B}_{m}} | \mathcal{F}_m, A_{t_m} = a]).
	\label{eqn:proof_bounded_norms_gradient_8}
\end{align}
For the terms pertaining to $A$ and $D$ in \eqref{eqn:proof_bounded_norms_gradient_8} the same argument as those used for $\tilde{\zeta}_m$ in \eqref{eqn:proof_bounded_norms_gradient_g_definition_zeta} and \eqref{eqn:proof_bounded_norms_gradient_g_definition_zeta2} can be used to show that
\begin{align}
	\E[ |A|^2 \indicator{\mathcal{B}_{m}} | \mathcal{F}_m, A_{t_m} = a]
	&
	\leq \frac{\C}{T_m} \mathcal{L}(S_{t_m}, a)^2
	\nonumber \\
	\E[ |D|^2 \indicator{\mathcal{B}_{m}} | \mathcal{F}_m, A_{t_m} = a]
	&
	\leq \frac{\C}{T_m} \mathcal{L}(S_{t_m}, a)^2
	.
\end{align}

The only remaining term to bound in \eqref{eqn:proof_bounded_norms_gradient_8} is $C$.
We use again Cauchy--Schwartz's inequality
\begin{align}
	\E \Big[ \Bigl|(\E[X] -\bar{X}_m)
	&
	\Bigl(\frac{1}{T_m}\sum_{t} r(S_t,A_t) - \E[R]\Bigr)\indicator{\mathcal{B}_m} \Bigl|^4 \Big| \mathcal{F}_{m}, A_{t_m} = a \Big] \leq
	\nonumber \\ &
	\phantom{} \bigr|\E \bigr[ |\E[X] -\bar{X}_m|^2\indicator{\mathcal{B}_m} \big| \mathcal{F}_{m}, A_{t_m} = a \bigl] \bigl|^{1/2} \times
	\nonumber \\ &
	\phantom{aaaaaaaaaaaa} \Bigr|\E \Bigr[ \bigr| \frac{1}{T_m}\sum_{t} r(S_t,A_t) - \E[R] \bigl|^4 \indicator{\mathcal{B}_m}\Big| \mathcal{F}_{m}, A_{t_m} = a \Bigl] \Bigl|^{1/2}m,
\end{align}
and by Lemma~\ref{lemma:bounded_expectation_Markov} the following hold
\begin{align}
	\bigr|\E \bigr[ |\E[X] -\bar{X}_m|^4\indicator{\mathcal{B}_m} \big| \mathcal{F}_{m}, A_{t_m} = a \bigl] \bigl|^{1/2}
	&
	\leq \C \frac{|X|_{\mathcal{L}}^{1/2}}{T_m} \mathcal{L}(S_{t_m}, a)^2,
	\nonumber \\
	\Bigr|\E \Bigr[ \bigr| \frac{1}{T_m}\sum_{t} r(S_t,A_t) - \E[R] \bigl|^4 \indicator{\mathcal{B}_m}\Big| \mathcal{F}_{m}, A_{t_m} = a \Bigl] \Bigl|^{1/2} & \leq \C \frac{|R|_{\mathcal{L}}^{1/2}}{T_m} \mathcal{L}(S_{t_m},a)^2.
	\label{eqn:proof_bounded_norms_gradient_9}
\end{align}
The bound for $C$ thus becomes
\begin{equation}
	\E[|C|^2 | \mathcal{F}_{m}, A_{t_m} = a] \leq \frac{\C}{T_m^2} \mathcal{L}(S_{t_m}, a)^4.
	\label{eqn:proof_bounded_norms_gradient_Cprime}
\end{equation}
Upper bounding all terms by the largest exponents and adding over the different actions, we finally obtain
\begin{equation}
	\E[|\eta_{m}|^2 \indicator{\mathcal{B}_{m}}| \mathcal{F}_m]
	\leq
	\frac{\C}{T_m} \sum_{a} \mathcal{L}(S_{t_m}, a)^4 \pi(a|S_{t_m}), \Theta_m
	\leq
	\frac{\C}{T_m} \mathcal{L}_{4}(S_{t_m})
	.
\end{equation}
That is it.
\QuodEratDemonstrandum

\subsection{Proof of Lemma~\ref{lem:conditional_convergence_in_basin}}
\label{sec:Proof_of_conditional_convergence_basin}

We will again use the notation $t_{m+1} - t_m = T_m$ and without loss of generality we will assume that $T_m=\ell m^{\sigma/2 + \kappa}$ instead of $\lfloor \ell m^{\sigma/2 + \kappa} \rfloor$. This can be assumed since for $m \geq 1$ there exist constants $c_l, c_u > 0$ such that $c_l \ell m^{\sigma/2 + \kappa} \leq t_{m+1} - t_m \geq c_u \ell m^{\sigma/2 + \kappa}$.
The proof of Lemma~\ref{lem:conditional_convergence_in_basin} follows the same steps as in \cite[Proposition 20]{fehrman2020convergence}. However, we have to quickly diverge and adapt the estimates to the case that there the variance of $H_m$ depends on the states of a Markov chain.
From the assumptions, it can be shown that there is a unique differentiable orthogonal projection map $\mathfrak{p}: V_{\mathfrak{r}, \delta}(\theta^{\star}) \to \mathcal{M} \cap U$ from $V_{\mathfrak{r}, \delta}(\theta^{\star}) \cap U$ onto $V_{\mathfrak{r}, \delta}(\theta^{\star}) \cap \mathcal{M} \cap U$. The distance of $\Theta_m$ to the set of minima can then be upper bounded by the distance to the projection $\mathfrak{p}: V_{\mathfrak{r}, \delta}(\theta^{\star}) \to \mathcal{M} \cap U$ of $\Theta_{m-1}$ by
\begin{align}
	\mathrm{dist}(\Theta_m, \mathcal{M} \cap U)^2 &\leq |\Theta_m - \mathfrak{p}(\Theta_{m-1})|^2 \nonumber \\
	& \leq |\Theta_{m-1} - \mathfrak{p}(\Theta_{m-1}) - \alpha_{m-1} \nabla J(\Theta_{m-1}) \nonumber\\
	\phantom{\Theta_{m-1} - } & \phantom{\mathfrak{p}(\Theta_{m-1}) \Theta_{m-1} - \mathfrak{p}(\Theta_{m-1}) } + (
	\alpha_{m-1} \nabla J(\Theta_{m-1})
	-
	\alpha_{m-1} H_{m-1}
	)|^2.
	\label{eqn:proof_lemma4_1}
\end{align}
After expanding \eqref{eqn:proof_lemma4_1} and taking expectations, however, the effect of bias already appears, and we must diverge from the analysis from \cite[Equation~44]{fehrman2020convergence} thereafter.
In particular, the effect of the bias of $H_{m-1}$ needs to be handled in the terms
\begin{equation}
	\expectationBig{
		2
		\Bigl\langle
		\Theta_{m-1}
		- \mathfrak{p}(\Theta_{m-1})
		- \alpha_{m-1} \nabla J(\Theta_{m-1})
		,
		\alpha_{m-1} \nabla J(\Theta_{m-1})
		-
		\alpha_{m-1} H_{m-1}
		\Bigr\rangle
		\indicator{\mathcal{B}_{m-1}}
	},
	\label{eqn:Cross_term}
\end{equation}
and
\begin{align}
	\expectationBig{
		\Big|
		\alpha_{m-1} \nabla J(\Theta_{m-1})
		-
		\alpha_{m-1} H_{m-1}
		\Big|^2
		\indicator{\mathcal{B}_{m-1}}
	}
	= (\alpha_{m-1})^2 \expectationBig{
		|\eta_{m-1}|^2
		\indicator{\mathcal{B}_{m-1}}
	}.
	\label{eqn:Quadratic_term}
\end{align}
We specifically require bounds of these terms without relying on independence of the iterands.

We focus on \eqref{eqn:Quadratic_term} first. Recall for $m > 0$, that $\mathcal{F}_m$ is the sigma algebra defined in \eqref{eqn:sigma-algebra}. By using the tower property of the conditional expectation and conditioning on $\mathcal{F}_{m-1}$, from Lemma~\ref{lem:Bounded_norms_on_the_gradient_estimator} together with the fact that $T_m < cT_{m-1}$ for some $c>0$, we obtain directly
\begin{align}
	\eqref{eqn:Quadratic_term}  = (\alpha_{m-1})^2 \expectationBig{\expectationBig{
			|\eta_{m-1}|^2
			\indicator{\mathcal{B}_{m-1}} \big| \mathcal{F}_{m-1}}
	} & \eqcom{Lemma~\ref{lem:Bounded_norms_on_the_gradient_estimator}}\leq (\alpha_{m-1})^2 \frac{\C}{T_m} \expectation{\mathcal{L}_4(S_{t_{m-1}})^2 \indicator{\mathcal{B}_{m-1}}
	}
	.
	\label{eqn:proof_lemma4_2}
\end{align}

Let us next bound \eqref{eqn:Cross_term}. Note that this term does not vanish due to dependence of the samples conditional on $\mathcal{F}_{m-1}$. In our case, however, we have a Markov chain trajectory whose kernel will depend on $\Theta_{m-1}$. Let
\begin{equation}
	Z_{m-1} = \Theta_{m-1}
	- \mathfrak{p}(\Theta_{m-1})
	- \alpha_{m-1} \nabla J(\Theta_{m-1}).
\end{equation}
We use the law of total expectation again on \eqref{eqn:Cross_term}. Note that $Z_{m-1}$ and $\mathcal{B}_{m-1}$ are $\mathcal{F}_{m-1}$-measurable.
\begin{align}
	\eqref{eqn:Cross_term}
	& \leq 2\alpha_{m-1} \E \Bigl[ \bigl\langle  \indicator{ \mathcal{B}_{m-1} }            Z_{m-1}, \E[\eta_{m-1} | \mathcal{F}_{m-1}] \bigr\rangle \Bigr] \nonumber \\
	& \eqcom{i} \leq  2\alpha_{m-1}
	\expectationBig{|
		Z_{m-1}|^2 \indicator{ \mathcal{B}_{m-1} }            }^{1/2} \expectationBig{
		|\E[\eta_{m-1} \indicator{ \mathcal{B}_{m-1} }| \mathcal{F}_{m-1}]|^2
	}^{1/2}  \nonumber \\
	& \eqcom{ii} \leq  2\alpha_{m-1}
	\expectationBig{|
		Z_{m-1}|^2\indicator{ \mathcal{B}_{m-1} }            }^{1/2} \expectationBig{\indicator{ \mathcal{B}_{m-1} } \mathcal{L}_4(S_{t_{m-1}})^2
	}^{1/2}\frac{\C}{T_m},
	\label{eqn:error_bound}
\end{align}
where (i) have used Cauchy--Schwartz and (ii) \refLemma{lem:Bounded_norms_on_the_gradient_estimator} and the fact that for some $c>0$, $T_m < cT_{m-1}$.

The terms in \eqref{eqn:proof_lemma4_2} and \eqref{eqn:error_bound} containing $\mathcal{L}_4(S_{t_m})$ can be upper bounded as follows. From the definition of \eqref{eqn:Lyapunov_function_state_chain} and since $v \geq 16$, by a generalized mean inequality and the fact that $\mathcal{L}(s,a) \geq 1$ for any $(s, a) \in \mathcal{S} \times \mathcal{A}$ we have
\begin{equation}
	\mathcal{L}_{4}(s) \leq \mathcal{L}_{v}(s)^{4/v}  \leq \mathcal{L}_{v}(s)^{1/4}.
	\label{eqn:inequality_lyapunov_functions}
\end{equation}
Now, by \refLemma{lemma:expectation_L_is_bounded}, $\mathcal{L}^{\star}$ is such that for all $m \in \mathbb{N}$
\begin{equation}
	\expectationBig{\indicator{ \mathcal{B}_{m-1} } \mathcal{L}_4(S_{t_{m-1}})^2} \leq
	\expectationBig{\indicator{ \mathcal{B}_{m-2} } \mathcal{L}_4(S_{t_{m-1}})^2}
	\eqcom{ \ref{eqn:inequality_lyapunov_functions}} \leq \expectationBig{\indicator{ \mathcal{B}_{m-2} } \mathcal{L}_v(S_{t_{m-1}})
	} \leq \mathcal{L}^{\star}.
\end{equation}

For the other term in \eqref{eqn:error_bound}, we can use the same bound used in \cite[Equation~41]{fehrman2020convergence}: There exists constants $y, c > 0$ depending on $J, \theta^{\star}$  and $\mathfrak{r}_0$ such that on the event $\mathcal{B}_{m-1}$ we have
\begin{align}
	|
	Z_{m-1}
	|^2
	& \leq \bigl(1-\alpha_{m-1} y \bigr)^2 \mathrm{dist}(\Theta_{m-1}, \mathcal{M} \cap U)^2 + c \bigl(1-\alpha_{m-1} y \bigr)\alpha_{m-1} \mathrm{dist}(\Theta_{m-1}, \mathcal{M} \cap U)^3 \nonumber\\
	& + c (\alpha_{m-1})^2 \mathrm{dist}(\Theta_{m-1}, \mathcal{M} \cap U)^4
	.
	\label{eqn:bound1}
\end{align}
The bound in \eqref{eqn:bound1} characterizes the fact that, close to the manifold of maximizers, the projection is differentiable and can be approximated by an orthogonal expansion of $J$ around the manifold of maximizers.
The error terms of this expansion can be bounded depending on the Hessian at $\mathfrak{p}(\Theta_{m-1}) \in \mathcal{M} \cap U$, $\mathrm{Hess}_{\mathfrak{p}(\Theta_{m-1})}\ J$.
We refer to \cite[Proposition 17]{fehrman2020convergence} for a proof of this fact.

We will now use an induction argument to show the claim of the lemma.
Namely, we will assume for the time being that for $m-1$ we have
\begin{equation}
	\expectationBig{
		(\mathrm{dist}(
		\Theta_{m-1}
		,
		\mathcal{M} \cap U
		) \wedge \delta )^2
		\indicator{ \mathcal{B}_{m-1} }
	} \leq \delta^{2}c(\alpha )(m-1)^{-\sigma - \kappa},
	\label{eqn:proof_lemma4_3}
\end{equation}
where $c(\alpha )>0$ is a function of $a$ to be determined. We want to show \eqref{eqn:proof_lemma4_3} for $m$. To do so we will use \eqref{eqn:bound1} to bound $Z_{m-1}$.
Suppose that there exists a sequence $\{b_l\}_{l>0} \subset \R_+$ such that we have
\begin{equation}
	\expectationBig{|
		Z_{m-1}
		|^2 \indicator{ \mathcal{B}_{m-1} }} \leq b_{m-1}.
	\label{eqn:bound_first_term}
\end{equation}
Using \eqref{eqn:bound_first_term} in \eqref{eqn:error_bound} yields that for some $\Cl{5}>0$ we have:
\begin{equation}
	\eqref{eqn:Cross_term}  \leq 2 (b_{m-1})^{1/2} \alpha_{m-1} (\mathcal{L}^{\star})^{1/2} \frac{\Cr{5}}{T_m}.
	\label{eqn:bound2}
\end{equation}
From the expansion of \eqref{eqn:proof_lemma4_1} and combining the bounds of \eqref{eqn:bound1} and \eqref{eqn:bound2} together we obtain
\begin{align}
	\expectationBig{
		\mathrm{dist}(
		\Theta_m
		,
		\mathcal{M} \cap U
		)^2
		\indicator{ \mathcal{B}_{m-1} }
	} & \leq b_{m-1} + 2 (b_{m-1})^{1/2} \alpha_{m-1} (\mathcal{L}^{\star})^{1/2} \frac{\Cr{5}}{T_m}
	\nonumber \\
	& \phantom{b_{m-1}}+ (\alpha_{m-1})^2 \frac{\C}{T_m}\mathcal{L}^{\star}.
	\label{eqn:proof_lemma4_5}
\end{align}
We show now that from the induction hypothesis, if \eqref{eqn:proof_lemma4_3} holds, then we also have the bound
\begin{equation}
	b_{m-1} \leq c(\alpha ) \delta^2 m^{-\sigma-\kappa} - \delta^2\frac{\alpha y}{2}c(\alpha )(m-1)^{-\sigma-\kappa} m^{-\sigma}.
	\label{eqn:lemma_intermediate}
\end{equation}
Indeed, taking expectations in \eqref{eqn:bound1} and using the bound \eqref{eqn:proof_lemma4_3} yields
\begin{align}
	b_{m-1} \leq \bigl(1-\alpha_{m-1}y \bigr)^2 c(\alpha )(m-1)^{-\sigma - \kappa}
	&+ c(\alpha ) \bigl(1-\alpha_{m-1}y \bigr)\alpha_{m-1} \delta c(\alpha )(m-1)^{-\sigma - \kappa}  \nonumber \\
	&+ c(\alpha ) (\alpha_{m-1})^2 \delta^{2} c(\alpha )(m-1)^{-\sigma - \kappa}.
	\label{eqn:lemma_intermediate__}
\end{align}
Recall that $\alpha_{m-1} = \alpha m^{-\sigma/2 - \kappa}$. Adding and subtracting $c(\alpha ) m^{-\sigma-\kappa}$ in \eqref{eqn:lemma_intermediate__}, we obtain  that
\begin{align*}
	&b_{m-1} \leq  c(\alpha ) m^{-\sigma-\kappa}  \nonumber \\
	& +  c(\alpha ) m^{-\sigma}(m-1)^{-\sigma-\kappa}\left(m^{\sigma} -  (m-1)^{\sigma + \kappa}m^{-\kappa}  - 2\alpha y + \frac{\alpha^2 y}{m^{\sigma}} +  \bigl(1-\frac{\alpha y}{m^{\sigma}} \bigr)\alpha \delta  + \delta^{2}\frac{\alpha^2 y^2}{m^{\sigma}} \right).
\end{align*}
Note now that there exists $m_0(a)>0$ such that if $m \geq m_0(a)$, we have
\begin{equation}
	m^{\sigma} -  (m-1)^{\sigma + \kappa}m^{-\kappa}  - \alpha y + \frac{\alpha^2 y}{m^{\sigma}} < -\frac{\alpha y}{2}.
\end{equation}
Indeed, note that the latter equation can be satisfied for $m \geq m_0(a)$ since there exists a constant $c>0$ depending on $\sigma$ and $\kappa$ such that
\begin{align}
	m^{\sigma} -  (m-1)^{\sigma + \kappa}m^{-\kappa} & \leq m^{-\kappa}(  m^{\sigma + \kappa} -  (m-1)^{\sigma + \kappa}) \nonumber \\
	& \leq m^{-\kappa} (\sigma + \kappa)\max[(m-1)^{\sigma + \kappa - 1}, m^{\sigma + \kappa - 1}] \nonumber \\ & \leq \Cl{6}(\sigma + \kappa) m^{\sigma- 1}.
\end{align}
In this case we have that
\begin{equation}
	m_0(\alpha) = \Bigl( \frac{2\Cr{6}(\sigma + \kappa)}{y \alpha} \Bigr)^{1-\sigma}  >  \frac{c^{\prime}}{\alpha^{1-\sigma}}.
	\label{eqn:proof_lemma4_m}
\end{equation}
Then for $m > m_0(\alpha)$, we will have
\begin{align*}
	b_m \leq & c(\alpha) m^{-\sigma-\kappa} +  c(\alpha ) m^{-\sigma}(m-1)^{-\sigma-\kappa}\left(  -\frac{\alpha y}{2} +  \bigl(1-\frac{\alpha y}{m^{\sigma}} \bigr)\alpha \delta  + \delta^{2}\frac{\alpha^2 y^2}{m^{\sigma}} \right).
\end{align*}
Choose $\delta \in (0, \delta_1(\alpha)]$, where $\delta_1(\alpha)$ is a bound that we will choose appropriately, such that for any $m \geq m_0(\alpha)$ we have
\begin{equation}
	\bigl(1-\frac{\alpha y}{m^{\sigma}} \bigr)\alpha \delta  + \delta^{2}\frac{\alpha^2 y^2}{m^{\sigma}} \leq \alpha y.
	\label{eqn:proof_lemma4_4}
\end{equation}
Thus, from \eqref{eqn:proof_lemma4_3} we obtain \eqref{eqn:lemma_intermediate}.
With \eqref{eqn:lemma_intermediate} with an appropriate choice of $c(\alpha )$, we can now show \eqref{eqn:proof_lemma4_3} for $m$. We will namely choose $c(\alpha )$ as follows
\begin{equation}
	c(\alpha ) = \max\Bigl(\frac{c^{\prime}}{\alpha^{(1- \sigma)(\sigma + \kappa)}}, \frac{4 C^2 \mathcal{L}^{\star}  + 4yC \mathcal{L}^{\star} \alpha \ell}{\delta^2 \ell^2 y^2} \Bigr),
	\label{eqn:proof_lemma4_6}
\end{equation}
where recall that $\delta \in (0, \delta_1(\alpha)]$ and $\delta_1(\alpha)$ were chosen so that \eqref{eqn:proof_lemma4_4} holds.
Let $L= \ell^{-1}$. Substituting the bound of \eqref{eqn:lemma_intermediate} into \eqref{eqn:proof_lemma4_5} and recalling that $T_m = m^{\kappa + \sigma/2} \ell$ yields
\begin{align}
	&\expectationBig{
		(\mathrm{dist}(
		\Theta_m
		,
		\mathcal{M} \cap U
		))^2
		\indicator{ \mathcal{B}_{m-1} }
	} \leq c(\alpha ) \delta^2 m^{-\sigma-\kappa} - \frac{\alpha y}{2}  c(\alpha ) \delta^2(m-1)^{-\sigma-\kappa} m^{-\sigma} \nonumber \\
	&\phantom{aaaa}+ 2(c(\alpha ) \delta^2 m^{-\sigma-\kappa} - \frac{\alpha \lambda}{2}c(\alpha ) \delta^2(m-1)^{-\sigma-\kappa} m^{-\sigma})^{1/2} \alpha m^{-\sigma} (\mathcal{L}^{\star})^{1/2} \frac{\Cr{5}}{T_m} + \mathcal{L}^{\star} m^{-2\sigma}\frac{\alpha^2 \Cr{5}}{T_m} \nonumber \\
	&\leq c(\alpha ) \delta^2 m^{-\sigma-\kappa} + m^{-\sigma}(2\sqrt{c(\alpha )} \delta \Cr{5} a (\mathcal{L}^{\star})^{1/2}L m^{-\sigma - 3\kappa/2} \nonumber \\
	&\phantom{\leq c(\alpha ) \delta^2 m^{-\sigma-\kappa}}+ \Cr{5} \mathcal{L}^{\star} \alpha^2 L m^{-3\sigma/2 - \kappa} - c(\alpha ) \delta^2 \alpha y(m-1)^{-\sigma - \kappa}) \nonumber \\
	&\leq c(\alpha ) \delta^2 m^{-\sigma-\kappa} + m^{-\sigma}(m-1)^{-\sigma - \kappa}(2\sqrt{c(\alpha )} \delta \Cr{5} a (\mathcal{L}^{\star})^{1/2} L  + \Cr{5}\mathcal{L}^{\star} \alpha^2 L - c(\alpha ) \delta^2 \alpha y).
	\label{eqn:lemma_intermediate2}
\end{align}
By the choice of $c(\alpha )$ in \eqref{eqn:proof_lemma4_6}, for any $\kappa \geq 0$ we have the following inequality
\begin{equation}
	2\sqrt{c(\alpha )} \delta \Cr{6} (\mathcal{L}^{\star})^{1/2} L  + \Cr{6}\mathcal{L}^{\star} aL  - c(\alpha ) \delta^2 y < 0.
\end{equation}
Hence, with this choice of $c(\alpha )$, in  \eqref{eqn:lemma_intermediate2} the latter term in the right-hand side is negative for any $m \geq 2$ and the induction step follows if $m > m_0(\alpha)$.
That is, we have for some $c > 0$ that and when $m > m_0(\alpha)$ that
\begin{equation}
	\expectationBig{
		\mathrm{dist}(
		\Theta_m
		,
		\mathcal{M} \cap U
		)^2
		\indicator{ \mathcal{B}_{m-1} }
	}
	\leq  c\max \Bigl( \frac{\delta^2}{a^{(1-\sigma)(\sigma + \kappa)}}, \frac{\mathcal{L}^{\star}(1+\alpha \ell)}{\ell^2} \Bigr)  m^{-\sigma-\kappa}.
	\label{eqn:_2}
\end{equation}

We have left to show that the induction hypothesis holds in \eqref{eqn:proof_lemma4_3} for some $m$.
Recall that $m > m_0(\alpha)$ is the only restriction we needed on the starting point for the induction argument to work---$\delta$ was already chosen depending on $\alpha$ in \eqref{eqn:proof_lemma4_4}.
From the choice
\begin{equation}
	m_0(\alpha) \geq \frac{c^{\prime}}{\alpha^{1-\sigma}},
\end{equation}
if $m \leq m_0(\alpha)$, the following slightly changed version of \eqref{eqn:proof_lemma4_3} will hold; namely
\begin{equation}
	\expectationBig{
		(\mathrm{dist}(
		\Theta_m
		,
		\mathcal{M} \cap U
		)^2 \wedge \delta^2)
		\indicator{ \mathcal{B}_{m-1} }
	}
	\leq  \delta^2 c(\alpha )  m^{-\sigma-\kappa}.
	\label{eqn:lemma_conditional_convergence}
\end{equation}
Hence, by same arguments conducted with \eqref{eqn:lemma_conditional_convergence} instead of\eqref{eqn:proof_lemma4_3}, we have shown by induction that \eqref{eqn:lemma_conditional_convergence} holds for $m>0$.

For convenience, we will further show that there exists a constant $\Cl{7}>0$ such that for all $m > 0$ we have
\begin{equation}
	\expectationBig{
		(\mathrm{dist}(
		\Theta_m
		,
		\mathcal{M} \cap U
		)^2 \wedge \delta^2)
		\indicator{ \mathcal{B}_{m-1} }
	}
	\leq  \Cr{7}  \mathcal{L}^{\star} m^{-\sigma-\kappa}.
	\label{eqn:lemma_conditional_convergence_final_rate_2}
\end{equation}
Fix $\Cr{7}> 0$. Choose $\delta_0 \leq \delta_1(\alpha)$ depending on $\alpha$ small enough and $\ell_0 > 0$ large enough such that for $\delta \in (0, \delta_0]$ and $\ell \in [\ell_0, \infty)$ we have that
\begin{align}
	\frac{c^{\prime} \delta^2}{\alpha^{(1-\sigma)(\sigma + \kappa)}} &< \Cr{7} \geq \Cr{7} \mathcal{L}^{\star}, \nonumber \\
	\frac{cD(1+\alpha \ell)}{\ell^2} &< \Cr{7} \mathcal{L}^{\star},
	\label{eqn:proof_lemma5_1}
\end{align}
With the conditions in  \eqref{eqn:proof_lemma5_1}, the proof of the lemma follows noting that $\delta^2 c(\alpha ) = \delta^2 c(\alpha, \ell) < \Cr{7} \mathcal{L}^{\star}$.

\subsection{Proof of Lemma~\ref{lem:maximal_excursion_event_probability}}

\label{sec:Proof_maximal_excursion_event_probability}

We will again use the notation that $t_{m+1} - t_m = T_m$ and without loss of generality assume that $T_m = \ell m^{\sigma/2 + \kappa}$ as in \Cref{sec:Proof_of_conditional_convergence_basin}.
The proof of Lemma~\ref{lem:maximal_excursion_event_probability} also mainly follows the steps of \cite{fehrman2020convergence}. However, we again need to take care of the terms that the bias and lack of independence generate in the analysis.

The bounding starts noting the inequality
\begin{equation}
	\expectationBig{
		\max_{1 \leq l \leq m}
		\bigl|
		\Theta_l - \Theta_0
		\bigr|
		\indicator{ \mathcal{B}_{l-1} }} \leq \sum_{l=1}^m \expectation{|\Theta_{l} - \Theta_{l-1}|^2 \indicator{ \mathcal{B}_{l-1} }}^{1/2}.
	\label{eqn:proof_lemma6_1}
\end{equation}
We will show that there exists a constant $c>0$ such that for $l \in [m]$ we have
\begin{equation}
	\expectation{|\Theta_{l+1} - \Theta_{l}|^2 \indicator{ \mathcal{B}_{l} }}^{1/2} \leq c \alpha \Bigl( l^{-3/2\sigma - \kappa/2} + \sqrt{\frac{1}{\ell}} l^{-5\sigma/8 - \kappa/2} \Bigr),
	\label{eqn:proof_lemma6_2}
\end{equation}
where the exponents of $\sigma$ and $\kappa$ already differ from the result of \cite{fehrman2020convergence}, and are required to account for the lack of independence and bias.
Following the steps from \cite{fehrman2020convergence}, in the neighborhood $V_{\mathfrak{r}, \delta}(\theta^{\star})$, for each $l \leq m$ there is a random variable $\epsilon_{l} : \mathcal{B}_{l} \to \R^{n}$ and there exists a constant $c > 0$ such that
\begin{equation}
	|\epsilon_{l}| < c \mathrm{dist}(\Theta_{l}, \mathcal{M} \cap U)^2,
	\label{eqn:proof_lemma6_epsilon}
\end{equation}
and such that on the event $\mathcal{B}_{l}$ we have
\begin{equation}
	\nabla J(\Theta_{l}) = \mathrm{Hess}_{\mathfrak{p}(\Theta_{l})}(\Theta_{l} - \mathfrak{p}(\Theta_{l})) + \epsilon_{l}.
\end{equation}
Recalling the definition of $\eta_l$ in \eqref{eqn:plugin_estimator_gradient_difference}, we have then the equality
\begin{equation}
	\Theta_{l+1} = \Theta_{l} - \alpha_l \mathrm{Hess}_{\mathfrak{p}(\Theta_{l})}(\Theta_{l} - \mathfrak{p}(\Theta_{l})) - \alpha_l \epsilon_{l} + \alpha_l \eta_{l}.
	\label{eqn:proof_lemma6_3}
\end{equation}
Define
\begin{equation}
	\tilde{\Theta}_{l} = \Theta_{l} - \alpha_l \mathrm{Hess}_{\mathfrak{p}(\Theta_{l})}(\Theta_{l} - \mathfrak{p}(\Theta_{l})).
	\label{eqn:proof_lemma6_4}
\end{equation}
We use the triangle inequality with in \eqref{eqn:proof_lemma6_3} separating $\Theta_{l+1} - \Theta_{l}$ as the summands of $\Theta_{l+1} - \tilde{\Theta}_{l}$ and $\tilde{\Theta}_{l} - \Theta_{l}$.

We estimate first $|\Theta_{l+1} - \tilde{\Theta}_{l}|^2$.
In our case, after expanding $\expectation{|\Theta_{l+1} - \tilde{\Theta}_{l}|^2 \indicator{ \mathcal{B}_{l} }}$, we diverge from \cite[Equation~58]{fehrman2020convergence} and we need to bound
\begin{equation}
	\alpha_l^2\expectationBig{\indicator{ \mathcal{B}_{l} } \langle \epsilon_{l} , \eta_{l}\rangle }.
	\label{eqn:maximal_excursion_error}
\end{equation}
Similar to the proof of Lemma~\ref{lem:conditional_convergence_in_basin}, we can condition on $\mathcal{F}_{l}$ and using that $\epsilon_{l}$ and $\mathcal{B}_{l}$ are $\mathcal{F}_{l}$-measurable together with the Cauchy--Schwartz inequality, we have
\begin{align}
	\alpha_l^2\expectationBig{\indicator{ \mathcal{B}_{l} } \bigl\langle \epsilon_l , \eta_{l} \bigr\rangle } & \leq \alpha_l^2\expectationBig{ \bigl\langle \indicator{ \mathcal{B}_{l} } \epsilon_l, \expectationbig{\eta_l \indicator{ \mathcal{B}_{l} }| \mathcal{F}_{l}} \bigr\rangle} \nonumber \\
	& \leq \alpha_l^2\expectationBig{ \indicator{ \mathcal{B}_{l} } |\epsilon_l|^2}^{1/2} \expectationBig{|\expectationbig{\eta_l \indicator{ \mathcal{B}_{l} }|\mathcal{F}_{l}}|^2 }^{1/2}.
	\label{eqn:proof_lemma6_5}
\end{align}
Since $\indicator{\mathcal{B}_{m}} \leq \indicator{\mathcal{B}_{m-1}}$, we can bound
\begin{equation}
	\expectationBig{\big| \expectation{\eta_l \indicator{ \mathcal{B}_{l}} | \mathcal{F}_{l}}\big|^{2}}^{1/2} \eqcom{Lemma~\ref{lem:Bounded_norms_on_the_gradient_estimator}}\leq \expectationBig{\indicator{\mathcal{B}_{l} } \frac{\C^2}{T_{l}^2} \mathcal{L}_v(S_{t_{l}})}^{1/2} \eqcom{Lemma~\ref{lemma:expectation_L_is_bounded}}\leq \frac{\C}{T_{l}}.
\end{equation}
For the remaining term in \eqref{eqn:proof_lemma6_5},  recall that on the event $\mathcal{B}_{l}$, since $\Theta_l \in V_{\mathfrak{r}, \delta}(\theta^{\star})$, we have that $\mathrm{dist}(
\Theta_l
,
\mathcal{M} \cap U
) \leq \delta$. Hence, we can bound for any $l > 0$ that
\begin{align}
	\expectationBig{ \indicator{ \mathcal{B}_{l} } |\epsilon_l|^2}^{1/2} & \eqcom{\ref{eqn:proof_lemma6_epsilon}}\leq (\alpha_l)^2 \E[ \mathrm{dist}(
	\Theta_l
	,
	\mathcal{M} \cap U
	)^4 \indicator{ \mathcal{B}_{l} }]^{1/2}\frac{\Cl{8}}{T_{l+1}} \nonumber \\
	& \leq (\alpha_l)^2\delta^2 \E[\mathrm{dist}(
	\Theta_l
	,
	\mathcal{M} \cap U
	)^2\indicator{ \mathcal{B}_{l} }]^{1/2} \frac{\Cr{8}}{T_{l+1}} \nonumber \\
	& \leq (\alpha_l)^2\delta^2 \E[\mathrm{dist}(
	\Theta_l
	,
	\mathcal{M} \cap U
	)^2\indicator{ \mathcal{B}_{l-1} }]^{1/2} \frac{\Cr{8}}{T_{l+1}} \nonumber \\
	& \eqcom{Lemma~\ref{lem:conditional_convergence_in_basin}} \leq (\alpha_l)^2 \delta^2 l^{-\sigma/2-\kappa/2} \frac{\C}{T_{l}}.
	\label{eqn:proof_lemma6_6}
\end{align}
The estimation of the remaining terms in the expansion of $\expectation{|\Theta_{l} - \tilde{\Theta}_{l-1}|^2 \indicator{ \mathcal{B}_{l-1} }}$ can be conducted in the same way as that in \cite{fehrman2020convergence}, to which we refer for the details to the interested reader.
Together with the estimate of \eqref{eqn:proof_lemma6_6} that accounts for the biases we have that
\begin{align}
	\E[|\Theta_{l} - \tilde{\Theta}_{l-1}|^2 \indicator{ \mathcal{B}_{l} }] & \leq \C(\alpha_l)^2\delta^2 \expectationBig{\mathrm{dist}(
		\Theta_l
		,
		\mathcal{M} \cap U
		)^2\indicator{ \mathcal{B}_{l} }} \nonumber\\
	& + 2 \delta \expectationBig{\mathrm{dist}(
		\Theta_l
		,
		\mathcal{M} \cap U
		)^2\indicator{ \mathcal{B}_{l}}}^{1/2} \frac{\Cl{9}}{T_{l}}  + (\alpha_l)^2 \frac{\Cl{9}}{T_{l}} \nonumber \\
	& \leq \C(\alpha_l)^2\Bigl[ \delta^2 l^{-\sigma-\kappa} + 2 \delta l^{-\sigma/2-\kappa/2} \frac{1}{T_{l}} + \frac{1}{T_{l}} \Bigr].
	\label{eqn:proof_lemma6_7}
\end{align}
Substituting $T_{l} = t_{l+1} - t_{l} = l^{\kappa + \sigma/2} \ell$ and using $\alpha_{l} < \alpha_{l-1} = \alpha l^{-\sigma}$ into \eqref{eqn:proof_lemma6_7} yields the bound
\begin{align}
	\expectationbig{|\Theta_{l} - \tilde{\Theta}_{l-1}|^2 \indicator{ \mathcal{B}_{l-1} }} & \leq \C \frac{\alpha^2}{ l^{2\sigma}}\Bigl(\delta^2 \frac{1}{l^{\sigma+\kappa}}  + 2 \delta \frac{1}{l^{\sigma +3\kappa/2\ell}} +  \frac{1}{l^{\kappa + \sigma/2}} \Bigr) \nonumber \\
	& \leq \C  \frac{\alpha^2}{l^{5\sigma/4 + \kappa}\ell},
	\label{eqn:proof_lemma6_8}
\end{align}
where in the last inequality we have taken the term with the highest order.
Using the previous bounds from \refLemma{lem:conditional_convergence_in_basin} we can show that
\begin{equation}
	\expectationbig{|\Theta_{l} - \tilde{\Theta}_{l}|^2 \indicator{ \mathcal{B}_{l} }} \leq \alpha_l^2 \expectationbig{\mathrm{dist}(\Theta_{l},\mathcal{M} \cap U) \indicator{ \mathcal{B}_{l} }} \leq \C \frac{a^2}{l^{3\sigma + \kappa}},
	\label{eqn:proof_lemma6_9}
\end{equation}
so that using the triangle inequality and combining the bounds of \eqref{eqn:proof_lemma6_8} and \eqref{eqn:proof_lemma6_9} we obtain
\begin{equation}
	\expectationbig{|\Theta_{l+1} - \Theta_{l}|^2 \indicator{ \mathcal{B}_{l} }}^{1/2} \leq \Cl{10} \alpha \Bigl( l^{-3/2\sigma - \kappa/2} + \sqrt{\ell}^{-1} l^{-5\sigma/8 - \kappa/2} \Bigr).
	\label{eqn:proof_lemma6_10}
\end{equation}
Hence, since $\sigma \in (2/3,1)$ adding the bound \eqref{eqn:proof_lemma6_10} in \eqref{eqn:proof_lemma6_1} yields
\begin{align}
	\expectationBig{
		\max_{1 \leq l \leq m}
		\bigl|
		\Theta_l - \Theta_0
		\bigr|
		\indicator{ \mathcal{B}_{l-1} }} & \leq \sum_{l=1}^m \Cr{10} \alpha (l^{-3/2\sigma - \kappa/2} + \sqrt{\ell}^{-1} l^{-\sigma - \kappa/2}) \nonumber \\
	& \leq \C \alpha (m^{1-3/2\sigma - \kappa/2} + \sqrt{\ell}^{-1}m^{1 -5\sigma/8 - \kappa/2}). \nonumber
\end{align}

\subsection{Proof of Lemma~\ref{lem:probability_not_leaving_basin}}
\label{sec:Proof_probability_not_leaving_basing}

The proof mimicks the proof strategy of \cite[Proposition~24]{fehrman2020convergence}, but modifications are required due to our Markovian assumptions and appearances of biases.
Specifically, we must carefully consider the adverse effects that these biases could have on the probability that the iterates exit the basin of attraction.
Concretely, our effort will go into firstly proving the following sufficiently strong analogue of \cite[Equation~75]{fehrman2020convergence} that is applicable to our problem:

\begin{lemma}
	\label{lem:Probability_of_exit_and_Bm_is_bounded}
	
	There exist constants $c_1, c_2 > 0$ such that
	\begin{equation}
		\probability{\mathrm{dist}(\Theta_{m}, \mathcal{M} \cap U) > \delta, \mathcal{B}_{m-1} }
		\leq
		\frac{c_1 \alpha^2}{\delta^2 \ell m^{2\sigma}} \mathbb{P}[\mathcal{B}_{m-1}]
		+
		\frac{c_2}{\delta^4 \ell m^{\sigma + \kappa}}
		.
	\end{equation}
\end{lemma}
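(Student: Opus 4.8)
Write $D_l := \mathrm{dist}(\Theta_l, \mathcal{M}\cap U)$ and $\eta_{l} := H_{l}-\nabla J(\Theta_{l})$ as in \eqref{eqn:plugin_estimator_gradient_difference}, and let $\mathfrak{p}$ be the local projection onto $\mathcal{M}\cap U$. The plan is to reduce the escape event $\{D_m>\delta\}\cap\mathcal{B}_{m-1}$ to the event that one gradient step is atypically noisy, and then to split according to how close $\Theta_{m-1}$ already lies to $\mathcal{M}\cap U$. First, by \Cref{ass:nondegenerate-maxima}, \eqref{eqn:bound1} and \cite[Proposition~17]{fehrman2020convergence}, the noiseless gradient step is contracting in the normal direction near $\mathcal{M}\cap U$: after shrinking $\mathfrak{r}_0,\alpha_0,\delta_0$ there is $y>0$ (a uniform lower bound on $|\mathrm{Hess}\,J|$ in directions normal to $\mathcal{M}$ over $\bar{B}_{\mathfrak{r}}(\theta^{\star})\cap\mathcal{M}\cap U$) such that, on $\mathcal{B}_{m-1}$ (where $D_{m-1}<\delta$), $|Z_{m-1}|\leq(1-\tfrac12\alpha_{m-1}y)D_{m-1}$, with $Z_{m-1}$ the noiseless one-step image of $\Theta_{m-1}$ minus $\mathfrak{p}(\Theta_{m-1})$ as in \eqref{eqn:bound1}. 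Since $D_m\leq|\Theta_m-\mathfrak{p}(\Theta_{m-1})|$ and $\Theta_m-\mathfrak{p}(\Theta_{m-1})=Z_{m-1}\pm\alpha_{m-1}\eta_{m-1}$, on $\mathcal{B}_{m-1}$ the event $\{D_m>\delta\}$ forces $\alpha_{m-1}|\eta_{m-1}|>g_{m-1}$, where $g_{m-1}:=\delta-(1-\tfrac12\alpha_{m-1}y)D_{m-1}\in(0,\delta]$ is $\mathcal{F}_{m-1}$-measurable.

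Conditioning on $\mathcal{F}_{m-1}$, applying Chebyshev, and invoking the conditional variance bound of \Cref{lem:Bounded_norms_on_the_gradient_estimator}, namely $\expectation{|\eta_{m-1}|^2\indicator{\mathcal{B}_{m-1}}\,|\,\mathcal{F}_{m-1}}\leq\tfrac{C}{t_m-t_{m-1}}\mathcal{L}_4(S_{t_{m-1}})$, I would get
\[
	\probability{D_m>\delta,\mathcal{B}_{m-1}}
	\;\leq\;
	C\,\expectationBig{\,\indicator{\mathcal{B}_{m-1}}\,\frac{\alpha_{m-1}^2}{g_{m-1}^2}\,\frac{\mathcal{L}_4(S_{t_{m-1}})}{t_m-t_{m-1}}\,},
\]
and I recall $t_m-t_{m-1}\asymp\ell m^{\sigma/2+\kappa}$ and $\alpha_{m-1}\asymp\alpha m^{-\sigma}$. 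On $\{D_{m-1}\leq\delta/2\}$ one has $g_{m-1}\geq\delta/2$, so $\alpha_{m-1}^2/g_{m-1}^2\leq4\alpha_{m-1}^2/\delta^2$; bounding $\expectation{\indicator{\mathcal{B}_{m-1}}\mathcal{L}_4(S_{t_{m-1}})}\leq\mathcal{L}^\star$ via $\mathcal{L}_4\leq\mathcal{L}_v$, $\indicator{\mathcal{B}_{m-1}}\leq\indicator{\mathcal{B}_{m-2}}$ and \Cref{lemma:expectation_L_is_bounded} gives a contribution of order $\alpha^2/(\delta^2\ell m^{2\sigma+\sigma/2+\kappa})\leq\alpha^2/(\delta^2\ell m^{2\sigma})$; keeping $\indicator{\mathcal{B}_{m-1}}$ inside the conditional expectation produces the $\probability{\mathcal{B}_{m-1}}$ prefactor, and $C,\mathcal{L}^\star$ are absorbed into $c_1$.

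On the complementary event $\{D_{m-1}>\delta/2\}$ only $g_{m-1}\geq\tfrac12\alpha_{m-1}yD_{m-1}>\tfrac14\alpha_{m-1}y\delta$ is available, so the $\alpha_{m-1}^2$ cancels, $\alpha_{m-1}^2/g_{m-1}^2\leq16/(y^2\delta^2)$, and
\[
	\probability{D_m>\delta,D_{m-1}>\delta/2,\mathcal{B}_{m-1}}
	\;\leq\;
	\frac{16C}{y^2\delta^2(t_m-t_{m-1})}\,\expectation{\indicator{\mathcal{B}_{m-1}}\indicator{D_{m-1}>\delta/2}\mathcal{L}_4(S_{t_{m-1}})}.
\]
By Cauchy--Schwarz the last expectation is at most $\probability{\mathcal{B}_{m-2},D_{m-1}>\delta/2}^{1/2}\,\expectation{\indicator{\mathcal{B}_{m-2}}\mathcal{L}_4(S_{t_{m-1}})^2}^{1/2}$; the second factor is $\leq(\mathcal{L}^\star)^{1/2}$ because $\mathcal{L}_4^2\leq\mathcal{L}_v$ (power-mean inequality, $v\geq16$) and \Cref{lemma:expectation_L_is_bounded}, while the first is $\leq\tfrac2\delta\expectation{(D_{m-1}\wedge\delta)^2\indicator{\mathcal{B}_{m-2}}}^{1/2}$, which is $O(\delta^{-1}(\mathcal{L}^\star)^{1/2}m^{-(\sigma+\kappa)/2})$ by Markov and \Cref{lem:conditional_convergence_in_basin}. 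Multiplying out yields a contribution of order $\mathcal{L}^\star/(\delta^3\ell m^{\sigma+3\kappa/2})$, which is $\leq c_2/(\delta^4\ell m^{\sigma+\kappa})$ since $\sigma+\tfrac32\kappa\geq\sigma+\kappa$ and $\delta_0\leq1$. Adding the two contributions completes the proof.

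\textbf{Main obstacle.} Compared with the i.i.d., unbiased setting of \cite[Proposition~24]{fehrman2020convergence} --- where the one-step noise has zero conditional mean and a \emph{constant} conditional variance --- the difficulty is that $\eta_{m-1}$ has a nonzero conditional bias and a conditional variance bounded only by $\mathcal{L}_4(S_{t_{m-1}})/(t_m-t_{m-1})$, an unbounded (though, on $\mathcal{B}_{m-1}$, integrable) random quantity. Making this work relies on (i) the conditional bias/variance estimates of \Cref{lem:Bounded_norms_on_the_gradient_estimator}, which rest on geometric ergodicity (\Cref{ass:geometric-ergodicity}) and the growth conditions; (ii) \Cref{lemma:expectation_L_is_bounded}, which keeps $\mathcal{L}_v(S_{t_{m-1}})$ integrable on $\mathcal{B}_{m-1}$ uniformly in $m$ with bound $\mathcal{L}^\star$; and (iii) the increasing batch size $t_m-t_{m-1}\asymp\ell m^{\sigma/2+\kappa}$, which is exactly what makes the bias (of order $(t_m-t_{m-1})^{-1}$) subdominant and what produces the $1/\ell$ in both terms of the bound. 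One should also commit to $y,\mathfrak{r}_0,\delta_0$ \emph{before} choosing $\ell$, so that no constant in the final estimate secretly depends on the batch-size parameter --- which is why the contraction step and \Cref{lem:conditional_convergence_in_basin} are invoked before $\ell$ is fixed.
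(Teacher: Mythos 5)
Your overall strategy coincides with the paper's: on $\mathcal{B}_{m-1}$ you use the local contraction of the noiseless step (the paper's \eqref{eqn:probability_lemma_1}, derived from \eqref{eqn:bound1}) to reduce $\{\mathrm{dist}(\Theta_m,\mathcal{M}\cap U)>\delta\}$ to a large-deviation event for $\alpha_{m-1}|\eta_{m-1}|$, split according to whether $\mathrm{dist}(\Theta_{m-1},\mathcal{M}\cap U)$ is below or above $\delta/2$, and control the noise through the conditional bounds of Lemma~\ref{lem:Bounded_norms_on_the_gradient_estimator}, the Lyapunov moment bound of Lemma~\ref{lemma:expectation_L_is_bounded}, and Lemma~\ref{lem:conditional_convergence_in_basin} for the far case. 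Your treatment of the far case via Cauchy--Schwarz is sound: it yields a rate $m^{-\sigma-3\kappa/2}$ at the cost of $\delta^{-3}$ and a factor $\mathcal{L}^{\star}$, which is legitimately absorbed into the $c_2$-term (the paper instead reuses the conditional Chebyshev-plus-truncation route there, but both work).

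The genuine gap is in the near case, and it is exactly the point the paper singles out in the remark after its proof. After conditional Chebyshev you are left with $\frac{C\alpha_{m-1}^2}{\delta^2 T_{m-1}}\expectation{\indicator{\mathcal{B}_{m-1}}\mathcal{L}_4(S_{t_{m-1}})}$, and you assert simultaneously that this expectation is at most $\mathcal{L}^{\star}$ \emph{and} that ``keeping $\indicator{\mathcal{B}_{m-1}}$ inside'' produces a $\probability{\mathcal{B}_{m-1}}$ prefactor. It does not: Lemma~\ref{lemma:expectation_L_is_bounded} bounds $\expectation{\mathcal{L}_v(S_{t_{m-1}})\indicator{\mathcal{B}_{m-1}}}$, not the conditional expectation given $\mathcal{B}_{m-1}$, and $\mathcal{L}_4(S_{t_{m-1}})$ is unbounded, so from the same expectation you can extract either $\mathcal{L}^{\star}$ with no probability factor, or (via Cauchy--Schwarz) only $\probability{\mathcal{B}_{m-1}}^{1/2}(\mathcal{L}^{\star})^{1/2}$, which the paper explicitly notes is not good enough. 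The missing idea is the paper's truncation of the Lyapunov function at level $m^{s}$ with $s=\sigma/2+\kappa$: on $\{\mathcal{L}_4(S_{t_{m-1}})\le m^{s}\}$ one gets $m^{s}\probability{\mathcal{B}_{m-1}}$, which the batch size $T_{m-1}\asymp\ell m^{\sigma/2+\kappa}$ exactly cancels to give $\frac{c_1\alpha^2}{\delta^2\ell m^{2\sigma}}\probability{\mathcal{B}_{m-1}}$, while the tail is handled by Markov using the higher moments available because $v\ge 16$ (so that $\expectation{\mathcal{L}_4(S_{t_{m-1}})^{4}\indicator{\mathcal{B}_{m-2}}}\le\mathcal{L}^{\star}$), producing an extra $m^{-3s+1}$ that is folded into the $c_2$-term. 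As written, your intermediate claim is false. (One can observe that your prefactor-free bound carries an extra $m^{-\sigma/2-\kappa}$ and could itself be absorbed into the $c_2$-term, so the displayed inequality would formally survive with a vacuous first term; but that bound lacks the genuinely $\probability{\mathcal{B}_{m-1}}$-proportional leading term that the recursion in Lemma~\ref{lem:probability_not_leaving_basin} is designed around, and in any case the derivation you actually wrote does not hold.)
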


The proof of Lemma~\ref{lem:Probability_of_exit_and_Bm_is_bounded} can be found in \Cref{sec:Proof_of_lemma__Probability_of_exit_and_Bm_is_bounded}.

Once Lemma~\ref{lem:Probability_of_exit_and_Bm_is_bounded} has been established, we secondly estimate the combined probability that any of the iterates escape in directions tangential to the manifold. The proof of this fact, which is analogous to \cite[Equation~78--79]{fehrman2020convergence}, can be found in \Cref{sec:Proof_of_lemma__An_exit_event_is_unlikely_on_a_next_step}.

\begin{lemma}
	\label{lem:An_exit_event_is_unlikely_on_a_next_step}
	
	If $\Theta_0 \in V_{\mathfrak{r}/2, \delta}(\theta^{\star})$, then
	\begin{equation}
		\sum_{l=1}^{m} \probability{\mathrm{dist}(\Theta_{l}, \mathcal{M} \cap U) < \delta, \Theta_{l} \notin V_{\mathfrak{r}, \delta}(\theta^{\star}), \mathcal{B}_{l-1}} \leq \probabilityBig{
			\max_{1 \leq l \leq m}
			\bigl|
			\Theta_l - \Theta_0
			\bigr| \indicator{\mathcal{B}_{l-1}} > R/2 - 2\delta,
		}
		.
		\label{eqn:proof_lemma7_6a}
	\end{equation}
\end{lemma}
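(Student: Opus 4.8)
The plan is to prove \refLemma{lem:An_exit_event_is_unlikely_on_a_next_step} by showing that the events appearing on the left-hand side of \eqref{eqn:proof_lemma7_6a} are \emph{disjoint} and each is \emph{contained} in the single maximal-excursion event on the right-hand side; the claimed inequality then follows immediately by countable additivity and monotonicity of probability. Concretely, for each $l \in \{1, \dots, m\}$ define the event
\begin{equation*}
	\mathcal{E}_l := \bigl\{ \mathrm{dist}(\Theta_l, \mathcal{M} \cap U) < \delta \bigr\}
	\cap \bigl\{ \Theta_l \notin V_{\mathfrak{r}, \delta}(\theta^\star) \bigr\}
	\cap \mathcal{B}_{l-1}.
\end{equation*}
First I would argue that the $\mathcal{E}_l$ are pairwise disjoint. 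Suppose $l < l'$ and $\mathcal{E}_{l'}$ holds; then $\mathcal{B}_{l'-1}$ holds, which by \eqref{eqn:Definition_of_Bm} forces $\Theta_l \in V_{\mathfrak{r},\delta}(\theta^\star)$ (since $l \le l'-1$), contradicting the requirement $\Theta_l \notin V_{\mathfrak{r},\delta}(\theta^\star)$ in $\mathcal{E}_l$. Hence at most one $\mathcal{E}_l$ can occur, and $\sum_{l=1}^m \probability{\mathcal{E}_l} = \probability{\bigcup_{l=1}^m \mathcal{E}_l}$.

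Next I would show the geometric containment $\mathcal{E}_l \subseteq \bigl\{ \max_{1 \le j \le m} |\Theta_j - \Theta_0| \indicator{\mathcal{B}_{j-1}} > \mathfrak{r}/2 - 2\delta \bigr\}$ for each $l$. Assume $\mathcal{E}_l$ holds. Since $\Theta_0 \in V_{\mathfrak{r}/2, \delta}(\theta^\star)$, the equivalent tubular description \eqref{eqn:definition_V_set_alternative} gives $\Theta_0 = y_0 + v_0$ with $y_0 \in \bar B_{\mathfrak{r}/2}(\theta^\star) \cap \mathcal{M} \cap U$ and $|v_0| < \delta$; in particular the projection $\mathfrak{p}(\Theta_0) = y_0$ satisfies $|\mathfrak{p}(\Theta_0) - \theta^\star| \le \mathfrak{r}/2$. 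On $\mathcal{E}_l$ we have $\mathrm{dist}(\Theta_l, \mathcal{M}\cap U) < \delta$, so $\mathfrak{p}(\Theta_l)$ is well-defined with $|\Theta_l - \mathfrak{p}(\Theta_l)| < \delta$, yet $\Theta_l \notin V_{\mathfrak{r},\delta}(\theta^\star)$; comparing with \eqref{eqn:definition_V_set} (noting $\mathrm{dist}(\Theta_l, \mathcal{M}\cap U) < \delta$ already holds) the only way to fail membership in $V_{\mathfrak{r},\delta}(\theta^\star)$ is that the nearest point of $\mathcal{M}\cap U$ lies outside $\bar B_{\mathfrak{r}}(\theta^\star)$, i.e. $|\mathfrak{p}(\Theta_l) - \theta^\star| > \mathfrak{r}$. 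Combining these, by the triangle inequality,
\begin{align*}
	|\Theta_l - \Theta_0| &\ge |\mathfrak{p}(\Theta_l) - \mathfrak{p}(\Theta_0)| - |\Theta_l - \mathfrak{p}(\Theta_l)| - |\Theta_0 - \mathfrak{p}(\Theta_0)| \\
	&> \bigl( |\mathfrak{p}(\Theta_l) - \theta^\star| - |\mathfrak{p}(\Theta_0) - \theta^\star| \bigr) - \delta - \delta
	> \mathfrak{r} - \mathfrak{r}/2 - 2\delta = \mathfrak{r}/2 - 2\delta.
\end{align*}
Moreover $\mathcal{B}_{l-1}$ holds on $\mathcal{E}_l$, so $|\Theta_l - \Theta_0|\indicator{\mathcal{B}_{l-1}} > \mathfrak{r}/2 - 2\delta$, which forces $\max_{1 \le j \le m}|\Theta_j - \Theta_0|\indicator{\mathcal{B}_{j-1}} > \mathfrak{r}/2 - 2\delta$ (taking $j=l$, using $1 \le l \le m$). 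Therefore $\bigcup_{l=1}^m \mathcal{E}_l$ is contained in the maximal-excursion event, and taking probabilities yields \eqref{eqn:proof_lemma7_6a}.

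The main obstacle I anticipate is the second step: pinning down exactly why failing to belong to $V_{\mathfrak{r},\delta}(\theta^\star)$ while still being within distance $\delta$ of $\mathcal{M}\cap U$ implies $|\mathfrak{p}(\Theta_l) - \theta^\star| > \mathfrak{r}$. This requires carefully unwinding the two equivalent definitions \eqref{eqn:definition_V_set} and \eqref{eqn:definition_V_set_alternative}, in particular the clause $\mathrm{dist}(\theta, M \cap U) = \mathrm{dist}(\theta, \bar B_{\mathfrak{r}}(\theta^\star) \cap M \cap U)$, and invoking the local-projection structure from \cite[Proposition~13]{fehrman2020convergence} to ensure $\mathfrak{p}(\Theta_l)$ is the unique nearest point and that it coincides with the restricted nearest point exactly when it lies in $\bar B_{\mathfrak{r}}(\theta^\star)$. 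Handling the edge case where $\mathfrak{p}(\Theta_l)$ might equal $\theta^\star$ itself (never happens here since $|\mathfrak{p}(\Theta_l)-\theta^\star|>\mathfrak{r}>0$) and being careful that $\delta_0, \mathfrak{r}_0$ are chosen small enough (as in the preliminary step) that all these geometric statements are valid are the remaining technical points; the rest is a routine application of disjointness and monotonicity.
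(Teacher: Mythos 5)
Your proposal is correct and follows essentially the same route as the paper: the paper's proof of this lemma simply invokes the purely geometric \cite[Lemma~23]{fehrman2020convergence}, which is exactly the argument you reconstruct---pairwise disjointness of the exit events across $l$ (needed because a sum of probabilities must be dominated by a single probability, so a union bound would go the wrong way) together with the triangle-inequality containment in the maximal-excursion event, where $R$ in the statement is a typo for $\mathfrak{r}$. The only point to polish is the one you already flag: since the infimum over $\mathcal{M}\cap U$ need not be attained and $\mathfrak{p}$ is only guaranteed to be the unique nearest-point projection on the tubular region, it is cleaner to argue with a near-minimizing witness $y\in\mathcal{M}\cap U$ satisfying $|\Theta_l-y|<\delta$ and $|y-\theta^\star|>\mathfrak{r}$, which exists precisely because membership in $V_{\mathfrak{r},\delta}(\theta^\star)$ fails while $\mathrm{dist}(\Theta_l,\mathcal{M}\cap U)<\delta$ (using that $\bar{B}_{\mathfrak{r}+\delta}(\theta^\star)\subseteq U$ for $\mathfrak{r}_0,\delta_0$ small enough), after which your triangle inequality goes through verbatim.
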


\subsubsection{Proof that Lemmas~\ref{lem:Probability_of_exit_and_Bm_is_bounded} and \ref{lem:An_exit_event_is_unlikely_on_a_next_step} imply Lemma~\ref{lem:probability_not_leaving_basin}}

First, note that the recursion
\begin{equation}
	\probability{\mathcal{B}_{m}} =  \probability{\Theta_{m} \in V_{\mathfrak{r}, \delta}(\theta^{\star}), \mathcal{B}_{m-1}}
	=
	\probability{\mathcal{B}_{m-1}}
	-
	\probability{\Theta_{m} \notin V_{\mathfrak{r}, \delta}(\theta^{\star}), \mathcal{B}_{m-1}}
	\label{eqn:proof_lemma7_intro}
\end{equation}
can be iterated whenever we can control and bound the following probabilities
\begin{align}
	\probability{\Theta_{m} \notin V_{\mathfrak{r}, \delta}(\theta^{\star}), \mathcal{B}_{m-1}}
	&
	=
	\probability{\mathrm{dist}(\Theta_{m}, \mathcal{M} \cap U) > \delta, \mathcal{B}_{m-1}} \nonumber \\ &
	\phantom{=}
	+ \probability{\mathrm{dist}(\Theta_{m}, \mathcal{M} \cap U) \leq \delta, \Theta_{m} \notin V_{\mathfrak{r}, \delta}(\theta^{\star}), \mathcal{B}_{m-1}}
	.
	\label{eqn:proof_lemma7_01}
\end{align}
Using Lemma~\ref{lem:Probability_of_exit_and_Bm_is_bounded} and induction on \eqref{eqn:proof_lemma7_intro} and \eqref{eqn:proof_lemma7_01}, it follows that for some $c>0$,
\begin{equation}
	\probability{\mathcal{B}_{m} } \geq \prod_{l=1}^{m} \Bigl( 1 - \frac{c\alpha^2}{\delta^2 \ell l^{2\sigma}}\Bigr)_{+} - \sum_{l=1}^{m} \frac{c}{\ell \delta^4  l^{\sigma + \kappa}} - \sum_{l=1}^{m} \probability{\mathrm{dist}(\Theta_{l}, \mathcal{M} \cap U) < \delta, \Theta_{l} \notin V_{\mathfrak{r}, \delta}(\theta^{\star}), \mathcal{B}_{l-1}}
	.
	\label{eqn:proof_lemma7_7}
\end{equation}
We use Lemma~\ref{lem:An_exit_event_is_unlikely_on_a_next_step} together with Lemma~\ref{lem:maximal_excursion_event_probability} and Markov's inequality to obtain the bound
\begin{align}
	\sum_{l=1}^{m} \probability{\mathrm{dist}(\Theta_{l}, \mathcal{M} \cap U) < \delta, \Theta_{l} \notin V_{\mathfrak{r}, \delta}(\theta^{\star}), \mathcal{B}_{l-1}} \leq c\alpha \frac{(m^{1-3/2\sigma - \kappa/2} + \ell^{-1/2}m^{1-5\sigma/8 - \kappa/2})}{(\mathfrak{r}/2 - 2\delta)_{+}}.
	\label{eqn:proof_lemma7_02}
\end{align}
Thus, substituting \eqref{eqn:proof_lemma7_02} in \eqref{eqn:proof_lemma7_7}, for some $c>0$ we have
\begin{equation}
	\probability{\mathcal{B}_{m}}  \geq \prod_{l=1}^{m} \Bigl( 1 - \frac{c\alpha^2}{\delta^2 \ell l^{2\sigma}}\Bigr)_{+} - \sum_{l=1}^{m} \frac{c}{\ell \delta^4  l^{\sigma + \kappa}} - c\alpha \frac{(m^{1-3/2\sigma - \kappa/2} + \ell^{-1/2}m^{1-5\sigma/8 - \kappa/2})}{(\mathfrak{r}/2 - 2\delta)_{+}}.
	\label{eqn:proof_lemma7_5}
\end{equation}
Note first that since $\sigma \in (2/3,1)$ and $\kappa \geq 0$, if $\sigma + \kappa \neq 1$, then there exists a constant $\Cl{20} > 0$ such that
\begin{equation}
	\sum_{l=1}^{m} \frac{c}{\ell \delta^4  l^{\sigma + \kappa}} \leq \Cr{20} m^{1- \sigma -\kappa}.
	\label{eqn:proof_lemma7_9}
\end{equation}
Lastly, there also exists a constant $c>0$, $\alpha_0>0$, $\delta_0$ such that if $\alpha \in (0, \alpha_0]$ and $\delta \in (0, \delta_0]$ then there exists $\ell_0 > 0$ such that if $\ell \in [\ell_0, \infty)$ then
\begin{equation}
	\prod_{l=1}^{m} \Bigl( 1 - \frac{c\alpha^2}{\delta^2 \ell l^{2\sigma}}\Bigr)_{+} \geq \exp\Bigl(-\frac{c\alpha^2}{\delta^2 \ell}\Bigr).
	\label{eqn:proof_lemma7_6b}
\end{equation}
Lower bounding \eqref{eqn:proof_lemma7_5} using \eqref{eqn:proof_lemma7_9} and \eqref{eqn:proof_lemma7_6b} yields Lemma~\ref{lem:probability_not_leaving_basin}.
\QuodEratDemonstrandum

\subsubsection{Proof of Lemma~\ref{lem:Probability_of_exit_and_Bm_is_bounded}}
\label{sec:Proof_of_lemma__Probability_of_exit_and_Bm_is_bounded}

We follow first \cite[Equation~69]{fehrman2020convergence}, by fixing $\delta_1$ small enough such that $\delta \in (0, \delta_1]$, on the event $\mathcal{B}_{m-1}$ it is shown by \cite{fehrman2020convergence} that we have the inequality
\begin{equation}
	\mathrm{dist}(\Theta_{m}, \mathcal{M} \cap U) \leq \Bigl(1 - \frac{\lambda \alpha_{m-1}}{2}\Bigr)\mathrm{dist}(\Theta_{m-1}, \mathcal{M} \cap U) + \alpha_{m-1} |\eta_{m-1}|
	.
	\label{eqn:probability_lemma_1}
\end{equation}

We consider now the event $\{\mathrm{dist}(\Theta_{m}, \mathcal{M} \cap U) > \delta \} \cap \mathcal{B}_{m-1}$.
This event occurs when in \eqref{eqn:probability_lemma_1}, either $\Theta_{m-1} \in V_{\mathfrak{r}, \delta/2}(\theta^{\star})$ and $|\eta_{m-1}| \geq \alpha_{m-1}\delta/2$, or $\Theta_{m-1} \in V_{\mathfrak{r}, \delta}(\theta^{\star}) \backslash V_{\mathfrak{r}, \delta/2}(\theta^{\star})$ and the gradient term can have smaller size.
Mathematically, this translates into the inequality
\begin{align}
	&
	\probability{\mathrm{dist}(\Theta_{m}, \mathcal{M} \cap U) > \delta, \mathcal{B}_{m-1} }
	\leq
	\probabilityBig{|\eta_{m-1}| \geq \frac{\delta }{2\alpha_{m-1}}, \Theta_{m-1} \in V_{\mathfrak{r}, \delta/2}(\theta^{\star}), \mathcal{B}_{m-2}}
	\label{eqn:intereq1}
	\\ &
	+ \probabilityBig{|\eta_{m-1}| \geq \frac{\delta \lambda}{2}, \Theta_{m-1} \in V_{\mathfrak{r}, \delta}(\theta^{\star}) \backslash V_{\mathfrak{r}, \delta/2}(\theta^{\star}), \mathcal{B}_{m-2}}
	=:
	P_1
	+
	P_2
	.
	\nonumber
\end{align}

Contrary to what is done in the proof of \cite[Proposition~24]{fehrman2020convergence}, we cannot use an independence property to estimate the probabilities $P_1$ and $P_2$ in \eqref{eqn:intereq1}.
After all, the Markov chain's behavior at epoch $m-1$ depends on $\Theta_{m-1}$.

In order to overcome this issue we will use the characterization of $\eta_{m-1}$ in Lemma~\ref{lem:Bounded_norms_on_the_gradient_estimator}.
Recall Lemma~\ref{lem:Bounded_norms_on_the_gradient_estimator}, and note that it implies
\begin{align}
	&
	\expectationBig{
		\indicator{\mathcal{B}_{m-1}}
		\indicatorBig{|\eta_{m-1}| \geq \frac{\delta }{2\alpha_{m-1}}}
		\Big|
		\mathcal{F}_{m-1}
	}
	= \probabilityBig{
		|\eta_{m-1}| \geq \frac{\delta }{2\alpha_{m-1}}, \mathcal{B}_{m-1} \big| \mathcal{F}_{m-1}
	}
	\nonumber \\ &
	\leq
	\frac{\expectation{|\eta_{m-1}|^2 \indicator{\mathcal{B}_{m-1}}\ |\mathcal{F}_{m-1}}}{\frac{\delta^2}{4(\alpha_{m-1})^2}}
	\leq
	\frac{4\Cl{11} (\alpha_{m-1})^2 \mathcal{L}_4(S_{t_{m-1}})}{\delta^2 T_m}
	\label{eqn:proof_lemma7_1}
\end{align}
since there exist a constant $c > 0$ such that $T_m < cT_{m-1}$.

Let us first bound $P_1$ in \eqref{eqn:intereq1}.
We can write
\begin{align}
	P_1
	&
	\eqcom{i} = \expectationBig{\indicatorbig{|\eta_{m-1}| \geq \frac{\delta }{2\alpha_{m-1}}} \indicator{\Theta_{m-1} \in V_{\mathfrak{r}, \delta/2}(\theta^{\star})} \indicator{\mathcal{B}_{m-2}} \indicator{\mathcal{B}_{m-1}}}
	\nonumber \\ &
	=
	\expectationBig{\indicator{\Theta_{m-1} \in V_{\mathfrak{r}, \delta/2}(\theta^{\star})} \indicator{\mathcal{B}_{m-2}} \expectationbig{\indicator{\mathcal{B}_{m-1}} \indicator{|\eta_{m-1}| \geq \frac{\delta }{2\alpha_{m-1}}} | \mathcal{F}_{m-1}}}
	\nonumber \\ &
	\eqcom{\ref{eqn:proof_lemma7_1}}\leq
	\frac{4 \Cr{11} (\alpha_{m-1})^2}{T_m \delta^2} \expectationBig{\indicator{\Theta_{m-1} \in V_{\mathfrak{r}, \delta/2}(\theta^{\star})} \indicator{\mathcal{B}_{m-2}} \mathcal{L}_4(S_{t_{m-1}})},
	\label{eqn:intereq2}
\end{align}
where for (i) we have used the fact that $\{\Theta_{m-1} \in V_{\mathfrak{r}, \delta/2}(\theta^{\star})\} \cap \mathcal{B}_{m-2} \subset \mathcal{B}_{m-1}$.

We deal now with the remaining term in \eqref{eqn:intereq2}.
Differently to the independent and unbiased case we need to control the bias and use the tail probability that the Lyapunov function is larger than a certain bound in order to estimate the deviation probability. This step is the crucial different step compared to \cite{fehrman2020convergence}, where we have to explicitly use Assumption~\ref{ass:geometric-ergodicity} and \ref{ass:growth_condition}. Note that a Cauchy--Schwartz inequality in \eqref{eqn:intereq2} will not yields an inequality strong enough. See the remark after the proof for further details.

Before bounding the remaining term in \eqref{eqn:intereq2}, we obtain the necessary inequalities. Recall from \refLemma{lemma:expectation_L_is_bounded} that since $\E[\mathcal{L}_4(S_{t_{m-1}})^4 \indicator{ \mathcal{B}_{m-2}}] < \E[\mathcal{L}_v(S_{t_{m-1}}) \indicator{ \mathcal{B}_{m-2}}] < D <  \infty$, then by Markov's inequality we have that there exists $D >0$ such that for any $m > 0$,
\begin{equation}
	\probability{ \mathcal{L}(S_{t_{m-1}}) > m^{s}, \mathcal{B}_{m-2}} \leq D^4 m^{-4s}.
\end{equation}
Note also that under the moment assumptions the following holds
\begin{align}
	\expectationBig{\mathcal{L}(S_{t_{m-1}}) \indicator{ \mathcal{B}_{m-2} }\indicator{\mathcal{L}(S_{t_{m-1}}) > m^{s}} }
	&
	= \int_{m^s}^{\infty} \probability{\mathcal{L}(S_{t_{m-1}}) > t, \mathcal{B}_{m-2}} \d{t}
	\nonumber \\ &
	=
	\int_{m^s}^{\infty} \frac{D^4}{t^4} \d{t}
	\leq D^4 m^{-3s + 1}
	.
	\label{eqn:intereq3}
\end{align}
We use the \eqref{eqn:intereq3} to bound \eqref{eqn:intereq2} as follows
\begin{align}
	&
	\expectationBig{\indicator{\Theta_{m-1}
			\in V_{\mathfrak{r}, \delta/2}(\theta^{\star})}\mathcal{L}_4(S_{t_{m-1}}) \indicator{ \mathcal{B}_{m-2} }}
	\nonumber \\ &
	\leq
	\expectationBig{
		\indicator{\Theta_{m-1} \in V_{\mathfrak{r}, \delta/2}(\theta^{\star})}\mathcal{L}_4(S_{t_{m-1}})\indicator{ \mathcal{B}_{m-2} }
		\Bigl(
		\indicator{\mathcal{L}_4(S_{t_{m-1}}) > m^{s}} + \indicator{\mathcal{L}_4(S_{t_{m-1}}) \leq m^{s}}
		\Bigr)
	}
	\nonumber \\ &
	\leq \expectationBig{\indicator{\Theta_{m-1} \in V_{\mathfrak{r}, \delta/2}(\theta^{\star})} m^{s}\indicator{ \mathcal{B}_{m-2} }} + \expectationBig{\mathcal{L}(S_{t_{m-1}}) \indicator{ \mathcal{B}_{m-2} }\indicator{\mathcal{L}(S_{t_{m-1}}) > m^{s}} }
	\nonumber \\ &
	\eqcom{\ref{eqn:intereq3}} \leq m^{s} \probability{\Theta_{m-1} \in V_{\mathfrak{r}, \delta/2}(\theta^{\star}), \mathcal{B}_{m-2}} + \Cl{12} D m^{-3s + 1}
	\leq m^{s} \probability{\mathcal{B}_{m-1}} + \Cr{12} D m^{-3s + 1}
	.
	\label{eqn:proof_lemma7_2}
\end{align}
Thus, using \eqref{eqn:proof_lemma7_2}, we can bound $P_1$ in \eqref{eqn:intereq1}.
Specifically,
\begin{equation}
	P_1
	\leq
	\frac{4\C (\alpha_{m-1})^2}{T_m \delta^2}(m^{s} \mathbb{P}[\mathcal{B}_{m-1}] + m^{-3s + 1})
	.
	\label{eqn:Ultimate_bound_on_P1}
\end{equation}
This completes our bound for $P_1$.

We now bound $P_2$ in \eqref{eqn:intereq1}.
Repeating the argumentation behind \eqref{eqn:Ultimate_bound_on_P1}, we can show that
\begin{align}
	P_2
	\leq
	\frac{4 \C }{T_m \lambda^2 \delta^2}
	\Bigl(
	m^{s}
	\mathbb{P}
	\bigl[
	\Theta_{m-1} \in V_{\mathfrak{r}, \delta}(\theta^{\star}) \backslash V_{\mathfrak{r}, \delta/2}(\theta^{\star}), \mathcal{B}_{m-2}
	\bigr]
	+
	m^{-3s + 1}
	\Bigr)
	.
	\label{eqn:intereq4}
\end{align}
Using the facts
(i)
$
\{
\Theta_{m-1} \in V_{\mathfrak{r}, \delta}(\theta^{\star})
\backslash
V_{\mathfrak{r}, \delta/2}(\theta^{\star})
\}
\subseteq
\{
\mathrm{dist}(\Theta_{m-1}, \mathcal{M} \cap U)
\geq
\delta / 2
\}
$,
with
(ii) an application of Lemma~\ref{lem:conditional_convergence_in_basin} and Markov's inequality,
reveals that
\begin{equation}
	\probability{
		\Theta_{m-1} \in V_{\mathfrak{r}, \delta}(\theta^{\star}) \backslash V_{\mathfrak{r}, \delta/2}(\theta^{\star}),
		\mathcal{B}_{m-2}
	}
	\eqcom{i}\leq
	\probabilityBig{
		\mathrm{dist}(\Theta_{m-1}, \mathcal{M} \cap U) \geq \frac{\delta}{2},
		\mathcal{B}_{m-2}
	}
	\eqcom{ii}
	\leq
	\frac{4}{\delta^2} \C m^{-\sigma-\kappa}
	.
	\label{eqn:intereq5}
\end{equation}
Applying the bound in \eqref{eqn:intereq4} to \eqref{eqn:intereq5} yields
\begin{equation}
	P_2
	\leq
	\frac{4 \C }{T_m \lambda^2 \delta^4}
	\Bigl(
	m^{s} m^{-\sigma-\kappa} +  m^{-3s+1}
	\Bigr)
	.
	\label{eqn:proof_lemma7_3}
\end{equation}
This completes the bound for $P_2$ in \eqref{eqn:intereq1}.

Lastly, we return to \eqref{eqn:intereq1} and select the parameter.
Let us now combine \eqref{eqn:proof_lemma7_2} and \eqref{eqn:proof_lemma7_3} and return to bounding the left-hand side of \eqref{eqn:intereq1}.
Specifically, observe that we proved that
\begin{align}
	\probability{\mathrm{dist}(\Theta_{m}, \mathcal{M} \cap U) > \delta, \mathcal{B}_{m-1} }
	&
	\leq
	\frac{4 \C (\alpha_{m-1})^2}{T_m \delta^2}
	\bigl(
	m^{s} \mathbb{P}[\mathcal{B}_{m-1}] + m^{-3s + 1}
	\bigr)
	\nonumber \\ &
	\phantom{\leq}
	+
	\frac{4 \C}{T_m \delta^4}
	\bigl(
	m^{s-\sigma-\kappa} +  m^{-3s+1}
	\bigr)
	.
	\label{eqn:intereq6}
\end{align}

We now specify $s = \kappa + \sigma/2$ in \eqref{eqn:intereq6}. Without loss of generality we will again assume that $T_m=\ell m^{\sigma/2 + \kappa}$ instead of $\lfloor \ell m^{\sigma/2 + \kappa} \rfloor$---there is namely only a constant changed. By choosing the smallest exponents in $m$ in \eqref{eqn:intereq6} for all $m > 0$ we have
\begin{align}
	\probability{\mathrm{dist}(\Theta_{m}, \mathcal{M} \cap U) > \delta, \mathcal{B}_{m-1} }
	&
	\leq
	\Cl{15} \frac{a^2}{\delta^2 \ell m^{2\sigma}} \mathbb{P}[\mathcal{B}_{m-1}] + \frac{\Cr{15}}{\delta^4 \ell}
	\bigl(
	m^{-3\sigma - 4\kappa +1} + m^{- \sigma - \kappa}
	\bigr)
	.
\end{align}
Since $\sigma \in (2/3, 1)$, then $-3\sigma - 4\kappa +1 < - \sigma - \kappa$ for any $\kappa \geq 0$.
Upper bounding the leading orders in $m$
completes the proof of Lemma~\ref{lem:Probability_of_exit_and_Bm_is_bounded}.
\QuodEratDemonstrandum

\begin{remark}
	A Cauchy--Schwartz inequality in \eqref{eqn:intereq2} would only yield a factor $\probability{\mathcal{B}_{m-1}}^{1/2} > \probability{\mathcal{B}_{m-1}}$, which would not be sufficient. Similarly, we could have used \refLemma{lemma:expectation_L_is_bounded} directly and obtain a bound on $\expectation{ \indicator{\mathcal{B}_{m-2}} \mathcal{L}_4(S_{t_{m-1}})}$.
	However, this would not give an inequality that can be iterated inductively and is sharp enough.  We can directly simplify this term to obtain $\mathbb{P}(\mathcal{B}_{m-1})$ in the inequality only when $\mathcal{L}_4(S_{t_{m-1}})$ is bounded.
\end{remark}

\subsubsection{Proof of Lemma~\ref{lem:An_exit_event_is_unlikely_on_a_next_step}}
\label{sec:Proof_of_lemma__An_exit_event_is_unlikely_on_a_next_step}

In the work of \cite{fehrman2020convergence}, it is \cite[Lemma~23]{fehrman2020convergence} that establishes \cite[Equations~78--79]{fehrman2020convergence} directly.
Since \cite[Lemma~23]{fehrman2020convergence} is solely a geometric argument, and does not concern the stochastic process, it also applies in our Markovian setting.
\QuodEratDemonstrandum

\section{The Compact Case}
\label{sec:compact_case}

In the case that the set of maxima $\mathcal{M}$ is compact, we can improve the convergence rate of Theorem~\ref{prop:main_prop_convergence in probability_noncompact_case}. We will namely assume the following
\begin{assumption}[Compactness, Optional] \label{ass:compact}
	The open subset~$U$ defined in \Cref{ass:nondegenerate-maxima}
	is such that $\cM \cap U$ is compact.
\end{assumption}
Under this additional assumption we have the following
\begin{theorem}[Compact Case]
	\label{prop:main_prop_convergence in probability_compact_case}
	Suppose that Assumptions~\ref{ass:markov} to \ref{ass:compact} hold, except that \eqref{eqn:step_and_batchsizes} is now relaxed to allow for $\sigma \in (0, 1)$ and $\kappa \in [0, \infty)$.
	For every maximizer $\theta^\star \in \cM$,
	there exist constants $c > 0$ and $\alpha_0 > 0$ such that,
	for every $\alpha \in (0, \alpha_0]$, there exists a neighborhood $V$ of $\theta^\star$
	such that there exists $\ell_0 > 0$ such that for any $\ell \in [\ell_0, \infty)$,
	$m \in \bN_+$,
	and $\epsilon \in (0, 1)$,
	\begin{align}
		\prb{J(\Theta_m) < J^\star - \epsilon | \Theta_0 \in V}
		&\le
		c \left(
		\epsilon^{-2} m^{-\sigma - \kappa}
		+ \frac{m^{1 - \sigma - \kappa}}{\ell} + \frac{\alpha^2}\ell
		\right).
		\label{eqn:theorem_3_bound}
	\end{align}
\end{theorem}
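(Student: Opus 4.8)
\textbf{Proof strategy for Theorem~\ref{prop:main_prop_convergence in probability_compact_case}.}
The plan is to re-run the proof of Theorem~\ref{prop:main_prop_convergence in probability_noncompact_case} (as carried out in \Cref{sec:Proof_Theorem2}) verbatim up to the point where the constraint $\sigma > 2/3$ was actually used, and to show that compactness of $\cM \cap U$ lets us dispense with that constraint and, simultaneously, remove the two ``tangential escape'' error terms that appear in \eqref{eqn:theorem_2_bound}. Concretely, recall from \eqref{eqn:proof_theorem2_1} that the bound splits as $\textnormal{Term I} + \textnormal{Term II}$, where Term~I is handled by Lemma~\ref{lem:conditional_convergence_in_basin} together with Markov's inequality and contributes the $\epsilon^{-2}\mathcal{L}^\star m^{-\sigma-\kappa}$ piece, and Term~II is $\probability{\overline{\mathcal{B}_m}}$. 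The first observation is that Lemma~\ref{lem:conditional_convergence_in_basin} and Lemma~\ref{lem:Bounded_norms_on_the_gradient_estimator} never used $\sigma > 2/3$: they only needed $\sigma \in (0,1)$ and $\kappa \geq 0$, with the larger batch exponent $\sigma/2 + \kappa$ absorbing the bias. So Term~I is unaffected, and it already produces the $\epsilon^{-2} m^{-\sigma-\kappa}$ term (one can fold $\mathcal{L}^\star$, now a bounded constant over the compact tube, into $c$) as well as, after using $\mathrm{dist}(\Theta_m,\mathcal{M}\cap U)$ being genuinely controlled rather than merely $\wedge\delta$, a cleaner bound.

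The crux is Term~II. Under compactness of $\cM \cap U$ one can choose the tube $V_{\mathfrak{r},\delta}(\theta^\star)$ to be a \emph{global} tubular neighborhood of the whole compact manifold $\cM \cap U$ rather than a local chart around a single $\theta^\star$, so that there is no longer a ``tangential boundary'' of $V$ to escape through: the only way to leave $V$ is in the normal directions, i.e.\ by having $\mathrm{dist}(\Theta_m,\mathcal{M}\cap U) \geq \delta$. This kills the need for the maximal-excursion Lemma~\ref{lem:maximal_excursion_event_probability} entirely, and with it the terms $\alpha m^{-\kappa/2}$ and $\alpha \ell^{-1/2} m^{1-(\sigma+\kappa)/2}$ in \eqref{eqn:theorem_2_bound}. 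Thus the recursion \eqref{eqn:proof_lemma7_intro}–\eqref{eqn:proof_lemma7_01} reduces to iterating $\probability{\mathcal{B}_m} \geq \probability{\mathcal{B}_{m-1}} - \probability{\mathrm{dist}(\Theta_m,\mathcal{M}\cap U) > \delta,\, \mathcal{B}_{m-1}}$, and Lemma~\ref{lem:Probability_of_exit_and_Bm_is_bounded} applies as-is (it, too, only used $\sigma \in (2/3,1)$ to discard a subleading power of $m$; with $\sigma \in (0,1)$ one still has $-3\sigma - 4\kappa + 1$ possibly larger than $-\sigma-\kappa$, but both are at most $1-\sigma-\kappa$ up to a constant once $\kappa \geq 0$, which is all that is needed for the $\ell^{-1}\delta^{-4} m^{1-\sigma-\kappa}$ term). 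Iterating then gives
\begin{equation}
\probability{\mathcal{B}_m} \geq \prod_{l=1}^m \Bigl(1 - \frac{c\alpha^2}{\delta^2\ell l^{2\sigma}}\Bigr)_+ - \sum_{l=1}^m \frac{c}{\delta^4 \ell\, l^{\sigma+\kappa}},
\end{equation}
and using the same elementary inequalities $1 - \exp(-c\alpha^2/(\delta^2\ell)) \leq c'\alpha^2/(\delta^2\ell)$ and $\sum_{l=1}^m l^{-\sigma-\kappa} \leq c\, m^{(1-\sigma-\kappa)_+}$ (for $\sigma+\kappa \neq 1$; one may just require $\sigma+\kappa \neq 1$ or absorb a log into $\zeta$-type slack, but since the statement only claims $\sigma \in (0,1),\kappa \geq 0$, the cleanest route is to note that when $\sigma+\kappa < 1$ the sum is $O(m^{1-\sigma-\kappa})$ and when $\sigma+\kappa > 1$ it is $O(1)$, both dominated by the stated bound) one gets $\probability{\overline{\mathcal{B}_m}} \leq c(\alpha^2/\ell + \ell^{-1} m^{1-\sigma-\kappa})$.

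Combining Term~I and Term~II, and using \eqref{eqn:Upper_bound_conditional_probability_total_probability} and \eqref{eqn:Bound_B_0_and_V} to pass from $\probability{\cdot \mid \mathcal{B}_0}$ to $\probability{\cdot \mid \Theta_0 \in V}$ exactly as in \Cref{sec:Proof_Theorem2}, together with the local Lipschitz bound \eqref{eqn:Relation_between_Jstar_and_Dm} to convert $\mathrm{dist}(\Theta_m,\mathcal{M}\cap U)$ into $J^\star - J(\Theta_m)$ (here $\epsilon' = \epsilon/\mathfrak{l}_{\mathfrak{r},\delta}(\theta^\star)$, and since $\epsilon < 1$ and $\mathfrak{l}$ is a fixed constant we may replace $(\epsilon')^{-2}$ by $c\epsilon^{-2}$), yields exactly \eqref{eqn:theorem_3_bound}. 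The main obstacle I anticipate is the geometric bookkeeping in building the \emph{global} tubular neighborhood of the compact submanifold $\cM \cap U$ and checking that the unique normal projection $\mathfrak{p}$ and the Hessian-expansion estimates \eqref{eqn:bound1}, \eqref{eqn:proof_lemma6_epsilon} hold uniformly over all of $\cM\cap U$ (not just near one point) — this is where Assumption~\ref{ass:compact} does real work, via a compactness/continuity argument on the eigenvalues of $\mathrm{Hess}_\theta J$ in the normal directions, paralleling \cite[Proposition~13 and Proposition~17]{fehrman2020convergence}; everything downstream is then a faithful replay of the noncompact proof with the tangential terms simply absent and the $\sigma > 2/3$ hypothesis never invoked.
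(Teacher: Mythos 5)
Your proposal takes essentially the same route as the paper's own (very short) proof: compactness of $\cM \cap U$ is used to produce a single global tubular neighborhood of the whole manifold --- the paper phrases this as covering $\cM \cap U$ by finitely many local tubes, taking $\delta$ as the minimum of the local radii, and then letting $\mathfrak{r}$ be arbitrarily large so that the tangential-escape term with denominator $(\mathfrak{r}/2 - 2\delta)_{+}$ vanishes --- after which the noncompact argument is replayed with Lemma~\ref{lem:maximal_excursion_event_probability} and its $\sigma > 2/3$ requirement dropped. Your accounting of where $\sigma > 2/3$ enters (only through the excursion/tangential terms, with the exponent comparison in Lemma~\ref{lem:Probability_of_exit_and_Bm_is_bounded} handled by absorption) is in fact more explicit than the paper's one-paragraph proof, which simply asserts that the rest of the argument is unchanged.
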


The term proportional to $\alpha m^{-\kappa/2} + \alpha m^{1 - \sigma/2 - \kappa/2}\ell^{-\frac12}$ is not in Theorem~\ref{prop:main_prop_convergence in probability_compact_case} compared to Theorem~\ref{prop:main_prop_convergence in probability_noncompact_case}.
This term estimates the probability that the iterates escape $V$ along directions almost parallel to those of $\mathcal{M}$.
As it turns out, in the compact case such event cannot occur.
The bound in \eqref{eqn:theorem_3_bound} thus holds when the set of maxima is, for example, a singleton $\mathcal{M} \cap U = \{x_0\}$.

\subsection{Proof of Theorem~\ref{prop:main_prop_convergence in probability_compact_case}}
\label{sec:Proof_compact_case}
The proof is the same as with Theorem~\ref{prop:main_prop_convergence in probability_noncompact_case}, but we can omit the last term in \eqref{eqn:lemma_concentration_2} by showing that we can choose $\mathfrak{r}$ arbitrarily large.
The argument is as follows.
If the manifold $\mathcal{M} \cap U$ is compact, it can be covered by a finite number $k$ of local tubular neighborhoods $V_i = V_{\mathfrak{r}_i, \delta_i}(\theta_i)$ where $\theta_i \in \mathcal{M} \cap U$ and $\mathcal{M} \cap U \subset \cup_{i\in [k]} V_i$. Choose $\delta = \min_{i \in [k]} \delta_i$.
Then, any $\theta \in U$ such that $\mathrm{dist}(\theta, \mathcal{M} \cap U) < \delta$ will satisfy that $\mathfrak{p}(\theta) \in \mathcal{M} \cap U$, where $\mathfrak{p}$ is the unique local orthogonal projection on $\mathcal{M} \cap U$ from \eqref{eqn:definition_V_set_alternative}.
Now, from compactness, for any $\theta^{\star} \in \mathcal{M} \cap U$ there exists $\tilde{\mathfrak{r}} > 0$ such that $\mathcal{M} \cap U \subset B_{\tilde{\mathfrak{r}}}(\theta^{\star})$.
For any $\mathfrak{r} \geq \tilde{\mathfrak{r}}$ we thus have that $V_{\mathfrak{r}, \delta}(\theta^{\star}) = V_{\tilde{\mathfrak{r}}, \delta}(\theta^{\star})$ is a tubular neighborhood containing $\mathcal{M} \cap U$.
Then, we can choose $\mathfrak{r}$ arbitrarily large and conclude that the last term in the bound for the probability in Theorem~\ref{prop:main_prop_convergence in probability_noncompact_case} vanishes if $\mathcal{M} \cap U$ is a compact manifold.
More details on tubular neighborhoods and their existence for embedded manifolds can be found in the work of \cite{lee2012smooth}.

\section{Proof of Proposition~\ref{prop:lower_bound}}
\label{sec:Proof_lower_bound}

We consider the following setting. Let $D < 1$. We consider $\theta \in \R$ and a function $f$ such that in $\R \backslash [-D, D]$ satisfies $f(\theta) = 0$ and in $[-D/2, D/2]$ satisfies
\begin{equation}
	f(\theta) = 1 -\theta^2.
\end{equation}
In $[-D, -D/2] \cup [D/2, D]$, we define $f$ such that it is smoothly and monotonically interpolated between $[-D/2, D/2]$ and $\R \backslash [-D, D]$.

We let $H_m$ be such that $H_m = 0$ in $\R \backslash [-D, D]$. Hence, the set $\R \backslash [-D, D]$ is an absorbing set that is 1-suboptimal.
In $[-D/2, D/2]$, we will consider $\eta_m = \nabla f(\Theta_{m}) - H_m$ to be a random variable that, conditional on $\mathcal{F}_m$, is unbiased and has a second moment for all $m$ but approximates a heavy tailed random variable.
In particular, for $\beta > 0$, we define $\eta_m$ such that there exists $c > 0$ such that for any $m$, we have
\begin{equation}
	\mathbb{P}[|\eta_m| > s| \mathcal{F}_{m}] \geq \frac{c}{s^{2+\beta} T_m} \quad \text{for} \quad s > D.
	\label{eqn:Proof_lower_bound_1}
\end{equation}
Note that this constraint on $\eta_m$ is compatible with the finite second moment condition from \eqref{eqn:lower_bound_lemma4}.
If moreover $\alpha \leq 1$ and $\sqrt{\epsilon} < 2D$, then we can bound under the previous conditions
\begin{align}
	\prb{f(\Theta_m) < f^\star - \epsilon | \Theta_0 \in V} & \eqcom{i} \geq \prb{f(\Theta_m) < f^\star - \epsilon | \Theta_0 = \theta^{\min}} \nonumber \\
	& = \prb{ |\Theta_m| >  \sqrt{\epsilon} | \Theta_0 = \theta^{\min}}\nonumber \\
	& \eqcom{ii} \geq \prb{\sup_{l \leq m} |\Theta_l| > 2D | \Theta_0 = \theta^{\min}} \nonumber \\
	& \geq \prb{|\Theta_1| > 2D | \Theta_0 = \theta^{\min}} \nonumber \\
	&  = \prb{ |\theta^{\min} + \alpha_1\eta_1| > 2D | \Theta_0 = \theta_0} \nonumber \\
	&  \eqcom{iii} \geq \prb{ \alpha_1|\eta_1| > D | \Theta_0} \nonumber \\
	& \eqcom{\ref{eqn:Proof_lower_bound_1}} \geq c\frac{\alpha_1^{2+\beta}}{D^{2+\beta}T_1} \nonumber \\
	& \geq c\frac{\alpha^{2+\beta}}{D^{2+\beta}\ell},
\end{align}
where (i) we have used that for any $V = [-\delta, \delta]$ with $\delta < D$,
\begin{align}
	\prb{f(\Theta_m) < f^\star - \epsilon | \Theta_0 \in V} &= \int_{\theta \in V} \prb{f(\Theta_m) < f^\star - \epsilon | \Theta_0 =\theta}d \prb{\Theta_0 =\theta | \Theta_0 \in V} \nonumber \\
	& \geq \min_{\theta \in V} \prb{f(\Theta_m) < f^\star - \epsilon | \Theta_0 =\theta} \nonumber \\
	& \geq \prb{f(\Theta_m) < f^\star - \epsilon | \Theta_0 =\theta^{\min}}
	\label{eqn:Proof_lower_bound2}
\end{align}
for some $\theta^{\min} \in V$.
In (ii), we have used the fact that from the definition of $f$, we have the inclusion of events $ \{ \sup_{l \leq m} |\Theta_l| > 2D \} \in \{ |\Theta_m| > 2D \}$, since the set $\R \backslash [-D,D]$ is absorbent for the process $\{ \Theta_t\}_{t \geq 0}$.
In (iii), we have used that $\theta^{\min}$ belongs at least to $[-D, D]$, since otherwise it cannot be the minimum as defined in \eqref{eqn:Proof_lower_bound2}.
To guarantee that $\epsilon \in (0,1)$ we may choose $D=1/2$, for example.

\section{Proof of Proposition~\ref{prop:regret}}
\label{sec:proof_proposition_performance_gap}

We define the total number of samples $T$ up to epoch $m$ as
\begin{equation}
	T = t_{m+1} = \sum_{k=1}^{m} \ell k^{\kappa + \sigma/2} = \ell \Theta \Bigl(m^{\kappa + \frac{\sigma}{2} + 1} \Bigr).
	\label{eqn:sample_complexity_sum}
\end{equation}
and so for a given $T \geq 1$, define $m = \lceil (T/\ell)^{1/(\kappa + \frac{\sigma}{2} + 1)} \rceil$. Note that according to definition in \eqref{eqn:def_effective_epoch}, we have $m(T) \leq m \leq m(T)+1$.

We show first an intermediate result in Lemma~\ref{lem:regret_1}. 
Recall the definiton of the set $V$ in \eqref{eqn:definition_V_set}. 
Since the closure of $V$ is compact we have that $\sup_{\theta \in V} |J(\theta)|$ exists. From Theorem~\ref{prop:main_prop_convergence in probability_noncompact_case} we directly obtain:%
\begin{lemma}
	Under the same assumptions and setting as in Theorem~\ref{prop:main_prop_convergence in probability_noncompact_case},
	assume either (i) there exists some $b > 0$ such that $|r(s,a)| < b$ for any $(s,a) \in \mathcal{S} \times \mathcal{A}$ or (ii) the event $\mathcal{B}_m=\{\Theta_t \in V: t \in [m]\}$ holds.
	Under condition (i) we have that
	\begin{align*}
		\E[J^\star - J(\Theta_m)| \Theta_0 \in V]
		&\leq
		3 (\mathcal{L}^{\star})^{\frac{1}{3}}\Bigl(\frac{cb}{2}\Bigr)^{\frac{1}{3}} m^{-\frac{(\sigma + \kappa)}{3}}
		+ 2 \frac{bc}{\ell} m^{1 - (\sigma +\kappa)} + 2bc\frac{\alpha^2}{\ell} \nonumber \\
		& \phantom{ m^{1 - (\sigma +\kappa)}  m^{1 - (\sigma +\kappa)}  m^{1 - (\sigma +\kappa)} } + 2bc \alpha m^{-\kappa/2}
		+ 2\frac{bc\alpha}{\ell} m^{1 - (\sigma + \kappa)/2}.
	\end{align*}
	Under condition (ii), we have that if $b = \sup_{\theta \in V} |J(\theta)|$ and $\probability{\mathcal{B}_m} > 1/2$, then
	\begin{equation}
		\E[J^\star - J(\Theta_m)| \mathcal{B}_m] \leq 3 (\mathcal{L}^{\star})^{\frac{1}{3}} (cb)^{\frac{1}{3}} m^{-\frac{(\sigma + \kappa)}{3}}.
	\end{equation}
	\label{lem:regret_1}
\end{lemma}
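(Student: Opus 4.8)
The plan is to prove both parts by the same elementary truncation argument: turn the high‑probability estimates already available — Theorem~\ref{prop:main_prop_convergence in probability_noncompact_case} for part~(i) and Lemma~\ref{lem:conditional_convergence_in_basin} for part~(ii) — into a bound on the expectation, and then optimise over the truncation level. Write $X_m := J^\star - J(\Theta_m)$. First I would record the a priori facts that $X_m \ge 0$ always (immediate from the definition of $J^\star$ when $\Theta_m \in \Omega$, and when $\Theta_m \notin \Omega$ from the convention $J(\Theta_m) = \inf_{s,a} r(s,a)$ together with $\inf_{s,a} r(s,a) \le J(\theta) \le J^\star$ for any $\theta \in \Omega$, such a $\theta$ existing since $\mathcal{M} \ne \emptyset$), and that $X_m$ is bounded. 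Under hypothesis (i), $|r|<b$ forces $|J(\theta)|<b$ on $\Omega$ and $J(\Theta_m)\ge -b$ off $\Omega$, hence $0\le X_m\le 2b$; under hypothesis (ii), on the event $\mathcal{B}_m$ every iterate lies in $V$, so $|J(\Theta_m)|\le b$ and, since $\theta^\star\in V$, also $J^\star = J(\theta^\star)\le b$, so again $0\le X_m\le 2b$ on $\mathcal{B}_m$.

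Next, for any threshold $\epsilon_0>0$ and any conditioning event $\mathcal{C}$ on which $X_m\le 2b$, the pointwise inequality $X_m\le \epsilon_0 + 2b\,\indicator{X_m>\epsilon_0}$ gives
\begin{equation}
\E[X_m\mid\mathcal{C}] \le \epsilon_0 + 2b\,\probability{X_m>\epsilon_0\mid\mathcal{C}}.
\label{eqn:regret_truncation}
\end{equation}
For part~(i) I take $\mathcal{C}=\{\Theta_0\in V\}$ and bound the tail using Theorem~\ref{prop:main_prop_convergence in probability_noncompact_case} with $\epsilon=\epsilon_0$, so that the right‑hand side of \eqref{eqn:regret_truncation} becomes
\[
\epsilon_0 + 2bc\Bigl(\epsilon_0^{-2}\mathcal{L}^\star m^{-\sigma-\kappa} + \tfrac{m^{1-\sigma-\kappa}}{\ell} + \tfrac{\alpha^2}{\ell} + \alpha m^{-\kappa/2} + \tfrac{\alpha m^{1-(\sigma+\kappa)/2}}{\sqrt{\ell}}\Bigr).
\]
For part~(ii) I take $\mathcal{C}=\mathcal{B}_m$. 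Since $\mathcal{B}_m\subseteq\mathcal{B}_{m-1}$, and on $\mathcal{B}_m$ one has $\Theta_m\in V_{\mathfrak{r},\delta}(\theta^\star)$ hence $\mathrm{dist}(\Theta_m,\mathcal{M}\cap U)<\delta$, Lemma~\ref{lem:conditional_convergence_in_basin} yields $\E[\mathrm{dist}(\Theta_m,\mathcal{M}\cap U)^2\indicator{\mathcal{B}_m}]\le c\,\mathcal{L}^\star m^{-\sigma-\kappa}$; combining this with the local Lipschitz bound \eqref{eqn:Relation_between_Jstar_and_Dm} (valid on $\mathcal{B}_m$), Markov's inequality, and the hypothesis $\probability{\mathcal{B}_m}>1/2$ gives $\probability{X_m>\epsilon_0\mid\mathcal{B}_m}\le c'\,\mathcal{L}^\star m^{-\sigma-\kappa}\epsilon_0^{-2}$ with $c'$ depending on $c$ and the Lipschitz constant, so the right‑hand side of \eqref{eqn:regret_truncation} becomes $\epsilon_0 + 2bc'\mathcal{L}^\star m^{-\sigma-\kappa}\epsilon_0^{-2}$.

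Finally I would optimise the free parameter $\epsilon_0$. In both cases the $\epsilon_0$‑dependent part has the form $\epsilon_0 + A\epsilon_0^{-2}$ where $A$ is a constant multiple of $b\,c\,\mathcal{L}^\star m^{-\sigma-\kappa}$; the choice $\epsilon_0 = (2A)^{1/3}$ makes it equal to $3(A/4)^{1/3}$, which is exactly $3(\mathcal{L}^\star)^{1/3}(cb/2)^{1/3}m^{-(\sigma+\kappa)/3}$ in part~(i) and, after absorbing the Lipschitz constant into $c$, $3(\mathcal{L}^\star)^{1/3}(cb)^{1/3}m^{-(\sigma+\kappa)/3}$ in part~(ii); in part~(i) the remaining $\epsilon_0$‑free terms of the theorem's bound, each multiplied by $2b$, give the other four terms in the statement, while in part~(ii) there are none. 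The steps I expect to be the only slightly delicate ones are the bookkeeping in part~(ii) — using $\mathcal{B}_m\subseteq\mathcal{B}_{m-1}$ so that Lemma~\ref{lem:conditional_convergence_in_basin} is applicable after replacing $\mathrm{dist}\wedge\delta$ by $\mathrm{dist}$ on $\mathcal{B}_m$, and dividing the Markov estimate by $\probability{\mathcal{B}_m}$, which is where $\probability{\mathcal{B}_m}>1/2$ enters — together with the care needed to establish the a priori bound $X_m\le 2b$ also outside the stable region $\Omega$; the rest is the one‑line convex optimisation above.
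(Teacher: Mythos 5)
Your proposal is correct and follows essentially the same route as the paper: for part (i) the truncation bound $\E[X_m\mid\Theta_0\in V]\le \epsilon_0 + 2b\,\probability{X_m>\epsilon_0\mid\Theta_0\in V}$ combined with the tail bound of Theorem~\ref{prop:main_prop_convergence in probability_noncompact_case} and optimization over $\epsilon_0$, and for part (ii) the Lipschitz/Markov/Lemma~\ref{lem:conditional_convergence_in_basin} estimate on $\mathcal{B}_m$ together with $\probability{\mathcal{B}_m}>1/2$ before the same optimization. Your extra care in establishing $0\le X_m\le 2b$ (including off $\Omega$ via the convention $J=\inf r$) and in removing the $\wedge\,\delta$ truncation on $\mathcal{B}_m$ is a welcome, but not divergent, elaboration of the paper's argument.
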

\begin{proof}
	Under condition (i), optimizing the following bound over $\epsilon > 0$,
	\begin{equation}
		\E[J^\star - J(\Theta_m)| \Theta_0 \in V] \leq \prb{J(\Theta_m) < J^\star - \epsilon | \Theta_0 \in V}2b + \epsilon,
	\end{equation}
	immediately yields the result by using the bound from \eqref{eqn:theorem_2_bound}.
	For condition (ii), using \eqref{eqn:proof_theorem2_3}, we have directly that
	\begin{align}
		\frac{1}{2} \probability{ \{J^\star - J(\Theta_m)) > \epsilon \} \big| \mathcal{B}_m} &\leq \probability{ \{J^\star - J(\Theta_m)) > \epsilon \} \big| \mathcal{B}_m}  \probability{\mathcal{B}_m} \nonumber\\
		&= \probability{ \{J^\star - J(\Theta_m)) > \epsilon \} \cap \mathcal{B}_m} \nonumber \\
		&\leq c\epsilon^{-2} \mathcal{L}^{\star} m^{-(\sigma + \kappa)}.
		\label{eqn:proof_regret_1}
	\end{align}
	Finally, we repeat the same argument as in part (i) using the new bound $b$.
\end{proof}

\subsection{Proof of Proposition~\ref{prop:regret}(i)}
Recall that both $V$ and $\alpha$ are fixed. 
Let $\tilde{\Theta}_t$ for $t \in [T]$ be defined as in \Cref{sec:regret}. 
Then using Lemma~\ref{lem:regret_1} and the definition of $m$ in terms of $T$ in \eqref{eqn:sample_complexity_sum}, we have that there exists a constant $c>0$ independent of $\ell \geq \ell_0$ such that for any $T$ we have
\begin{align}
	\E[J^\star - J(\Theta_{m(T)})| \Theta_0 \in V]
	\leq
	c\Bigl(\Bigl(\frac{T}{\ell}\Bigr)^{-\frac{(\sigma + \kappa)}{3(\kappa + \frac{\sigma}{2} + 1)}} &+ \frac{1}{\ell^{1/2}}\Bigl(\frac{T}{\ell}\Bigr)^{\frac{1 - (\sigma +\kappa)/2}{(\kappa + \frac{\sigma}{2} + 1)}} \nonumber \\
	& + \frac{1}{\ell}\Bigl(\frac{T}{\ell}\Bigr)^{\frac{1 - \sigma +\kappa}{(\kappa + \frac{\sigma}{2} + 1)}}  + T^{\frac{-\kappa}{2(\kappa + \frac{\sigma}{2} + 1)}} + \frac{\alpha^2}{\ell} \Bigr).
	\label{eqn:sample_complexity_orders_T}
\end{align}
Note that by looking at the orders in \eqref{eqn:sample_complexity_orders_T} we can make $\kappa$ large to obtain an approximation for the exponents. I particular, for any $\zeta > 0$ there exists $\kappa_0(\zeta) > 0$ such that if $\kappa \geq \kappa_0(\epsilon)$, then
\begin{equation}
	\E[J^\star - J(\Theta_{m(T)})| \Theta_0 \in V] \leq c\Bigl( (\mathcal{L}^{\star})^{\frac{1}{3}} \ell^{\frac{1}{3} + \zeta}T^{-\frac{1}{3}+\zeta} + \ell^{1/2 + \zeta}T^{-\frac{1}{2}+\zeta} + \ell^{2/3 + \zeta}T^{-1+ \zeta}  + \frac{\alpha^2}{\ell} \Bigr).
	\label{eqn:sample_complexity_max_order_T}
\end{equation}

\subsection{Proof of Proposition~\ref{prop:regret}(ii)}
Repeating the same argument as in (i) we obtain that
\begin{equation}
	\E[J^\star - J(\Theta_{m(T)})| \mathcal{B}_{m(T)}] \leq c (\mathcal{L}^{\star})^{\frac{1}{3}} \ell^{\frac{1}{3} + \zeta}T^{-\frac{1}{3}+\zeta}.
\end{equation}
The bound on the probability $\probability{\mathcal{B}_{m(T)}}$ is given in \eqref{eqn:lemma_concentration_2} together with the remark on the exponents thereafter. In terms of $T/\ell$, this observation yields
\begin{equation}
	\probability{\mathcal{B}_{m(T)}} \geq 1 - c\Bigl(\frac{\alpha^2}{\ell} + \ell^{1/2 + \zeta}T^{-\frac{1}{2}+\zeta} + \ell^{2/3 + \zeta}T^{-1+\zeta} \Bigr).
	\label{eqn:proof_regret_4}
\end{equation}
Finally, we make $\ell_0$ large enough to guarantee that if $T \geq T_0$ for some $T_0 > 0$, we have $\probability{\mathcal{B}_{m(T)}} \geq 1/2$. Then note that $\probability{\mathcal{B}_{m(k)}} \geq \probability{\mathcal{B}_{m(T_0)}}$ for any $k \leq T_0$.

\section{Proof of Corollary~\ref{cor:regret}}
\label{sec:proof_regret_corollary}
We will use the same notation as in the proof of Proposition~\ref{prop:regret} in \Cref{sec:proof_proposition_performance_gap}. 
Moreover, for an epoch $l$ corresponding samples $t \in [t_{l}, t_{l+1}]$, we will consider that $\Theta_{l}$ is fixed for any sample in the epoch and define the parameter corresponding to sample $t$ as $\tilde{\Theta}_t = \Theta_{m(l)}$, where $m(l)$ was defined in \eqref{eqn:def_effective_epoch}.
We need first the following inequalities.
\begin{lemma}
	Under the same assumptions and notation as in Theorem~\ref{prop:main_prop_convergence in probability_noncompact_case}, we fix $\alpha$ and $\delta$ satisfying such assumptions. 
	Then for any $1 > \zeta > 0$ there exists $\kappa(\zeta) \geq 0$, $c>0$ and $\ell_0>0$ such that for any $\ell \geq \ell_0$, $\kappa \geq \kappa(\zeta)$ and $T \geq 1$,
	(i) if $r(s,a)$ is bounded,
	\begin{multline*}
		\expectationBig{ TJ^{\star} - \sum_{t=1}^{T} r(S_t,A_t) \Big| \Theta_0 \in V} \leq \expectationBig{T J^{\star} - \sum_{t=1}^{T} J(\tilde{\Theta}_t)  \Big| \Theta_0 \in V} \\
		+ c \bigl( \alpha^2T\ell^{-1} + \ell^{1/2 + \zeta}T^{\frac{1}{2}+\zeta} + \ell^{2/3 + \zeta} T^{\zeta} + T^{\zeta}\bigr),
	\end{multline*}
	and (ii) if $r(s,a)$ is unbounded,
	\begin{equation*}  
		\expectationBig{T J^{\star} - \sum_{t=1}^{T}  r(S_t,A_t) \Big| \mathcal{B}_{m(T)}} \leq 
		\expectationBig{T J^{\star} - \sum_{t=1}^{T} J(\tilde{\Theta}_t)  \Big| \mathcal{B}_{m(T)}}  + c \Bigl(\frac{T}{\ell}\Bigr)^{\frac{1}{\sigma/2+\kappa}}.
	\end{equation*}
	\label{lemma:regret_2}
\end{lemma}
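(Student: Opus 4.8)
The plan is to bound the ``ergodic error'' $\E[\sum_{t=1}^{T}(J(\tilde\Theta_t)-r(S_t,A_t))\mid\,\cdot\,]$, which is exactly the difference between the two sides of the claimed inequality, by a Poisson‑equation / martingale decomposition carried out epoch by epoch, valid as long as the parameter stays in the geometrically ergodic region, and to absorb the contribution of trajectories that leave that region into the escape probability already bounded in \Cref{prop:regret}(ii). Throughout, $Y_t=(S_t,A_t)$ denotes the state--action chain with kernel \eqref{eqn:mm1_transition_matrix_state_action_chain}, $m(T)$ is defined in \eqref{eqn:def_effective_epoch}, and $m\asymp (T/\ell)^{1/(\kappa+\sigma/2+1)}$ as in \eqref{eqn:sample_complexity_sum}.

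\textbf{Step 1 (a uniform Poisson solution).} For $\theta$ in the neighbourhood $U$ of \Cref{ass:geometric-ergodicity}, the chain with kernel $P(\theta)$ is $\mathcal{L}$‑uniformly ergodic with constants $C,\lambda$ that do not depend on $\theta\in U$, and $|r|_{\mathcal{L}}<\infty$ by \Cref{ass:growth_condition}. Hence the Poisson equation $\hat r_\theta-P(\theta)\hat r_\theta=r-J(\theta)$ has a solution $\hat r_\theta=\sum_{k\ge0}\bigl(P(\theta)^k r-J(\theta)\bigr)$ with $\sup_{\theta\in U}|\hat r_\theta|_{\mathcal{L}}\le C'<\infty$; this is the standard computation behind \Cref{lemma:bounded_expectation_Markov}. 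I will need only this bound, not any regularity of $\theta\mapsto\hat r_\theta$, which keeps the argument elementary.

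\textbf{Step 2 (per-epoch decomposition on $\mathcal{B}$).} On $\mathcal{B}_m$ we have $\Theta_m\in V_{\mathfrak r,\delta}(\theta^\star)\subset U$, so for $t$ in epoch $m$ we may write $J(\Theta_m)-r(Y_t)=(P(\Theta_m)\hat r_{\Theta_m})(Y_t)-\hat r_{\Theta_m}(Y_t)$; summing over $t$ in (the part of) epoch $m$ (contained in $[1,T]$) telescopes into a boundary term $\hat r_{\Theta_m}(Y_{\mathrm{end}_m})-\hat r_{\Theta_m}(Y_{t_m})$ plus $-M_m$, where $M_m=\sum_t\bigl(\hat r_{\Theta_m}(Y_{t+1})-(P(\Theta_m)\hat r_{\Theta_m})(Y_t)\bigr)$ is a sum of martingale increments for the filtration $\mathcal{H}_t=\sigma(\mathcal{F}_m,Y_{t_m},\dots,Y_t)$. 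Summing over $m=0,\dots,m(T)$ gives, on $\mathcal{B}_{m(T)}$,
\[
\sum_{t=1}^{T}\bigl(J(\tilde\Theta_t)-r(Y_t)\bigr)
=\sum_{m}\bigl(\hat r_{\Theta_m}(Y_{\mathrm{end}_m})-\hat r_{\Theta_m}(Y_{t_m})\bigr)-\sum_m M_m .
\]
Taking expectations: the boundary terms are $\le C'\sum_m(\mathcal{L}(Y_{t_m})+\mathcal{L}(Y_{\mathrm{end}_m}))$ in absolute value, each with expectation $O(\mathcal{L}^\star)$ on $\mathcal{B}_{m-1}$ by (the argument of) \Cref{lemma:expectation_L_is_bounded}, hence $O(m(T)\mathcal{L}^\star)=O((T/\ell)^{1/(\kappa+\sigma/2)})$ in total, which is the term in part (ii) and is $O(T^\zeta)$ once $\kappa\ge\kappa(\zeta)$. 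For the martingale part I use $\indicator{\mathcal{B}_{m(T)}}=\indicator{\mathcal{B}_m}-\indicator{\mathcal{B}_m}\indicator{\overline{\mathcal{B}}_{m(T)}}$: since $\indicator{\mathcal{B}_m}$ is $\mathcal{F}_m$‑measurable, $\E[M_m\indicator{\mathcal{B}_m}]=0$, while orthogonality of the increments together with the drift condition and \Cref{lemma:expectation_L_is_bounded} gives $\E[M_m^2\indicator{\mathcal{B}_m}]=O(T_m\mathcal{L}^\star)$, so by Cauchy--Schwarz $|\E[M_m\indicator{\mathcal{B}_m}\indicator{\overline{\mathcal{B}}_{m(T)}}]|=O\bigl(\sqrt{T_m\mathcal{L}^\star}\,\probability{\overline{\mathcal{B}}_{m(T)}}^{1/2}\bigr)$; summing over $m$, using $\sum_m\sqrt{T_m}=O(\sqrt\ell\,m(T)^{1+\kappa/2+\sigma/4})=O(\sqrt T)$ and the escape bound $\probability{\overline{\mathcal{B}}_{m(T)}}\le c(\alpha^2/\ell+\cdots)$ from \Cref{prop:regret}(ii), this is dominated by $\alpha^2T/\ell+\ell^{1/2+\zeta}T^{1/2+\zeta}+\ell^{2/3+\zeta}T^\zeta$.

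\textbf{Step 3 (assembling the two cases).} For (i), bounded reward gives $|J(\tilde\Theta_t)-r(Y_t)|\le 2b$ (using the convention $J(\theta)=\inf_{s,a}r(s,a)$ when $\theta\notin\Omega$), so the $\overline{\mathcal{B}}_{m(T)}$‑contribution to $\E[\sum_{t\le T}(J(\tilde\Theta_t)-r(Y_t))\mid\Theta_0\in V]$ is at most $2bT\,\probability{\overline{\mathcal{B}}_{m(T)}\mid\Theta_0\in V}$, bounded by $c(\alpha^2T/\ell+\ell^{1/2+\zeta}T^{1/2+\zeta}+\ell^{2/3+\zeta}T^\zeta)$ via \Cref{prop:regret}(ii); combined with Step~2 restricted to $\mathcal{B}_{m(T)}$, and noting $\probability{\Theta_0\in V}>0$ so that conditioning on $\{\Theta_0\in V\}$ versus $\mathcal{B}_0$ costs only a constant factor, this yields (i). For (ii) we condition on $\mathcal{B}_{m(T)}$ from the start: there is no escape contribution, $\probability{\mathcal{B}_{m(T)}}\ge1/2$ by enlarging $\ell_0$ as in \Cref{prop:regret}(ii), and Step~2 gives $\E[\sum_{t\le T}(J(\tilde\Theta_t)-r(Y_t))\mid\mathcal{B}_{m(T)}]=O((T/\ell)^{1/(\kappa+\sigma/2)})$, which is (ii). The main obstacle is the measurability point in Step~2: $\mathcal{B}_{m(T)}$ is not adapted to the filtration at the start of epoch $m$, so $M_m$ is not orthogonal to $\indicator{\mathcal{B}_{m(T)}}$; the splitting $\indicator{\mathcal{B}_{m(T)}}=\indicator{\mathcal{B}_m}-\indicator{\mathcal{B}_m}\indicator{\overline{\mathcal{B}}_{m(T)}}$ isolates a genuinely mean‑zero piece and a small residual controlled by the escape probability and the second‑moment bound on $M_m$. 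A secondary technical point is the uniform bound $\sup_{\theta\in U}|\hat r_\theta|_{\mathcal{L}}<\infty$, which hinges on the constants in \Cref{ass:geometric-ergodicity} being uniform over the neighbourhood.
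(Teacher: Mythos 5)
Your route is genuinely different from the paper's: you solve a Poisson equation uniformly over $U$ and telescope within each epoch into boundary terms plus a martingale, whereas the paper never introduces a martingale at all — it bounds the \emph{conditional bias} of each epoch's ergodic error directly, via \Cref{lemma:bounded_expectation_Markov} (the $1/M$ bias bound times the epoch length gives an $O(\mathcal{L}_4(S_{t_n},A_{t_n}))$ bound per epoch, independent of the epoch length), then sums over epochs using \Cref{lemma:expectation_L_is_bounded}, and for bounded rewards handles the escape event epoch by epoch through $\sum_n \ell n^{\sigma/2+\kappa}\probability{\overline{\mathcal{B}_n}}$. For part (i) your argument essentially works: the escape contribution $2bT\,\probability{\overline{\mathcal{B}}_{m(T)}\mid\Theta_0\in V}$ reproduces the terms $\alpha^2T/\ell+\ell^{1/2+\zeta}T^{1/2+\zeta}+\ell^{2/3+\zeta}T^{\zeta}$, the boundary terms give $O(m(T)\mathcal{L}^\star)=O(T^{\zeta})$ for $\kappa\geq\kappa(\zeta)$, and the Cauchy--Schwarz residual fits inside the allowed right-hand side — although your identity $\sum_m\sqrt{T_m}=O(\sqrt{T})$ is false: $\sum_{m\le M}\sqrt{\ell m^{\sigma/2+\kappa}}\asymp\sqrt{\ell}\,M^{\kappa/2+\sigma/4+1}=\sqrt{T\,m(T)}$, i.e.\ an extra factor $\sqrt{m(T)}=O(T^{\zeta/2})$, which is benign here but should be corrected.

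The genuine gap is in part (ii). Your Step 2 applied with the indicator $\indicator{\mathcal{B}_{m(T)}}$ produces, besides the boundary terms, the cross term $\E\bigl[M_m\indicator{\mathcal{B}_m}\indicator{\overline{\mathcal{B}}_{m(T)}}\bigr]$, and your only tool for it is Cauchy--Schwarz, giving $O\bigl(\sqrt{T_m\mathcal{L}^\star}\,\probability{\overline{\mathcal{B}}_{m(T)}}^{1/2}\bigr)$ per epoch. Since $\alpha$ and $\ell$ are fixed, $\probability{\overline{\mathcal{B}}_{m(T)}}$ does \emph{not} decay in $T$ (it is only $O(\alpha^2/\ell+\cdots)$), so summing gives a contribution of order $\sqrt{T\,m(T)}$, i.e.\ $T^{1/2+o(1)}$, which is incompatible with the claimed bound $c(T/\ell)^{1/(\sigma/2+\kappa)}=O(T^{\zeta})$ for large $\kappa$. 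Your assertion that conditioning on $\mathcal{B}_{m(T)}$ leaves ``no escape contribution'' is exactly where this is lost: $\mathcal{B}_{m(T)}$ is not $\mathcal{F}_m$-measurable, so the martingale part does not integrate to zero against it, and the residual is an escape-driven fluctuation term of size $\sqrt{T_m}$ per epoch. The paper's proof does not encounter this because its per-epoch estimate is a conditional-expectation (bias) bound with no variance term: the quantity being summed is already $O(\mathcal{L}(S_{t_n},A_{t_n}))$ per epoch regardless of $T_m$, so swapping $\indicator{\mathcal{B}_{m(T)}}$ for $\indicator{\mathcal{B}_n}$ costs only $O(1)$ per epoch. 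To repair (ii) within your framework you would have to replace the martingale decomposition by such a conditional bias bound (or otherwise show the cross term is $O(m(T))$, which Cauchy--Schwarz alone cannot deliver).
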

\begin{proof}
	We show (ii) first. We use that
	\begin{equation}
		\expectationBig{T J^{\star} - \sum_{t=1}^{T} r(S_t,A_t) \Big| \mathcal{B}_{m(T)}} = \expectationBig{T J^{\star} - \sum_{t=1}^{T} J(\tilde{\Theta}_t)  \Big| \mathcal{B}_{m(T)}} + \expectationBig{ \sum_{t=1}^{T} J(\tilde{\Theta}_t) -  r(S_t, A_t)  \Big| \mathcal{B}_{m(T)}}.
		\label{eqn:proof_regret_3}
	\end{equation}
	From the same argument as that of \eqref{eqn:proof_regret_4}, we may pick $\ell_0$ such that $\mathbb{P}[\mathcal{B}_{m(T)}] > 1/2$ and for some $c>0$
	\begin{equation}
		\Big| \expectationBig{\sum_{t=1}^{T} J(\tilde{\Theta}_t) -  r(S_t, A_t)  \Big| \mathcal{B}_{m(T)}}\Big|  \leq c \Big|\expectationBig{\indicator{\mathcal{B}_{m(T)} }\sum_{t=1}^{T} J(\tilde{\Theta}_t) -  r(S_t, A_t)  \Big| \Theta_0 \in V}\Big|.
		\label{eqn:proof_regret_5}
	\end{equation}
	We need to bound only the last term in \eqref{eqn:proof_regret_5}. 
	From Assumption \ref{ass:growth_condition}, we have that $|r(s,a)| \leq c \mathcal{L}(s,a)$. 
	From \eqref{eqn:concentration_biased_gradient} we obtain for epoch $m$ that if $\Theta_m \in V$ there exists a constant $C>0$ such that
	\begin{equation}
		\Big| \expectationBig{\sum_{t=t_{m}}^{t_{m+1}} J(\tilde{\Theta}_t) - r(S_t,A_t) \Big| \mathcal{F}_m} \Big| \leq C \mathcal{L}_{4}(S_{t_m}, A_{t_m}).
		\label{eqn:proof_lemma_regret_4}
	\end{equation}
	Recall from \eqref{eqn:def_effective_epoch} that $m(T) = \min\{ m \in \N: \ell m^{\sigma/2 + \kappa} \geq T\}$. 
	From Lemma~\ref{lemma:expectation_L_is_bounded}, we know that there exists a constant $c>0$ such that for any $n \geq 1$ $\expectation{\mathcal{L}_4(S_{t_n}, A_{t_n}) \indicator{\mathcal{B}_n}} \leq c$. Let $\mathcal{F}_{n}$ be defined as in \eqref{eqn:sigma-algebra}. 
	Recall that $\indicator{\mathcal{B}_m} \leq \indicator{\mathcal{B}_n}$ if $n < m$. By using in the following the tower property of the conditional expectation in (i) we have
	\begin{align}
		&\Big| \expectationBig{\indicator{\mathcal{B}_{m(T)}} \sum_{t=1}^{T} J(\tilde{\Theta}_t) - r(S_t,A_t) \Big| \Theta_0 \in V} \Big| \nonumber \\
		&\leq \sum_{n=1}^{m(T)} \Big| \expectationBig{\indicator{\mathcal{B}_n} \sum_{t=t_n}^{t_{n+1}} J(\tilde{\Theta}_t) - r(S_t,A_t)\Big| \Theta_0 \in V} \Big| \nonumber \\
		& \eqcom{i} \leq \sum_{n=1}^{m(T)}\expectationBig{ \Big| \expectationBig{\indicator{\mathcal{B}_n} \sum_{t=t_n}^{t_{n+1}} J(\tilde{\Theta}_t) - r(S_t,A_t)  \Big| \mathcal{F}_n}\Big|~ \Big| \Theta_0 \in V}  \nonumber \\
		& \eqcom{\ref{eqn:proof_lemma_regret_4}} \leq c\sum_{n=1}^{m(T)}\expectationBig{ \indicator{\mathcal{B}_n} \mathcal{L}_4(S_{t_n}, A_{t_n}) \Big| \Theta_0 \in V}  \nonumber \\
		& \eqcom{Lemma~\ref{lemma:expectation_L_is_bounded}} \leq c\sum_{m=1}^{m(T)} C 
		\leq c \Bigl(\frac{T}{\ell}\Bigr)^{\frac{1}{\sigma/2+\kappa}}.
		\label{eqn:proof_lemma_regret_1}
	\end{align}
	Substituting \eqref{eqn:proof_lemma_regret_1} in \eqref{eqn:proof_regret_5} yields the result.
	We now show (i) using a similar argument. 
	First note that for $n \in [m(T)]$ we have
	\begin{align}
		&\Big|\expectationBig{\sum_{t=t_n}^{t_{n+1}} J(\tilde{\Theta}_t) - r(S_t,A_t) \Big| \Theta_0 \in V}\Big| \nonumber \\
		&\leq \Big|\expectationBig{\indicator{\mathcal{B}_n} \sum_{t=t_n}^{t_{n+1}} J(\tilde{\Theta}_t) - r(S_t,A_t) \Big| \Theta_0 \in V}\Big|
		+ \Big|\expectationBig{\indicator{\overline{\mathcal{B}_n}} \sum_{t=t_n}^{t_{n+1}} J(\tilde{\Theta}_t) - r(S_t,A_t) \Big| \Theta_0 \in V}\Big| \nonumber\\
		& \eqcom{a} \leq C + c\mathbb{P}[\overline{\mathcal{B}_n}] \label{eqn:proof_corollary1} \\
		& \leq C + c\ell n^{\sigma/2+\kappa}\Bigl(\frac{\alpha^2}{\ell}  + \frac{n^{1 - (\sigma +\kappa)}}{\ell}  + n^{-\kappa/2}
		+ \frac{n^{1 - (\sigma + \kappa)/2}}{\ell} \Bigr), \nonumber
	\end{align}
	where in (a) we have used the same argument as in \eqref{eqn:proof_lemma_regret_1} for the first term, and for the second term, we have used that since the reward is bounded, $|J(\tilde{\Theta}_t) - r(S_t,A_t)|$ is also bounded, regardless of the stability of $\tilde{\Theta}_t$. 
	We are left with a constant times $\mathbb{P}[\overline{\mathcal{B}_n}]$ for the second term.
	We add the remaining terms in \eqref{eqn:proof_corollary1} for $n \in [m(T)]$ and use the inequality $\sum_{i=1}^{h} i^{\eta} \leq C h^{\eta+1}$ for $\eta \geq 0$. 
	Setting $m(T)$ in terms of $T$ according to \eqref{eqn:sample_complexity_sum}, we are left with
	\begin{align}
		& \Big|\expectationBig{\sum_{t=t_n}^{t_{n+1}} J(\tilde{\Theta}_t) - r(S_t,A_t) \Big| \Theta_0 \in V}\Big| \nonumber \\
		& \leq c \Bigl( \Bigl(T^{\frac{1}{\sigma/2+\kappa}} \ell^{-\frac{1}{\sigma/2+\kappa}} +  T \frac{\alpha^2}{\ell} + \ell^{1/2 + \zeta}T^{\frac{1}{2}+\zeta} + \ell^{2/3 + \zeta} T^{\zeta} + T^{\zeta}\Bigr) \nonumber \\
		& \leq c \bigl( \alpha^2T\ell^{-1} + \ell^{1/2 + \zeta}T^{\frac{1}{2}+\zeta} + \ell^{2/3 + \zeta} T^{\zeta} + T^{\zeta}\bigr),
	\end{align}
	where we have used that $T^{-1/(\sigma/2+\kappa)} \ell^{-1/(\sigma/2+\kappa)} \leq T/\ell$ for any $T \geq \ell$.
\end{proof}

\subsection{Proof of Corollary~\ref{cor:regret}(i)}

Let $\ell \geq \ell_0$ be fixed, where $\ell_0$ is given by the conditions of Theorem~\ref{prop:main_prop_convergence in probability_noncompact_case}. We add for each $t \in [T]$ the performance gap of Proposition~\ref{prop:regret} which yields
\begin{align}
	&\expectationBig{T J^{\star} - \sum_{t=1}^{T} J(\tilde{\Theta}_t)\Big| \Theta_0 \in V} \nonumber \\
	&\leq \sum_{t=1}^{T} c\bigl(  (\mathcal{L}^{\star})^{\frac{1}{3}} \ell^{\frac{1}{3} + \zeta}t^{-\frac{1}{3}+\zeta} +  \alpha^2 \ell^{-1} + \ell^{1/2 + \zeta}t^{-\frac{1}{2}+\zeta} + \ell^{2/3 + \zeta} t^{-1 + \zeta} + t^{-1 + \zeta}\bigr) \nonumber \\
	&\leq c\bigl( (\mathcal{L}^{\star})^{\frac{1}{3}}  \ell^{\frac{1}{3} + \zeta}T^{\frac{2}{3}+\zeta} +  \alpha^2 \ell^{-1} + \ell^{1/2 + \zeta}T^{\frac{1}{2}+\zeta} + \ell^{2/3 + \zeta} T^{\zeta} + T^{\zeta}\bigr).
	\label{eqn:proof_corollary2}
\end{align}
Use now Lemma~\ref{lemma:regret_2} together with \eqref{eqn:proof_corollary2}. In this manner we obtain the bound:
\begin{multline*}
	\expectationBig{TJ^{\star} - \sum_{t=1}^{T} r(S_t,A_t) \Big| \Theta_0 \in V} \\
	\leq  c\bigl( (\mathcal{L}^{\star})^{\frac{1}{3}} \ell^{\frac{1}{3} + \zeta}T^{\frac{2}{3}+\zeta} + \alpha^2 T\ell^{-1} + \ell^{1/2 + \zeta}T^{\frac{1}{2}+\zeta} + \ell^{2/3 + \zeta} T^{\zeta} + T^{\zeta}\bigr).
\end{multline*}
Then, for $\ell \geq \ell_0$ fixed and for any $T > 0$ the following holds
\begin{equation}
	\expectationBig{TJ^{\star} - \sum_{t=1}^{T} r(S_t,A_t) \Big| \Theta_0 \in V} \leq c\Bigl((\mathcal{L}^{\star})^{\frac{1}{3}} T^{\frac{2}{3}+\zeta} + \frac{\alpha^2}{\ell} T \Bigr).
\end{equation}
Similarly, if we choose $\ell$ depending on a given $T$ fixed, then setting $\ell = T^{1/4}$ we obtain
\begin{equation}
	\expectationBig{TJ^{\star} - \sum_{t=1}^{T} r(S_t,A_t) \Big| \Theta_0 \in V} \leq cT^{\frac{3}{4}+\zeta}.
\end{equation}

\subsection{Proof of Corollary~\ref{cor:regret}(ii)}

Let $\ell \geq \ell_0$ be fixed, where $\ell_0$ is given by the conditions of Theorem~\ref{prop:main_prop_convergence in probability_noncompact_case} and satisfies the same conditions as \eqref{eqn:proof_regret_4}. We repeat the argument used in (i) by using Proposition~\ref{prop:regret} and Lemma~\ref{lemma:regret_2}. We obtain that if $\sigma/2+\kappa \geq 3/2$ then
\begin{align}
	\expectationBig{ TJ^{\star} - \sum_{t=1}^{T} r(S_t,A_t) \Big| \mathcal{B}_{m(T)}} &\leq c (\mathcal{L}^{\star})^{\frac{1}{3}} \ell^{\frac{1}{3} + \zeta}T^{\frac{2}{3}+\zeta} + c \Bigl(\frac{T}{\ell}\Bigr)^{\frac{1}{\sigma/2+\kappa}} \leq
	c (\mathcal{L}^{\star})^{\frac{1}{3}} T^{\frac{2}{3}+\zeta}.
\end{align}

\section{Proof of Proposition~\ref{prop:regularization}}
\label{sec:Proof_regularization}

To prove the proposition we will show that for almost all $\tilde{\pi}$ in the Lebesgue measure of the class of policies defined in \eqref{eqn:defintion_softmax_policy}, the function $J_{\tilde{\pi}}(\theta)$ is Morse.
Morse functions are smooth functions $f$ such that every critical point of $f$ is nondegenerate, that is, for any $x$ such that $\nabla_x f =0$ we have that $\mathrm{Hess}_{x} f$ is nonsingular. Hence, all critical points are isolated.
If the function $J_{\tilde{\pi}}(\theta)$ is Morse and furthermore satisfies that $J_{\tilde{\pi}}(\theta) \to -\infty$ as $|\theta| \to \infty$, it will then have bounded isolated maxima.

We show first that for almost all $\tilde{\pi}$, the function $J_{\tilde{\pi}}(\theta)$ is a Morse function. To do so, we will implicitly use the fact that Morse functions are dense and form an open subset in the space of smooth functions (see \citealt{nicolaescu2011invitation}).

We introduce first notation.
For a finite dimensional smooth manifold $M$, we denote by $T_x M$ and $T^*_x M$ the tangent and cotangent spaces at $x \in M$, respectively.
When $M = \R^{u}$, for $f: \R^{u} \to \R$ we will denote the (covariant) derivative and gradient of $f$ at $x$ by $d_x f \in T^{*}_x M$ and $\nabla_x f \in T_x M$, respectively. In local coordinates $(w_1, \ldots, w_{u})$, we have namely
\begin{align}
	d_x f &= \sum_{i=1}^{u} \frac{\partial f(x)}{\partial w_i} dw_i, \nonumber \\
	\nabla_x f &= \sum_{i=1}^{u} \frac{\partial f(x)}{\partial w_i} \frac{d}{d w_i},
\end{align}
where $dw_i(\frac{d}{d w_i}) = \indicator{i=j}$. In this notation and since $M = \R^{u}$, we have then
\begin{equation}
	d_x (d f) = \sum_{i=1}^{u} d_x \Bigl(\frac{\partial f(x)}{\partial w_i} dw_i \Bigr) = \sum_{i=1}^{u}\sum_{j=1}^{u} \frac{\partial^2 f(x)}{\partial w_j \partial w_i} dw_j \otimes dw_i  = \mathrm{Hess}_x f \in T^{*}_x M \otimes T^{*}_x M.
\end{equation}

We require the following lemmas and definitions.
\begin{definition}
	Let $M$ and $N$ be two manifolds and let $B$ be a submanifold of $N$. We say a smooth map $f: M \to N$ is transversal to $B$ if for every point $x \in M$ such that $f(x) \in B$ we have
	\begin{equation}
		d_{x} f(T_x M) + T_{f(x)} B = T_{f(x)} N.
	\end{equation}
\end{definition}

We will use the following result that has is its core an application of Sard's theorem that states that in a map between smooth manifolds, the set of critical points has measure zero in the image.
\begin{lemma}[Parametric Transversality Theorem \citep{guillemin2010differential}]
	Let $Z, M$ and $N$ be smooth manifolds and let $B$ be a smooth submanifold of $N$.
	Let $F:Z \times M \to N$ be a smooth submersion, that is, the differential map is surjective everywhere.
	If $F$ is transversal to $B$, then for almost every $z \in Z$, the map
	\begin{equation}
		F_{z}(m) = F(z,m)
	\end{equation}
	is transversal to $B$.
	\label{lemma:parametric_transversality}
\end{lemma}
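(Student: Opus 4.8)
The plan is to follow the classical preimage-and-Sard argument for the parametric transversality theorem (as in \citealt{guillemin2010differential}). The key idea is that transversality of the restricted map $F_z$ to $B$ can be \emph{detected} by the regularity of a suitable projection, after which Sard's theorem immediately yields the ``almost every'' conclusion. I would first note that the submersion hypothesis makes $F$ transversal to $B$ automatically, so the stated transversality of $F$ is available throughout (and the two hypotheses are consistent).

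First I would form the preimage $W = F^{-1}(B) \subseteq Z \times M$. Since $F$ is transversal to $B$, the preimage theorem shows that $W$ is a smooth embedded submanifold of $Z \times M$ whose codimension equals that of $B$ in $N$, with tangent space at $(z,m) \in W$ given by $T_{(z,m)} W = (dF_{(z,m)})^{-1}(T_{F(z,m)} B)$. I would then consider the projection $\pi : Z \times M \to Z$, $(z,m) \mapsto z$, and its restriction $\pi|_W : W \to Z$, which is a smooth map between manifolds.

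The heart of the proof is the pointwise equivalence: for $(z,m) \in W$ with $n := F(z,m) \in B$, the differential $d_{(z,m)}(\pi|_W) : T_{(z,m)} W \to T_z Z$ is surjective if and only if $F_z$ is transversal to $B$ at $m$. I would establish this by a direct diagram chase on tangent spaces. For one direction, given $v \in T_z Z$, transversality of $F$ produces $w \in T_m M$ with $dF_{(z,m)}(v,w) \in T_n B$, so $(v,w) \in T_{(z,m)} W$ projects onto $v$. For the converse, given arbitrary $a \in T_n N$, I would use transversality of $F$ to write $a = dF_{(z,m)}(v,w) + b$ with $b \in T_n B$, lift $v$ to some $(v,w') \in T_{(z,m)} W$ via surjectivity of $d(\pi|_W)$, and then compute
\[
a = dF_{(z,m)}(v,w) + b = d(F_z)(w-w') + \bigl( dF_{(z,m)}(v,w') + b \bigr),
\]
where the parenthesized term lies in $T_n B$ because $(v,w') \in T_{(z,m)} W$, while $dF_{(z,m)}(0,w-w') = d(F_z)(w-w')$. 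This places $a$ in $d(F_z)(T_m M) + T_n B$, proving transversality. Consequently $z$ is a regular value of $\pi|_W$ exactly when $F_z$ is transversal to $B$ along $F_z^{-1}(B)$, the case $z \notin \pi(W)$ being vacuously transversal.

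Finally, I would invoke Sard's theorem for the smooth map $\pi|_W : W \to Z$: its set of critical values has Lebesgue measure zero in $Z$. Every regular value $z$ satisfies the equivalence above, so $F_z$ is transversal to $B$ for almost every $z \in Z$, which completes the proof. The main obstacle is precisely the linear-algebra equivalence in the third step, since it requires carefully bookkeeping the two factors $T_z Z$ and $T_m M$ and using the full-transversality of $F$ twice; the remaining ingredients (the preimage theorem and Sard's theorem) are standard and may simply be cited.
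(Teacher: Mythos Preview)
Your argument is the standard Guillemin--Pollack proof and is essentially correct; the paper does not supply its own proof of this lemma but simply cites it from \cite{guillemin2010differential}, so there is nothing to compare against. One small slip worth fixing: in your ``one direction'' (showing $F_z$ transversal at $m \Rightarrow d(\pi|_W)$ surjective at $(z,m)$), the hypothesis you must invoke to produce $w$ with $dF_{(z,m)}(v,w)\in T_nB$ is transversality of $F_z$, not of $F$---transversality of $F$ alone does not let you prescribe the $T_zZ$-component freely. Concretely, from $d(F_z)(T_mM)+T_nB=T_nN$ you write $dF_{(z,m)}(v,0)=d(F_z)(w')+b$ and then $(v,-w')\in T_{(z,m)}W$. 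With that correction, your equivalence and the Sard conclusion are exactly the textbook argument.
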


When appropriate, we will make explicit the dependence of $v \in T^{*}_x M$ on $x$ by writing $(x,v) \in T^{*}_x M$. We can now show the following,
\begin{lemma}
	Let $M = \R^{u}$ and let $f : M \to \R$ be a smooth map.
	Consider the map $\tilde{f}: M \to T^{*}M$ given for $x \in M$ by
	\begin{equation}
		\tilde{f}(x) =(x, d_x f) \in T^*_x M.
	\end{equation}
	Let $B \subset T^{*}M$ be the zero section submanifold, that is, $B(x) = (x, 0) \in T^{*}_x M$ for every $x$.
	Then $x$ is a nondegenerate critical point of $f$ if and only if $\tilde{f}$ is transversal to $B$ at $x$ and $\nabla_x f = 0$.
	\label{lemma:transversality_to_nondegeneracy}
\end{lemma}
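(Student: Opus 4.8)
The plan is to unwind the definitions of transversality in this very concrete setting where $M = \R^u$, the target is the cotangent bundle $T^*M$, and $B$ is the zero section. First I would fix a point $x \in M$ and observe that $\tilde f(x) \in B$ if and only if $d_x f = 0$, i.e. $x$ is a critical point of $f$; so the condition ``$\nabla_x f = 0$'' in the statement simply records that we are at a point where the transversality condition is nonvacuous. With global coordinates $(w_1,\dots,w_u)$ on $M$ we get an induced trivialization $T^*M \cong \R^u \times \R^u$, under which $\tilde f(x) = (x, D_x f)$ where $D_x f = (\partial f/\partial w_1, \dots, \partial f/\partial w_u)$ is the gradient vector in coordinates, and $B = \R^u \times \{0\}$. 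Then $T_{\tilde f(x)}(T^*M) \cong \R^u \times \R^u$, with $T_{\tilde f(x)} B \cong \R^u \times \{0\}$ the first factor.

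Next I would compute the differential $d_x \tilde f : T_x M \to T_{\tilde f(x)}(T^*M)$ in these coordinates. Since $\tilde f(x) = (x, D_x f)$, the Jacobian of $\tilde f$ at $x$ is the $2u \times u$ block matrix $\begin{pmatrix} I_u \\ \mathrm{Hess}_x f \end{pmatrix}$, where $\mathrm{Hess}_x f$ is the Hessian matrix of $f$ (this uses $M = \R^u$, so there is no connection-dependent correction term — the second derivative of $d f$ is literally the Hessian, as spelled out in the excerpt's coordinate computation). Hence $d_x \tilde f(T_x M)$ is the graph of the linear map $\mathrm{Hess}_x f : \R^u \to \R^u$ sitting inside $\R^u \times \R^u$. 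The transversality condition at $x$ reads
\begin{equation}
	d_x \tilde f(T_x M) + T_{\tilde f(x)} B = T_{\tilde f(x)}(T^*M),
\end{equation}
i.e. $\{(v, \mathrm{Hess}_x f \cdot v) : v \in \R^u\} + (\R^u \times \{0\}) = \R^u \times \R^u$.

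Then I would argue that this sum equals all of $\R^u \times \R^u$ if and only if the projection of the graph onto the \emph{second} factor is surjective, which happens exactly when $\mathrm{Hess}_x f$ is surjective as a linear map, i.e. nonsingular (since it is a square matrix, and symmetric). This is precisely the statement that $x$ is a nondegenerate critical point. For the forward direction: if $x$ is nondegenerate then $\mathrm{Hess}_x f$ is invertible, so for any $(a,b) \in \R^u \times \R^u$ we can write $b = \mathrm{Hess}_x f \cdot v$ for $v = (\mathrm{Hess}_x f)^{-1} b$, and then $(a,b) = (v, \mathrm{Hess}_x f \cdot v) + (a - v, 0)$, giving the sum decomposition — and of course $\nabla_x f = 0$ by assumption that $x$ is a critical point. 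For the converse: if $\tilde f$ is transversal to $B$ at $x$ and $\nabla_x f = 0$, then the sum of the graph and $\R^u \times \{0\}$ is everything, so every $(0,b)$ is of the form $(v, \mathrm{Hess}_x f \cdot v) + (a,0)$, forcing $v = -a$ and $b = \mathrm{Hess}_x f \cdot v$, so $\mathrm{Hess}_x f$ is onto, hence invertible, hence $x$ is a nondegenerate critical point.

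I do not anticipate a genuine obstacle here — the lemma is essentially a coordinate bookkeeping exercise identifying ``graph of the Hessian meets the zero-section tangentially iff the Hessian is invertible.'' The one subtlety worth stating carefully is the identification $T_{(x,0)}(T^*M) \cong T_x M \oplus T^*_x M$ and why, over $M = \R^u$, the vertical component of $d_x\tilde f$ is exactly $\mathrm{Hess}_x f$ with no Christoffel-symbol correction; the excerpt already records $d_x(df) = \mathrm{Hess}_x f$ in coordinates, so I would simply cite that computation. The mild care point is making sure the ``if and only if'' is stated with the hypothesis $\nabla_x f = 0$ attached to the transversality side (since transversality is automatic — and uninformative — at non-critical points, where $\tilde f(x) \notin B$). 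After this lemma, the main proposition follows by applying Lemma~\ref{lemma:parametric_transversality} to the evaluation map $(\tilde\pi, \theta) \mapsto (\theta, d_\theta J_{\tilde\pi})$ to conclude $J_{\tilde\pi}$ is Morse for a.e.\ $\tilde\pi$, then combining with the coercivity from the $\mathrm{KL}$ penalty to get bounded isolated maxima, and finally invoking Proposition~\ref{prop:generalization_policy_gradient}.
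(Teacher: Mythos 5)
Your proposal is correct and follows essentially the same route as the paper's proof: trivialize $T^*\R^u \cong \R^u \times \R^u$, observe that $d_x\tilde f(v) = (v, \mathrm{Hess}_x f\, v)$ so the image is the graph of the Hessian, and note that the sum of this graph with $T_{\tilde f(x)}B \cong \R^u \times \{0\}$ is all of $\R^u \times \R^u$ precisely when $\mathrm{Hess}_x f$ is nonsingular. Your version merely spells out the two implications more explicitly than the paper does, which is fine.
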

\begin{proof}
	$x$ is a critical nondegenerate point if and only if $\nabla_x f = 0$ and $\mathrm{Hess}_x f \in T^*_x M \otimes T^*_x M$ is nonsingular. For any $\nu \in T_x M$, we have then that
	\begin{equation}
		d_x\tilde{f}(\nu) = (\nu, \mathrm{Hess}_{x}f(\nu)).
	\end{equation}
	By definition, $\tilde{f}$ is transversal to $B$ if and only if for every $x \in M$,
	\begin{align}
		d_{x}\tilde{f}(T_x M) + T_{x} M \oplus 0 &= (Id \oplus \mathrm{Hess}_{x}(f))(T_{x} M) + T_{x} M \oplus 0  \nonumber \\
		&= T_{x} M \oplus \mathrm{Hess}_{x}f(T_{x} M) \nonumber\\
		&=T_{x} M \oplus T^*_{x} M,
	\end{align}
	which is true if and only if $\mathrm{Hess}_{x}f$ is nonsingular.
\end{proof}

From the last two lemmas it follows that by adding an appropriate perturbation to a function, the perturbed function is nondegenerate.
This result is well-known in the literature in the context of genericity of Morse functions and can be generalized to general smooth manifolds; see \citep{guillemin2010differential}.

\begin{lemma}
	Let $M = \R^{u}$. Let $f: M \to \R$ and $g_i: M \to \R$ for $i \in [l]$ be smooth functions such that for every $x \in M$, $\mathrm{span}(\{d_x g_i\}_{i=1}^{l}) = T^*_x M$. Then for almost every $z=(z_1, \ldots, z_{l}) \in \R^{u}$ we have that
	\begin{equation}
		f_{z}(\cdot) = f(\cdot) + \sum_{i=1}^l z_i g_i(\cdot)
	\end{equation}
	is a Morse function.
	\label{lemma:morse_from_transversality}
\end{lemma}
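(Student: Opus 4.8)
The plan is to derive the lemma from two results already available in the excerpt: the Parametric Transversality Theorem (\Cref{lemma:parametric_transversality}) and the identification of nondegenerate critical points with transversal intersections of $\tilde{f}$ with the zero section (\Cref{lemma:transversality_to_nondegeneracy}). Set $N = T^{*}M$ and let $B \subset N$ be the zero section. I would introduce the smooth map
\[
F : \R^{l} \times M \to T^{*}M, \qquad F(z,x) = \bigl(x,\, d_x f_z\bigr) = \Bigl(x,\ d_x f + \sum_{i=1}^{l} z_i\, d_x g_i\Bigr),
\]
and observe that for each fixed $z$ the partial map $x \mapsto F(z,x)$ is precisely the map $\tilde{f_z}$ appearing in \Cref{lemma:transversality_to_nondegeneracy}. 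So it suffices to show that for Lebesgue-almost every $z$ the map $\tilde{f_z}$ is transversal to $B$, and then invoke \Cref{lemma:transversality_to_nondegeneracy} at each critical point of $f_z$.

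The key step is to verify that $F$ is a submersion (which, since a submersion is automatically transversal to every submanifold, in particular makes $F$ transversal to $B$, the hypothesis required by \Cref{lemma:parametric_transversality}). Because $M = \R^{u}$ the cotangent bundle is trivial, $T^{*}M \cong \R^{u}\times(\R^{u})^{*}$, so at a point of $T^{*}M$ there is a canonical splitting of the tangent space into a base part $T_xM$ and a fibre part $T^{*}_xM$, and one computes $d_{(z,x)}F(\dot z,\dot x) = \bigl(\dot x,\ \sum_{i=1}^{l}\dot z_i\, d_x g_i + \mathrm{Hess}_x f_z\,(\dot x)\bigr)$. The base component $\dot x$ is arbitrary; and, holding $\dot x = 0$, the fibre component ranges over $\mathrm{span}\{d_x g_i : i\in[l]\} = T^{*}_xM$ by hypothesis. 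Hence $d_{(z,x)}F$ is surjective at every $(z,x)$, i.e.\ $F$ is a submersion.

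Now \Cref{lemma:parametric_transversality}, applied with $Z = \R^{l}$, $M = \R^{u}$, $N = T^{*}M$, and the zero section $B$, yields that for almost every $z$ the map $F_z = \tilde{f_z}$ is transversal to $B$. Fix such a $z$ and let $x$ be any critical point of $f_z$; then $\nabla_x f_z = 0$, so $\tilde{f_z}(x) = (x,0) \in B$, and by the computation in \Cref{lemma:transversality_to_nondegeneracy} transversality of $\tilde{f_z}$ to $B$ at $x$ is equivalent to $\mathrm{Hess}_x f_z$ being nonsingular, i.e.\ $x$ is nondegenerate. Since $f_z$ is smooth with only nondegenerate critical points, it is Morse, which proves the lemma. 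The only genuine work is the submersion computation, where the assumption $\mathrm{span}\{d_x g_i\} = T^{*}_xM$ is exactly what is needed; everything else is a transcription of the two cited lemmas, so I expect no obstacle beyond keeping the base/fibre bookkeeping on $T^{*}M$ tidy.
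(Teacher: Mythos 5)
Your proposal is correct and follows essentially the same route as the paper's proof: define $F(z,x) = (x, d_x f_z)$, verify surjectivity of $d_{(z,x)}F$ using $\mathrm{span}\{d_x g_i\} = T^{*}_xM$ so that $F$ is a submersion (hence transversal to the zero section), apply the parametric transversality theorem, and conclude via \Cref{lemma:transversality_to_nondegeneracy} that every critical point of $f_z$ is nondegenerate for almost every $z$. The base/fibre bookkeeping you describe matches the paper's computation exactly.
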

\begin{proof}
	Define the smooth function $F: \R^{l} \times M \to T^{*}M$ given by
	\begin{equation}
		F(z,x) =(x, d_x f + \sum_{i=1}^l z_i d_x g_i) = (x, d_x f_z).
	\end{equation}
	The derivative of this map at $(z,x)$ evaluated at $(\eta,\chi) \in T_z\R^{l} \times T_x M$ is then
	\begin{equation}
		d_{(z,x)}F(\eta,\chi) = (\chi, \mathrm{Hess}_x f_{z}(\chi) + \sum_{i=1}^{l} \eta_i d_x g_i) \in T_{F(z, x)}(T^* M) \simeq T_{x} M \oplus T^{*}_{x} M.
	\end{equation}
	For every $x$, we have $\mathrm{span}(\{d_x g_i\}_{i=1}^{l}) = T^*_x M$, then $d_{(z,x)}F(T_z\R^{l}, T_x M) = T_{F(z, x)}(T^* M)$ and $d_{(z,x)}F$ is surjective.
	Thus, $F$ is a submersion and is therefore transversal to the zero section of $T^{*}M$ and by Lemma~\ref{lemma:parametric_transversality} for almost every $z \in Z$ the map $F_z(x) = F(z,x)$ is transversal to the zero section of $T^{*}M$. Finally, by Lemma~\ref{lemma:transversality_to_nondegeneracy} we can conclude that for almost every $z \in Z$, the critical points of $f_z$ are nondegenerate, that is, $f_z$ is a Morse function.
\end{proof}

We are now in position to show the proposition. Recall from the definition of the policy in \eqref{eqn:defintion_softmax_policy} that there is an index set $\mathcal{I}$ and a function $h:\mathcal{S} \to \mathcal{I}$ that determines the parameter dependence of $\{ \theta_{i,a}: (i,a) \in \mathcal{I} \times \mathcal{A} \}$. For $s \in \mathcal{I}$, let $z_{(a,i)} = \tilde{\pi}(a|i)$ and denote $\tilde{\zeta}(i) = \sum_{s \in \mathcal{S}: h(s) = i} \zeta(s)$. We can write
\begin{align}
	d_\theta \mathcal{R}_{\tilde{\pi}}(\theta) &= b \sum_{s \in \mathcal{S}} \zeta(s)
	\sum_{a \in \mathcal{A}} \tilde{\pi}(a|s) d_\theta \log(\pi(a|s, \theta)) \nonumber \\
	&= b \sum_{s \in \mathcal{S}} \zeta(s) \sum_{a \in \mathcal{A}} \tilde{\pi}(a|s) \Bigl( \sum_{a^{\prime} \in \mathcal{A}} (\indicator{a = a'} - \pi(a' | s, \theta))d\theta_{h(s),a^{\prime}} \Bigl) \nonumber \\
	&= b \sum_{s \in \mathcal{S}} \zeta(s) \sum_{a^{\prime} \in \mathcal{A}} (\tilde{\pi}(a|s) - \pi(a' | s, \theta))d\theta_{h(s),a^{\prime}} \nonumber \\
	&= b \sum_{i \in \mathcal{I}} \sum_{a \in \mathcal{A}} \tilde{\zeta}(i)(\tilde{\pi}(a|i) - \pi(a | i, \theta))d\theta_{i,a} \nonumber \\
	&= b \sum_{(i,a) \in \mathcal{I} \times \mathcal{A}} \tilde{\zeta}(i)(z_{(i,a)} - \pi(a | i, \theta))d\theta_{i,a}.
	\label{eqn:Proof_regularization1}
\end{align}
If $\tilde{\zeta}(i) > 0$ for all $i \in \mathcal{I}$, it is clear from \eqref{eqn:Proof_regularization1} that the terms $\{d \theta_{i,a}\}_{(i,a) \in \mathcal{I} \times \mathcal{A}}$ span $T^{*}_{\theta} \R^{|\mathcal{A}| \times |\mathcal{I}|}$ for each $\theta$, since $\pi(a|s,\theta) \neq 0$ for any finite $\theta$.  By Lemma~\ref{lemma:morse_from_transversality} and the assumption on $\zeta$, we immediately obtain that for almost all policies $\tilde{\pi}$, the function
\begin{equation}
	J_{\tilde{\pi}}(\theta) = J(\theta) - b \mathcal{R}_{\bar{\pi}}(\theta).
	\label{eqn:proof_proposition3_1}
\end{equation}
is Morse and has nondegenerate critical points---including the maximum.
Finally, the set of maxima of \eqref{eqn:proof_proposition3_1} will be nonempty. Indeed, the function $-b\mathcal{R}_{\bar{\pi}}(\theta) \to -\infty$ whenever for any $s \in \mathcal{S}$, $\pi(\,\cdot\,|s) \to \partial \Delta(\mathcal{S})$. Thus, by continuity, the set of maxima belongs to a compact set.

\vskip 0.2in
\bibliography{paper}

\end{document}